\documentclass{article}

\usepackage{preamble_hal}

\usepackage{authblk}

\newcommand{\R}{\mathbb R}
\newcommand{\cN}{\mathcal N}
\newcommand{\cR}{\mathcal R}
\newcommand{\cE}{\mathcal E}

\newcommand{\softmax}{\operatorname{softmax}}

\newcommand{\Id}{I_d}

\usepackage{etoc}

\usepackage{lmodern}
\usepackage{microtype}
\usepackage[dvipsnames,svgnames]{xcolor}
\usepackage[most]{tcolorbox}
\usepackage{hyperref}
 
\colorlet{inkblue}{RoyalBlue!80!black}
\definecolor{softgray}{RGB}{95,95,95}   
 
\hypersetup{colorlinks=true, 
            linkcolor=inkblue,     
            citecolor=Peach,
            urlcolor=inkblue}
 
\newcommand{\papertitle}{Softmax Attention on Gaussian Mixtures}
\newcommand{\papersubtitle}{Linear When It Can, Selective When It Must}
\newcommand{\paperdate}{October 2026}
 
\newcommand{\aff}[1]{\textsuperscript{#1}}
 
\newcommand{\paperabstract}{%
Softmax attention, at the heart of Transformers, has demonstrated remarkable capabilities. Yet its underlying mechanisms remain only partially understood. Recent theoretical work studies Gaussian prompts, where the infinite-prompt limit reduces softmax attention to a linear map, but also removes the query-dependent selection that distinguishes it from linear attention. This work studies the infinite-prompt limit of softmax attention on Gaussian mixtures, which retain the tractability of Gaussian data while introducing latent structure, multimodality, and nonlinear dependencies. We show that softmax attention can represent and learn, via gradient-based methods, optimal solutions to a range of statistical tasks, including supervised classification and denoising. Our results highlight two complementary capabilities of softmax attention: it can recover linear tasks as effectively as its simpler linear counterpart, while also exploiting query-dependent context selection to solve nonlinear tasks beyond the reach of linear attention.
}
\newcommand{\paperkeywords}{softmax attention $\cdot$ Gaussian mixtures
$\cdot$ large-prompt limit $\cdot$ optimization landscape $\cdot$ denoising
$\cdot$ in-context learning}
 
\newcommand{\makepapertitle}{%
  \thispagestyle{empty}%
  \begingroup
  \noindent{\color{inkblue}\rule{\linewidth}{1.4pt}}\par
  \vspace{5pt}
  \noindent{\sffamily\footnotesize\color{softgray}%
    PREPRINT\hfill\paperdate}\par
  \vspace{22pt}
  \noindent{\fontsize{23}{28}\selectfont\bfseries\color{inkblue}%
    \papertitle\par}
  \vspace{7pt}
  \noindent{\fontsize{15.5}{19}\selectfont\itshape\color{softgray}%
    \papersubtitle\par}
  \vspace{20pt}
  \noindent{\large
    Simon Gabet\aff{1}\qquad
    Etienne Boursier\aff{1}\qquad
    Claire Boyer\aff{1,2}\par}
  \vspace{8pt}
  \noindent{\small\color{softgray}%
    \aff{1}Universit\'e Paris-Saclay, CNRS, Inria, Laboratoire de
    math\'ematiques d'Orsay, 91405 Orsay, France\\[1pt]
    \aff{2}Institut Universitaire de France\par}
  \vspace{6pt}
  \vspace{16pt}
  \begin{tcolorbox}[enhanced, sharp corners, boxrule=0pt,
      colback=inkblue!4, colframe=inkblue!4,
      borderline west={2.2pt}{0pt}{inkblue},
      left=12pt, right=12pt, top=9pt, bottom=9pt]
    {\sffamily\bfseries\footnotesize\color{inkblue}ABSTRACT}\par
    \vspace{5pt}
    {\small\paperabstract\par
    }
    \vspace{7pt}
    {\footnotesize\sffamily\bfseries\color{inkblue}Keywords}\enspace
    {\footnotesize\paperkeywords}
  \end{tcolorbox}
  \vspace{10pt}
  \endgroup
}

\begin{document}

\makepapertitle

\section{Introduction}

Softmax attention lies at the core of modern Transformer models. Yet, despite its remarkable empirical success, its theoretical behavior remains poorly understood, even in highly simplified settings. 
Theoretical accounts of attention often replace softmax by a linear kernel or study distributions for which the population softmax operator itself becomes linear. These reductions lead to precise learning-dynamics results, but they remove the very mechanism that makes attention selective: which part of the prompt is relevant may depend on the query.

We study attention in the large-prompt regime. In this work, a prompt of length $L$ is a sequence of~$L$ tokens drawn independently from a distribution~$\mu$ on~$\R^d$. Softmax
attention over these tokens depends on them only through their empirical
measure, and converges, as $L\to\infty$, to an operator acting directly on
$\mu$ (Section~\ref{sec:mixtures}). In this large-prompt limit, an attention layer
becomes a map that takes a probability measure, the prompt, and returns a
function of the query. This viewpoint removes finite-sample fluctuations
and exposes the geometry of the operator itself; concentration results
relate it to its finite-prompt counterpart for sub-Gaussian tokens
\citep{vuckovic2020,boursier2026,bohbot2026}.

The choice of $\mu$ then determines what can be observed. When $\mu$ is
Gaussian \citep{castin2025,boursier2026}, the limiting operator is an affine function of the query: the
analysis is tractable, but selection has disappeared. Gaussian mixtures
are the natural next step. They retain closed-form computations while
introducing latent structure, multimodality and, crucially, a reason for
a query to attend to one part of the context rather than another. They
thus offer a setting in which one can ask precisely what the softmax
nonlinearity brings.

\paragraph{Contributions.} We answer this question through one structural identity and two statistical tasks that play complementary roles.
\begin{itemize}
\item \textbf{Softmax attention as a gated mixture of experts.}
On any Gaussian mixture, a softmax head acts as a query-dependent
combination of affine experts; with shared covariances, it splits exactly
into a global linear map and a softmax gate that selects, for each query,
the relevant components (Proposition~\ref{prop:gmm-attention}). Linear
attention, by contrast, weighs the components by fixed proportions, blind
to the query. This identity is the lens of the paper: it tells us when the
gate is superfluous, and when it does the essential work.
\item \textbf{When the gate is redundant: learning LDA.}
In linear discriminant analysis (LDA), the optimal classifier is linear, and a
single head can represent it exactly. The question is then one of
optimization: does a redundant, nonconvex parameterization remain
trainable? In effective (reparameterized) coordinates, the logistic risk has no spurious
critical points (Theorem~\ref{thm:lda-landscape}); in the original attention
parameters, every nonoptimal critical point is a strict saddle as soon as
the class means are not symmetric
(Theorem~\ref{thm:lda-landscape-initial}). The landscape is benign but not
coercive: near-optimal directions escape to infinity
(Proposition~\ref{prop:lda-escape-main}). We therefore turn to the
dynamics, and prove exponential convergence of gradient flow near the
regular manifold of optimal parameters (Theorem~\ref{thm:lda-dynamics})
and, in the symmetric balanced case, global convergence on a
Krylov-structured invariant subspace (Theorem~\ref{thm:lda-krylov}).
\item \textbf{When the gate is essential: denoising.}
Optimal denoising of a Gaussian mixture requires two operations at once:
shrinking the observation, and deciding which cluster it comes from. The
optimal denoiser is therefore itself a gated affine map. Two softmax heads
reproduce it exactly; a single head generally cannot, and no
linear-attention predictor, whatever its number of heads, comes close to
the optimal error (Theorem~\ref{thm:denoise-expressivity}). Iterating the
same map changes its meaning: below a sharp separation threshold, it
collapses every input to the midpoint between the component centers; above it, it clusters
(Proposition~\ref{prop:phase-transition}). Finally, with residual
connections, a single set of parameters denoises optimally in context, for
any cluster locations and proportions (Proposition~\ref{prop:icl}): the
weights store what is shared across tasks, while attention infers from the
prompt what is specific to each one.
\end{itemize}

\paragraph{Related work.}
The measure formulation of attention was developed by \citet{vuckovic2020} and used to study deep attention dynamics and clustering \citep{sander2022,geshkovski2023,castin2025}. Recent work analyzes training in the large-prompt Gaussian regime \citep{boursier2026,maulen2026,goel2026training}; our mixture identity is the first step beyond the linear operator resulting from Gaussian prompt. Transformer classification has been studied through linear attention and algorithm emulation \citep{frei2025,shen2025,zhang2026}. Our result instead learns the Bayes boundary directly with a single softmax layer and exposes the geometry of its population risk in the attention parameters. For denoising, \citet{smart2025} connect one-layer attention to associative-memory retrieval when the context is clean; \citet{rosu2025} and \citet{li2026} study attention-based denoising for diffusion. We treat noisy contexts and nondegenerate mixtures with standard dot-product softmax, give exact Bayes constructions, and prove a strict separation from linear attention. Concurrent work studies a more general two-stage empirical-Bayes viewpoint using RBF attention \citep{smart2026}. A complementary statistical separation between softmax and linear attention appears in \citet{duranthon2026statistical}.

\section{Population Attention on Mixtures}
\label{sec:mixtures}

\paragraph{Attention as an operator on measures.}
We represent a prompt by a probability measure $\mu$ on $\R^d$, absorb the
query and key matrices into a single interaction matrix
$U\in\R^{d\times d}$, and denote by $V\in\R^{d\times d}$ the value matrix.
For a query $z\in\R^d$, the attention operator reads
\begin{equation}
T_{U,V}[\mu](z)
=\frac{\int e^{(z')^\top Uz}\,Vz'\,\mathrm d\mu(z')}
      {\int e^{(z')^\top Uz}\,\mathrm d\mu(z')}.
\label{eq:attention}
\end{equation}
For a finite prompt $z_1,\ldots,z_L$ with empirical measure
$\widehat\mu_L=L^{-1}\sum_{i=1}^L\delta_{z_i}$, the quantity
$T_{U,V}[\widehat\mu_L](z)$ is exactly softmax dot-product attention over
the tokens. When the tokens are drawn i.i.d.\ from a sub-Gaussian measure~$\mu$,
the law of large numbers, applied to the numerator and the denominator,
gives $T_{U,V}[\widehat\mu_L](z)\to T_{U,V}[\mu](z)$ almost surely for every
query~$z$; quantitative versions are given in
\citet{boursier2026,bohbot2026}. All our results concern the limiting
operator $T_{U,V}[\mu]$.

\paragraph{From a Gaussian to a mixture.}
If $\mu$ is Gaussian, centered at $m$ and of covariance matrix $\Gamma$, the attention operator is
\[
T_{U,V}[\mu](z)=V(m+\Gamma Uz),
\]
an affine function of the query, in which the softmax leaves no trace. On a
mixture, the same computation applies component by component, and the
components then compete through the normalization. Throughout, Gaussian
covariances are positive definite unless a degenerate case is stated
explicitly.
\begin{proposition}[Softmax-gated affine experts]
\label{prop:gmm-attention}
\label{prop:attention_op_for_gauss_mix}
Consider a mixture $\mu=\sum_{k=1}^M\pi_k\cN(m_k,\Gamma_k)$ of $M$ Gaussian components. The infinite-prompt attention layer parameterized by $U$ and $V$ gives for any $z\in \mathbb{R}^d$, 
\begin{align}
T_{U,V}[\mu](z)
&=V\sum_{k=1}^M\alpha_k(z)(m_k+\Gamma_kUz), \label{eq:gmmatt}
\end{align}
with 
$$
\alpha_k(z)=\softmax_k\!\left(\log\pi_k+m_k^\top Uz+\tfrac12(Uz)^\top\Gamma_kUz\right). 
$$
If $\Gamma_k=\Gamma$ for all $k$, the common quadratic offsets in the softmax function cancels and
\begin{align}
T_{U,V}[\mu](z)&=V\Gamma Uz+V\sum_k\alpha_k(z)m_k,
\label{eq:shared}
\end{align}
with
$
\alpha_k(z)=\softmax_k(\log\pi_k+m_k^\top Uz)$.
\end{proposition}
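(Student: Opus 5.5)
The plan is to compute the numerator and the denominator of \eqref{eq:attention} component by component, using the Gaussian moment generating function, and then to normalize. Fix $z$ and set $w=Uz$. By linearity of the integral in $\mu$, both integrals split over the components:
\[
\int e^{(z')^\top w}\,\mathrm d\mu(z')=\sum_{k=1}^M\pi_k\,\mathbb E_{Z\sim\cN(m_k,\Gamma_k)}\bigl[e^{Z^\top w}\bigr],
\]
and the same holds with an extra factor $VZ$ in the numerator.

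\textbf{Step 1: the denominator.} The Gaussian Laplace transform gives
\[
\mathbb E_{Z\sim\cN(m_k,\Gamma_k)}\bigl[e^{Z^\top w}\bigr]=\exp\!\bigl(m_k^\top w+\tfrac12 w^\top\Gamma_k w\bigr)=:c_k(w),
\]
which is finite for every $w$. This finiteness also guarantees that all integrals in \eqref{eq:attention} are well defined.

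\textbf{Step 2: the numerator.} I will use exponential tilting. The density of $\cN(m_k,\Gamma_k)$ multiplied by $e^{z'^\top w}/c_k(w)$ is the density of $\cN(m_k+\Gamma_k w,\Gamma_k)$; this follows by completing the square in the exponent. Hence
\[
\mathbb E\bigl[e^{Z^\top w}Z\bigr]=c_k(w)\,(m_k+\Gamma_k w).
\]
Alternatively, one can differentiate $c_k(w)$ with respect to $w$, which gives the same identity and is justified by dominated convergence.

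\textbf{Step 3: normalization.} Pulling $V$ out of the numerator by linearity and dividing gives
\[
T_{U,V}[\mu](z)=V\sum_k\frac{\pi_k c_k(w)}{\sum_j\pi_j c_j(w)}\,(m_k+\Gamma_k w).
\]
The weight $\pi_k c_k(w)/\sum_j \pi_j c_j(w)$ is exactly $\softmax_k\bigl(\log\pi_k+m_k^\top Uz+\tfrac12(Uz)^\top\Gamma_kUz\bigr)=\alpha_k(z)$, which is \eqref{eq:gmmatt}.

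\textbf{Step 4: the shared-covariance case.} If $\Gamma_k=\Gamma$ for all $k$, the term $\tfrac12 w^\top\Gamma w$ does not depend on $k$. Since softmax is invariant under adding a constant to all of its arguments, this term drops out of $\alpha_k$. Moreover, $\sum_k\alpha_k(z)\Gamma Uz=\Gamma Uz$ because the weights sum to one. This yields \eqref{eq:shared}.

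There is no real obstacle here. The only point that needs care is the tilting identity in Step 2, namely the completion of the square
\[
-\tfrac12(z'-m_k)^\top\Gamma_k^{-1}(z'-m_k)+z'^\top w=-\tfrac12(z'-m_k-\Gamma_kw)^\top\Gamma_k^{-1}(z'-m_k-\Gamma_kw)+m_k^\top w+\tfrac12w^\top\Gamma_kw,
\]
together with integrability, which is immediate from Gaussian tails. Positive definiteness of $\Gamma_k$ is used only to write densities. For degenerate covariances, the MGF-differentiation route in Step 2 avoids densities entirely, so the formula extends to that case as well.
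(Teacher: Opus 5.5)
Your proposal is correct and takes essentially the same route as the paper: split both integrals over the mixture components, evaluate each with the Gaussian moment generating function, and normalize to read off the softmax weights, with the shared-covariance case following from shift invariance of softmax and $\sum_k\alpha_k(z)=1$. The only difference is cosmetic: the paper obtains $\mathbb E[Xe^{\langle u,X\rangle}]=(m+\Gamma u)\exp(\langle u,m\rangle+\tfrac12u^\top\Gamma u)$ by differentiating the MGF, which is the alternative you list next to your completing-the-square tilting argument.
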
%
The proof of Proposition~\ref{prop:gmm-attention} 
is given in
Appendix~\ref{app:gmm-attention-proof}.
Equation~\eqref{eq:shared} is the organizing principle of the paper: the first term performs global linear prediction; the second selects components and returns a query-dependent barycenter. 

\paragraph{Attention with generalized scores.} More generally, attention on any mixture $\sum_k\pi_k\mu_k$ is a query-dependent convex combination of the component-wise attention maps, with gates proportional to the alignment to the query (see the supplement). Indeed, for $T_a[\mu](z):=\int a(z,z')z'\mathrm d\mu(z')/\int a(z,z')\mathrm d\mu(z')$, we have
$$
T_a[\mu](z)
=
\sum_{k=1}^M r_k(z)\,T_a[\mu_k](z)
$$
with $r_k(z)
\coloneqq 
\pi_k
{
 \int_{\mathbb{R}^d} a(z,z')\,\mathrm{d}\mu_k(z')
}\,/\,{
 \int_{\mathbb{R}^d} a(z,z')\,\mathrm{d}\mu(z')
}$ and $a>0$ an arbitrary alignment function (e.g., exponential dot product for softmax scores), see
Appendix~\ref{app:general-mixture-attention} for details.
Normalized attention, such as softmax attention, can therefore perform \emph{context selection}: in regions where $r_k(z)$ is close to one, the operator essentially selects the context $\mu_k$; allowing to adaptively select which cluster is relevant to a given query. This behavior contrasts with classical linear attention, defined by
$
T_{\mathrm{lin}}[\mu](z)
\coloneqq 
\int_{\mathbb{R}^d}
\langle Uz,z'\rangle Vz'\,\mathrm{d}\mu(z').
$
Indeed,
$
T_{\mathrm{lin}}[\mu](z)
=
\sum_{k} \pi_k T_{\mathrm{lin}}[\mu_k](z)$, so linear attention combines the different contexts with fixed mixture proportions $\pi_k$, independently of the query, and cannot separate the contribution of the different clusters.

\section{Attention-based Linear Discriminant Analysis}
\label{sec:lda}

We now turn to classification. Under linear
discriminant analysis (LDA), the Bayes score is affine, and hence does not
require the nonlinear selection mechanism of softmax attention. The question
is instead whether a single softmax head can represent this score and whether
gradient-based training can recover it despite this redundant and nonconvex parameterization.

\subsection{From attention to an affine--sigmoid predictor}

Let $Y\in\{-1,1\}$ with $\mathbb{P}(Y=1)=\pi_+$, and assume the classical LDA model, with input features $X\in\mathbb{R}^d$ such that
\begin{equation}
X\mid Y=c\sim\cN(m_c,\Gamma),\qquad c\in\{-1,1\},
\label{eq:lda-model}
\end{equation}
where the two classes have different means and share a positive-definite
covariance matrix. The Bayes classifier is $\operatorname{sign}(f^\star)$, where the Bayes
score $f^\star(x)=w^{\star\top}x+b^\star$ is affine, with
\begin{align}
w^\star&=\Gamma^{-1}(m_1-m_{-1}), \label{eq:lda-bayes} \\
b^\star&=-\tfrac12\bigl(m_1^\top\Gamma^{-1}m_1
-m_{-1}^\top\Gamma^{-1}m_{-1}\bigr)
+\log\frac{\pi_+}{1-\pi_+}.
\nonumber
\end{align}
We feed augmented context tokens $(X,Y)\in\R^{d+1}$ to one attention head and query it at $(x,0)$; only the last output coordinate is therefore used to store the prediction $f_{\theta}(x)$. Writing the relevant blocks of $U,V$ interacting with $X$ or $Y$ (see Appendix~\ref{app:lda-effective-model}) as $\theta = (U_{11},u_{12},v_{21},v_{22})$,
Proposition~\ref{prop:gmm-attention} yields for $\mu$ the Gaussian mixture described in~\eqref{eq:lda-model}:
\begin{align}
T_{U,V}[\mu]((x,0))_{d+1} &= f_\theta(x)\notag\\
&\coloneqq w^\top x+b+r\sigma(s_0+s^\top x), 
\label{eq:effective}
\end{align}
where $\sigma(t)=(1+e^{-t})^{-1}$ and
\begin{equation}
\begin{split}
w&=U_{11}\Gamma v_{21},\qquad b=v_{21}^\top m_{-1}-v_{22},\\
r&=v_{21}^\top(m_1-m_{-1})+2v_{22},\\
s&=2u_{12}+U_{11}(m_1-m_{-1}),\\
 s_0&=\log\frac{\pi_+}{1-\pi_+}.
\end{split}
\label{eq:Phi}
\end{equation}
\looseness=-1
The exact derivation is given in Appendix~\ref{app:lda-effective-model}.
Thus a softmax head reduces, in effective coordinates, to an affine predictor
plus a single sigmoidal correction. In particular, one head can realize the
Bayes score exactly; an explicit construction is given in
Appendix~\ref{app:lda-realizability}. At the effective level, this happens through two distinct
mechanisms: either $r=0$ suppresses the nonlinear term, or $s=0$ makes it
constant and absorbable into the bias~$b$. Their preimages in the original
attention parameters need not have the same geometry---and, depending on the
class means and priors, both mechanisms need not be simultaneously available.
This distinction is central to the optimization analysis below.

\subsection{A benign but noncoercive optimization landscape}

We train the score with the population logistic risk\footnote{From now on, we only consider the relevant parameters $\theta$, since the remaining blocks of $U,V$ are neither updated during training nor affect the predicted value.
}
\begin{align}
\label{eq:lda-risk}
    \cR(\theta)
&\coloneqq \mathbb E\left[\log(1+e^{-Yf_\theta(X)})\right] \\
&=\widetilde\cR(\Phi(\theta)), 
\notag
\end{align}
where $\Phi(\theta)=(w,b,r,s)$ is defined in~\eqref{eq:Phi}, and 
$$\widetilde\cR(w,b,r,s) \coloneqq  \mathbb E\!\left[\log(1+e^{-Yg_{w,b,r,s}(X)})\right],
$$
with $g_{w,b,r,s}(x) \coloneqq w^\top x+b+r\sigma(s_0+s^\top x)$.

Thus $\cR$ is the risk in the original attention parameters, whereas
$\widetilde\cR$ is expressed in effective coordinates. The logistic loss
is important here: its unique population minimizer over measurable scores
is the Bayes score $f^\star$. Since the attention model realizes this
score, the unrestricted optimum is attained in both parameterizations;
we denote its value by $\cR^\star$.

\paragraph{The effective landscape.}
The reduced model is not jointly convex in $(w,b,r,s)$. However, fixing
$s$ leaves a convex problem in $(w,b,r)$ that still contains the Bayes
predictor, through the choice $(w,b,r)=(w^\star,b^\star,0)$. This simple
observation explains why stationarity is enough to guarantee optimality.

\begin{theorem}[Effective landscape]
\label{thm:lda-landscape}
All critical points of $\widetilde\cR$ are global minimizers, and
\begin{align*}
\cE^\star\coloneqq \arg\min\widetilde\cR
&=\operatorname{Crit}(\widetilde\cR)=\cE_1\cup\cE_2,\\
\cE_1&=\{(w^\star,b^\star,0,s):s\in\R^d\},\\
\cE_2&=\{(w^\star,b^\star-r\sigma(s_0),r,0):r\in\R\}.
\end{align*}
\end{theorem}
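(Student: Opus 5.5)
The plan is to use partial convexity: for fixed $s$, the map $(w,b,r)\mapsto\widetilde\cR(w,b,r,s)$ is convex, because $g_{w,b,r,s}(x)$ is linear in $(w,b,r)$ and the logistic loss is convex. Let $(w,b,r,s)$ be a critical point of $\widetilde\cR$. Then the partial gradient in $(w,b,r)$ vanishes, so $(w,b,r)$ minimizes $\widetilde\cR(\cdot,\cdot,\cdot,s)$ over $\R^{d+2}$. This restricted problem contains the Bayes predictor, via $(w^\star,b^\star,0)$. Hence $\widetilde\cR(w,b,r,s)\le\widetilde\cR(w^\star,b^\star,0,s)=\cR^\star$, and the critical point is a global minimizer.

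It remains to identify $\arg\min\widetilde\cR$, and I will use the uniqueness of the population logistic minimizer. Since $X$ has a density positive on $\R^d$ and the conditional risk $\eta\log(1+e^{-t})+(1-\eta)\log(1+e^{t})$ is strictly convex in $t$, any minimizer must satisfy $g_{w,b,r,s}=f^\star$ almost everywhere. Both sides are continuous, so they agree everywhere:
\[
w^\top x+b+r\sigma(s_0+s^\top x)=w^{\star\top}x+b^\star\qquad\forall x.
\]
I then split into cases. If $r=0$ or $s=0$, the sigmoid term is constant, and identifying the affine parts gives $\cE_1$ (when $r=0$) or $\cE_2$ (when $s=0$, with $b=b^\star-r\sigma(s_0)$). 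If $r\neq0$ and $s\neq0$, restrict to the line $x=ts/\|s\|^2$. The function $t\mapsto r\sigma(s_0+t)$ would then have to be affine in $t$. But it is bounded and nonconstant, and its second derivative is nonzero, which is a contradiction. This gives $\arg\min\widetilde\cR=\cE_1\cup\cE_2$.

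Finally, every point of $\cE_1\cup\cE_2$ is a global minimizer of a differentiable function, so it is critical. Combined with the first paragraph, this gives $\operatorname{Crit}(\widetilde\cR)=\cE^\star$.

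The only delicate step is justifying differentiation under the expectation, so that the partial gradient is the expected gradient and convexity transfers. This follows by dominated convergence: the loss derivative is bounded by $1$, and $|\partial g|\le\|x\|+2+|r|\|x\|$ is integrable under a Gaussian mixture. I expect no further obstacle.
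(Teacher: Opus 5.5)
Your proposal is correct and follows the paper's argument. Partial convexity in $(w,b,r)$ for fixed $s$, with the Bayes point $(w^\star,b^\star,0)$ lying in every slice, shows that every critical point is optimal, and uniqueness of the population logistic minimizer identifies $\arg\min\widetilde\cR=\cE_1\cup\cE_2$. Your explicit exclusion of the case $r\neq0$, $s\neq0$ (on a line along $s$, a bounded nonconstant sigmoid cannot be affine) and your dominated-convergence justification spell out steps that the paper only asserts.
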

These two affine spaces recover the representation mechanisms identified above:
$\cE_1$ removes the sigmoid, while $\cE_2$ makes it constant and absorbs its
contribution into the bias. They represent the same predictor but leave
different effective parameters unconstrained.

\paragraph{Returning to attention parameters.}
Does this favorable landscape survive the return to the attention parameters? The chain rule
\[
\nabla\cR(\theta)
=D\Phi(\theta)^*\nabla\widetilde\cR(\Phi(\theta))
\]
shows the possible obstruction: a singular differential can annihilate a
nonzero effective gradient. The following result identifies exactly where
this obstruction is avoided.

\begin{lemma}[Regularity of the reparameterization]
\label{lem:lda-regularity-main}
For any $\theta \in \mathbb{R}^{d\times d}\times \mathbb{R}^{d}\times \mathbb{R}^{d}\times \mathbb{R}$, the adjoint $D\Phi(\theta)^*$ is injective if and only if
$m_1\neq-m_{-1}$ and $v_{21}\neq\mathbf{0}$.
\end{lemma}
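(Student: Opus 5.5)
The plan is to use the finite-dimensional duality that $D\Phi(\theta)^*$ is injective if and only if $D\Phi(\theta)$ is surjective onto the effective space $\R^d\times\R\times\R\times\R^d$, which has dimension $2d+2$. I then work directly with the differential of the explicit map~\eqref{eq:Phi}. Write $\Delta\coloneqq m_1-m_{-1}$ and $v\coloneqq v_{21}$. For a perturbation $(\dot U,\dot u,\dot v,\dot v_{22})$ of $\theta$, the differential reads
\begin{align*}
\dot w&=\dot U\Gamma v+U_{11}\Gamma\dot v, &
\dot b&=m_{-1}^\top\dot v-\dot v_{22},\\
\dot r&=\Delta^\top\dot v+2\dot v_{22}, &
\dot s&=2\dot u+\dot U\Delta .
\end{align*}

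\emph{Sufficiency.} Assume $v\neq\mathbf 0$ and $m_1\neq -m_{-1}$, and fix a target $(\dot w,\dot b,\dot r,\dot s)$. I will solve for the perturbation in a triangular order.
\begin{itemize}
\item First handle the pair $(\dot b,\dot r)$, which depends only on $(\dot v,\dot v_{22})$. The combination $2\dot b+\dot r=(2m_{-1}+\Delta)^\top\dot v=(m_1+m_{-1})^\top\dot v$ can take any real value, since $m_1+m_{-1}\neq\mathbf 0$; choose $\dot v$ along $m_1+m_{-1}$ accordingly. Then $\dot v_{22}$ adjusts $\dot b$ independently. Hence $(\dot v,\dot v_{22})\mapsto(\dot b,\dot r)$ is onto $\R^2$.
\item With $\dot v$ now fixed, set $\dot U=a\,(\Gamma v)^\top/\|\Gamma v\|^2$ with $a=\dot w-U_{11}\Gamma\dot v$. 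This gives the desired $\dot w$, and it is possible because $\Gamma v\neq\mathbf 0$ ($\Gamma$ is positive definite and $v\neq\mathbf 0$).
\item Finally set $\dot u=(\dot s-\dot U\Delta)/2$, which produces the desired $\dot s$ and leaves the other three outputs untouched.
\end{itemize}
Thus $D\Phi(\theta)$ is surjective.

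\emph{Necessity.} I exhibit a nonzero element of $\ker D\Phi(\theta)^*$ in each failing case, or equivalently show that the range of $D\Phi(\theta)$ is a proper subspace.
\begin{itemize}
\item If $m_1=-m_{-1}$, then $2\dot b+\dot r=0$ for every perturbation. The covector $(0,2,1,0)$ therefore annihilates the range, so it lies in $\ker D\Phi(\theta)^*$.
\item If $v=\mathbf 0$, the components $(\dot w,\dot b,\dot r)$ depend only on $(\dot v,\dot v_{22})\in\R^{d+1}$. They would need to fill $\R^{d+2}$, which is impossible by a dimension count, so the range is again proper.
\end{itemize}

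The argument is elementary. The only point requiring care is the triangular ordering in the sufficiency step: $\dot v$ must be used for $(b,r)$ before $\dot U$ is used for $w$, because $\dot v$ also perturbs $w$ through $U_{11}\Gamma\dot v$. Identifying the invariant combination $2b+r$, which is exactly where the condition $m_1\neq -m_{-1}$ enters, is the key observation.
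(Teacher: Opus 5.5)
Your proof is correct and follows the paper's argument essentially step for step. Necessity uses the same invariant combination $2\delta b+\delta r=(m_1+m_{-1})^\top\delta v_{21}$ and the same dimension count when $v_{21}=\mathbf 0$; sufficiency uses the same triangular solve ($\delta v_{21},\delta v_{22}$ first, then $\delta U_{11}$, then $\delta u_{12}$), and you additionally make explicit the duality $\ker D\Phi(\theta)^*=(\operatorname{Im} D\Phi(\theta))^\perp$ that the paper leaves implicit.
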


Consequently, under asymmetric means, every critical point with
$v_{21}\neq\mathbf{0}$ is globally optimal. The exceptional set $\{v_{21}=\mathbf{0}\}$
requires a separate curvature argument: its nonoptimal critical points
admit a negative-curvature direction. Together, these facts yield the
following characterization.

\begin{theorem}[Landscape in attention parameters]
\label{thm:lda-landscape-initial}
The minimizer set of $\cR$ is $\Theta^\star=\Theta_1\cup\Theta_2$, where
\[
\Theta_1=\Phi^{-1}(\cE_1),\qquad
\Theta_2=\Phi^{-1}(\cE_2).
\]
If $m_1\neq-m_{-1}$, every critical point of $\cR$ is either a global
minimizer or a strict saddle. In particular, all local minima are global.
\end{theorem}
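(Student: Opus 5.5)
The plan has three steps: identify the minimizer set through $\Phi$, dispose of critical points with $v_{21}\neq\mathbf 0$ via Lemma~\ref{lem:lda-regularity-main}, and treat the exceptional set $\{v_{21}=\mathbf 0\}$ by an explicit second-order argument.

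\textbf{Minimizer set.} Since $\cR=\widetilde\cR\circ\Phi$ and $\Phi$ reaches a point of $\cE^\star$, we have $\min\cR=\min\widetilde\cR=\cR^\star$. Hence $\theta$ is a minimizer if and only if $\Phi(\theta)\in\arg\min\widetilde\cR$. By Theorem~\ref{thm:lda-landscape}, this gives $\Theta^\star=\Phi^{-1}(\cE_1)\cup\Phi^{-1}(\cE_2)$. The realizability construction of Appendix~\ref{app:lda-realizability} guarantees that this set is nonempty.

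\textbf{Regular critical points.} Assume $m_1\neq-m_{-1}$ and let $\theta$ be critical, so $0=D\Phi(\theta)^*\nabla\widetilde\cR(\Phi(\theta))$. If $v_{21}\neq\mathbf 0$, Lemma~\ref{lem:lda-regularity-main} gives injectivity of $D\Phi(\theta)^*$. Therefore $\nabla\widetilde\cR(\Phi(\theta))=0$, and Theorem~\ref{thm:lda-landscape} shows $\Phi(\theta)\in\cE^\star$, i.e.\ $\theta\in\Theta^\star$.

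\textbf{Exceptional set $v_{21}=\mathbf 0$.} Here $w=0$, $b=-v_{22}$, $r=2v_{22}$, and $s=2u_{12}+U_{11}\Delta$ with $\Delta=m_1-m_{-1}\neq 0$. The predictor is $f_\theta(x)=v_{22}(2\sigma(s_0+s^\top x)-1)$.

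First, if $\Phi(\theta)$ is optimal we are done. Otherwise, since $w=0\neq w^\star=\Gamma^{-1}\Delta$, the effective gradient component $g_w\coloneqq\partial_w\widetilde\cR(\Phi(\theta))$ should be nonzero. Indeed, if $g_w=g_b=g_r=0$, then by convexity in $(w,b,r)$ at fixed $s$ the point would minimize over $(w,b,r)$ for that $s$. The Bayes predictor is attainable with $r=0$, so it would be globally optimal and hence lie in $\cE^\star$, contradicting $w=0$. It remains to rule out $g_w=0$ while $(g_b,g_r)\neq 0$. For this I would use the criticality equations in $\theta$: those in $v_{22}$ and $u_{12}$ give the relations $-g_b+2g_r=0$ and $2\,\partial_s\widetilde\cR=0$. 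Combined with the $U_{11}$-equation, I would try to show that $g_w=0$ forces full effective stationarity.

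Next, for the negative-curvature direction, take $\theta_t=\theta+t\,\xi$ and $U_{11}+t\,H$ for a direction $\xi\in\R^d$ in $v_{21}$ and a matrix $H$. The map $\Phi$ is bilinear, with $w=U_{11}\Gamma v_{21}$. Along this curve, $w(t)=t\,U_{11}\Gamma\xi+t^2H\Gamma\xi$, while $b$, $r$, $s$ move linearly in $t$. Expanding $\cR(\theta_t)$ to second order, the first-order term vanishes by criticality. The second-order term contains the cross term $\langle g_w,H\Gamma\xi\rangle$ from the bilinear coupling, which is linear in $H$. Meanwhile, compensating $s$ by adjusting $u_{12}\mapsto u_{12}-tH\Delta/2$ keeps $s$ fixed. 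Choosing $H\Gamma\xi=-\lambda g_w$ with $\lambda>0$ large makes the coefficient of $t^2$ negative, since the remaining quadratic terms are bounded independently of $\lambda$ once $\xi$ is fixed. Rescaling then gives a direction $\eta$ with $\eta^\top\nabla^2\cR(\theta)\eta<0$, so $\theta$ is a strict saddle.

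\textbf{Main obstacle.} The hard step is the claim $g_w\neq 0$ at nonoptimal critical points of the exceptional set, and the proof must handle it explicitly. If it fails, I would fall back on a direction also perturbing $v_{22}$ and use the $s$-gradient through $U_{11}$. The fact that local minima are global then follows immediately, since strict saddles are not local minima.
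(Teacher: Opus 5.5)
Your overall architecture matches the paper's: minimizers as the preimage of $\mathcal E^\star$, injectivity of $D\Phi(\theta)^*$ off $\{v_{21}=\mathbf 0\}$, then a negative-curvature direction obtained from a nonzero effective $w$-gradient, with $u_{12}$ compensating $s$. Your second-order computation along $\theta+t(H,-H\Delta/2,\xi,0)$ is correct. The genuine gap is the step you flag yourself: proving $g_w\neq 0$ at critical points with $v_{21}=\mathbf 0$. The route you propose cannot close it. At $v_{21}=\mathbf 0$ the $U_{11}$-equation reads $\nabla_{U_{11}}\mathcal R=g_w(\Gamma v_{21})^\top+g_s\Delta^\top=g_s\Delta^\top$. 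This carries exactly the same information as the $u_{12}$-equation, namely $g_s=0$, and places no constraint on $(g_b,g_r)$. A symptom of the gap is that your treatment of the exceptional set never uses the hypothesis $m_1\neq-m_{-1}$, yet that is precisely where it is needed. Your fallback does not help either: $\nabla_{u_{12}}\mathcal R=2\nabla_s\widetilde{\mathcal R}(\Phi(\theta))$, so $g_s=0$ at every critical point, and there is no first-order $s$-gradient to exploit.

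The missing ingredient is the $v_{21}$-stationarity equation,
\[
\nabla_{v_{21}}\mathcal R(\theta)=\Gamma U_{11}^\top g_w+g_b\,m_{-1}+g_r\,\Delta=0 .
\]
If $g_w=0$, this reduces to $g_b\,m_{-1}+g_r\,\Delta=0$. Substituting $g_b=2g_r$ from the $v_{22}$-equation gives $g_r(m_1+m_{-1})=0$. The asymmetry assumption then forces $g_r=0$, and hence $g_b=0$. Your convexity argument finishes from there: $\widetilde{\mathcal R}$ is convex in $(w,b,r)$ at fixed $s$, and the Bayes score is attainable with $r=0$, so $\widetilde{\mathcal R}(\Phi(\theta))=\mathcal R^\star$. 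This puts $\Phi(\theta)\in\mathcal E^\star$, contradicting $w=0\neq w^\star$.

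With this step inserted, your proof is complete. It differs from the paper only in how the final contradiction is reached. The paper shows $\mathbb E\bigl[(\sigma(f_\theta(X))-\sigma(f^\star(X)))(f_\theta(X)-f^\star(X))\bigr]=0$. It then observes that the bounded function $x\mapsto c_1+c_2\sigma(s_0+s^\top x)$ cannot equal the nonconstant affine score $f^\star$. Your fixed-$s$ convexity route to the same conclusion is equally valid and arguably more direct.
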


\paragraph{Why this does not establish global convergence.}
The preceding theorem controls finite critical points, but not parameter
escape. In fact, the effective risk admits diverging minimizing sequences,
as shown by the following construction.

\begin{proposition}[Minimizing sequences at infinity]
\label{prop:lda-escape-main}
For every $c\in\R^d$, the sequence
\[
\zeta_R=(w^\star-\sigma'(s_0)c,
 b^\star-\sigma(s_0)R,R,c/R)
\]
satisfies $g_{\zeta_R}\to f^\star$ in $L^1(\mu_X)$ and
$\widetilde\cR(\zeta_R)\to\cR^\star$ as $R\to\infty$, where
$g_{w,b,r,s}(x)$ is defined in \eqref{eq:lda-risk} and $\mu_X$ is the
query distribution.
\end{proposition}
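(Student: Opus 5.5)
The plan is a first-order Taylor expansion of the sigmoid term, followed by dominated convergence to pass from pointwise convergence of the scores to convergence of the risk. Write $\zeta_R=(w_R,b_R,r_R,s_R)$. For every $x$,
\[
g_{\zeta_R}(x)=w^{\star\top}x-\sigma'(s_0)c^\top x+b^\star-\sigma(s_0)R+R\,\sigma\!\left(s_0+\tfrac{c^\top x}{R}\right).
\]
Taylor's theorem with Lagrange remainder gives
\[
R\,\sigma\!\left(s_0+\tfrac{c^\top x}{R}\right)=R\sigma(s_0)+\sigma'(s_0)c^\top x+\tfrac{1}{2R}\sigma''(\xi_{R,x})(c^\top x)^2 .
\]
The two divergent terms $\pm\sigma(s_0)R$ cancel, and so do the linear corrections $\pm\sigma'(s_0)c^\top x$. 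We are left with
\[
g_{\zeta_R}(x)-f^\star(x)=\tfrac{1}{2R}\sigma''(\xi_{R,x})(c^\top x)^2 .
\]
Since $\|\sigma''\|_\infty\le 1/4$, this yields the uniform-in-$R$ pointwise bound
\[
|g_{\zeta_R}(x)-f^\star(x)|\le \tfrac{1}{8R}(c^\top x)^2 .
\]

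For the $L^1(\mu_X)$ convergence, integrate this bound against $\mu_X$. The query distribution is the two-component Gaussian mixture $\pi_+\cN(m_1,\Gamma)+(1-\pi_+)\cN(m_{-1},\Gamma)$, which has finite second moments. Hence
\[
\|g_{\zeta_R}-f^\star\|_{L^1(\mu_X)}\le \tfrac{1}{8R}\,\mathbb E[(c^\top X)^2]\to 0 .
\]

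For the risk, use that $\ell(t)=\log(1+e^{-t})$ is $1$-Lipschitz. Then
\[
|\widetilde\cR(\zeta_R)-\cR^\star|\le \mathbb E\,|Y(g_{\zeta_R}(X)-f^\star(X))|=\|g_{\zeta_R}-f^\star\|_{L^1(\mu_X)}\to 0 .
\]
Here $\cR^\star=\mathbb E[\ell(Yf^\star(X))]$, because $f^\star$ is the population minimizer of the logistic loss, as recalled before Theorem~\ref{thm:lda-landscape}. This already gives the claim without a separate dominated convergence step.

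The main obstacle is only making sure the Taylor remainder is controlled uniformly. The bound on $\sigma''$ is global, so no localization in $x$ is needed. It is also worth remarking that $\|\zeta_R\|\to\infty$ through $r_R=R$ (and $b_R$), which is the point of the statement: the sequence minimizes the risk while escaping every compact set.
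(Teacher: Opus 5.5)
Your proof is correct and follows the paper's argument: a second-order Taylor expansion of $\sigma$ around $s_0$ cancels the divergent constant and linear terms, the remainder is bounded by $\|\sigma''\|_\infty (c^\top x)^2/(2R)$, and the $1$-Lipschitz logistic loss transfers $L^1(\mu_X)$ convergence to the risk. The only difference is that you integrate the explicit $O(1/R)$ bound directly, which also gives a rate. The paper instead invokes dominated convergence, a step its own $C/R$ bound already makes unnecessary.
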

As the sigmoid amplitude grows and its direction shrinks, its constant
and linear contributions are canceled by the affine part. The predictor
approaches Bayes optimality while the parameters diverge. This does not
prove divergence of gradient flow, but shows why the landscape alone
cannot supply a boundedness argument. Proofs of the landscape and
noncompactness results are given in Appendix~\ref{app:lda-landscape-proof}.

\subsection{Gradient-flow convergence}

We now study the gradient flow $\dot\theta_t=-\nabla\cR(\theta_t)$. The two Bayes families
$\Theta_1$ and $\Theta_2$ have different local geometries, so their common
optimality does not imply the same convergence behavior.

\paragraph{Local convergence near the regular component.}
Assume first that $m_1\neq-m_{-1}$, and define
$\Theta_1^{\mathrm{reg}}=\{\theta\in\Theta_1:s(\theta)\neq\mathbf{0}\}$.
At every $\theta^\star\in\Theta_1^{\mathrm{reg}}$, the Hessian satisfies
\[
\ker\nabla^2\cR(\theta^\star)
=T_{\theta^\star}\Theta_1^{\mathrm{reg}}.
\]
Thus the only flat directions are tangent to the minimizer manifold;
the risk has strictly positive curvature in the normal directions.
This Morse--Bott structure provides the local stability needed for the
following convergence result.

\begin{theorem}[Local convergence to the regular Bayes manifold]
\label{thm:lda-dynamics}
Assume $m_1\neq-m_{-1}$ and let $\theta^\star\in\Theta_1$ satisfy
$s(\theta^\star)\neq\mathbf{0}$. If initialized sufficiently close to $\theta^\star$,
gradient flow exists for all $t\geq0$, remains in a neighborhood of
$\Theta_1$, and converges exponentially to a point
$\theta_\infty\in\Theta_1$. In particular,
$f_{\theta_\infty}=f^\star$ almost surely.
\end{theorem}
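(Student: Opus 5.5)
The plan is the standard Morse--Bott (or Łojasiewicz with exponent $1/2$) argument for gradient flows near a manifold of minimizers.

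\textbf{Step 1: $\Theta_1^{\mathrm{reg}}$ is a smooth submanifold near $\theta^\star$.} By definition $\Theta_1=\Phi^{-1}(\cE_1)$, i.e.\ the zero set of $F(\theta)=(w(\theta)-w^\star,\,b(\theta)-b^\star,\,r(\theta))$. Under $m_1\neq-m_{-1}$, I will check that $DF(\theta^\star)$ is surjective. For $b,r$, vary $v_{22}$ together with $v_{21}$ along $m_1+m_{-1}$: the map $(v_{21},v_{22})\mapsto(b,r)$ has full rank precisely when $m_1\neq -m_{-1}$. For $w=U_{11}\Gamma v_{21}$, vary $U_{11}$, which requires $v_{21}\neq\mathbf 0$. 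On $\Theta_1$ we have $w^\star\neq0$, and since $w^\star=U_{11}\Gamma v_{21}$, this forces $v_{21}\neq 0$. A small point to handle is that the $U_{11}$-variations may also move $s$, which is unconstrained and therefore harmless. Surjectivity gives a local submanifold of codimension $d+2$ containing $\theta^\star$, and the regularity condition $s\neq\mathbf 0$ is open.

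\textbf{Step 2: the Morse--Bott property.} The key identity is the kernel statement $\ker\nabla^2\cR(\theta^\star)=T_{\theta^\star}\Theta_1^{\mathrm{reg}}$ asserted just before the theorem; I would prove it as follows. At a minimizer, $\nabla\widetilde\cR(\Phi(\theta^\star))=0$, so
\[
\nabla^2\cR(\theta^\star)=D\Phi^*\,\nabla^2\widetilde\cR(\Phi(\theta^\star))\,D\Phi .
\]
For a direction $\delta=(\delta w,\delta b,\delta r,\delta s)$ at a point with $r=0$, the first-order change of the predictor is
\[
\delta w^\top x+\delta b+\delta r\,\sigma(s_0+s^\top x),
\]
with no $\delta s$ contribution because $r=0$. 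The logistic Hessian is $\mathbb E[\ell''\,(\delta g)^2]$ with $\ell''>0$. Since $s\neq\mathbf 0$, the functions $1$, $x$ and $\sigma(s_0+s^\top x)$ are linearly independent in $L^2(\mu_X)$, because the sigmoid of a nonconstant linear form is not affine. Hence the quadratic form vanishes exactly when $(\delta w,\delta b,\delta r)=0$. So $\ker\nabla^2\cR=\ker DF=T\Theta_1$, and the Hessian is positive definite on the normal space.

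\textbf{Step 3: Łojasiewicz inequality and convergence.} A $C^2$ function attaining its minimum on a Morse--Bott manifold satisfies, on a neighborhood $\mathcal U$ of $\theta^\star$,
\[
\|\nabla\cR(\theta)\|^2\ge c\,(\cR(\theta)-\cR^\star)
\quad\text{and}\quad
\cR(\theta)-\cR^\star\ge c'\operatorname{dist}(\theta,\Theta_1)^2 .
\]
Both follow by Taylor expansion in tubular coordinates using the normal positive definiteness, with $\cR$ being $C^2$ or even analytic thanks to Gaussian integrability. The standard argument then runs as follows. While $\theta_t\in\mathcal U$, the gap $\cR-\cR^\star$ decays like $e^{-ct}$, and the path length satisfies
\[
\int\|\dot\theta\|\,dt\le \tfrac{2}{\sqrt c}\sqrt{\cR(\theta_0)-\cR^\star}.
\]
Initializing close enough to $\theta^\star$ makes this length smaller than the radius of $\mathcal U$. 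Therefore the trajectory never leaves $\mathcal U$, exists globally (it is bounded and the vector field is locally Lipschitz), and converges exponentially to some $\theta_\infty$ with $\cR(\theta_\infty)=\cR^\star$. Lying in $\mathcal U$, where the only minimizers are those of $\Theta_1^{\mathrm{reg}}$ (we must check $\Theta_2$ does not enter, which holds since $s\neq0$ near $\theta^\star$), we get $\theta_\infty\in\Theta_1$. Then $\Phi(\theta_\infty)\in\cE_1$ gives $f_{\theta_\infty}=f^\star$.

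\textbf{Main obstacle.} The hardest part is Step 2: it requires the linear independence argument together with careful control of the Jacobian of $\Phi$, since $\ker D\Phi$ directions must be shown to be exactly the tangent ones. Step 1's rank computation is the delicate piece within this.
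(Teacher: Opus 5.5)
Your proof is correct. Steps 1--2 match the paper's Lemma~\ref{lem:R_morse_bott_on_Theta1_reg}: surjectivity of the $(w,b,r)$-block of $D\Phi$ (with $v_{21}\neq\mathbf 0$ forced by $w^\star\neq 0$), the preimage theorem, and the kernel identity from $D^2\mathcal R(\theta^\star)=D^2\widetilde{\mathcal R}(\zeta^\star)(D\Phi\,\cdot,D\Phi\,\cdot)$ together with linear independence of $1$, $x$ and $\sigma(s_0+s^\top x)$.

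You depart from the paper in the dynamical step. The paper invokes the Morse--Bott lemma to get a chart $\psi$ with $\mathcal R\circ\psi^{-1}(x,y)=\mathcal R^\star+|y|^2$. It then rewrites the Euclidean flow as $\dot z=-G(z)\nabla F(z)$ with $G=JJ^\top$, and controls the blocks $\dot y=-2Cy$, $\dot x=-2By$ on a compact set via $\lambda_{\min}(C)\ge\lambda$ and $\|B\|\le M$. Finally it shrinks the initial neighborhood so the trajectory never exits the chart. You instead turn the Morse--Bott property into a local Polyak--\L{}ojasiewicz inequality (the Rebjock--Boumal equivalence, which the paper cites only for the definition) and run the energy/path-length argument in the original coordinates.

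What each route buys:
\begin{itemize}
\item Your route needs only $C^2$ regularity. It avoids the normal-form diffeomorphism and the pulled-back metric. It gives the exponential rate directly in $\theta$ through $\|\theta_t-\theta_\infty\|\le \tfrac{2}{\sqrt c}\sqrt{\mathcal R(\theta_t)-\mathcal R^\star}$.
\item The paper's route gives an explicit tangent/normal decoupling with normal contraction $|y_t|\le e^{-2\lambda t}|y_0|$. It also obtains $\theta_\infty\in\mathcal S$ directly from the chart, whereas you need the separate (correct) observation that $s\neq\mathbf 0$ on the neighborhood excludes $\Theta_2$.
\end{itemize}

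Two small remarks. First, a uniform PL constant from your Taylor argument requires normal positive-definiteness at every point of $\Theta_1^{\mathrm{reg}}$ near $\theta^\star$, not only at $\theta^\star$. Your Step~2 computation gives this verbatim, since $s\neq\mathbf 0$ there. Second, the quadratic-growth inequality you list is never used in the convergence argument.
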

The proof (see Appendix~\ref{app:lda-local-proof}) controls both the decay
normal to the manifold and the displacement along it, giving convergence
to an optimal parameter.

\paragraph{Why the constant-gate component is different.}
For every $\theta^\star$ in $\Theta_2$ with $r(\theta^\star)\neq0$ perturbing the gate direction can be compensated to first
order by changing the affine coefficient. Under the same asymmetry
assumption, the resulting Hessian kernel has $d$ extra dimensions:
\[
\dim\ker\nabla^2\cR(\theta^\star)
-\dim T_{\theta^\star}\Theta_2=d.
\]
These are flat directions that are transverse to the local minimizer manifold. Thus the preceding Morse--Bott argument does not extend to $\Theta_2$. The calculation is given in
Appendix~\ref{app:lda-theta2-geometry}. This degeneracy alone determines
neither convergence nor its rate. The slower decay in
Figure~\ref{fig:lda-local} is an empirical observation consistent with the previous discussion (see
Appendix~\ref{appendix:LDA_experimental_details} for a description of the numerical protocol).

\paragraph{Global convergence on a symmetric invariant subspace.}
The centrally symmetric case $m_1=-m_{-1}$ falls outside the regularity
argument for $\Phi$. When the classes are also balanced, symmetry instead
allows us to construct an invariant subspace on which the predictor
remains linear throughout training. Assume $m_1=-m_{-1}\eqqcolon m$ and
$\pi_+=1/2$, and define the Krylov space
\[
K=\operatorname{span}\{m,\Gamma m,\ldots,\Gamma^{d-1}m\}.
\]
With $P_K$ the orthogonal projector onto $K$, let
\[
\mathcal I_K
=\{u_{12}=\mathbf{0},\ v_{22}=0,\ v_{21}\in K^\perp,\ U_{11}P_K=\mathbf{0}\}.
\]
Orthogonality of $v_{21}$ to $m$ alone is not generally preserved by the flow,
which repeatedly applies $\Gamma$. The Krylov space ensures that
$K^\perp$ is $\Gamma$-invariant, so these constraints close under the
dynamics.

\begin{theorem}[Global convergence on the invariant subspace]
\label{thm:lda-krylov}
\label{thm:exponential_convergence_krylov_submanifold}
The subspace $\mathcal I_K$ is invariant under gradient flow. If
$U_{11}(0)=\mathbf{0}$, $u_{12}(0)=\mathbf{0}$, $v_{22}(0)=0$ and $v_{21}(0)\in K^\perp\setminus\{\mathbf{0}\}$, then $\theta_t$ converges
exponentially to a limit $\theta_\infty\in\mathcal I_K$ satisfying
$f_{\theta_\infty}=f^\star$.
\end{theorem}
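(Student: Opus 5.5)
The plan is to use the reduction \eqref{eq:effective}--\eqref{eq:Phi} and the symmetry $(X,Y)\mapsto(-X,-Y)$. Under $m_1=-m_{-1}=m$ and $\pi_+=1/2$ this map preserves the law of $(X,Y)$, and $s_0=0$. On $\mathcal I_K$ we have $v_{21}^\top m=0$ (since $m\in K$), $v_{22}=0$, $u_{12}=\mathbf 0$ and $U_{11}m=0$. Substituting into \eqref{eq:Phi} gives
\[
b=0,\qquad r=0,\qquad s=0,\qquad w=U_{11}\Gamma v_{21}.
\]
The predictor is therefore the linear score $w^\top x$, with $\sigma(0)=1/2$ harmless. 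Note also that $b^\star=0$ and $w^\star=2\Gamma^{-1}m$ in this case, so the Bayes score is realizable by linear scores. Write $L(w):=\widetilde\cR(w,0,0,0)$ and $\ell'$ for the derivative of $t\mapsto\log(1+e^{-t})$.

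\textbf{Invariance.} I compute $\nabla\cR=D\Phi^*\nabla\widetilde\cR$ at a point of $\mathcal I_K$ and check that the components normal to $\mathcal I_K$ vanish.
\begin{itemize}
\item At $r=s=0$ we get $\partial_r\widetilde\cR=\tfrac12\partial_b\widetilde\cR$ and $\partial_s\widetilde\cR=r\,\sigma'(0)\,\mathbb E[\ell'(Yw^\top X)YX]=0$.
\item We have $\partial_b\widetilde\cR=\mathbb E[\ell'(Yw^\top X)Y]$, which vanishes because $(X,Y)\mapsto(-X,-Y)$ flips $Y$ and leaves $Yw^\top X$ unchanged.
\item Hence $\nabla_{u_{12}}\cR=2\partial_s\widetilde\cR=0$ and $\nabla_{v_{22}}\cR=-\partial_b\widetilde\cR+2\partial_r\widetilde\cR=0$.
\item For the remaining blocks, $\nabla_{v_{21}}\cR=\Gamma U_{11}^\top\nabla L(w)$ and $\nabla_{U_{11}}\cR=\nabla L(w)(\Gamma v_{21})^\top$.
\item $U_{11}P_K=0$ means $U_{11}^\top$ takes values in $K^\perp$. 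Since $K$ is $\Gamma$-invariant and $\Gamma$ is symmetric, $K^\perp$ is also $\Gamma$-invariant, so $\nabla_{v_{21}}\cR\in K^\perp$.
\item $\Gamma v_{21}\in K^\perp$ gives $\nabla_{U_{11}}\cR\,P_K=0$.
\end{itemize}
So the vector field is tangent to the linear subspace $\mathcal I_K$, and uniqueness of solutions of the (real-analytic) ODE gives invariance.

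\textbf{Reduced dynamics and a conservation law.} On $\mathcal I_K$ the flow is
\[
\dot U_{11}=-\nabla L(w)(\Gamma v_{21})^\top,\qquad \dot v_{21}=-\Gamma U_{11}^\top\nabla L(w),\qquad w=U_{11}\Gamma v_{21}.
\]
Differentiating gives $\tfrac{d}{dt}\|U_{11}\|_F^2=-2\nabla L(w)^\top w=\tfrac{d}{dt}\|v_{21}\|^2$. Hence $\|v_{21}(t)\|^2-\|U_{11}(t)\|_F^2=\|v_{21}(0)\|^2>0$ is conserved, using $U_{11}(0)=0$. This has two consequences:
\begin{itemize}
\item $\|v_{21}(t)\|\ge\|v_{21}(0)\|$ for all $t$, so the gradient keeps a uniform lower bound $\|\nabla\cR\|\ge\|\nabla_{U_{11}}\cR\|\ge\lambda_{\min}(\Gamma)\,\|v_{21}(0)\|\,\|\nabla L(w)\|$.
\item $\|U_{11}\|_F^2+\|v_{21}\|^2=2\|v_{21}\|^2-\|v_{21}(0)\|^2$ controls the size of the parameters through $\|v_{21}\|$ alone.
\end{itemize}

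\textbf{PL inequality and convergence.} The function $L$ is the population logistic risk of linear scores under a nondegenerate Gaussian mixture with overlapping classes. It is therefore smooth, coercive, and strictly convex, with Hessian $\mathbb E[\sigma'(Yw^\top X)XX^\top]\succ0$ that is bounded below uniformly on compact sets. Its unique minimizer is $w^\star$ and $L(w^\star)=\cR^\star$.
\begin{itemize}
\item Since $\cR(\theta_t)=L(w_t)$ decreases, $w_t$ stays in the compact sublevel set $\{L\le L(w_0)\}$. Strong convexity there gives $\|\nabla L(w)\|^2\ge2\mu(L(w)-\cR^\star)$.
\item Combined with the lower bound from the conservation law, this yields a PL inequality $\|\nabla\cR(\theta_t)\|^2\ge c\,(\cR(\theta_t)-\cR^\star)$ along the trajectory, hence $\cR(\theta_t)-\cR^\star\le e^{-ct}(\cR(\theta_0)-\cR^\star)$.
\item Finite length follows from $\tfrac{d}{dt}\sqrt{\cR-\cR^\star}=-\|\nabla\cR\|^2/(2\sqrt{\cR-\cR^\star})\le-\tfrac{\sqrt c}{2}\|\dot\theta\|$. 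Thus $\int_0^\infty\|\dot\theta_t\|\,dt<\infty$, the parameters stay bounded, the flow exists globally, and $\theta_t\to\theta_\infty\in\mathcal I_K$ exponentially fast.
\item At the limit $L(w_\infty)=\cR^\star$, so $w_\infty=w^\star$. Together with $b=r=s=0$ this gives $f_{\theta_\infty}=f^\star$.
\end{itemize}

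The main obstacle is making the PL constant uniform along the trajectory, so that it does not degenerate as $v_{21}$ could shrink or $w$ could escape. The conservation law, which relies on $U_{11}(0)=0$ and $v_{21}(0)\neq0$, and the coercivity of $L$ are exactly what handle these two failure modes. I would verify both carefully, including the claim that the Hessian of $L$ is uniformly positive definite on sublevel sets.
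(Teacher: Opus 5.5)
Your proposal is correct. Its core ingredients match the paper's:
\begin{itemize}
\item invariance, via $\Gamma$-stability of $K^\perp$;
\item the conserved balance $\|v_{21}(t)\|^2-\|U_{11}(t)\|_F^2=\|v_{21}(0)\|^2$;
\item coercivity plus strong convexity of $L$ on a compact convex set containing $w^\star$. Your sublevel set $\{L\le L(0)\}$ plays the role of the paper's closed convex hull $\mathcal C_{\theta_0}$.
\end{itemize}

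Where you differ is in how the argument is closed. The paper passes to the effective variable and writes $\dot w_t=-G_t\nabla L(w_t)$ with $G_t\succeq\lambda_{\min}(\Gamma)^2\|v_{21}(0)\|^2\Id$. From this it derives exponential convergence of $L(w_t)$ and of $w_t$. It then needs two further estimates before it can integrate the factor velocities: $M$-smoothness of $L$, to make $\|\nabla L(w_t)\|$ decay, and the identity $\tfrac{d}{dt}(\|U_{11}\|_F^2+\|v_{21}\|^2)=-4\langle w_t,\nabla L(w_t)\rangle$, to bound the factors.

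You instead use the same lower bound to get a PL inequality for $\cR$ directly in parameter space. On $\mathcal I_K$ one has $\|\nabla\cR\|^2=\langle\nabla L,G_t\nabla L\rangle$, so your inequality is the paper's with the $U_{11}\Gamma^2U_{11}^\top$ term discarded. The \L{}ojasiewicz-type finite-length bound $\int_t^\infty\|\dot\theta_s\|\,ds\le \tfrac{2}{\sqrt c}\sqrt{\cR(\theta_t)-\cR^\star}$ then gives boundedness, global existence, and exponential convergence of $\theta_t$ in one stroke. Your route is shorter and makes global existence explicit. The paper's route yields separate explicit rates for $w_t$ and for the factors.

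Finally, your symmetry argument that $\partial_b\widetilde\cR=0$ is correct but not needed. At $s=\mathbf 0$, $s_0=0$, the identity $\partial_r\widetilde\cR=\tfrac12\partial_b\widetilde\cR$ alone cancels both $\nabla_{v_{22}}\cR$ and the $m$-component of $\nabla_{v_{21}}\cR$.
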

On $\mathcal I_K$, the bias and sigmoid term vanish, leaving
$f_{\theta_t}(x)=\langle U_{11}(t)\Gamma v_{21}(t),x\rangle$.
The proof combines coercivity of the linear logistic risk with the
conserved balance
\[
\|U_{11}(t)\|_F^2-\|v_{21}(t)\|^2
=-\|v_{21}(0)\|^2.
\]
The former bounds the effective predictor; the latter prevents its
induced gradient dynamics from degenerating. Further estimates control
the factors and yield their exponential convergence; see
Appendix~\ref{app:lda-krylov-proof}.

From the initialization point of view, setting $U_{11}(0)=\mathbf{0}$, $u_{12}(0)=\mathbf{0}$, and $v_{22}(0)=0$ does not raise any difficulty. The only non-trivial condition is to choose $v_{21}(0)\in K^\perp\setminus\{\mathbf{0}\}$, requiring the knowledge of \(\Gamma\) and \( m\). However, from a practical perspective, this assumption is mild in high-dimensional settings when \(\dim K\ll d\), since an isotropic random initialization typically has only a fraction of order \(\dim(K)/d\) of its mass in $K$. In particular, when \(\Gamma=\gamma^2 \mathrm{I}_d\), one has \(\dim K=1\) and $\mathrm d(v_{21}(0)/\|v_{21}(0)\|,K^\perp)=O_\mathbb{P}(\frac{1}{\sqrt{d}})$.
Our experiments in dimension $d=20$ show rapid convergence
to near-Bayes risk without knowledge of $m$ nor $\Gamma$
(Appendix~\ref{appendix:LDA_experimental_details},
Figure~\ref{fig:risk-sym-v21-random-d=20}). 

\begin{figure}[t]
\centering
\includegraphics[width=0.45\linewidth]{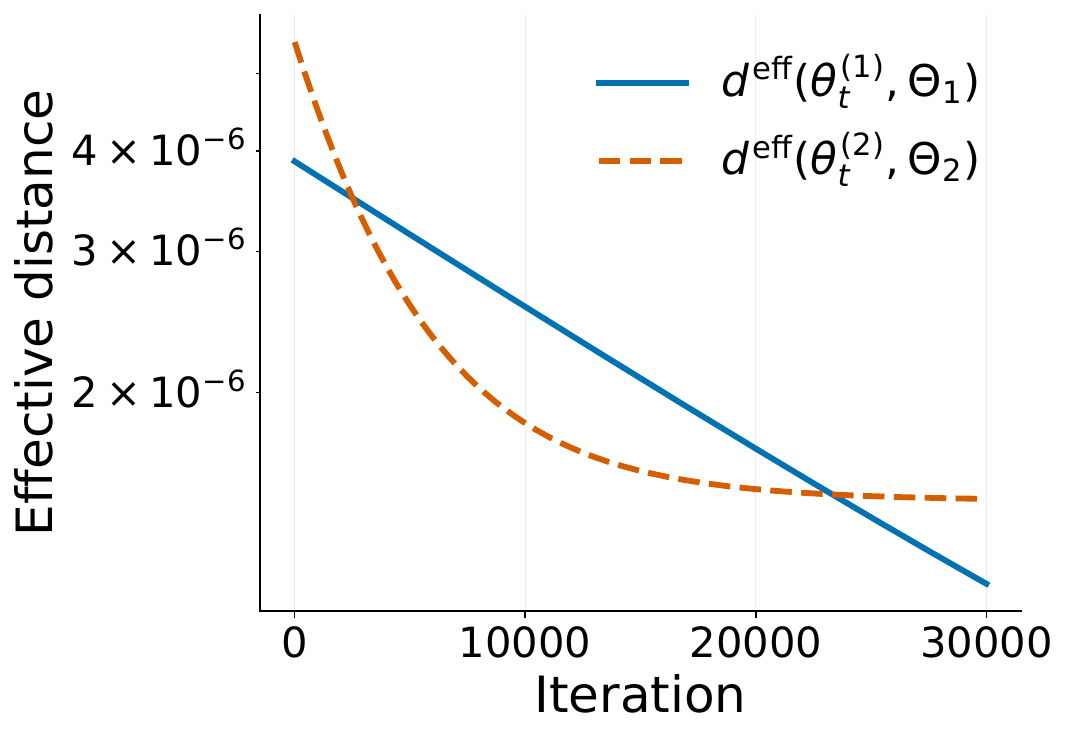}
\caption{Population gradient descent near the two Bayes manifolds (see table~\ref{tab:initializations} for initialization details). The effective distance (euclidean distance between the effective parameters and $\cE_1$ or $\cE_2$) to the regular component $\Theta_1$ is linear on a log scale, while convergence toward the flatter component $\Theta_2$ is markedly slower, matching the Morse--Bott characterization.}
\label{fig:lda-local}
\end{figure}

\section{Denoising: Selection Becomes Essential}
\label{sec:denoising}

Classification used only the affine part of the mixture-attention
decomposition. Denoising reveals the role of its nonlinear part. Denoising a
sample from a Gaussian mixture requires both shrinking the observation within
each component and inferring which component generated it. The latter is a
genuinely query-dependent selection problem, and it is precisely where
softmax attention differs from linear attention.

\subsection{The Bayes denoiser as a gated affine map}

Let $X\sim\sum_{k=1}^M\pi_k\cN(m_k,\Gamma)$, consider a noisy version of it
\begin{equation}\label{eq:denoisingmodel}
Z=X+\varepsilon,
\qquad \varepsilon\sim\cN(0,\xi^2\Id),
\end{equation}
where the noise $\varepsilon$ is independent of $X$, and set $S=\Gamma+\xi^2\Id$. Under
squared loss $\cR_\mathrm{sq}(f):=\mathbb E[\|X-f(Z)\|^2]$, 
with a slight abuse, $f^\star$ now denotes
the optimal reconstruction of $X$ from $Z$ (see Appendix~\ref{app:bayes-denoiser-proof}): 
\begin{equation}
f^\star(z)=\mathbb E[X\mid Z=z]
=\Gamma S^{-1}z+\xi^2S^{-1}\sum_k\beta_k(z)m_k,
\label{eq:bayes-denoiser}
\end{equation}
where
\[
\beta_k(z)=\softmax_k\!\left(\log\pi_k+m_k^\top S^{-1}z-\tfrac12m_k^\top S^{-1}m_k\right).
\]
Equivalently, if $C$ denotes the latent component label, then
$\beta_k(z)=\mathbb{P}(C=k\mid Z=z)$. Thus the first term in
\eqref{eq:bayes-denoiser} performs within-component Gaussian shrinkage, while
the second returns the posterior barycenter of the component means. In the
high-noise regime the latter approaches the prior mean $\sum_k\pi_km_k$; in
the low-noise regime the full denoiser approaches the identity.

Denoting by $\mu_Z$ the distribution of $Z$ defined by \eqref{eq:denoisingmodel}, this decomposition is strikingly close to the shared-covariance attention
formula~\eqref{eq:shared}:
\begin{align*}
T_{U,V}[\mu_Z](z)&=VSUz+V\sum_k\alpha_k(z)m_k,\\
\text{with} \quad \alpha_k(z)&=\softmax_k(\log\pi_k+m_k^\top Uz).
\end{align*}
Both maps combine a linear transform of the observation with a softmax-weighted
barycenter. Two obstructions remain, however. First, the posterior weights $\beta_k$
contain the offsets $-\frac12m_k^\top S^{-1}m_k$, which standard dot-product
attention cannot encode. Second, within a single attention head, the same matrices
$(U,V)$ govern both the linear and gated terms.

To isolate the second obstruction, we henceforth consider a mixture of 2
components in dimension $d\geq2$, with $\Gamma=\gamma^2\Id$, distinct means
satisfying $\|m_1\|=\|m_2\|$, and arbitrary positive mixture weights. The
equal-norm assumption cancels out the offsets in the barycenter coefficients, giving
\begin{equation}
f^\star(z)=
\frac{\gamma^2}{\gamma^2+\xi^2}z
+\frac{\xi^2}{\gamma^2+\xi^2}
\sum_{k=1}^2\beta_k(z)m_k,
\label{eq:spherical-denoiser}
\end{equation}
with
\(
\beta_k(z)=\softmax_k\!\left(
\log\pi_k+\frac{m_k^\top z}{\gamma^2+\xi^2}\right).
\)

\subsection{Two heads are sufficient, and softmax is necessary}

\begin{figure*}[!t]
\centering
\includegraphics[height=0.3\linewidth]{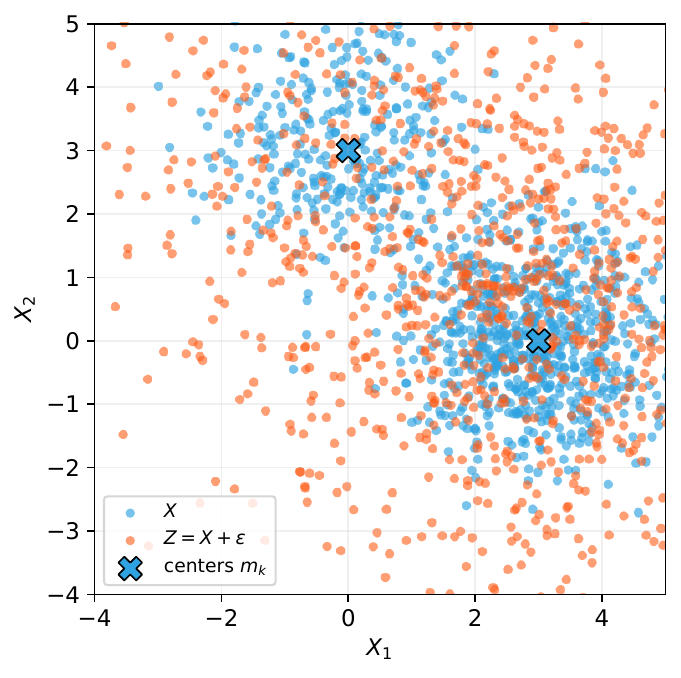}\hspace{20pt}
\includegraphics[width=0.52\linewidth, height=0.3\linewidth]{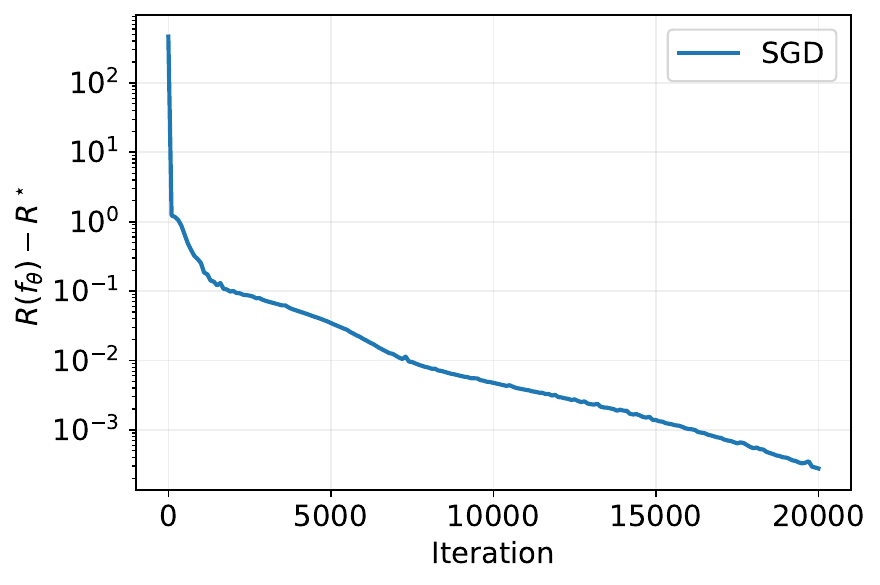}
\caption{Learning the two-head Bayes denoiser in $d=2$. Left: clean samples
$X$ and noisy observations $Z=X+\varepsilon$ for $m_1=(3,0)$,
$m_2=(0,3)$, $\gamma=1$, and $\xi=2$. Right: excess population risk along stochastic
gradient descent with step size $2\times10^{-3}$.}
\label{fig:denoising-opt}
\end{figure*}

Although Equation~\eqref{eq:spherical-denoiser} has exactly the form of an attention
map, one head is generally unable to match its two terms simultaneously. 
%
\begin{theorem}[Exact representation and a softmax separation]
\label{thm:denoise-expressivity}
Assume $\gamma>0$. Two softmax heads realize the Bayes denoiser, i.e., there exist $(U_h,V_h)_{h=1,2}$ such that for all $z$,
\[
T_{U_1,V_1}[\mu_Z](z)+T_{U_2,V_2}[\mu_Z](z)=f^\star(z).
\]
A single head suffices if and only if $\gamma^2=\xi^2$.

Moreover, writing $\mathcal F_{\rm sm}$ and $\mathcal F_{\rm lin}$ for
finite multi-head softmax-attention and multi-layer linear-attention predictors, respectively,
\begin{align*}
\inf_{f\in\mathcal F_{\rm sm}}\{\cR_\mathrm{sq}(f)-\cR_\mathrm{sq}(f^\star)\}&=0,\\
\inf_{f\in\mathcal F_{\rm lin}}\{\cR_\mathrm{sq}(f)-\cR_\mathrm{sq}(f^\star)\}&>0.
\end{align*}
\end{theorem}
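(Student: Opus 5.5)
The plan has three parts, each driven by the shared-covariance identity of Proposition~\ref{prop:gmm-attention} applied to $\mu_Z$. The mixture $\mu_Z$ has components $\cN(m_k,S)$ with $S=(\gamma^2+\xi^2)\Id$, so
\[
T_{U,V}[\mu_Z](z)=(\gamma^2+\xi^2)VUz+V\sum_k\alpha_k(z)m_k,\qquad \alpha_k(z)=\softmax_k(\log\pi_k+m_k^\top Uz).
\]
Write $\lambda=\gamma^2+\xi^2$. Matching this against~\eqref{eq:spherical-denoiser} requires gates with $m_k^\top Uz=m_k^\top z/\lambda$ and barycenter coefficient $\xi^2/\lambda$.

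\textbf{Two-head construction.} Head~1 is the gating head. Take $U_1=\lambda^{-1}\Id$ and $V_1=\xi^2\lambda^{-1}\Id$; then $\alpha_k=\beta_k$, and head~1 outputs
\[
\xi^2\lambda^{-1}z+\xi^2\lambda^{-1}\sum_k\beta_k m_k .
\]
Head~2 must supply the missing linear part $(\gamma^2-\xi^2)\lambda^{-1}z$ while adding no barycenter term. Choose $U_2$ with $U_2^\top m_k=0$ for $k=1,2$. Its gates are then constant, equal to $\pi_k$, and head~2 outputs
\[
\lambda V_2U_2z+V_2\bar m,\qquad \bar m=\textstyle\sum_k\pi_km_k .
\]
To kill the constant $V_2\bar m$ and still get an arbitrary multiple of the identity, I would let $V_2$ vanish on $\bar m$ and let $U_2$ vanish on $\mathrm{span}(m_1,m_2)$. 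The obstacle is that one head can then produce the multiple of $z$ only on the complement of these spans. I expect to circumvent it by also using the freedom in $U_1$: take $U_1=\lambda^{-1}\Id+A$ with $A^\top m_k=0$, which leaves the gates unchanged, and tune $V_1$ and $A$ jointly. If this still fails because $d\ge2$ gives too little room, I would instead read the statement as allowing augmented tokens, or a nonconstant gate in head~2 whose barycenter contribution cancels against head~1's. This is the step I would check most carefully.

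\textbf{Single-head characterization.} For sufficiency, if $\gamma^2=\xi^2$ then head~1 alone gives $\tfrac12z+\tfrac12\sum_k\beta_km_k=f^\star$. For necessity, the gated term is the only bounded nonlinear part, so equality for all $z$ forces two conditions. First, the linear parts match: $\lambda VU=\gamma^2\lambda^{-1}\Id$. Second, the bounded parts match, which after comparing limits along rays forces $V(m_1-m_2)=\xi^2\lambda^{-1}(m_1-m_2)$ and $U^\top(m_1-m_2)=\lambda^{-1}(m_1-m_2)$. Pairing these gives
\[
(m_1-m_2)^\top VU(m_1-m_2)=\xi^2\lambda^{-2}\|m_1-m_2\|^2 .
\]
The linear condition gives the same quantity as $\gamma^2\lambda^{-2}\|m_1-m_2\|^2$. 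Comparing the two yields $\gamma^2=\xi^2$.

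\textbf{Separation.} The softmax infimum is zero by the two-head construction. For linear attention, each layer's output is a polynomial in $z$: $\int\langle Uz,z'\rangle Vz'\,\mathrm d\mu_Z=V\mathbb E[Z'Z'^\top]Uz$ is linear, and with residual stacking the predictor stays polynomial in $z$ of bounded degree. (If tokens carry $z$ multiplicatively, it is still a polynomial.) The Bayes denoiser $f^\star$ is not a polynomial, because $\beta_k$ is a nonconstant sigmoid of $m_k^\top z$. By the Pythagorean identity,
\[
\cR_\mathrm{sq}(f)-\cR_\mathrm{sq}(f^\star)=\|f-f^\star\|_{L^2(\mu_Z)}^2 ,
\]
so a positive gap for each degree follows from the distance to a closed finite-dimensional subspace being positive. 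For the gap to be uniform over depth, I would argue that linear-attention predictors are in fact affine in the query: the context is fixed, so each layer acts on $z$ linearly with matrices $V\mathbb E[Z'Z'^\top]U$. The class is then contained in the affine maps, and the infimum is the positive $L^2$ distance from $f^\star$ to the affine functions.
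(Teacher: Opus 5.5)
\textbf{The two-head construction is missing.} This is the central existence claim of the theorem, and your $\inf_{\mathcal F_{\rm sm}}=0$ rests on it. The proposal, however, ends with a list of fallbacks rather than explicit matrices.

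Your first attempt cannot work. In your notation $\lambda=\gamma^2+\xi^2$, fix head~1 at $U_1=\lambda^{-1}\Id$, $V_1=a\Id$ with $a=\xi^2/\lambda$. This forces $V_2U_2=(\gamma^2-\xi^2)\lambda^{-2}\Id\neq0$. But $U_2^\top m_1=0$ with $m_1\neq0$ makes $U_2$, and hence $V_2U_2$, singular.

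The suggested repair fails already in the smallest case the theorem covers. For $d=2$ and non-collinear means, $m_1,m_2$ span $\R^2$. So $A^\top m_k=0$ forces $A=0$, and $U_2^\top m_k=0$ forces $U_2=0$. Head~2 then collapses to the constant $V_2\bar m$, and you are back to the single-head obstruction. Reading the statement as allowing augmented tokens would change the theorem.

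The source of the trouble is your gate condition $m_k^\top Uz=m_k^\top z/\lambda$ for each $k$, which is stronger than needed. With two components, $\alpha_1(z)=\sigma\bigl(\log(\pi_1/\pi_2)+(U^\top\Delta m)^\top z\bigr)$ where $\Delta m=m_1-m_2$, so the gate sees $U$ only through $U^\top\Delta m$.

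The paper exploits this by giving \emph{both} heads the exact posterior gate and splitting the barycenter instead. Take $V_1m_1=am_1$, $V_1m_2=0$, $V_2m_1=0$, $V_2m_2=am_2$, and $V_1=V_2=\tfrac a2\Id$ on a complement of $\operatorname{span}(m_1,m_2)$, so that $V_1+V_2=a\Id$. Then set
\[
U_1=c\,\Id+(\lambda^{-1}-c)\,\frac{m_2\Delta m^\top}{\langle m_2,\Delta m\rangle},\qquad
U_2=c\,\Id+(\lambda^{-1}-c)\,\frac{m_1\Delta m^\top}{\langle m_1,\Delta m\rangle},\qquad
c=\frac{\gamma^2}{\xi^2\lambda}.
\]
This construction gives the following.
\begin{itemize}
\item $U_h^\top\Delta m=\lambda^{-1}\Delta m$, so both gates are exact.
\item The rank-one corrections have image in $\ker V_h$, so $V_hU_h=cV_h$.
\item The linear parts add up to $\lambda c a\,z=\gamma^2\lambda^{-1}z$.
\item The barycenters add up to $a\sum_k\beta_k(z)m_k$.
\end{itemize}
The denominators are nonzero, since equal norms and $m_1\neq m_2$ give $\langle m_1,m_2\rangle<\|m_1\|^2$. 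This requires $m_1,m_2$ to be linearly independent. The opposite-means case $m_1=-m_2$ needs its own construction. The paper builds it from a unit vector $n\perp m$, with rank-one terms $\pm nm^\top$ in the $U_h$ and $\pm mn^\top$ in the $V_h$.

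\textbf{The other two parts follow the paper's argument,} with two small repairs in the single-head necessity.
\begin{itemize}
\item Your pairing step uses the wrong order. The identities $V\Delta m=a\Delta m$ and $U^\top\Delta m=\lambda^{-1}\Delta m$ pair into $\Delta m^\top UV\Delta m=a\lambda^{-1}\|\Delta m\|^2$, not $\Delta m^\top VU\Delta m$. To compare with $VU=\gamma^2\lambda^{-2}\Id$ you need $UV=VU$. This holds because $\gamma>0$ makes $U$ and $V$ invertible. It is exactly where $\gamma>0$ is used: for $\gamma=0$, opposite means do admit a single head.
\item When $\pi_1=\pi_2$, the two sigmoid maps may also match with the endpoints exchanged, giving $V\Delta m=-a\Delta m$ and $U^\top\Delta m=-\lambda^{-1}\Delta m$. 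The same pairing gives the same conclusion.
\end{itemize}
Your final linear-attention argument is the paper's. On the fixed context, each layer acts on the query through $V\,\mathbb E[Z'Z'^\top]U$, so the class lies in the affine maps. Their $L^2(\mu_Z)$-distance to $f^\star$ is positive because $f^\star(z)-\gamma^2\lambda^{-1}z$ is bounded and nonconstant.
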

%
A single head couples the two terms of $f^\star$ through the same value
matrix. To see this, take $U=S^{-1}$, which already reproduces the gate
exactly ($\alpha_k=\beta_k$). The head then reads
\[
T_{U,V}[\mu_Z](z)=Vz+V\sum_{k=1}^2\beta_k(z)m_k,
\]
whereas $f^\star$ multiplies $z$ by $\gamma^2/(\gamma^2+\xi^2)$ and the
barycenter by $\xi^2/(\gamma^2+\xi^2)$. The same matrix $V$ would have to
apply both factors, which is possible only when they coincide, that is,
when $\gamma^2=\xi^2$; then $V=\tfrac12\Id$ gives $f^\star$ exactly. The
proof shows that this tension persists for every choice of $U$
(Appendix~\ref{app:one-head-obstruction}).

An extra attention head path decouples the two factors. For non-collinear means,
each value matrix keeps one mean and cancels the other, while the two
linear parts add up to the required shrinkage; opposite means admit a
similar construction (Appendix~\ref{app:two-head-nondegenerate}). 

\begin{remark}[Degenerate mixtures]
\textit{When $\gamma=0$, the clean distribution consists of two point masses,
and $f^\star$ reduces to their posterior barycenter. Two heads still
suffice, and the strict separation from linear attention persists.
The single-head obstruction, however, depends on the geometry: it holds
for non-collinear means in dimension two, whereas opposite means admit
a single-head realization. We keep these complementary cases in
Appendix~\ref{app:two-head-degenerate}. They will also help isolate the
clustering dynamics of repeated denoising below. }
\end{remark}

\paragraph{Linear vs.\ softmax attention.} Theorem \ref{thm:denoise-expressivity} tells us that no network composed of linear attention layers can approximate the Bayes risk. Although the setting and task considered differ, this result aligns with the findings of \cite{duranthon2026statistical}, providing another example where softmax attention is provably more expressive than its linear counterpart. 
The key reason is that the Bayes-optimal denoiser is inherently nonlinear: it maps each input $z$ to an adaptive average of the mixture centroids $m_1$ and $m_2$.

\paragraph{Can the exact denoiser be learned?}
Unlike in LDA, the sigmoidal term cannot be suppressed without losing the
target itself, so the optimization problem remains genuinely nonlinear and
nonconvex. We therefore examine it numerically at the population level.
In dimension $d=2$, we optimize the excess risk $\mathbb E_{Z\sim\mu_Z}\|f(Z)-f^\star(Z)\|^2$ using online stochastic gradient descent, with gradient norm clipped at (10) to prevent numerical instabilities during the first optimization steps. Throughout training, we follow the evolution of the excess risk, estimated by Gauss--Hermite quadrature within each Gaussian component (Figure~\ref{fig:denoising-opt}).
The excess risk decreases close to zero in the reported experiment,
suggesting that the Bayes predictor can be approached by direct optimization.
This is numerical evidence for learnability.

\begin{figure*}[!t]
\centering
\includegraphics[width=0.4\linewidth]{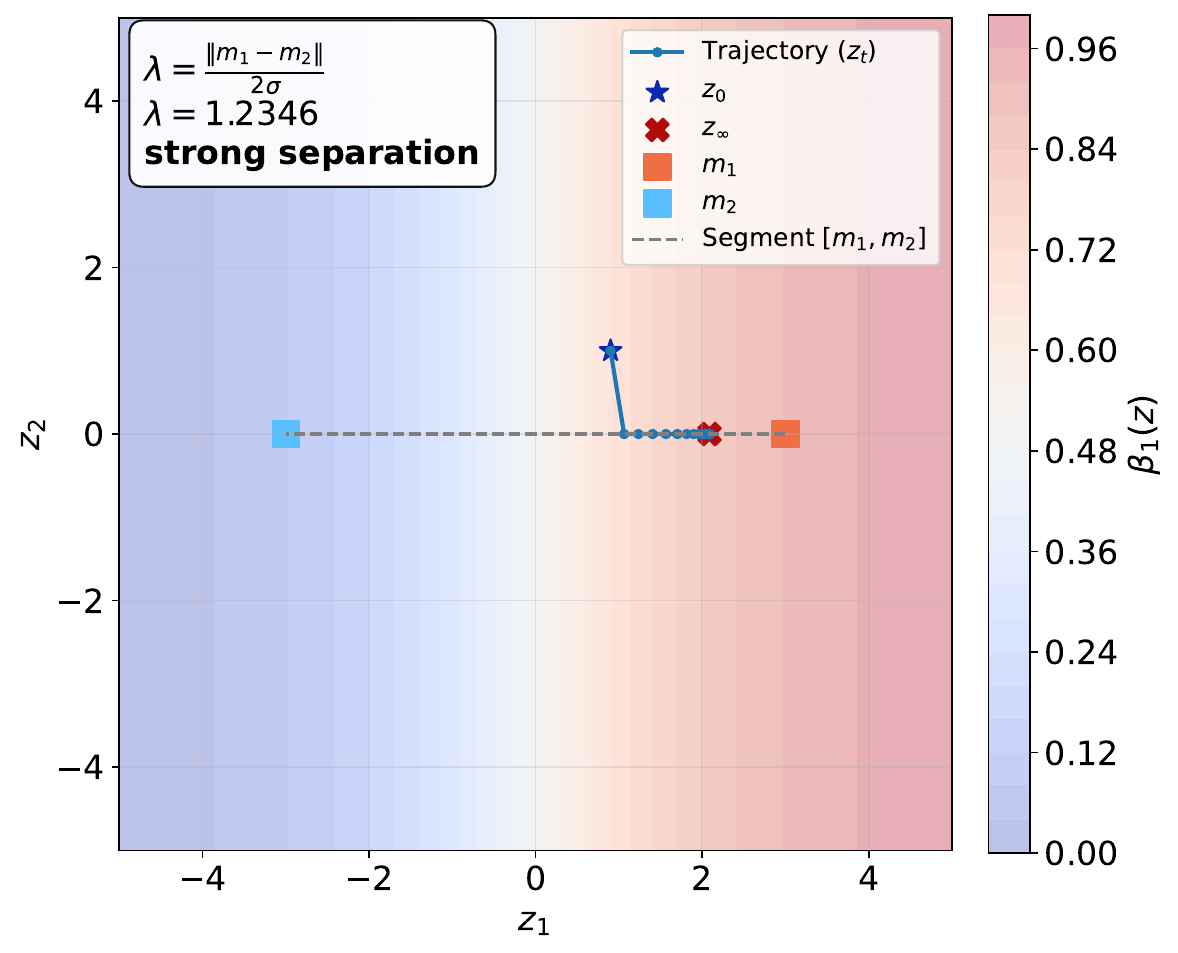}
\hspace{20pt}
\includegraphics[width=0.4\linewidth]{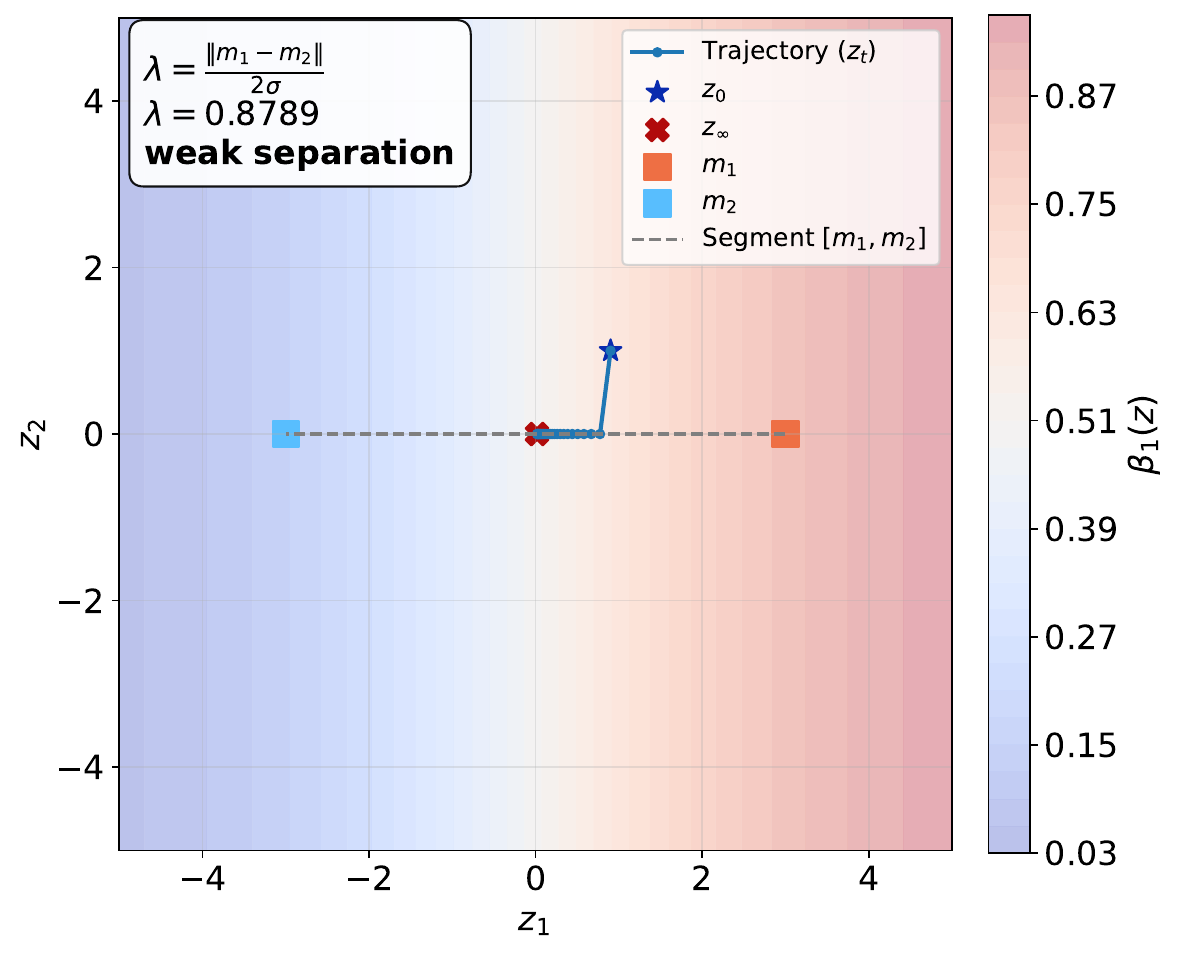}
\caption{Trajectories of the exact denoiser under strong separation (left) and
weak separation (right). The background shows the conditional probability of belonging to cluster 1;
blue dots trace the iterates and the red cross marks their limit. Here
$m_1=-m_2=(0,3)$ and $\gamma=0$.}
\label{fig:phase}
\end{figure*}

\subsection{Iterating the denoiser produces a clustering transition}

While a single denoising step aims at recovering a clean estimate from a corrupted query, repeatedly applying the same denoiser defines a dynamical system whose long-term behavior can reveal the geometry encoded by the data distribution. In particular, \cite{smart2025} show that a denoising step can be interpreted as a gradient descent step associated with a dense associative memory network making the study of its iterates natural from a dynamical perspective. Consider the degenerate balanced mixture $X\in\{m_1,m_2\}$, so that $X$ emanates from a mixture of Dirac masses, with $\gamma=0$ and $\pi_1=\pi_2=1/2$, and consider the dynamics
$z_{n+1}=f^\star(z_n)$. One attention layer encoding $f^\star$ is obviously Bayes optimal, but iteration progressively
erases information about the initial query. We show that this loss of information has a
precise interpretation: depending on the separation of the mixture, the
denoiser either collapses all inputs to their common midpoint or partitions
the space into two basins of attraction.

Let $\lambda=\|m_1-m_2\|/(2\xi)$ so that $\lambda^2$ can be interpreted as a signal-to-noise ratio. After projecting onto the line joining the
means, the fixed-point equation reduces to $u=\tanh(\lambda^2u)$, yielding a transition at $\lambda=1$.

\begin{proposition}[Denoising-to-clustering transition]
\label{prop:phase-transition}
\label{prop:iterated-denoiser-clustering}
If $\lambda\leq1$ (referred to as weak separation), the iterated optimal denoiser dynamics $z_{n+1}=f^\star(z_n)$ converges to
$$z_n \xrightarrow[n\to+\infty]{}(m_1+m_2)/2.
$$

If $\lambda>1$ (referred to as strong separation), almost every initialization converges to one of two stable fixed points $m_{\lambda,1},m_{\lambda,2}$, being such that $m_{\lambda,k}\to m_k$ as $\lambda\to\infty$.
\end{proposition}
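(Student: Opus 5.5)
With $\gamma=0$, $S=\xi^2 I_d$ and $\pi_1=\pi_2=1/2$, the denoiser \eqref{eq:bayes-denoiser} becomes
\[
f^\star(z)=\beta_1(z)m_1+\beta_2(z)m_2, \qquad \beta_k(z)=\softmax_k\bigl(m_k^\top z/\xi^2-\|m_k\|^2/(2\xi^2)\bigr).
\]
The plan is to reduce the dynamics to a scalar map along the line through the means and then analyse that map. Set $c=(m_1+m_2)/2$, $e=(m_1-m_2)/\|m_1-m_2\|$ and $\delta=\|m_1-m_2\|/2=\lambda\xi$. Then $f^\star(z)=c+\delta\,(\beta_1-\beta_2)\,e$, and a direct computation of the logit difference, where the offsets combine with the linear terms, gives
\[
\beta_1-\beta_2=\tanh\!\Bigl(\tfrac{(m_1-m_2)^\top(z-c)}{2\xi^2}\Bigr)=\tanh\!\Bigl(\tfrac{\delta}{\xi^2}\,e^\top(z-c)\Bigr).
\]
So $f^\star$ maps every point of $\R^d$ onto the segment $c+[-\delta,\delta]e$ in one step. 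After the first iterate we can write $z_n=c+\delta u_n e$ with $u_n\in[-1,1]$, and the recursion becomes exactly $u_{n+1}=\tanh(\lambda^2 u_n)$, because $\delta^2/\xi^2=\lambda^2$. This is the fixed-point equation announced before the proposition. It holds in full generality, including the degenerate case where the first step already lands on the line.

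Next we analyse $h(u)=\tanh(\lambda^2u)$ on $\R$. The map $h$ is odd, increasing and concave on $u>0$, with $h'(0)=\lambda^2$.

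\emph{Weak separation} ($\lambda\le1$). For $u>0$ we have $|h(u)|<\lambda^2|u|\le |u|$, so $|u_n|$ decreases strictly and is bounded. Its limit is a fixed point of $h$, and the only fixed point is $0$. Hence $u_n\to0$, i.e.\ $z_n\to c=(m_1+m_2)/2$. The boundary case $\lambda=1$ needs a little care: the convergence there is only polynomial, but monotonicity together with the uniqueness of the fixed point still gives convergence.

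\emph{Strong separation} ($\lambda>1$). Concavity and $h'(0)>1$ give a unique positive fixed point $u^\star_\lambda\in(0,1)$, and by oddness also $-u^\star_\lambda$. For $0<u<u^\star_\lambda$ we have $u<h(u)<u^\star_\lambda$, and for $u>u^\star_\lambda$ we have $u^\star_\lambda<h(u)<u$. So every $u_0>0$ converges monotonically to $u^\star_\lambda$, every $u_0<0$ converges to $-u^\star_\lambda$, and $u_0=0$ stays at $0$. Stability follows from $h'(u^\star_\lambda)<1$, which holds because the secant slope from $0$ equals $1$ and $h$ is strictly concave. We set $m_{\lambda,1}=c+\delta u^\star_\lambda e$ and $m_{\lambda,2}=c-\delta u^\star_\lambda e$. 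The exceptional set of initialisations is $\{z:\ e^\top(z-c)=0\}$, a hyperplane of Lebesgue measure zero, and every $z_0$ off it converges to $m_{\lambda,1}$ or $m_{\lambda,2}$ according to the side it lies on. Finally, since $u^\star_\lambda=\tanh(\lambda^2u^\star_\lambda)$ and $u^\star_\lambda$ is bounded below by, say, $u^\star_2>0$ for $\lambda\ge2$ (it increases with $\lambda$), we get $u^\star_\lambda\to1$. Hence $m_{\lambda,k}\to m_k$ as $\lambda\to\infty$, where $\lambda\to\infty$ means sending $\xi\to0$ with the means held fixed, so that $\delta$ stays constant.

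The main obstacle is the algebraic reduction, specifically checking that the offsets $-\|m_k\|^2/(2\xi^2)$ combine with the linear terms to produce an expression in $z-c$ alone. This works because of the identity $\|m_1\|^2-\|m_2\|^2=(m_1-m_2)^\top(m_1+m_2)$. Once that identity is in place, the rest is the standard one-dimensional analysis of $\tanh$, which is routine.
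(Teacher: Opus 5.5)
Your proposal is correct and follows essentially the same route as the paper: you reduce $f^\star$ to $c+\delta\tanh(\cdot)$ along the line of the means, obtain the scalar recursion $u_{n+1}=\tanh(\lambda^2u_n)$, and analyse it via oddness, monotonicity and strict concavity, with half-space basins, a measure-zero separating hyperplane, and stability from $h'(u^\star_\lambda)<1$. The differences are cosmetic: you keep the offsets, so your reduction does not even need $\|m_1\|=\|m_2\|$, and you get $u^\star_\lambda\to1$ from monotonicity of $u^\star_\lambda$ in $\lambda$ rather than the paper's comparison $H(r)>0$ for each fixed $r\in(0,1)$; that monotonicity needs a one-line justification, e.g.\ $\tanh(\lambda'^2u^\star_\lambda)>u^\star_\lambda$ for $\lambda'>\lambda$.
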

The proof is given in Appendix~\ref{app:phase-transition-proof}.
Figure~\ref{fig:phase} displays the two regimes described by Proposition \ref{prop:iterated-denoiser-clustering}. Under weak separation the
dynamic cannot distinguish the components and every trajectory
collapses to the midpoint. Above the threshold, two stable attractors emerge
and approach the component means as the separation grows. Repeated passages through the same layer therefore changes the statistical role of the same map: a one-step estimator becomes, under recurrence, a clustering mechanism.

\subsection{Residual attention denoises in-context}\label{sec:icl}

The previous analyses considered denoising for a fixed mixture,
with optimal attention parameters tailored to the mixture means. In-context denoising
asks for a single architecture that adapts to different mixtures using
only a prompt of noisy observations.

A task is therefore now specified by $\tau=(m_1,m_2,\pi_1)$, with clean and noisy
distributions
\[
\begin{aligned}
\mu_X^\tau
&=\pi_1\cN(m_1,\gamma^2\Id)
 +(1-\pi_1)\cN(m_2,\gamma^2\Id),\\
\mu_Z^\tau
&=\pi_1\cN(m_1,S)
 +(1-\pi_1)\cN(m_2,S),
\end{aligned}
\]
where $S=(\gamma^2+\xi^2)\Id$.
The means and mixture weights vary across tasks, while $\gamma^2$
and $\xi^2$ remain fixed.
For each task, a new parameter $\tau$ is sampled according to an admissible distribution (i.e., with  $m_1\neq m_2$, $\|m_1\|=\|m_2\|$, and $\pi_1\in(0,1)$ almost surely). The layer receives an infinite prompt
$Z_1,\ldots,Z_L,\ldots$ drawn independently from $\mu_Z^\tau$ and an
independent noisy query $Z=X+\varepsilon$ from the same task.
Its goal is to reconstruct the query's clean counterpart~$X$.
The predictor is
evaluated through the in-context risk
\[
\cR_{\rm ICL}(f)
=\mathbb E_{\tau}
  \mathbb E\!\left[\|X-f[\mu^\tau_Z](Z)\|^2\right].
\]
The attention weights are shared across tasks; adaptation occurs
through the context argument $\mu_Z^\tau$.
We therefore seek one set of weights satisfying
$\hat f[\mu_Z^\tau](z)=\mathbb E[X\mid Z=z,\tau]$ for every admissible
task $\tau$, without retraining when its means or proportions change.

To solve this problem, the attention mechanism must infer from the context both the barycentric combination of the component centroids and the linear contraction required to reproduce the Bayes-optimal denoiser~$f^\star$ in~\eqref{eq:bayes-denoiser}. Unlike in the previous setting, however, the attention weights can no longer depend on the task parameters $(m_1,m_2,\pi_1)$, as these vary across tasks. 
We therefore add a residual connection allowing the network to adjust the linear term and the barycentric mean separately. We define
$T^{\rm res}_{U,V}[\mu](z)=z+T_{U,V}[\mu](z)$
and combine two such residual heads through a linear output map.

\begin{proposition}[Exact in-context adaptation]
\label{prop:icl}
\label{prop:residual-icl-denoiser}
For every admissible prompt law (such that $m_1\neq m_2$, $\|m_1\|=\|m_2\|$, and $\pi_1\in(0,1)$ almost surely), the architecture
\[
\widehat f[\mu](z)=W_{\rm out}
\begin{pmatrix}T^{\rm res}_{U_1,V_1}[\mu](z)\\T^{\rm res}_{U_2,V_2}[\mu](z)\end{pmatrix}
\]
equals its Bayes denoiser with the task-independent parameters

\[
\begin{alignedat}{2}
U_1&=S^{-1}, \qquad &V_1&=\frac{\xi^2}{\gamma^2+\xi^2}\Id,\\
U_2&=V_2=0, \qquad &W_{\rm out}&=\left(\Id,-\frac{2\xi^2}{\gamma^2+\xi^2}\Id\right).
\end{alignedat}
\]
\end{proposition}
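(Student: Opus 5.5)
The plan is a direct verification for each fixed admissible task $\tau$, followed by integration over tasks. Fix $\tau=(m_1,m_2,\pi_1)$ with $m_1\neq m_2$, $\|m_1\|=\|m_2\|$ and $\pi_1\in(0,1)$. The prompt law $\mu_Z^\tau$ is a two-component Gaussian mixture with shared covariance $S=(\gamma^2+\xi^2)\Id$. Proposition~\ref{prop:gmm-attention}, in its shared-covariance form~\eqref{eq:shared}, therefore applies directly and gives, for any $(U,V)$,
\[
T_{U,V}[\mu_Z^\tau](z)=VSUz+V\sum_{k=1}^2\alpha_k(z)m_k,\qquad \alpha_k(z)=\softmax_k(\log\pi_k+m_k^\top Uz).
\]

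First I evaluate each head. With $U_1=S^{-1}$ we get $SU_1=\Id$. The gate becomes $\alpha_k(z)=\softmax_k\bigl(\log\pi_k+m_k^\top z/(\gamma^2+\xi^2)\bigr)$, which is exactly $\beta_k(z)$ in~\eqref{eq:spherical-denoiser}. This is the step where the equal-norm assumption matters: it removes the offsets $-\tfrac12 m_k^\top S^{-1}m_k$ from the Bayes gate in~\eqref{eq:bayes-denoiser}, so that dot-product attention matches it. Writing $c=\xi^2/(\gamma^2+\xi^2)$ and $V_1=c\Id$, the first residual head is
\[
T^{\rm res}_{U_1,V_1}[\mu_Z^\tau](z)=z+cz+c\sum_k\beta_k(z)m_k.
\]
For the second head, $U_2=V_2=0$ gives $T_{U_2,V_2}\equiv0$ (the softmax weights are uniform but multiplied by $V_2=0$), so $T^{\rm res}_{U_2,V_2}[\mu_Z^\tau](z)=z$.

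Next I apply $W_{\rm out}=(\Id,-2c\Id)$:
\[
\widehat f[\mu_Z^\tau](z)=(1+c-2c)z+c\sum_k\beta_k(z)m_k=\frac{\gamma^2}{\gamma^2+\xi^2}z+\frac{\xi^2}{\gamma^2+\xi^2}\sum_k\beta_k(z)m_k.
\]
This is~\eqref{eq:spherical-denoiser}, i.e.\ $\mathbb E[X\mid Z=z,\tau]$ by~\eqref{eq:bayes-denoiser} specialized to $\Gamma=\gamma^2\Id$.

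None of $U_1,V_1,U_2,V_2,W_{\rm out}$ depends on $\tau$, so the identity holds for almost every task under any admissible prompt law. Integrating over $\tau$ then shows that $\widehat f$ attains the Bayes in-context risk, since the conditional expectation minimizes the squared loss task by task. There is no real obstacle. The only points requiring care are that~\eqref{eq:shared} is used with the mixture parameters of $\mu_Z^\tau$ (covariance $S$ rather than $\Gamma$), and that the equal-norm condition is what makes the gate exact.
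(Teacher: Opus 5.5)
Your proposal is correct and follows the paper's own proof essentially step for step. Like the paper, you apply the shared-covariance form of Proposition~\ref{prop:gmm-attention} with covariance $S$, use the equal-norm condition to identify the gate with $\beta_k$, note that the zero head contributes only the residual copy $z$, combine the heads through $W_{\rm out}$, and then integrate the task-wise identity over $\tau$.
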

The proof is given in Appendix~\ref{app:icl-proof}. 
The task-dependent means and weights never enter the layer parameters. The
first head reads them through the context distribution and returns their weighted barycenter; the second head provide copies of the query, allowing the output map to adjust the linear shrinkage independently. In fact, the parameters encode invariants shared across tasks, while attention extracts the latent structure specific to the current prompt.
This proposal shows that a single-layer softmax attention architecture with residual connections can perform in-context denoising. Interestingly, it also highlights how residual connections can enhance the expressivity of attention-based architectures; in contrast to neural networks, where their primary role is to facilitate optimization.

\section{Conclusion}

Gaussian mixtures provide a tractable setting in which context
selection has a concrete statistical role. For classification,
the Bayes score is affine, yet the nonlinear attention
parameterization creates a rich optimization geometry.
For denoising, query-dependent selection becomes essential,
and residual connections allow the same parameters to serve
different mixture distributions. Together, these results connect
specific features of attention architectures to the statistical
operations they perform. Extending this understanding to finite
prompts and explaining how training discovers such adaptive
solutions are natural next steps.


\subsection*{AI use statement}

In this work, we used generative AI for the following tasks: literature review, including identifying articles relevant to our research; idea exploration, through prompts aimed at investigating some of our initial leads and assessing their potential relevance; calculation and proof assistance; programming assistance; and language polishing. All AI-assisted work was reviewed and verified by the authors. In particular, we independently reviewed the cited articles, further investigated the ideas we found relevant, and checked AI-assisted calculations and proofs ourselves. The problem studied and our research strategies were developed through discussions among the authors, rather than generated by AI. We take full responsibility for the final content of this work, including all text, claims, calculations, and other artifacts produced with the assistance of generative AI.

\subsection*{Acknowledgment}

This work was supported by the French Ministry of Higher Education, Research and Space (MESRE) through an AMX doctoral fellowship awarded by École Polytechnique.

\bibliographystyle{plainnat}
\bibliography{references_hal}

\begin{thebibliography}{21}
\providecommand{\natexlab}[1]{#1}
\providecommand{\url}[1]{\texttt{#1}}
\expandafter\ifx\csname urlstyle\endcsname\relax
  \providecommand{\doi}[1]{doi: #1}\else
  \providecommand{\doi}{doi: \begingroup \urlstyle{rm}\Url}\fi

\bibitem[Banyaga and Hurtubise(2004)]{BANYAGA2004365}
Augustin Banyaga and David~E. Hurtubise.
\newblock A proof of the morse--bott lemma.
\newblock \emph{Expositiones Mathematicae}, 22\penalty0 (4):\penalty0 365--373,
  2004.

\bibitem[Bohbot et~al.(2026)Bohbot, Letrouit, Peyr{\'e}, and
  Vialard]{bohbot2026}
L{\'e}a Bohbot, Cyril Letrouit, Gabriel Peyr{\'e}, and Fran{\c{c}}ois-Xavier
  Vialard.
\newblock Token sample complexity of attention.
\newblock In \emph{Forty-third International Conference on Machine Learning},
  2026.

\bibitem[Boursier and Boyer(2026)]{boursier2026}
Etienne Boursier and Claire Boyer.
\newblock Softmax as linear attention in the large-prompt regime: a
  measure-based perspective.
\newblock In \emph{Forty-third International Conference on Machine Learning},
  2026.

\bibitem[Castin et~al.(2025)Castin, Ablin, Carrillo, and Peyr{\'e}]{castin2025}
Val{\`e}re Castin, Pierre Ablin, Jos{\'e}~A. Carrillo, and Gabriel Peyr{\'e}.
\newblock A unified perspective on the dynamics of deep transformers.
\newblock \emph{Foundations of Computational Mathematics}, 2025.

\bibitem[Duranthon et~al.(2026)Duranthon, Marion, Boyer, Loureiro, and
  Zdeborov{\'a}]{duranthon2026statistical}
Odilon Duranthon, Pierre Marion, Claire Boyer, Bruno Loureiro, and Lenka
  Zdeborov{\'a}.
\newblock Statistical advantage of softmax attention: Insights from
  single-location regression.
\newblock In \emph{International Conference on Learning Representations}, 2026.

\bibitem[Frei and Vardi(2025)]{frei2025}
Spencer Frei and Gal Vardi.
\newblock Trained transformer classifiers generalize and exhibit benign
  overfitting in-context.
\newblock In \emph{International Conference on Learning Representations}, 2025.

\bibitem[Geshkovski et~al.(2023)Geshkovski, Letrouit, Polyanskiy, and
  Rigollet]{geshkovski2023}
Borjan Geshkovski, Cyril Letrouit, Yury Polyanskiy, and Philippe Rigollet.
\newblock The emergence of clusters in self-attention dynamics.
\newblock In \emph{Advances in Neural Information Processing Systems},
  volume~36, pages 57026--57037, 2023.

\bibitem[Goel et~al.(2026)Goel, Soltanolkotabi, and Bartlett]{goel2026training}
Gautam Goel, Mahdi Soltanolkotabi, and Peter Bartlett.
\newblock Training dynamics of softmax self-attention: Fast global convergence
  via preconditioning.
\newblock \emph{arXiv preprint arXiv:2603.01514}, 2026.

\bibitem[Guillemin and Pollack(1974)]{guillemin1974differential}
Victor Guillemin and Alan Pollack.
\newblock \emph{Differential Topology}.
\newblock Prentice-Hall, 1974.

\bibitem[Hastie et~al.(2009)Hastie, Tibshirani, Friedman, and
  Friedman]{hastie2009elements}
Trevor Hastie, Robert Tibshirani, Jerome~H Friedman, and Jerome~H Friedman.
\newblock \emph{The elements of statistical learning: data mining, inference,
  and prediction}, volume~2.
\newblock Springer, 2009.

\bibitem[Li et~al.(2026)Li, Min, and Vidal]{li2026}
Hongkang Li, Hancheng Min, and Rene Vidal.
\newblock Transformers learn the optimal {DDPM} denoiser for multi-token
  {GMMs}, 2026.

\bibitem[Maulen-Soto and Boyer(2026)]{maulen2026}
Rodrigo Maulen-Soto and Claire Boyer.
\newblock Attention-based {PCA}, 2026.

\bibitem[Rasmussen and Williams(2006)]{Rasmussen2006Gaussian}
Carl~Edward Rasmussen and Christopher K.~I. Williams.
\newblock \emph{Gaussian Processes for Machine Learning}.
\newblock MIT Press, 2006.

\bibitem[Rebjock and Boumal(2025)]{rebjock2024fastconvergencenonisolatedminima}
Quentin Rebjock and Nicolas Boumal.
\newblock Fast convergence to non-isolated minima: Four equivalent conditions
  for {$C^2$} functions.
\newblock \emph{Mathematical Programming}, 213\penalty0 (1):\penalty0 151--199,
  2025.

\bibitem[Rosu et~al.(2025)Rosu, Carin, and Cheng]{rosu2025}
Paul Rosu, Lawrence Carin, and Xiang Cheng.
\newblock From softmax to score: Transformers can effectively implement
  in-context denoising steps.
\newblock In \emph{Advances in Neural Information Processing Systems},
  volume~38, 2025.

\bibitem[Sander et~al.(2022)Sander, Ablin, Blondel, and Peyr{\'e}]{sander2022}
Michael~E. Sander, Pierre Ablin, Mathieu Blondel, and Gabriel Peyr{\'e}.
\newblock Sinkformers: Transformers with doubly stochastic attention.
\newblock In \emph{Proceedings of the 25th International Conference on
  Artificial Intelligence and Statistics}, volume 151, pages 3515--3530. PMLR,
  2022.

\bibitem[Shen et~al.(2025)Shen, Zhou, Yang, and Shen]{shen2025}
Wei Shen, Ruida Zhou, Jing Yang, and Cong Shen.
\newblock On the training convergence of transformers for in-context
  classification of gaussian mixtures.
\newblock In \emph{Proceedings of the 42nd International Conference on Machine
  Learning}, volume 267, pages 54732--54771. PMLR, 2025.

\bibitem[Smart et~al.(2025)Smart, Bietti, and Sengupta]{smart2025}
Matthew Smart, Alberto Bietti, and Anirvan~M. Sengupta.
\newblock In-context denoising with one-layer transformers: Connections between
  attention and associative memory retrieval.
\newblock In \emph{Proceedings of the 42nd International Conference on Machine
  Learning}, volume 267, pages 55950--55971. PMLR, 2025.

\bibitem[Smart et~al.(2026)Smart, Ganguly, Metya, Morozov, and
  Sengupta]{smart2026}
Matthew Smart, Soumya Ganguly, Nilava Metya, Alexandre~V. Morozov, and
  Anirvan~M. Sengupta.
\newblock Attention as in-context empirical bayes: A two-stage view via
  particle dynamics, 2026.

\bibitem[Vuckovic et~al.(2020)Vuckovic, Baratin, and Tachet~des
  Combes]{vuckovic2020}
James Vuckovic, Aristide Baratin, and Remi Tachet~des Combes.
\newblock A mathematical theory of attention, 2020.

\bibitem[Zhang and Cao(2026)]{zhang2026}
Chenyang Zhang and Yuan Cao.
\newblock Transformers efficiently perform in-context logistic regression via
  normalized gradient descent.
\newblock In \emph{Forty-third International Conference on Machine Learning},
  2026.

\end{thebibliography}

\clearpage
\appendix
\thispagestyle{empty}


\part*{Appendix}
\addcontentsline{toc}{part}{Appendix}
\renewcommand{\contentsname}{Table of Contents}

\etocsetnexttocdepth{subsection}
\begingroup
\setlength{\parskip}{0pt}

\etocsettocstyle
  {\section*{Table of Contents}
   \noindent\rule{\linewidth}{0.4pt}\par\vspace{-0.3em}}
  {}

\localtableofcontents

\noindent\rule{\linewidth}{0.4pt}
\endgroup

\paragraph{Where to find each main-text proof.}
The table below locates every numbered result in the main text.
Titles distinguish complete proofs, preparatory arguments, and complements.
For results assembled from several lemmas, the indicated location contains
an explicitly named proof that identifies the ingredients.

\medskip
\noindent\begin{tabularx}{\textwidth}{@{}lX@{}}
\toprule
\textbf{Main-text result} & \textbf{Proof location} \\
\midrule
Proposition~\ref{prop:gmm-attention} & Appendix~\ref{app:gmm-attention-proof}. \\
Theorem~\ref{thm:lda-landscape} & Appendix~\ref{app:lda-effective-proof-block}, using the minimizers identified in Appendix~\ref{app:lda-minimizers-block}. \\
Lemma~\ref{lem:lda-regularity-main} & Appendix~\ref{app:lda-regularity-proof-block}. \\
Theorem~\ref{thm:lda-landscape-initial} & Appendix~\ref{app:lda-factorized-proof-block}, using the minimizer preimage in Appendix~\ref{app:lda-minimizers-block}. \\
Proposition~\ref{prop:lda-escape-main} & Appendix~\ref{app:lda-escape-proof-block}. \\
Theorem~\ref{thm:lda-dynamics} & Appendix~\ref{app:lda-local-proof}. \\
Theorem~\ref{thm:lda-krylov} & Appendix~\ref{app:lda-krylov-proof}. \\
Theorem~\ref{thm:denoise-expressivity} & Appendix~\ref{app:denoise-expressivity-proof}; detailed arguments in Appendices~\ref{app:one-head-obstruction}, \ref{app:two-head-nondegenerate}, and~\ref{app:linear-attention-gap}. \\
Proposition~\ref{prop:phase-transition} & Appendix~\ref{app:phase-transition-proof}. \\
Proposition~\ref{prop:icl} & Appendix~\ref{app:icl-proof}. \\
\bottomrule
\end{tabularx}
\medskip

\section{Proofs and Complements for Section~\ref{sec:mixtures}}
\label{app:mixtures}

\subsection{Gaussian-mixture attention (proof of Proposition~\ref{prop:gmm-attention})}
\label{app:gmm-attention-proof}

\begin{proof}
Set \(u\coloneqq Uz\). Then
\[
T_{U,V}[\mu](z)
=
V\,
\frac{
\mathbb E\left[X e^{\langle u,X\rangle}\right]
}{
\mathbb E\left[e^{\langle u,X\rangle}\right]
}.
\]
It is well known that the moment-generating function of a Gaussian vector is given by $\mathbb E\left[e^{\langle u,X\rangle}\right]=\exp\left(
\langle u,m\rangle
+\frac12 u^\top \Gamma u
\right)$. By differentiation, we obtain $\mathbb E\left[ Xe^{\langle u,X\rangle}\right] = \left(m+\Gamma u\right)
\exp\left(
\langle u,m\rangle
+\frac12 u^\top \Gamma u
\right)$

Thus, for a Gaussian mixture \(X\sim \mu=\sum \pi_k \mu_k\), one has 
\[
\mathbb E\left[e^{\langle u,X\rangle}\right]
=
\sum_{k=1}^M
\pi_k
\exp\left(
\langle u,m_k\rangle
+\frac12 u^\top \Gamma_k u
\right),
\]
and
\[
\mathbb E\left[X e^{\langle u,X\rangle}\right]
=
\sum_{k=1}^M
\pi_k
\left(m_k+\Gamma_k u\right)
\exp\left(
\langle u,m_k\rangle
+\frac12 u^\top \Gamma_k u
\right).
\]
Therefore,
\[
T_{U,V}[\mu](z)
=
V
\sum_{k=1}^M
\alpha_k(z)
\left(m_k+\Gamma_k u\right),
\]
where
\(
\alpha_k(z)
=
\frac{\displaystyle 
\pi_k
\exp\left(
\langle u,m_k\rangle
+\frac12 u^\top \Gamma_k u
\right)
}{
\displaystyle
\sum_{\ell=1}^M
\pi_\ell
\exp\left(
\langle u,m_\ell\rangle
+\frac12 u^\top \Gamma_\ell u
\right)
}
\)
which concludes the proof.
\end{proof}

\subsection{General mixtures (extension of Proposition~\ref{prop:gmm-attention})}
\label{app:general-mixture-attention}

For a positive alignment function $a$, define the normalized attention map
\[
T_a[\mu](z)
=\frac{\int a(z,z')z'\,\mathrm d\mu(z')}
       {\int a(z,z')\,\mathrm d\mu(z')}.
\]

\begin{proposition}[Attention on a general mixture]
\label{prop:attention_op_for_general_mixture}
Let $\mu=\sum_{k=1}^M\pi_k\mu_k$. Then, for every $z\in\mathbb R^d$,
\[
T_a[\mu](z)=\sum_{k=1}^M r_k(z)T_a[\mu_k](z),
\qquad
r_k(z)=\pi_k
\frac{\int a(z,z')\,\mathrm d\mu_k(z')}
     {\int a(z,z')\,\mathrm d\mu(z')}.
\]
\end{proposition}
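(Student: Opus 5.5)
The plan is a direct computation using only linearity of the integral in the measure, followed by an algebraic regrouping. First I would fix a query $z$ and write the numerator and denominator of $T_a[\mu](z)$ separately. Since $\mu=\sum_k\pi_k\mu_k$, linearity of integration gives
\[
N(z)\coloneqq\int a(z,z')z'\,\mathrm d\mu(z')=\sum_{k=1}^M\pi_k N_k(z),
\qquad
D(z)\coloneqq\int a(z,z')\,\mathrm d\mu(z')=\sum_{k=1}^M\pi_k D_k(z),
\]
where $N_k(z)=\int a(z,z')z'\,\mathrm d\mu_k(z')$ and $D_k(z)=\int a(z,z')\,\mathrm d\mu_k(z')$.

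Next I would handle well-definedness, which is the only step that needs any care. Because $a>0$ and each $\mu_k$ is a probability measure, $D_k(z)>0$. Provided the integrals are finite, as they are in the Gaussian case by the moment-generating-function computation of Proposition~\ref{prop:gmm-attention}, we also have $D_k(z)<\infty$. I would state this finiteness as a standing integrability assumption. Under it, each $T_a[\mu_k](z)=N_k(z)/D_k(z)$ is well defined, and so is $T_a[\mu](z)$.

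Then I would rewrite $N_k(z)=D_k(z)\,T_a[\mu_k](z)$ and substitute this into $N(z)/D(z)$:
\[
T_a[\mu](z)=\sum_{k=1}^M\frac{\pi_k D_k(z)}{D(z)}\,T_a[\mu_k](z)=\sum_{k=1}^M r_k(z)\,T_a[\mu_k](z).
\]
This is exactly the claimed formula, since $r_k(z)=\pi_k D_k(z)/D(z)$ by definition.

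Finally, I would note that $r_k(z)\ge 0$ and $\sum_k r_k(z)=D(z)/D(z)=1$, so the combination is convex. This justifies the interpretation given in the main text. I do not expect any genuine obstacle; the only point worth flagging is the integrability caveat above.
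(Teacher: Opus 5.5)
Your proposal is correct and follows the same route as the paper: the paper's one-line proof is exactly your linearity step, $\int f\,\mathrm d\mu=\sum_k\pi_k\int f\,\mathrm d\mu_k$ applied to numerator and denominator, followed by the regrouping you carry out. Your explicit integrability caveat and your check that $\sum_k r_k(z)=1$ are details the paper leaves implicit.
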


\begin{proof}
This proposition follows directly from the fact that
\[
\int f \mathrm{d}\mu=\int f \sum_k\pi_k\mathrm{d}\mu_k.
\]
\end{proof}

\section{Proofs and Complements for Section~\ref{sec:lda}}
\label{appendix:attention_for_LDA}

This appendix develops the representation, landscape, and convergence arguments underlying Section~\ref{sec:lda}. We first derive the attention predictor and identify the population target of logistic training. We then separate two questions: which parameters minimize the risk, and whether gradient flow approaches such parameters. The distinction matters because a favorable finite critical-point structure does not control trajectories at infinity.

The landscape argument is developed in Appendix~\ref{app:lda-landscape-proof}; the subsequent analysis establishes local convergence near the regular Bayes manifold and global convergence for a prescribed initialization in a symmetric invariant subspace. The auxiliary calculations are collected before the arguments that use them. Throughout, $\widetilde{\mathcal R}$ denotes the effective LDA risk and $\mathcal R=\widetilde{\mathcal R}\circ\Phi$ its expression in attention parameters.

\subsection{Effective LDA predictor (derivation of Equations~\eqref{eq:effective}--\eqref{eq:Phi})}
\label{appendix:aux_lemma_LDA_and_attention}
\label{app:lda-effective-model}
For parameter matrices $U,V\in \mathbb{R}^{(d+1)\times (d+1)}$, we consider the block structure:
\begin{equation}\label{eq:blockstructure}
U^\top=\left[
\begin{array}{c|c}
U_{11} & u_{12} \\[2pt] \hline \\[-8pt]
u_{21}^\top  & u_{22}
\end{array}\right],
\qquad
V=\left[
\begin{array}{c|c}
V_{11} & v_{12} \\[2pt] \hline \\[-8pt]
v_{21}^\top  & v_{22}
\end{array}\right],
\end{equation}
where $U_{11}, V_{11}\in\mathbb{R}^{d\times d}$, $u_{12},u_{21}, v_{12}, v_{21}\in\mathbb{R}^d$ are $U_{22},v_{22} \in \mathbb{R}$.

Equation~\eqref{eq:effective} is a direct consequence of the following lemma.

\begin{lemma}
\label{lem:explicit_form_attention_lda}
\label{lem:attention_predictor_lda}
Let $\mu_{\mathrm{LDA}}$ denote the joint law defined by
Equation~\eqref{eq:lda-model}. For every \((x,y)\in\mathbb R^d\times\{-1,0,1\}\), one has
\[
T_{U,V}[\mu_{\mathrm{LDA}}](x,y)
=
V
\begin{pmatrix}
\Gamma\big(U_{11}^\top x + y\,u_{21}\big)
+
(1-\alpha(x,y))\,m_{-1}
+
\alpha(x,y)\,m_1
\\[1mm]
2\alpha(x,y)-1
\end{pmatrix},
\]
where
\[
\alpha(x,y)
=
\sigma\Big(
\log\!\frac{\pi_+}{1-\pi_+}
+2\bigl(u_{12}^\top x+y\,u_{22}\bigr)
+\big\langle U_{11}^\top x+y\,u_{21},\,m_1-m_{-1}\big\rangle
\Big),
\]
with
\(
\sigma(t)\coloneqq \frac{1}{1+e^{-t}}
\) the sigmoid function.
\end{lemma}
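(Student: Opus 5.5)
The plan is to view $\mu_{\mathrm{LDA}}$ as a two-component mixture on $\R^{d+1}$ and apply Proposition~\ref{prop:gmm-attention} directly. Under \eqref{eq:lda-model}, the token $(X,Y)$ has law
\[
\pi_+\,\cN\!\left(\begin{pmatrix}m_1\\1\end{pmatrix},\widetilde\Gamma\right)+(1-\pi_+)\,\cN\!\left(\begin{pmatrix}m_{-1}\\-1\end{pmatrix},\widetilde\Gamma\right),
\qquad \widetilde\Gamma=\begin{pmatrix}\Gamma&0\\0&0\end{pmatrix}.
\]
The covariance $\widetilde\Gamma$ is degenerate, since $Y$ is deterministic within each component. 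The proof of Proposition~\ref{prop:gmm-attention} only uses the moment generating function $\mathbb E[e^{\langle u,X\rangle}]=\exp(\langle u,m\rangle+\tfrac12u^\top\Gamma u)$ and its gradient. Both remain valid for positive semidefinite covariances; one can also check this directly by factoring out the deterministic last coordinate. So the shared-covariance formula \eqref{eq:shared} applies with $\Gamma$ replaced by $\widetilde\Gamma$.

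Next, for a query $q=(x,y)$, compute $\widetilde u=Uq$ using the block structure \eqref{eq:blockstructure}. Since $U^\top$ is given blockwise, $U$ has top-left block $U_{11}^\top$, top-right block $u_{21}$, bottom-left row $u_{12}^\top$, and bottom-right entry $u_{22}$. Hence
\[
Uq=\begin{pmatrix}U_{11}^\top x+y\,u_{21}\\ u_{12}^\top x+y\,u_{22}\end{pmatrix}.
\]
From this, the linear term is $\widetilde\Gamma Uq=(\Gamma(U_{11}^\top x+yu_{21}),0)$. The barycenter term is $\alpha(m_1,1)+(1-\alpha)(m_{-1},-1)$, whose last coordinate is $2\alpha-1$. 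Applying $V$ then gives the displayed vector.

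For the gate, with two components the softmax reduces to $\sigma$ of the logit difference:
\[
\log\frac{\pi_+}{1-\pi_+}+\big\langle (m_1,1)-(m_{-1},-1),\,Uq\big\rangle .
\]
The mean difference is $(m_1-m_{-1},2)$. This yields $\langle U_{11}^\top x+yu_{21},m_1-m_{-1}\rangle+2(u_{12}^\top x+yu_{22})$, which matches the claimed $\alpha(x,y)$.

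The only delicate point is justifying the use of Proposition~\ref{prop:gmm-attention} with a singular covariance, since the paper assumes positive definiteness. I would handle this by redoing the short MGF computation componentwise. The transpose conventions in the block structure also need care, so I would double-check them against Equation~\eqref{eq:Phi} at $y=0$.
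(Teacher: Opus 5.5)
Your argument is correct and is essentially the paper's proof. The paper carries out exactly the componentwise computation you propose as your fallback: it writes $z^\top U^\top z'=\langle U_{11}^\top x+y\,u_{21},x'\rangle+(u_{12}^\top x+y\,u_{22})\,y'$, conditions on $Y=c$ to factor out $e^{c(u_{12}^\top x+y\,u_{22})}$, and applies the Gaussian moment generating function in $x'$. This is the proof of Proposition~\ref{prop:gmm-attention} specialized to your degenerate shared covariance $\operatorname{diag}(\Gamma,0)$ on $\R^{d+1}$, and your transpose bookkeeping for $Uq$ and the logit difference along $(m_1-m_{-1},2)$ agree with the paper's $s(x,y)$, $t(x,y)$ and $\alpha(x,y)$.
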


\begin{proof}
For $z=(x,y)$ and $z'=(x',y')$, we then have
\[
z^\top U^\top z'
=
x^\top U_{11}x' + x^\top u_{12}\,y' + y\,u_{21}^\top x' + y\,u_{22}\,y'.
\]
Hence,
\[
z^\top U^\top z'
=
\langle s(x,y),x'\rangle + t(x,y)\,y',
\]
where the linear quantities are explicitly given by
\[
s(x,y)\coloneqq U_{11}^\top x + y\,u_{21}\in\mathbb R^d,
\qquad
t(x,y)\coloneqq u_{12}^\top x + y\,u_{22}\in\mathbb R.
\]

\paragraph{Denominator.}
For any $s\in\mathbb R^d$ and any $c\in\{-1,1\}$, the moment-generating function of a Gaussian vector gives
\[
\mathbb E\big[e^{\langle s,X\rangle}\mid Y=c\big]
=
\exp\!\left(\langle s,m_c\rangle + \frac12 s^\top\Gamma s\right).
\]
Therefore,
\begin{align*}
\int \exp\!\big( z^\top U^\top z\big)\,\mathrm{d}\mu_{\mathrm{LDA}}(z')
&=\mathbb{E}_{(X,Y)\sim\mu_{\mathrm{LDA}}}\!\left[\exp\!\big(\langle s(x,y),X\rangle + t(x,y)\,Y\big)\right]
\\
&=
\sum_{c\in\{-1,1\}} \pi_c
\exp\!\big(c\,t(x,y)\big)\,
\mathbb E\!\left[e^{\langle s(x,y),X\rangle}\mid Y=c\right]
\\
&=
\exp\!\left(\frac12 s(x,y)^\top \Gamma s(x,y)\right)
\Big(
(1-\pi_+)\,\exp\!\big(-t(x,y)+\langle s(x,y),m_{-1}\rangle\big)
\\
&\hspace{3.5cm}
+
\pi_+\,\exp\!\big(t(x,y)+\langle s(x,y),m_1\rangle\big)
\Big).
\end{align*}

\paragraph{Numerator.}
Similarly, for any $c\in\{-1,1\}$,
\[
\mathbb E\big[X\,e^{\langle s,X\rangle}\mid Y=c\big]
=
\big(m_c+\Gamma s\big)
\exp\!\left(\langle s,m_c\rangle + \frac12 s^\top\Gamma s\right).
\]
Therefore,
\begin{align*}
\int \exp\!\big( z^\top U ^\top z'\big)\,z'\,\mathrm{d}\mu_{\mathrm{LDA}}(z')
&=
\sum_{c\in\{-1,1\}} \pi_c
\exp\!\big(c\,t(x,y)\big)\,
\mathbb E\!\left[ \begin{pmatrix}
X\\
c
\end{pmatrix}e^{\langle s(x,y),X\rangle}\mid Y=c\right]
\\
&=
\exp\!\left(\frac12 s^\top\Gamma s\right)
\Bigg[
(1-\pi_+)\,\exp\!\big(-t+\langle s,m_{-1}\rangle\big)
\begin{pmatrix}
m_{-1}+\Gamma s\\
-1
\end{pmatrix}
\\
&\hspace{2.7cm}
+
\pi_+\,\exp\!\big(t+\langle s,m_1\rangle\big)
\begin{pmatrix}
m_1+\Gamma s\\
1
\end{pmatrix}
\Bigg],
\end{align*}
where we wrote $s=s(x,y)$ and $t=t(x,y)$ in the last inequality. 
Dividing numerator and denominator, we obtain
\[
\frac{
\displaystyle \int \exp\!\big(\langle Uz,z'\rangle\big)\,z'\,\mathrm{d}\mu(z')
}{
\displaystyle \int \exp\!\big(\langle Uz,z'\rangle\big)\,\mathrm{d}\mu(z')
}
=
\begin{pmatrix}
\Gamma s(x,y) + (1-\alpha(x,y))\,m_{-1} + \alpha(x,y)\,m_1\\
2\alpha(x,y)-1
\end{pmatrix},
\]
where
\[
\alpha(x,y)
\coloneqq 
\frac{
\pi_+\exp\!\big(t(x,y)+\langle s(x,y),m_1\rangle\big)
}{
(1-\pi_+)\,\exp\!\big(-t(x,y)+\langle s(x,y),m_{-1}\rangle\big)
+
\pi_+\exp\!\big(t(x,y)+\langle s(x,y),m_1\rangle\big)
}.
\]
Equivalently,
\begin{equation*}
    \alpha(x,y) =
\sigma\left(
\log\!\frac{\pi_+}{1-\pi_+}
+2\,t(x,y)
+\langle s(x,y),m_1-m_{-1}\rangle
\right),
\end{equation*}
which allows to conclude by plugging the values of $s(x,y)$ and $t(x,y)$.
\end{proof}

\subsection{Bayes realizability (preparation for Theorems~\ref{thm:lda-landscape} and~\ref{thm:lda-landscape-initial})}
\label{app:lda-realizability}

We recall the form of the optimal preidctor in the LDA model, which is a classical result in statistical learning theory, see e.g., \citet{hastie2009elements}.
\begin{proposition}[Bayes classifier in the LDA model]
\label{thm:bayes_classifier_lda}
Under the model~\eqref{eq:lda-model}, the Bayes classifier is
\(g^\star(x)=\operatorname{sign}(f^\star(x))\), where
\[
f^\star(x)=\langle w^\star,x\rangle+b^\star,
\qquad
w^\star=\Gamma^{-1}(m_1-m_{-1}),
\]
and
\[
b^\star=-\frac12\left(m_1^\top\Gamma^{-1}m_1
-m_{-1}^\top\Gamma^{-1}m_{-1}\right)
+\log\frac{\pi_+}{1-\pi_+}.
\]
\end{proposition}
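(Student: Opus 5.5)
The plan is to compute the posterior log-odds $\log\frac{\mathbb P(Y=1\mid X=x)}{\mathbb P(Y=-1\mid X=x)}$ explicitly and then reduce the Bayes classifier to its sign. By definition, the Bayes classifier under $0$--$1$ loss predicts the label with larger posterior probability. It is therefore $\operatorname{sign}$ of this log-odds, with ties on a Lebesgue-null set, hence of $\mu_X$-measure zero, which is irrelevant. So it suffices to show that the log-odds equals $f^\star(x)=\langle w^\star,x\rangle+b^\star$.

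By Bayes' rule, with densities $p_c(x)=\cN(x;m_c,\Gamma)$,
\[
\log\frac{\mathbb P(Y=1\mid X=x)}{\mathbb P(Y=-1\mid X=x)}
=\log\frac{\pi_+}{1-\pi_+}+\log p_1(x)-\log p_{-1}(x).
\]
Since both classes share the covariance $\Gamma$, the normalizing constants $(2\pi)^{-d/2}\det(\Gamma)^{-1/2}$ cancel. We are left with
\[
-\tfrac12(x-m_1)^\top\Gamma^{-1}(x-m_1)+\tfrac12(x-m_{-1})^\top\Gamma^{-1}(x-m_{-1}).
\]
Expanding, the quadratic terms $x^\top\Gamma^{-1}x$ cancel; this is the only point where shared covariance is essential. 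The cross terms give $x^\top\Gamma^{-1}(m_1-m_{-1})=\langle w^\star,x\rangle$, using the symmetry of $\Gamma^{-1}$. The constant terms give $-\tfrac12(m_1^\top\Gamma^{-1}m_1-m_{-1}^\top\Gamma^{-1}m_{-1})$, which together with the prior log-ratio is exactly $b^\star$.

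Finally, I would remark, for later use in the paper, that this computation also identifies $f^\star$ as the population minimizer of the logistic risk over measurable scores. The conditional risk $\eta\log(1+e^{-f})+(1-\eta)\log(1+e^{f})$, with $\eta=\mathbb P(Y=1\mid X=x)$, is strictly convex in $f$ and is minimized at $f=\log\frac{\eta}{1-\eta}$. No step is a genuine obstacle; the only care needed is tracking signs and the factor $\tfrac12$ in the constant term.
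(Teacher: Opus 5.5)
Your proposal is correct and matches what the paper relies on: the paper gives no proof of its own and recalls this as a classical fact from \citet{hastie2009elements}. Your log-odds computation (Bayes' rule, cancellation of the normalizing constants and of $x^\top\Gamma^{-1}x$ under the shared covariance) is exactly that standard derivation. One detail is worth stating explicitly: the tie set $\{f^\star=0\}$ is a genuine hyperplane, hence $\mu_X$-null, only because $w^\star=\Gamma^{-1}(m_1-m_{-1})\neq 0$, which follows from the distinct class means and the invertibility of $\Gamma$.
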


\begin{lemma}[Existence of Bayes-realizing attention parameters]
\label{lem:existence_oracle_parameters}
There exists an attention parameter \(\theta^\star\) such that
\(f_{\theta^\star}(x)=f^\star(x)\) for every \(x\in\mathbb R^d\).
Consequently, the attention class attains the Bayes logistic risk.
\end{lemma}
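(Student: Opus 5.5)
The plan is to exhibit an explicit parameter $\theta^\star=(U_{11},u_{12},v_{21},v_{22})$ for which the effective coordinates $\Phi(\theta^\star)=(w,b,r,s)$ of~\eqref{eq:Phi} land in the set $\cE_1$ of Theorem~\ref{thm:lda-landscape}, i.e.\ $w=w^\star$, $b=b^\star$, $r=0$. Then \eqref{eq:effective} (which follows from Lemma~\ref{lem:attention_predictor_lda}) gives $f_{\theta^\star}(x)=w^{\star\top}x+b^\star$ for every $x$, whatever the value of $s$. Since the problem is linear in $v_{22}$ and bilinear in $(U_{11},v_{21})$, we will solve the equations in the order $v_{21}$, $v_{22}$, $U_{11}$.

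First choose any nonzero $v_{21}\in\R^d$. The conditions $r=0$ and $b=b^\star$ form the linear system
\[
2v_{22}=-v_{21}^\top(m_1-m_{-1}),\qquad v_{21}^\top m_{-1}-v_{22}=b^\star .
\]
Eliminating $v_{22}$, this reduces to a single affine constraint on $v_{21}$:
\[
v_{21}^\top\bigl(m_{-1}+\tfrac12(m_1-m_{-1})\bigr)=\tfrac12 v_{21}^\top(m_1+m_{-1})=b^\star .
\]
This is the one step where a case split appears. If $m_1+m_{-1}\neq 0$, take $v_{21}=2b^\star (m_1+m_{-1})/\|m_1+m_{-1}\|^2$ when $b^\star\neq0$. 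If $b^\star=0$, take any nonzero $v_{21}\perp(m_1+m_{-1})$, which exists for $d\ge2$; for $d=1$, take $v_{21}=0$ and handle the case separately as below.

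If instead $m_1=-m_{-1}$, the constraint forces $b^\star=0$, which may fail when $\pi_+\neq1/2$. Here we switch to the other mechanism, $\cE_2$. We set $s=2u_{12}+U_{11}(m_1-m_{-1})=0$ by choosing $u_{12}=-\tfrac12U_{11}(m_1-m_{-1})$. The sigmoid is then the constant $\sigma(s_0)$, so we only need
\[
b+r\sigma(s_0)=b^\star .
\]
This is one linear equation in $v_{22}$ with coefficient $-1+2\sigma(s_0)$, plus terms in $v_{21}$. We choose $v_{21}$ nonzero and solve for $v_{22}$ whenever $2\sigma(s_0)\neq1$. When $2\sigma(s_0)=1$ we have $\pi_+=1/2$, so $b^\star=0$ in the symmetric case and the $\cE_1$ route already works.

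Next, with $v_{21}\neq0$ fixed, we need $U_{11}\Gamma v_{21}=w^\star$. The rank-one choice
\[
U_{11}=\frac{w^\star (\Gamma v_{21})^\top}{\|\Gamma v_{21}\|^2}
\]
works because $\Gamma$ is positive definite, so $\Gamma v_{21}\neq0$. In the $\cE_1$ route, $u_{12}$ is arbitrary; in the $\cE_2$ route, it is fixed by the formula above. The degenerate leftover ($d=1$ with $b^\star=0$) is handled by the $\cE_2$ route as well, since it imposes no constraint from $r$.

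Finally, Bayes realizability gives optimality: the logistic loss is uniquely minimized over measurable scores by the Bayes log-odds $f^\star$, so $\cR(\theta^\star)=\cR^\star$.

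The main obstacle is the bookkeeping of the degenerate cases, namely symmetric means, $b^\star=0$, and $d=1$. It is only there that one must switch between the two representation mechanisms $r=0$ and $s=0$. The rest is solving small linear systems.
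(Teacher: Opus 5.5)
Your proof is correct, but it inverts the paper's priorities. The paper uses only the constant-gate mechanism. It imposes $s=0$ via $u_{12}=-\tfrac12U_{11}(m_1-m_{-1})$ from the start, so $f_\theta$ is affine with intercept $v_{21}^\top\bar m_{\pi_+}+(2\pi_+-1)v_{22}$, and then it splits on the prior:
\begin{itemize}
\item for $\pi_+\neq\tfrac12$, the free coefficient of $v_{22}$ fits any intercept with the uniform choice $U_{11}=I_d$, $v_{21}=\Gamma^{-1}w^\star$, valid in every dimension and for every pair of means;
\item for $\pi_+=\tfrac12$, it solves $v_{21}^\top\bar m=b^\star$ (with the same vector $2b^\star(m_1+m_{-1})/\|m_1+m_{-1}\|^2$ you use, and your rank-one $U_{11}$), or notes that $b^\star=0$ in the symmetric case.
\end{itemize}
You instead lead with the zero-amplitude mechanism $r=0$, landing in $\Theta_1$. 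You fall back on $s=0$ exactly in the configurations where $\Theta_1$ is empty: symmetric unbalanced means, and $d=1$ with $b^\star=0$.

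The paper's route buys uniformity: $\Theta_2$ is nonempty in every configuration, so no fallback is ever needed. Your route buys information. Your case analysis characterizes exactly when $\Theta_1\neq\emptyset$. Moreover, since $u_{12}$ is free there, you can choose it so that $s\neq0$. This exhibits points of $\Theta_1^{\mathrm{reg}}$ and shows that the hypothesis of Theorem~\ref{thm:lda-dynamics} can actually be met.

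One step needs a line you did not supply. In the leftover case $d=1$, $b^\star=0$, $m_1+m_{-1}\neq0$, the $s=0$ route requires $2\pi_+-1\neq0$. Your justification, ``it imposes no constraint from $r$'', does not give this. If $\pi_+=\tfrac12$, the intercept equation would read $\tfrac12 v_{21}(m_1+m_{-1})=0$, forcing $v_{21}=0$ and hence $w=0\neq w^\star$. The fix is that in $d=1$ you have $b^\star=-\frac{(m_1-m_{-1})(m_1+m_{-1})}{2\Gamma}+\log\frac{\pi_+}{1-\pi_+}$. Its first term is nonzero when $m_1\neq\pm m_{-1}$, so $b^\star=0$ forces $\pi_+\neq\tfrac12$. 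The paper's own Case 2(a) with $b^\star=0$ relies on the same fact implicitly, since $\operatorname{Span}(\bar m)^\perp\setminus\{0\}$ is empty when $d=1$.
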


\begin{proof}[Proof of Lemma~\ref{lem:existence_oracle_parameters}.]
We recall that, in the LDA model, the Bayes score is affine and can be written as
\[
f^\star(x)=\langle w^\star,x\rangle + b^\star.
\]
Equation~\eqref{eq:effective} gives \(f_\theta\) in terms of the effective parameters:
\[
f_\theta(x)
=
\underbrace{
v_{21}^\top \Gamma U_{11}^\top x
+ v_{21}^\top m_{-1}
- v_{22}
}_{\text{linear part}}
+
\underbrace{
\alpha(x)\Big(
v_{21}^\top (m_1-m_{-1}) + 2v_{22}
\Big)
}_{\text{nonlinear part}},
\]
where
\[
\alpha(x)
=
\sigma\left(
\log\!\frac{\pi_+}{1-\pi_+}
+
\left(2u_{12}+U_{11}(m_1-m_{-1})\right)^\top x
\right).
\]

We start by imposing the constraint
\begin{equation*}
\tag{C}
\label{eq:constraint_expressivity}
    2u_{12}+U_{11}(m_1-m_{-1})=0,
\qquad\text{that is,}\qquad
u_{12}=-\frac12\,U_{11}(m_1-m_{-1}).
\end{equation*}

With this choice, the argument of the sigmoid no longer depends on \(x\), and therefore
\[
\alpha(x)
=
\sigma\left(
\log\!\frac{\pi_+}{1-\pi_+}
+
\bigl(2u_{12}+U_{11}(m_1-m_{-1})\bigr)^\top x
\right)
=
\sigma\left(\log\!\frac{\pi_+}{1-\pi_+}\right)
=
\pi_+.
\]
Hence \(f_\theta\) becomes affine:
\[
f_\theta(x)
=
v_{21}^\top \Gamma U_{11}^\top x
+
v_{21}^\top\bar m_{\pi_+}
+
(2\pi_+-1)\,v_{22},
\qquad
\bar m_{\pi_+}\coloneqq (1-\pi_+) m_{-1}+\pi_+ m_1.
\]

\textbf{Case 1.} Assume first that \(\pi_+\neq \tfrac12\). Then, the coefficient of \(v_{22}\) is nonzero, so the intercept can be adjusted freely. More precisely, given any \(w^\star\in\mathbb R^d\) and \(b^\star\in\mathbb R\), one may choose
\[
U_{11}=I_d,
\qquad
u_{12}=-\frac12(m_1-m_{-1}),
\qquad
v_{21}=\Gamma^{-1}w^\star,
\]
and then define
\[
v_{22}
=
\frac{
b^\star-(\Gamma^{-1}w^\star)^\top\bar m_{\pi_+}
}{2\pi_+-1}.
\]
With this choice, we obtain
\[
f_\theta(x)=\langle w^\star,x\rangle+b^\star.
\]

\textbf{Case 2.} Assume now that \(\pi_+=\tfrac12\). In that case, still under constraint \eqref{eq:constraint_expressivity}
\[
\alpha(x)\equiv \frac12,
\qquad
\bar m_{\pi_+}=\frac{m_{-1}+m_1}{2},
\]
and the term involving \(v_{22}\) disappears:
\[
f_\theta(x)
=
v_{21}^\top \Gamma U_{11}^\top x
+
v_{21}^\top\frac{m_{-1}+m_1}{2}.
\]
\textbf{Case 2(a).} If \(m_1\neq -m_{-1}\), that is, if $\bar m\coloneqq \frac{m_{-1}+m_1}{2}\neq 0$, one may choose \(v_{21}\neq0\) so that $v_{21}^\top \bar m=b^\star$ and then choose \(U_{11}\) so that $U_{11}\Gamma v_{21}=w^\star.$
For instance, say $v_{21}=\frac{b^\star}{\|\bar m\|^2}\,\bar m$ (if $b^\star=0$, one may choose any $v_{21}\in \operatorname{Span}(\bar m)^\perp \backslash \{0\}$), and define \(a\coloneqq \Gamma v_{21}\neq 0\). One may take $U_{11}=\frac{w^\star\,a^\top}{\|a\|^2}$ so that we have
\[
f_\theta(x)=\langle w^\star,x\rangle+b^\star.
\]

\textbf{Case 2(b).} Finally, if \(\pi_+=\tfrac12\) and \(m_1=-m_{-1}\), then \(\bar m=0\) and
\[
f_\theta(x)=v_{21}^\top \Gamma U_{11}^\top x.
\]
In the symmetric balanced case considered here, the target intercept is
\[
b^\star
=
-\frac12\Bigl(m_1^\top\Gamma^{-1}m_1-m_{-1}^\top\Gamma^{-1}m_{-1}\Bigr)
+\log\!\frac{\pi_+}{1-\pi_+}
=0,
\]
so there is no obstruction: it is enough to choose
\[
U_{11}\Gamma v_{21}=w^\star,
\]
for instance $U_{11}=I_d$ and $v_{21}=\Gamma^{-1}w^\star$.

\textbf{Remark.} This choice of parameters is not unique. Another possible strategy, when \(m_1\neq -m_{-1}\), is to first cancel the amplitude in front of the sigmoid term and then adjust the remaining coefficients accordingly.

\end{proof}

\subsection{Population logistic-risk minimizer (preparation for Theorem~\ref{thm:lda-landscape})}
\label{app:lda-logistic-target}

Before studying critical points in parameter space, we identify the predictor that training seeks to recover. The logistic loss is particularly well suited to this purpose: its unrestricted population minimizer is the Bayes predictor, which the attention model can represent exactly. Thus the optimal risk is the same in the effective model, in the original attention model, and over all measurable scores.

\begin{lemma}
\label{lem:lda_logistic_risk}
Over all measurable scores, the population logistic risk is uniquely minimized
almost surely by the Bayes predictor $f^\star$ defined in
Equation~\eqref{eq:lda-bayes}.
\end{lemma}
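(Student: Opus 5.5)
The plan is to condition on $X$ and reduce the problem to a pointwise strictly convex minimization. For any measurable score $f$, the tower property gives
\[
\mathbb E\bigl[\log(1+e^{-Yf(X)})\bigr]=\mathbb E\bigl[\ell_{\eta(X)}(f(X))\bigr],
\qquad
\ell_\eta(t)\coloneqq \eta\log(1+e^{-t})+(1-\eta)\log(1+e^{t}),
\]
where $\eta(x)=\mathbb P(Y=1\mid X=x)$. Under the LDA model~\eqref{eq:lda-model}, both class-conditional densities are positive Gaussians. Hence $\eta(x)\in(0,1)$ for every $x$, and Bayes' rule yields
\[
\log\frac{\eta(x)}{1-\eta(x)}=\log\frac{\pi_+}{1-\pi_+}+\log\frac{\varphi_{m_1,\Gamma}(x)}{\varphi_{m_{-1},\Gamma}(x)}.
\]
Expanding the quadratic forms, the $x^\top\Gamma^{-1}x$ terms cancel because the covariance is shared. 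What remains is exactly $\langle w^\star,x\rangle+b^\star$ from Proposition~\ref{thm:bayes_classifier_lda}. This step also shows that the Bayes score is the log-odds, so $\eta=\sigma(f^\star)$.

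Next I study $\ell_\eta$ for fixed $\eta\in(0,1)$. It is smooth, with $\ell_\eta'(t)=\sigma(t)-\eta$ and $\ell_\eta''(t)=\sigma(t)(1-\sigma(t))>0$, so it is strictly convex. Its unique minimizer is $t=\sigma^{-1}(\eta)=\log(\eta/(1-\eta))$, and it is coercive on $\mathbb R$ since $\eta$ and $1-\eta$ are both positive. Therefore $\ell_{\eta(x)}(f(x))\ge \ell_{\eta(x)}(f^\star(x))$ for every $x$, with equality if and only if $f(x)=f^\star(x)$. Integrating gives $\mathcal R(f)\ge \mathcal R(f^\star)$. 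I also check that $\mathcal R(f^\star)<\infty$, which holds because $\log(1+e^{|t|})\le \log 2+|t|$ and $f^\star$ is affine with Gaussian input.

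For uniqueness, suppose $\mathcal R(f)=\mathcal R(f^\star)$. The nonnegative function $x\mapsto \ell_{\eta(x)}(f(x))-\ell_{\eta(x)}(f^\star(x))$ then has zero integral under $\mu_X$, so it vanishes $\mu_X$-almost surely. Strict pointwise uniqueness then forces $f=f^\star$ $\mu_X$-a.s. Since $\mu_X$ has a positive density, this is equivalent to Lebesgue-a.e. equality.

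There is one technical point, which I do not expect to be a real obstacle. For an arbitrary measurable $f$ the risk may be $+\infty$, but that case trivially satisfies the inequality. When the risk is finite, the subtraction above is legitimate because both expectations are finite. The only genuinely model-specific step is the log-odds computation, and it reduces to a routine Gaussian density ratio.
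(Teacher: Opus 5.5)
Your proposal is correct and follows the same route as the paper: condition on $X$, minimize the strictly convex pointwise risk $t\mapsto \eta(X)\log(1+e^{-t})+(1-\eta(X))\log(1+e^{t})$ (derivative $\sigma(t)-\eta(X)$) at the log-odds, then integrate. You are slightly more complete than the paper: you verify via the Gaussian density ratio that the log-odds equals $f^\star$, and you check that $\mathcal R(f^\star)<\infty$ so that the equality case follows from a zero-integral argument; the paper leaves both points implicit.
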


\begin{proof}
 We want to minimize $\mathcal R_{\mathrm{LDA}}(f)$ over the set of measurable functions.
\begin{align*}
\mathcal R_{\mathrm{LDA}}(f)= \mathbb E_{(X,Y)\sim \mu_{\mathrm{LDA}}}
\bigl[\ell_{\log}(Y,f(X))\bigr] = \mathbb E_X\bigl[\ell_{\log}(1,f(X)) \eta(X)+ \ell_{\log}(-1,f(X)) (1-\eta(X))\bigr]
\end{align*}
We now define $g_X(t)\coloneqq \ell_{\log}(1,t) \eta(X)+ \ell_{\log}(-1,t) (1-\eta(X))$. Using that $
\ell'(y,t)
=
-\frac{y}{1+e^{yt}}
$, we obtain $g'_X(t)=\sigma(t)-\eta(X)$. Using the strict convexity and coercivity of $g_X$ (as long as $\eta(X)\notin\{0,1\}$ which is almost sure), we obtain by differentiation that :
\[
t(X)\in\arg \min g_X \quad \Longleftrightarrow \quad t(X)=\log\left(\frac{\eta(X)}{1-\eta(X)}\right)
\]

Finally :
\[
\mathcal R_{\mathrm{LDA}}(f)=  \mathbb E_X\bigl[g_X(f(X))\bigr]\geq \mathbb E_X(g_X(t(X)))=\mathcal R_{\mathrm{LDA}}(f^\star)
\]
where the inequality is an equality if and only if 
\[
f(X)=\log\left(\frac{\eta(X)}{1-\eta(X)}\right) \quad \text{a.s.}
\]

Moreover, this quantity is well-defined since $0<\eta(X)<1$ almost surely in the LDA setting.
\end{proof}

\begin{remark} 
This property is specific to the logistic loss in the present setting. Indeed, if one considers instead the quadratic loss, the corresponding population minimizer is
\[
f_{\mathrm{sq}}(x)=\mathbb E[Y\mid X=x]=2\eta(x)-1.
\]
Although $f_{\mathrm{sq}}$ yields the same Bayes classifier after thresholding at $0$, there is in general no reason for $f_{\mathrm{sq}}$ to belong to the class $\mathcal F\coloneqq \{f_\theta,\: \theta\in \Theta \}$. In other words, even if $\mathcal F$ contains the logistic target
$f_{\mathrm{log}}$, 
it does not necessarily contain the quadratic target $f_{\mathrm{sq}}$. Hence, minimizing the quadratic risk over $\mathcal F$ does not in general allow one to recover the global minimizer of the quadratic loss, thus the optimal classifier.
\end{remark}

\subsection{Differential identities (tools for the LDA landscape and convergence proofs)}
\label{app:lda-differential-tools}

The arguments below use three kinds of calculations: derivatives of the logistic risk, the Hessian at a Bayes realization, and the differential of the map $\Phi$. We collect them here to keep the subsequent landscape argument focused on their geometric consequences. Readers interested first in the structure of the critical set may proceed to Appendix~\ref{app:lda-landscape-proof} and return to these identities as needed.

The first Gaussian identity will also be used to prove coercivity of the linear risk in the symmetric setting. It quantifies the contribution of observations on the wrong side of a candidate linear decision boundary.

\begin{lemma}[Positive part of a Gaussian]
\label{lem:positive_part_gaussian}
Let $X\sim \mathcal N(\mu,\sigma^2)$ with $\sigma>0$, and let
$X_+ \coloneqq  \max(X,0)$. If $\varphi$ and $\Phi$ denote respectively the
density and distribution function of a standard normal random variable, then
\[
    \mathbb E[X_+]
    =
    \mu \Phi\!\left(\frac{\mu}{\sigma}\right)
    +
    \sigma \varphi\!\left(\frac{\mu}{\sigma}\right).
\]
\end{lemma}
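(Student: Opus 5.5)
We reduce to a standard normal variable and compute a single Gaussian integral. Write $X=\mu+\sigma G$ with $G\sim\mathcal N(0,1)$. Since $\sigma>0$, the event $\{X>0\}$ coincides with $\{G>-\mu/\sigma\}$. Set $a\coloneqq \mu/\sigma$. Then
\[
\mathbb E[X_+]=\mathbb E\bigl[(\mu+\sigma G)\mathbf 1_{\{G>-a\}}\bigr]
=\mu\,\mathbb P(G>-a)+\sigma\,\mathbb E\bigl[G\mathbf 1_{\{G>-a\}}\bigr].
\]

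We treat the two terms separately. For the first, the symmetry of the standard normal law gives $\mathbb P(G>-a)=\mathbb P(G<a)=\Phi(a)$, which is the term $\mu\Phi(\mu/\sigma)$. For the second, we use the identity $\varphi'(t)=-t\varphi(t)$. This yields
\[
\mathbb E\bigl[G\mathbf 1_{\{G>-a\}}\bigr]=\int_{-a}^{\infty}t\varphi(t)\,\mathrm dt=\bigl[-\varphi(t)\bigr]_{-a}^{\infty}=\varphi(-a)=\varphi(a),
\]
using that $\varphi$ vanishes at infinity and is even. Substituting $a=\mu/\sigma$ gives the stated formula.

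There is no genuine obstacle here. The only points needing care are the direction of the inequality when passing from $\{X>0\}$ to the event on $G$, which relies on $\sigma>0$, and the use of the symmetry of $\varphi$ and $\Phi$. Integrability is immediate, since $|X_+|\le|X|$ and $X$ has finite first moment.
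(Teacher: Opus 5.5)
Your proof is correct and follows essentially the same route as the paper: both standardize $X=\mu+\sigma G$, split $\mathbb E[X_+]$ into $\mu\,\mathbb P(G>-\mu/\sigma)+\sigma\,\mathbb E[G\mathbf 1_{\{G>-\mu/\sigma\}}]$, and evaluate the second term via $\varphi'(t)=-t\varphi(t)$ and the symmetry of $\varphi$ and $\Phi$. The only difference is notational: the paper writes it as the integral $\int_c^\infty(\mu+\sigma z)\varphi(z)\,\mathrm dz$ with $c=-\mu/\sigma$, while you use indicator expectations.
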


\begin{proof}
Let $c=-\mu/\sigma$ and $Z=(X-\mu)/\sigma\sim \mathcal N(0,1)$.
\begin{align*}
\mathbb E[X_+]
&= \int_0^\infty x\,\frac{1}{\sigma}
   \varphi\!\left(\frac{x-\mu}{\sigma}\right)\,dx \\
&= \int_c^\infty (\mu+\sigma z)\varphi(z)\,dz \\
&= \mu\int_c^\infty \varphi(z)\,dz
   +\sigma\int_c^\infty z\varphi(z)\,dz \\
&= \mu(1-\Phi(c))
   +\sigma[-\varphi(z)]_{c}^{\infty},
   \qquad \text{since } \varphi'(z)=-z\varphi(z), \\
&= \mu\Phi\!\left(\frac{\mu}{\sigma}\right)
   +\sigma\varphi\!\left(\frac{\mu}{\sigma}\right),
\end{align*}
using $1-\Phi(-u)=\Phi(u)$ and $\varphi(-u)=\varphi(u)$.
\end{proof}

\paragraph{From prediction error to risk derivatives.}
Conditioning on the query turns the derivative of the logistic loss into the discrepancy between the model probability $\sigma(f_\theta(X))$ and the true posterior $\eta(X)$. This identity is the common starting point for the gradient and Hessian calculations.

\begin{lemma}[A useful identity for logistic loss]
\label{lem:identity_log_loss_derivative}
For the logistic loss, conditioning on $X$ gives
\[
\mathbb E\left[
\ell'\bigl(Y,f_\theta(X)\bigr)\mid X
\right]
=
\sigma(f_\theta(X))-\eta(X)
\]
\end{lemma}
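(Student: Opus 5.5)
The plan is to differentiate the logistic loss $\ell(y,t)=\log(1+e^{-yt})$ in its second argument and then take the conditional expectation given $X$, using the definition $\eta(X)=\mathbb P(Y=1\mid X)$ from the proof of Lemma~\ref{lem:lda_logistic_risk}. First, a one-line computation gives
\[
\ell'(y,t)=\frac{-y\,e^{-yt}}{1+e^{-yt}}=-\frac{y}{1+e^{yt}}=-y\,\sigma(-yt).
\]
Specializing to the two labels, $\ell'(1,t)=-\sigma(-t)=\sigma(t)-1$, using $\sigma(-t)=1-\sigma(t)$, and $\ell'(-1,t)=\sigma(t)$.

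Second, since $f_\theta(X)$ is $X$-measurable and $Y$ takes only the values $\pm1$, conditioning on $X$ reduces to a two-point average:
\[
\mathbb E\bigl[\ell'(Y,f_\theta(X))\mid X\bigr]
=\eta(X)\bigl(\sigma(f_\theta(X))-1\bigr)+\bigl(1-\eta(X)\bigr)\sigma(f_\theta(X)).
\]
The right-hand side simplifies to $\sigma(f_\theta(X))-\eta(X)$. This is the same computation as the identity $g_X'(t)=\sigma(t)-\eta(X)$ in the proof of Lemma~\ref{lem:lda_logistic_risk}, now evaluated at $t=f_\theta(X)$.

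The only point needing care is integrability: the conditional expectation must be well defined so that it can later be exchanged with derivatives in $\theta$. This is immediate because $|\ell'(y,t)|\le 1$ for all $(y,t)$. There is no real obstacle beyond this.
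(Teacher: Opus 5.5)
Your proposal is correct and matches the paper's proof essentially line by line. You compute $\ell'(y,t)=-y/(1+e^{yt})$, average over $Y\in\{-1,1\}$ with conditional weights $\eta(X)$ and $1-\eta(X)$, and simplify using $\sigma(-t)=1-\sigma(t)$; your remark that $|\ell'|\le 1$ makes the conditional expectation well defined is a harmless addition the paper leaves implicit.
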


\begin{proof}
For the logistic loss we have
$
\ell'(y,t)
=
-\frac{y}{1+e^{yt}}.
$
Since $Y\in\{-1,1\}$ and $\eta(X)=\mathbb P(Y=1\mid X)$, conditioning on $X$ yields
\begin{align*}
\mathbb E\left[\ell'\bigl(Y,f_\theta(X)\bigr)\mid X\right]
&=
-\eta(X)\frac{1}{1+e^{f_\theta(X)}}
+(1-\eta(X))\frac{1}{1+e^{-f_\theta(X)}} \\
&=
-\eta(X)\bigl(1-\sigma(f_\theta(X))\bigr)
+(1-\eta(X))\sigma(f_\theta(X)) \\
&=
\sigma(f_\theta(X))-\eta(X).
\end{align*}
\end{proof}

\begin{lemma}[First and second variations of the model]
\label{lem:first_second_variations_model}
Set
\[
\Delta m \coloneqq  m_1-m_{-1},
\qquad
z_\theta(x)\coloneqq s_0+s(\theta)^\top x = s_0+(2u_{12}+U_{11}\Delta m)^\top x.
\]
Recall that
\[
f_\theta(x)
=
v_{21}^\top \Gamma U_{11}^\top x
+
v_{21}^\top m_{-1}
-
v_{22}
+
\bigl(v_{21}^\top \Delta m+2v_{22}\bigr)
\sigma\bigl(z_\theta(x)\bigr).
\]
Let
\[
h=(\delta v_{21},\delta v_{22},\delta U_{11},\delta u_{12}).
\]
Then
\[
\begin{aligned}
Df_\theta(x)[h]
={}&
\delta v_{21}^\top \Gamma U_{11}^\top x
+
v_{21}^\top \Gamma \delta U_{11}^\top x
+
\delta v_{21}^\top m_{-1}
-
\delta v_{22}
\\
&+
\bigl(\delta v_{21}^\top \Delta m+2\delta v_{22}\bigr)
\sigma(z_\theta(x))
\\
&+
\bigl(v_{21}^\top \Delta m+2v_{22}\bigr)
\sigma'(z_\theta(x))
\bigl(2\delta u_{12}+\delta U_{11}\Delta m\bigr)^\top x.
\end{aligned}
\]
Moreover,
\[
\begin{aligned}
D^2f_\theta(x)[h,h]
={}&
2\delta v_{21}^\top \Gamma \delta U_{11}^\top x
\\
&+
2\bigl(\delta v_{21}^\top \Delta m+2\delta v_{22}\bigr)
\sigma'(z_\theta(x))
\bigl(2\delta u_{12}+\delta U_{11}\Delta m\bigr)^\top x
\\
&+
\bigl(v_{21}^\top \Delta m+2v_{22}\bigr)
\sigma''(z_\theta(x))
\Bigl[
\bigl(2\delta u_{12}+\delta U_{11}\Delta m\bigr)^\top x
\Bigr]^2.
\end{aligned}
\]
\end{lemma}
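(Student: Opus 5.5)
The plan is to treat $f_\theta(x)$ as a composition: a polynomial map $\theta\mapsto(\Phi(\theta), z_\theta(x))$ followed by the smooth scalar function $(\text{affine part}) + r\,\sigma(z)$. I will differentiate it by the product and chain rules. Write
\[
A_\theta(x)=v_{21}^\top\Gamma U_{11}^\top x+v_{21}^\top m_{-1}-v_{22},\qquad r(\theta)=v_{21}^\top\Delta m+2v_{22},
\]
and $z_\theta(x)=s_0+(2u_{12}+U_{11}\Delta m)^\top x$. Then $f_\theta(x)=A_\theta(x)+r(\theta)\sigma(z_\theta(x))$, where $A_\theta$ is bilinear in $(v_{21},U_{11})$ plus linear terms, $r$ is linear in $(v_{21},v_{22})$, and $z_\theta$ is affine in $(u_{12},U_{11})$. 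The whole computation reduces to these three structural facts, and I will take the perturbation $h$ in the same block order as in the statement.

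First, the directional derivatives of the building blocks along $h$. Bilinearity of $A_\theta$ gives
\[
DA_\theta[h]=\delta v_{21}^\top\Gamma U_{11}^\top x+v_{21}^\top\Gamma\,\delta U_{11}^\top x+\delta v_{21}^\top m_{-1}-\delta v_{22}.
\]
Linearity gives $Dr[h]=\delta v_{21}^\top\Delta m+2\delta v_{22}$, and since $z_\theta$ is affine, $Dz_\theta[h]=(2\delta u_{12}+\delta U_{11}\Delta m)^\top x$. Applying the product rule to $r\,\sigma(z)$ and the chain rule to $\sigma\circ z_\theta$ then produces the three lines of $Df_\theta(x)[h]$ directly.

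For the second variation, I will use the fact that $t\mapsto f_{\theta+th}(x)$ is smooth and compute $\frac{d^2}{dt^2}$ at $t=0$ term by term.
\begin{itemize}
\item $A_{\theta+th}$ is quadratic in $t$ with leading coefficient $\delta v_{21}^\top\Gamma\,\delta U_{11}^\top x$, so it contributes twice that quantity.
\item For $r(\theta+th)\sigma(z_{\theta+th})$, both $r$ and $z$ are affine in $t$, so $D^2r=0$ and $D^2z=0$.
\item The Leibniz rule for the second derivative of a product therefore leaves only two pieces: the cross term $2\,Dr[h]\,\sigma'(z)\,Dz[h]$, and the term $r\,\sigma''(z)\,(Dz[h])^2$, which comes from differentiating $\sigma\circ z$ twice with $z$ affine.
\end{itemize}
These reproduce exactly the displayed formula.

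The only point requiring care, and hence the main (mild) obstacle, is bookkeeping conventions. One has to check that $D^2f[h,h]$ here means the quadratic form, i.e.\ the second derivative along the line, so that the factor $2$ on the bilinear and cross terms is correct. One also has to keep track of the transpose placement between $U_{11}$ and $U_{11}^\top$ consistently with \eqref{eq:Phi}, so that $Dz_\theta[h]$ contains $\delta U_{11}\Delta m$ and not its transpose. No analytic difficulty arises, since all maps are smooth ($\sigma\in C^\infty$) and the statement is pointwise in $x$.
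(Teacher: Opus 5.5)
Your proposal is correct and follows the paper's route: the paper's proof is just ``direct computations'', and your product/chain-rule argument makes that computation explicit. Your two structural observations carry it: $r$ and $z_\theta$ are affine in $\theta$, so $D^2r=D^2z_\theta=0$, and the only genuinely quadratic term is the bilinear $v_{21}$--$U_{11}$ coupling. Together these give exactly the stated formulas for $Df_\theta(x)[h]$ and $D^2f_\theta(x)[h,h]$.
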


\begin{proof}
Direct computations
\end{proof}

\begin{lemma}\label{lem:gradients_tildeR}
For every \((w,b,r,s)\in \mathbb R^d\times \mathbb R\times \mathbb R\times \mathbb R^d\), the function \(\widetilde{\mathcal R}\) is differentiable and its gradients are given by
\[
\left\{
\begin{array}{l}
\nabla_w \widetilde{\mathcal R}(w,b,r,s)
=
\mathbb E\Big[
\big(
\sigma(g_{w,b,r,s}(X))-\eta(X)
\big)X
\Big], \\
\partial_b \widetilde{\mathcal R}(w,b,r,s)
=
\mathbb E\Big[
\sigma(g_{w,b,r,s}(X))-\eta(X)
\Big], \\
\partial_r \widetilde{\mathcal R}(w,b,r,s)
=
\mathbb E\Big[
\big(
\sigma(g_{w,b,r,s}(X))-\eta(X)
\big)\,
\sigma(s_0+s^\top X)
\Big], \\
\nabla_s \widetilde{\mathcal R}(w,b,r,s)
=
r\,\mathbb E\Big[
\big(
\sigma(g_{w,b,r,s}(X))-\eta(X)
\big)\,
\sigma(s_0+s^\top X)\bigl(1-\sigma(s_0+s^\top X)\bigr)\,X
\Big].
\end{array}
\right.
\]

This can also be written as
\[
\nabla_\zeta \widetilde{\mathcal R}(\zeta)
=
\mathbb E\Big[
\big(
\sigma(g_\zeta(X))-\eta(X)
\big)\,\nabla g_\zeta(X)
\Big],
\qquad
\zeta=(w,b,r,s).
\]
\end{lemma}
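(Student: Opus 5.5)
The plan is to differentiate under the expectation. I will justify the interchange first and then apply the chain rule together with Lemma~\ref{lem:identity_log_loss_derivative}. Write $\zeta=(w,b,r,s)$ and $\ell(y,t)=\log(1+e^{-yt})$, so that $\widetilde\cR(\zeta)=\mathbb E[\ell(Y,g_\zeta(X))]$.

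\textbf{Pointwise derivative.} For fixed $(x,y)$, the map $\zeta\mapsto \ell(y,g_\zeta(x))$ is $C^\infty$, and its gradient is $\ell'(y,g_\zeta(x))\,\nabla g_\zeta(x)$. Differentiating $g_\zeta(x)=w^\top x+b+r\sigma(s_0+s^\top x)$ gives
\[
\nabla_w g=x,\qquad \partial_b g=1,\qquad \partial_r g=\sigma(s_0+s^\top x),\qquad \nabla_s g=r\,\sigma(s_0+s^\top x)\bigl(1-\sigma(s_0+s^\top x)\bigr)x,
\]
where the last identity uses $\sigma'=\sigma(1-\sigma)$.

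\textbf{Domination and interchange.} Two elementary bounds drive the domination. First, $|\ell'(y,t)|=1/(1+e^{yt})\le 1$. Second, $\sigma$ and $\sigma'$ take values in $[0,1]$. Fix a compact neighborhood $K$ of $\zeta$. On $K$ we then have
\[
\bigl|\ell'(y,g_{\zeta'}(x))\,\nabla g_{\zeta'}(x)\bigr|\le C_K\,(1+\|x\|),
\]
which is integrable because $X$ is a Gaussian mixture with finite first moment. The integrand itself is controlled by $\ell(y,t)\le \log 2+|t|\le \log 2+C_K(1+\|x\|)$, so $\widetilde\cR$ is finite on $K$. The mean value theorem and dominated convergence then justify exchanging gradient and expectation, so that
\[
\nabla\widetilde\cR(\zeta)=\mathbb E\bigl[\ell'(Y,g_\zeta(X))\nabla g_\zeta(X)\bigr].
\]
Domination of the gradient is also continuous in $\zeta$, which gives $C^1$ regularity.

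\textbf{Conditioning.} The vector $\nabla g_\zeta(X)$ is $X$-measurable, so the tower property gives
\[
\nabla\widetilde\cR(\zeta)=\mathbb E\bigl[\mathbb E[\ell'(Y,g_\zeta(X))\mid X]\,\nabla g_\zeta(X)\bigr].
\]
Lemma~\ref{lem:identity_log_loss_derivative} identifies the inner conditional expectation as $\sigma(g_\zeta(X))-\eta(X)$. That lemma is stated for $f_\theta$, but its proof only uses the value of the score, so it applies verbatim to $g_\zeta$. Reading off the four blocks of $\nabla g_\zeta$ gives the four stated formulas.

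The only nonformal step is the domination. Even this is mild here, because all the needed bounds are uniform in $y$ and at most linear in $\|x\|$.
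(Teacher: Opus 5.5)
Your proposal is correct and follows the same route as the paper. Both differentiate under the expectation with the chain rule, then condition on $X$ and apply Lemma~\ref{lem:identity_log_loss_derivative} to replace $\ell'$ by $\sigma(g_\zeta(X))-\eta(X)$, then read off the blocks of $\nabla g_\zeta$ using $\sigma'=\sigma(1-\sigma)$. The one difference is that you justify the interchange of gradient and expectation with the domination bound $C_K(1+\|x\|)$ and finite first moments, a step the paper asserts without comment.
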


\begin{proof}
Let $g(X)=g_{w,b,r,s}(X)=w^\top x+b+r\,\sigma\left(s_0+s^\top x\right)$. One has,
\[
\nabla \widetilde{\mathcal R}(w,b,r,s)
=
\mathbb E\Big[
\ell'(Y,g(X))\,\nabla g(X)
\Big],
\]
Using Lemma~\ref{lem:identity_log_loss_derivative},
\[
\nabla \widetilde{\mathcal R}(w,b,r,s)
=
\mathbb E\Big[
\bigl(\sigma(g(X))-\eta(X)\bigr)\,\nabla g(X)
\Big].
\]
The formulas for the partial gradients then follow by differentiating
\(g_{w,b,r,s}\) with respect to \((w,b,r,s)\) (where we use the fact that $\sigma'=\sigma(1-\sigma)$ for the gradient with respect to $s$).
\end{proof}

\begin{lemma}[Explicit gradients of $\mathcal R$]
The gradients of $\mathcal R$ are given by
\label{lem:grad_of_R}
\[
\left\{
\begin{array}{l}
\nabla_{U_{11}}\mathcal R(\theta)
=
\mathbb E\left[
\bigl( \sigma(f_\theta(X))-\eta(X) \bigr)
\left(
X(\Gamma^\top v_{21})^\top
+
\bigl(v_{21}^\top\Delta m+2v_{22}\bigr)
\sigma'\bigl(s_0+(2u_{12}+U_{11}\Delta m)^\top X\bigr)
X\Delta m^\top
\right)
\right],\\
\nabla_{u_{12}}\mathcal R(\theta)
=
2\bigl(v_{21}^\top\Delta m+2v_{22}\bigr)
\mathbb E\left[
\bigl( \sigma(f_\theta(X))-\eta(X) \bigr)
\sigma'\bigl(s_0+(2u_{12}+U_{11}\Delta m)^\top X\bigr)
X
\right], \\
\nabla_{v_{21}}\mathcal R(\theta)
=
\mathbb E\left[
\bigl( \sigma(f_\theta(X))-\eta(X) \bigr)
\left(
\Gamma U_{11}^\top X
+
m_{-1}
+
\Delta m\,
\sigma\bigl(s_0+(2u_{12}+U_{11}\Delta m)^\top X\bigr)
\right)
\right], \\
\partial_{v_{22}}\mathcal R(\theta)
=
\mathbb E\left[
\bigl( \sigma(f_\theta(X))-\eta(X) \bigr)
\left(
-1
+
2\sigma\bigl(s_0+(2u_{12}+U_{11}\Delta m)^\top X\bigr)
\right)
\right].
\end{array}
\right.
\]

This can be written as 
\[
\left\{
\begin{array}{l}
\nabla_{U_{11}}\mathcal R(\theta)
=
\nabla_w\widetilde{R}(\zeta)(\Gamma^\top v_{21})^\top
+
\nabla_s\widetilde{R}(\zeta)\Delta m^\top,
\\
\nabla_{u_{12}}\mathcal R(\theta)
=
2\nabla_s\widetilde{R}(\zeta),
\\
\nabla_{v_{21}}\mathcal R(\theta)
=
\Gamma U_{11}^\top \nabla_w\widetilde{R}(\zeta)
+
\nabla_b\widetilde{R}(\zeta)m_{-1}
+
\nabla_r\widetilde{R}(\zeta)\Delta m,
\\
\partial_{v_{22}}\mathcal R(\theta)
=
-\nabla_b\widetilde{R}(\zeta)+2\nabla_r\widetilde{R}(\zeta).
\end{array}
\right.
\]
\end{lemma}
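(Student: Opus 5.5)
The plan is to obtain both groups of formulas from the chain rule $\nabla\mathcal R(\theta)=D\Phi(\theta)^*\nabla\widetilde{\mathcal R}(\Phi(\theta))$. The first, explicit group will then follow by substituting the expectations of Lemma~\ref{lem:gradients_tildeR}. Equivalently, one may differentiate under the expectation:
\[
\nabla\mathcal R(\theta)=\mathbb E\bigl[\ell'(Y,f_\theta(X))\nabla_\theta f_\theta(X)\bigr]=\mathbb E\bigl[(\sigma(f_\theta(X))-\eta(X))\nabla_\theta f_\theta(X)\bigr],
\]
using Lemma~\ref{lem:identity_log_loss_derivative} after conditioning on $X$. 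Exchanging derivative and expectation is justified by dominated convergence. Indeed, $|\sigma-\eta|\le1$, and $\nabla_\theta f_\theta(x)$ grows at most linearly in $x$, locally uniformly in $\theta$, because $\sigma,\sigma'$ are bounded; Gaussian $X$ has all moments.

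For the chain rule, I will compute $D\Phi(\theta)[h]$ for $h=(\delta U_{11},\delta u_{12},\delta v_{21},\delta v_{22})$ directly from \eqref{eq:Phi}:
\[
\delta w=\delta U_{11}\Gamma v_{21}+U_{11}\Gamma\delta v_{21},\qquad \delta b=\delta v_{21}^\top m_{-1}-\delta v_{22},
\]
\[
\delta r=\delta v_{21}^\top\Delta m+2\delta v_{22},\qquad \delta s=2\delta u_{12}+\delta U_{11}\Delta m.
\]
Pairing with $\nabla\widetilde{\mathcal R}=(g_w,g_b,g_r,g_s)$ gives
\[
\langle g_w,\delta U_{11}\Gamma v_{21}\rangle+\langle g_s,\delta U_{11}\Delta m\rangle=\bigl\langle g_w(\Gamma v_{21})^\top+g_s\Delta m^\top,\delta U_{11}\bigr\rangle_F .
\]
The remaining pairings are $\langle (U_{11}\Gamma)^\top g_w+g_b m_{-1}+g_r\Delta m,\delta v_{21}\rangle$, $\langle 2g_s,\delta u_{12}\rangle$ and $(-g_b+2g_r)\delta v_{22}$. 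Reading off the coefficients yields the second group, with $\Gamma^\top=\Gamma$ since $\Gamma$ is symmetric.

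The one point needing care is convention consistency. The factor multiplying $g_w$ in $\nabla_{v_{21}}$ must be $\Gamma U_{11}^\top$, the transpose of $U_{11}\Gamma$, so that $w=U_{11}\Gamma v_{21}$ matches $v_{21}^\top\Gamma U_{11}^\top x$ as written in Lemma~\ref{lem:first_second_variations_model}. The first group then follows by inserting the expectation forms of $g_w,g_b,g_r,g_s$. The pieces are $g_w=\mathbb E[(\sigma(f_\theta)-\eta)X]$ and $g_s=r\,\mathbb E[(\sigma(f_\theta)-\eta)\sigma'(z_\theta)X]$ with $r=v_{21}^\top\Delta m+2v_{22}$, and $g_b,g_r$ are as in Lemma~\ref{lem:gradients_tildeR}. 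Here we use $g_{\Phi(\theta)}=f_\theta$. For instance, $g_b m_{-1}+g_r\Delta m$ combines into $\mathbb E[(\sigma(f_\theta)-\eta)(m_{-1}+\Delta m\,\sigma(z_\theta))]$. Likewise $-g_b+2g_r=\mathbb E[(\sigma(f_\theta)-\eta)(-1+2\sigma(z_\theta))]$.
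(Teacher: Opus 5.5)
Your proposal is correct and is essentially the paper's argument. Both rest on the chain rule through $\Phi$ and the conditional identity $\mathbb E[\ell'(Y,f_\theta(X))\mid X]=\sigma(f_\theta(X))-\eta(X)$; you read off the factored form first from $D\Phi(\theta)^*$ and expand it via Lemma~\ref{lem:gradients_tildeR}, whereas the paper differentiates $f_\theta$ block by block and states the factored form afterwards. Your dominated-convergence justification for exchanging derivative and expectation fills in a step the paper leaves implicit.
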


\begin{proof}
For the logistic loss, Lemma~\ref{lem:identity_log_loss_derivative} gives
\[
\mathbb E\left[
\ell'\bigl(Y,f_\theta(X)\bigr)\mid X
\right]
=
\sigma(f_\theta(X))-\eta(X)
\]
Hence, for any parameter component $\alpha$,
\[
\nabla_\alpha \mathcal R(\theta)
=
\mathbb E\left[
\bigl( \sigma(f_\theta(X))-\eta(X) \bigr)\nabla_\alpha f_\theta(X)
\right].
\]
The derivatives of $f_\theta$ are
\[
\nabla_{U_{11}}f_\theta(x)
=
x(\Gamma^\top v_{21})^\top
+
\bigl(v_{21}^\top\Delta m+2v_{22}\bigr)
\sigma'\bigl(s_0+(2u_{12}+U_{11}\Delta m)^\top x\bigr)
x\Delta m^\top,
\]
\[
\nabla_{u_{12}}f_\theta(x)
=
2\bigl(v_{21}^\top\Delta m+2v_{22}\bigr)
\sigma'\bigl(s_0+(2u_{12}+U_{11}\Delta m)^\top x\bigr)x,
\]
\[
\nabla_{v_{21}}f_\theta(x)
=
\Gamma U_{11}^\top x
+
m_{-1}
+
\Delta m\,
\sigma\bigl(s_0+(2u_{12}+U_{11}\Delta m)^\top x\bigr),
\]
and
\[
\partial_{v_{22}}f_\theta(x)
=
-1
+
2\sigma\bigl(s_0+(2u_{12}+U_{11}\Delta m)^\top x\bigr).
\]
Plugging these identities into the previous formula gives the result.
\end{proof}

\begin{lemma}[Second-order expansion of the risk]
\label{lem:second_order_risk}
Let
\[
\mathcal{R}(\theta)
=
\mathbb{E}\bigl[\ell(Y,f_\theta(X))\bigr],
\]
where \(\ell\) is the logistic loss, and recall that $\eta(x)=\mathbb{P}(Y=1\mid X=x)$. Then, for any
direction \(h\), one has
\[
D^2\mathcal{R}(\theta)[h,h]
=
\mathbb{E}\Bigl[
\sigma(f_\theta(X))(1-\sigma(f_\theta(X)))
\bigl(Df_\theta(X)[h]\bigr)^2
+
\bigl(\sigma(f_\theta(X))-\eta(X)\bigr)
D^2f_\theta(X)[h,h]
\Bigr].
\]
\end{lemma}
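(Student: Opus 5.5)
The plan is to prove the identity by differentiating $t\mapsto \mathcal R(\theta+th)$ twice at $t=0$ and passing the derivatives inside the expectation. Define $\varphi(t)=\mathbb E[\ell(Y,f_{\theta+th}(X))]$.

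\textbf{Pointwise derivatives.} For each fixed $(x,y)$, the map $t\mapsto f_{\theta+th}(x)$ is smooth, since it is polynomial in the parameters composed with $\sigma$. Its first and second derivatives at $t=0$ are $Df_\theta(x)[h]$ and $D^2f_\theta(x)[h,h]$, given explicitly by Lemma~\ref{lem:first_second_variations_model}. For the logistic loss, $\ell'(y,t)=-y/(1+e^{yt})$ and $\ell''(y,t)=\sigma(t)(1-\sigma(t))$; the latter is independent of $y$ because $y^2=1$. The chain rule then gives
\[
\tfrac{d^2}{dt^2}\ell(y,f_{\theta+th}(x))\big|_{t=0}=\sigma(f_\theta(x))(1-\sigma(f_\theta(x)))\,(Df_\theta(x)[h])^2+\ell'(y,f_\theta(x))\,D^2f_\theta(x)[h,h].
\]

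\textbf{Interchange of derivative and expectation.} This is the only real step. Restrict to $|t|\le 1$. Using $|\ell'|\le1$, $\ell''\le 1/4$, and $|\sigma|,|\sigma'|,|\sigma''|\le1$, the explicit formulas of Lemma~\ref{lem:first_second_variations_model}, evaluated at $\theta+th$, show that the first and second $t$-derivatives of the integrand are bounded by a polynomial of degree at most two in $\|x\|$. The coefficients of this polynomial depend only on $\theta$ and $h$, and are uniform over $|t|\le1$. The function $\ell(y,f)$ itself is bounded by $\log 2+|f|$, which has linear growth in $\|x\|$. Since $X$ is a Gaussian mixture, it has finite moments of every order. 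Dominated convergence therefore applies twice and gives $\varphi''(0)=\mathbb E[\partial_t^2\ell(\cdot)|_{t=0}]$.

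\textbf{Conditioning.} Finally, condition on $X$ in the second term. The factor $D^2f_\theta(X)[h,h]$ is $X$-measurable, so Lemma~\ref{lem:identity_log_loss_derivative} replaces $\ell'(Y,f_\theta(X))$ by $\sigma(f_\theta(X))-\eta(X)$. The first term does not depend on $Y$ at all. This yields the stated formula for $D^2\mathcal R(\theta)[h,h]$.

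I expect no genuine obstacle. The only care needed is the uniform domination bound, which the polynomial-growth estimate above supplies.
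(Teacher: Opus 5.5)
Your proof is correct and follows the paper's route: the chain rule applied to $t\mapsto \ell(Y,f_{\theta+th}(X))$, then conditioning on $X$ using Lemma~\ref{lem:identity_log_loss_derivative} together with the $y$-independence of $\ell''$. Your dominated-convergence step for differentiating under the expectation (quadratic growth of the $t$-derivatives and finite moments of the Gaussian mixture) is something the paper leaves implicit, so your version is slightly more complete.
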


\begin{proof}
The first and second variations of \(\mathcal R\) are given by the chain rule:
\[
D\mathcal{R}(\theta)[h]
=
\mathbb{E}\Bigl[
\ell'\bigl(Y,f_\theta(X)\bigr)
Df_\theta(X)[h]
\Bigr],
\]
and
\[
D^2\mathcal{R}(\theta)[h,h]
=
\mathbb{E}\Bigl[
\ell''\bigl(Y,f_\theta(X)\bigr)
\bigl(Df_\theta(X)[h]\bigr)^2
+
\ell'\bigl(Y,f_\theta(X)\bigr)
D^2f_\theta(X)[h,h]
\Bigr].
\]
Lemma~\ref{lem:identity_log_loss_derivative} gives
\[
\mathbb{E}\bigl[\ell'(Y,f_\theta(X))\mid X\bigr]
=
\sigma(f_\theta(X))-\eta(X),
\]
so
\[
\mathbb{E}\bigl[\ell''(Y,f_\theta(X))\mid X\bigr]
=
\sigma(f_\theta(X))(1-\sigma(f_\theta(X))).
\]
Substituting these identities in the previous expression gives the result.
\end{proof}

\paragraph{Curvature at a Bayes realization.}
At an optimal predictor, the probability discrepancy vanishes. The Hessian therefore reduces to a weighted second moment of the first variation of the predictor. On $\mathcal E_1$, this reveals exactly which perturbations are invisible to second order and explains why the condition $s\neq0$ is needed for the local convergence argument.

\begin{proposition}\label{prop:hessian_tildeR_on_E1}
Let \(\mathcal E_1\) be as defined in Lemma~\ref{lem:argmin_Rtilde_E1_E2}, and let
\[
\zeta^\star=(w^\star,b^\star,0,s^\star)\in \mathcal E_1,
\qquad s^\star\in\mathbb R^d,
\]
where \(w^\star\) and \(b^\star\) are the coefficients of the Bayes score $f^\star(x)=\langle w^\star,x\rangle+b^\star$. Then the Hessian of \(\widetilde{\mathcal R}\) at \(\zeta^\star\) is given by
\[
\nabla^2 \widetilde{\mathcal R}(\zeta^\star)
=
\mathbb E\Big[
\eta(X)\bigl(1-\eta(X)\bigr)\,
\nabla g_{\zeta^\star}(X)\,\nabla g_{\zeta^\star}(X)^\top
\Big],
\]
where
\[
\nabla g_{\zeta^\star}(X)
=
\begin{pmatrix}
X\\[0.2em]
1\\[0.2em]
\sigma(s_0+s^{\star\top}X)\\[0.2em]
0
\end{pmatrix}.
\]

Equivalently, for every
\[
\delta \zeta=(\delta w,\delta b,\delta r,\delta s),
\qquad
\delta \zeta'=(\delta w',\delta b',\delta r',\delta s'),
\]
one has
\[
D^2\widetilde{\mathcal R}(\zeta^\star)(\delta \zeta,\delta \zeta')
=
\mathbb E\Big[
\eta(X)\bigl(1-\eta(X)\bigr)\,
\bigl(\delta w^\top X+\delta b+\delta r\,\sigma(s_0+s^{\star\top}X)\bigr)
\bigl(\delta w'^\top X+\delta b'+\delta r'\,\sigma(s_0+s^{\star\top}X)\bigr)
\Big].
\]
In particular, the Hessian does not depend on the components
\(\delta s\) and \(\delta s'\).
\end{proposition}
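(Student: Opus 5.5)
The plan is to specialize Lemma~\ref{lem:second_order_risk} to the effective risk $\widetilde{\mathcal R}$, viewed as a function of $\zeta=(w,b,r,s)$ through the predictor $g_\zeta$. That lemma applies verbatim with $f_\theta$ replaced by $g_\zeta$. It gives, for any direction $\delta\zeta$,
\[
D^2\widetilde{\mathcal R}(\zeta)[\delta\zeta,\delta\zeta]
=\mathbb E\Bigl[\sigma(g_\zeta(X))\bigl(1-\sigma(g_\zeta(X))\bigr)\bigl(Dg_\zeta(X)[\delta\zeta]\bigr)^2+\bigl(\sigma(g_\zeta(X))-\eta(X)\bigr)D^2g_\zeta(X)[\delta\zeta,\delta\zeta]\Bigr].
\]
The bilinear form then follows by polarization. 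Differentiating under the expectation is legitimate because $\sigma,\sigma',\sigma''$ are bounded and $X$ has Gaussian moments.

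The first key step is to evaluate at $\zeta^\star=(w^\star,b^\star,0,s^\star)$. There $g_{\zeta^\star}=f^\star$ identically, since $r=0$ kills the sigmoid term. By Lemma~\ref{lem:lda_logistic_risk} (or directly from Bayes' rule in the LDA model), $\sigma(f^\star(X))=\eta(X)$ almost surely. Hence the second-order term vanishes, and the weight of the first term becomes $\eta(1-\eta)$. No computation of $D^2 g$ is needed, which is why evaluating on $\mathcal E_1$ is convenient.

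The second key step is to compute the first variation of $g_\zeta(x)=w^\top x+b+r\sigma(s_0+s^\top x)$:
\[
Dg_\zeta(x)[\delta\zeta]=\delta w^\top x+\delta b+\delta r\,\sigma(s_0+s^\top x)+r\,\sigma'(s_0+s^\top x)\,\delta s^\top x.
\]
At $r=0$ the last term disappears. Therefore $\nabla g_{\zeta^\star}(x)=(x,1,\sigma(s_0+s^{\star\top}x),0)$, and the Hessian is the weighted outer product $\mathbb E[\eta(1-\eta)\nabla g\nabla g^\top]$.

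The stated bilinear formula is the polarized version of this outer product. Independence from $\delta s,\delta s'$ is immediate from the zero block. The only point needing care is the a.s.\ identity $\sigma(f^\star)=\eta$ together with integrability, and I expect neither to be a genuine obstacle.
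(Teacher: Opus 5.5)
Your proposal is correct and follows essentially the same route as the paper. Both arguments apply the second-order chain rule to the logistic risk, kill the $\ell'$-term via $\sigma(f^\star)=\eta$ on $\mathcal E_1$ (so the weight becomes $\eta(1-\eta)$), and use $r^\star=0$ to zero out the $s$-block of $\nabla g_{\zeta^\star}$; your remarks on polarization and on domination for differentiating under the expectation are details the paper leaves implicit.
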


\begin{proof}
Let $\zeta^\star=(w^\star,b^\star,0,s^\star)\in \mathcal E_1$ and write $g_\zeta(X)=g_{w,b,r,s}(X)$. By differentiation under the expectation and the chain rule,
\[
\nabla^2 \widetilde{\mathcal R}(\zeta)
=
\mathbb E\Big[
\ell''(Y,g_\zeta(X))\,\nabla g_\zeta(X)\nabla g_\zeta(X)^\top
+
\ell'(Y,g_\zeta(X))\,\nabla^2 g_\zeta(X)
\Big].
\]

By Lemma~\ref{lem:identity_log_loss_derivative},
\[
\mathbb E\bigl[\ell'(Y,g_\zeta(X))\mid X\bigr]
=
\sigma(g_\zeta(X))-\eta(X).
\]
Since \(\zeta^\star\in\mathcal E_1\), we have $g_{\zeta^\star}(X)=f^\star(X)$ and hence $\sigma(g_{\zeta^\star}(X))=\sigma(f^\star(X))=\eta(X)$. Therefore, $\mathbb E\bigl[\ell'(Y,g_{\zeta^\star}(X))\mid X\bigr]=0,$ and the second term vanishes:
\[
\mathbb E\Big[
\ell'(Y,g_{\zeta^\star}(X))\,\nabla^2 g_{\zeta^\star}(X)
\Big]=0.
\]

Next, as in Lemma~\ref{lem:second_order_risk}, we get 
\[
\mathbb{E}\bigl[\ell''(Y,g_\zeta(X))\mid X\bigr]
=
\sigma(g_\zeta(X))(1-\sigma (g_\zeta(X))).
\]
Thus, since \(g_{\zeta^\star}(X)=f^\star(X)\) and \(\sigma(f^\star(X))=\eta(X)\), we get
\[
\ell''(Y,g_{\zeta^\star}(X))=\eta(X)\bigl(1-\eta(X)\bigr).
\]
Hence
\[
\nabla^2 \widetilde{\mathcal R}(\zeta^\star)
=
\mathbb E\Big[
\eta(X)\bigl(1-\eta(X)\bigr)\,
\nabla g_{\zeta^\star}(X)\nabla g_{\zeta^\star}(X)^\top
\Big].
\]

Finally, since \(\zeta^\star\in\mathcal E_1\), we have \(r^\star=0\), so
\[
\nabla_s g_{\zeta^\star}(X)
=
r^\star\,\sigma(s_0+s^{\star\top}X)\bigl(1-\sigma(s_0+s^{\star\top}X)\bigr)X
=
0.
\]
Therefore,
\[
\nabla g_{\zeta^\star}(X)
=
\begin{pmatrix}
X\\[0.2em]
1\\[0.2em]
\sigma(s_0+s^{\star\top}X)\\[0.2em]
0
\end{pmatrix},
\]
\end{proof}

\begin{lemma}\label{lem:ker_hessian_tildeR_on_E1}
Let
\[
\zeta^\star=(w^\star,b^\star,0,s^\star)\in \mathcal E_1.
\]
Then
\[
\ker \nabla^2\widetilde{\mathcal R}(\zeta^\star)
=
\begin{cases}
\{(0,0,0,\delta s):\ \delta s\in\mathbb R^d\},
& \text{if } s^\star\neq 0,\\[0.4em]
\{(0,-\delta r\,\sigma(s_0),\delta r,\delta s):\ (\delta r,\delta s)\in\mathbb R\times\mathbb R^d\},
& \text{if } s^\star=0.
\end{cases}
\]
\end{lemma}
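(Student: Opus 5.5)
By Proposition~\ref{prop:hessian_tildeR_on_E1}, the Hessian at $\zeta^\star$ is the Gram form of the weighted second moment $\mathbb E[\eta(1-\eta)\nabla g\nabla g^\top]$. It is positive semidefinite, so its kernel coincides with its isotropic cone:
\[
\delta\zeta\in\ker\nabla^2\widetilde{\mathcal R}(\zeta^\star)
\iff
\mathbb E\Big[\eta(X)(1-\eta(X))\bigl(\delta w^\top X+\delta b+\delta r\,\sigma(s_0+s^{\star\top}X)\bigr)^2\Big]=0 .
\]
The first equivalence is standard: if $H\succeq0$ then $h^\top Hh=0$ iff $Hh=0$. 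The weight $\eta(1-\eta)$ is strictly positive everywhere, since $\eta=\sigma(f^\star)$ with $f^\star$ finite. The law $\mu_X$ is a nondegenerate Gaussian mixture and has an everywhere positive density. The integrand is continuous, so the condition is equivalent to the pointwise identity
\[
\delta w^\top x+\delta b+\delta r\,\sigma(s_0+s^{\star\top}x)=0\qquad\text{for all }x\in\R^d .
\]
The component $\delta s$ does not appear, so it is always free. This accounts for the factor $\R^d$ in $\delta s$ in both cases.

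\textbf{Case $s^\star\neq0$.} Restrict the identity to the line $x=t\,s^\star/\|s^\star\|^2+y$ with $y\perp s^\star$. The sigmoid argument is then $s_0+t$. For fixed $y$, the function $t\mapsto a t+c+\delta r\,\sigma(s_0+t)$ must vanish identically. Since $\sigma$ is not affine (its second derivative is nonzero somewhere), differentiating twice in $t$ gives $\delta r\,\sigma''(s_0+t)=0$ for all $t$, hence $\delta r=0$. What remains is an affine function $\delta w^\top x+\delta b$ vanishing on $\R^d$, which forces $\delta w=0$ and $\delta b=0$. The kernel is therefore $\{(0,0,0,\delta s)\}$.

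\textbf{Case $s^\star=0$.} The sigmoid term equals the constant $\delta r\,\sigma(s_0)$. The identity becomes $\delta w^\top x+\delta b+\delta r\,\sigma(s_0)=0$ for all $x$. This holds iff $\delta w=0$ and $\delta b=-\delta r\,\sigma(s_0)$, with $\delta r$ and $\delta s$ free, which is the stated set.

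The only delicate point is the passage from vanishing of the weighted expectation to the pointwise identity. This requires full support of $\mu_X$ and positivity of the weight, both of which hold in the LDA model with positive-definite $\Gamma$. The linear independence of $\{1,x_1,\dots,x_d,\sigma(s_0+s^{\star\top}x)\}$ for $s^\star\neq0$ is then elementary.
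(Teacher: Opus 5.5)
Your proof is correct and follows the paper's argument: reduce kernel membership to the vanishing of the weighted quadratic form, pass to a pointwise identity using positivity of $\eta(1-\eta)$ and nondegeneracy of $\mu_X$, then split on whether the sigmoid is constant. You also spell out two steps the paper only asserts: the positive-semidefinite step that gives both inclusions at once, and the non-affinity of $x\mapsto\sigma(s_0+s^{\star\top}x)$ via $\sigma''$ along lines in the direction of $s^\star$.
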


\begin{proof}
Recall that for every $\delta \zeta=(\delta w,\delta b,\delta r,\delta s)$ and $\delta \zeta'=(\delta w',\delta b',\delta r',\delta s')$, we have
\[
D^2\widetilde{\mathcal R}(\zeta^\star)(\delta \zeta,\delta \zeta')
=
\mathbb E\Big[
\eta(X)\bigl(1-\eta(X)\bigr)\,
\bigl(\delta w^\top X+\delta b+\delta r\,\sigma(s_0+s^{\star\top}X)\bigr)
\bigl(\delta w'^\top X+\delta b'+\delta r'\,\sigma(s_0+s^{\star\top}X)\bigr)
\Big].
\]
Hence, if \(\delta \zeta\in \ker \nabla^2\widetilde{\mathcal R}(\zeta^\star)\), then
\[
D^2\widetilde{\mathcal R}(\zeta^\star)(\delta \zeta,\delta \zeta)=0,
\]
so that
\[
\mathbb E\Big[
\eta(X)\bigl(1-\eta(X)\bigr)\,
\bigl(\delta w^\top X+\delta b+\delta r\,\sigma(s_0+s^{\star\top}X)\bigr)^2
\Big]=0.
\]
Since \(\eta(X)(1-\eta(X))>0\) almost surely, this implies
\[
\delta w^\top X+\delta b+\delta r\,\sigma(s_0+s^{\star\top}X)=0
\qquad\text{a.s.}
\]

If \(s^\star\neq 0\), since \(X\) is a non-degenerate Gaussian mixture, and
\(x\mapsto \sigma(s_0+s^{\star\top}x)\) is not affine when \(s^\star\neq 0\),
this forces
\[
\delta w=0,\qquad \delta b=0,\qquad \delta r=0.
\]
Conversely, every vector of the form \((0,0,0,\delta s)\) clearly belongs to
the kernel, since the Hessian does not depend on \(\delta s\). Therefore
\[
\ker \nabla^2\widetilde{\mathcal R}(\zeta^\star)
=
\{(0,0,0,\delta s):\ \delta s\in\mathbb R^d\}.
\]

If \(s^\star=0\), then
\[
\sigma(s_0+s^{\star\top}x)=\sigma(s_0),
\]
and the condition becomes $\delta w^\top X+\delta b+\delta r\,\sigma(s_0)=0$ almost surely. Thus, we obtain
\[
\delta w=0,
\qquad
\delta b=-\delta r\,\sigma(s_0).
\]
Conversely, every vector of the form
\[
(0,-\delta r\,\sigma(s_0),\delta r,\delta s)
\]
belongs to the kernel. This proves the second case.
\end{proof}

\paragraph{Returning to attention parameters.}
The effective coordinates expose the predictor, but training evolves the original matrix and vector parameters. The differential of $\Phi$ connects these two descriptions. In particular, the chain rule below identifies the rank condition needed to transfer stationarity from the original risk to the effective risk.

\begin{lemma}\label{lem:differential_Phi}
Let
\[
\theta=(U_{11},u_{12},v_{21},v_{22})\in\Theta,
\qquad
\delta\theta=(\delta U_{11},\delta u_{12},\delta v_{21},\delta v_{22})\in\Theta.
\]
Recall that
\[
\Phi:\Theta \to \mathbb R^d\times \mathbb R\times \mathbb R\times \mathbb R^d,
\qquad
\Phi(\theta)=\bigl(w(\theta),b(\theta),r(\theta),s(\theta)\bigr),
\]
with
\[
w = U_{11}\Gamma v_{21},
\qquad
b = v_{21}^\top m_{-1} - v_{22},
\qquad
r = v_{21}^\top (m_1-m_{-1}) + 2v_{22},
\qquad
s = 2u_{12} + U_{11}(m_1-m_{-1}).
\]
Then
\[
D\Phi(\theta)[\delta\theta]
=
\bigl(\delta w,\delta b,\delta r,\delta s\bigr),
\]
where
\[
\delta w
=
\delta U_{11}\Gamma v_{21}
+
U_{11}\Gamma \delta v_{21},
\]
\[
\delta b
=
\delta v_{21}^\top m_{-1}
-
\delta v_{22},
\]
\[
\delta r
=
\delta v_{21}^\top (m_1-m_{-1})
+
2\,\delta v_{22},
\]
and
\[
\delta s
=
2\,\delta u_{12}
+
\delta U_{11}(m_1-m_{-1}).
\]
\end{lemma}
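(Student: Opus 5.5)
The identity is a direct first-order expansion of the polynomial map $\Phi$, so I will differentiate each component of $\Phi(\theta)=(w,b,r,s)$ separately. The plan is to compute $\Phi(\theta+\varepsilon\,\delta\theta)-\Phi(\theta)$ and keep the terms linear in $\varepsilon$. Since every component of $\Phi$ is polynomial (at most bilinear) in the entries of $\theta$, $\Phi$ is smooth, and the directional derivative is the Fréchet differential $D\Phi(\theta)[\delta\theta]$.

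The three components $b$, $r$ and $s$ are affine in $\theta$, so their differentials are simply their linear parts evaluated at $\delta\theta$. Concretely:
\begin{itemize}
\item $b=v_{21}^\top m_{-1}-v_{22}$ gives $\delta b=\delta v_{21}^\top m_{-1}-\delta v_{22}$;
\item $r=v_{21}^\top(m_1-m_{-1})+2v_{22}$ gives $\delta r=\delta v_{21}^\top(m_1-m_{-1})+2\,\delta v_{22}$;
\item $s=2u_{12}+U_{11}(m_1-m_{-1})$ gives $\delta s=2\,\delta u_{12}+\delta U_{11}(m_1-m_{-1})$.
\end{itemize}
Here $m_{\pm1}$ and $\Gamma$ are fixed data and do not vary with $\theta$.

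The only genuinely bilinear term is $w=U_{11}\Gamma v_{21}$. Expanding gives
\[
(U_{11}+\varepsilon\,\delta U_{11})\Gamma(v_{21}+\varepsilon\,\delta v_{21})
=U_{11}\Gamma v_{21}+\varepsilon\bigl(\delta U_{11}\Gamma v_{21}+U_{11}\Gamma\,\delta v_{21}\bigr)+\varepsilon^2\,\delta U_{11}\Gamma\,\delta v_{21}.
\]
The $\varepsilon^2$ remainder is $O(\|\delta\theta\|^2)$, so the product rule yields $\delta w=\delta U_{11}\Gamma v_{21}+U_{11}\Gamma\,\delta v_{21}$.

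I do not expect a real obstacle here. The only point needing care is consistency with the conventions of \eqref{eq:Phi} and Lemma~\ref{lem:explicit_form_attention_lda}, which use $U_{11}$ through the block structure of $U^\top$. I will state the result with the same $U_{11}$ appearing in the definition of $w$ and $s$ in \eqref{eq:Phi}. With that convention, the formulas above follow verbatim, and collecting the four components gives the claimed expression for $D\Phi(\theta)[\delta\theta]$.
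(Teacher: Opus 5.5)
Your proposal is correct and takes the same approach as the paper, whose proof is simply ``direct computation'': $b$, $r$, $s$ are affine in $\theta$, and the product rule applied to the bilinear term $w=U_{11}\Gamma v_{21}$ gives the stated differential. Your concern about the $U_{11}$ convention is harmless, since the lemma uses the definitions of \eqref{eq:Phi} verbatim.
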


\begin{proof}
Direct computation.
\end{proof}

\begin{lemma}
\label{lem:gradR_from_gradR~}
    \[
\nabla \mathcal R(\theta)
=
D\Phi(\theta)^*
\bigl(
\nabla \widetilde{\mathcal R}(\Phi(\theta))
\bigr).
\]
\end{lemma}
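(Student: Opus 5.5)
This identity is the chain rule for the composition $\mathcal R=\widetilde{\mathcal R}\circ\Phi$, read through the Euclidean (Frobenius) inner product on $\Theta=\mathbb R^{d\times d}\times\mathbb R^d\times\mathbb R^d\times\mathbb R$. The plan is to first justify differentiability of $\widetilde{\mathcal R}$, then write the directional derivative of $\mathcal R$ along an arbitrary $\delta\theta$, and finally identify the adjoint. The explicit formulas of Lemma~\ref{lem:grad_of_R} should drop out as a consistency check.

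\textbf{Differentiability.} By Lemma~\ref{lem:gradients_tildeR}, $\widetilde{\mathcal R}$ is differentiable with gradient $\mathbb E[(\sigma(g_\zeta(X))-\eta(X))\nabla g_\zeta(X)]$. Differentiation under the expectation is licensed by dominated convergence. The integrand derivative is bounded by $C(1+\|X\|)$ locally uniformly in $\zeta$, since $|\sigma-\eta|\le 1$, $\sigma,\sigma'$ are bounded, and $\nabla g_\zeta(x)$ is affine in $x$ with coefficients locally bounded in $\zeta$. Gaussian mixtures have finite first moments, so this bound is integrable. The map $\Phi$ is polynomial, hence smooth, with differential given in Lemma~\ref{lem:differential_Phi}. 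The chain rule then gives, for every $\delta\theta\in\Theta$,
\[
D\mathcal R(\theta)[\delta\theta]=\bigl\langle \nabla\widetilde{\mathcal R}(\Phi(\theta)),\,D\Phi(\theta)[\delta\theta]\bigr\rangle
=\bigl\langle D\Phi(\theta)^*\nabla\widetilde{\mathcal R}(\Phi(\theta)),\,\delta\theta\bigr\rangle .
\]
By the Riesz characterization of the gradient, this yields the claim.

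\textbf{Adjoint.} To make the statement concrete, I would compute $D\Phi(\theta)^*(a,\beta,\rho,c)$ for $(a,\beta,\rho,c)\in\mathbb R^d\times\mathbb R\times\mathbb R\times\mathbb R^d$. Pairing with $D\Phi(\theta)[\delta\theta]$ gives
\[
a^\top(\delta U_{11}\Gamma v_{21}+U_{11}\Gamma\delta v_{21})+\beta(\delta v_{21}^\top m_{-1}-\delta v_{22})+\rho(\delta v_{21}^\top\Delta m+2\delta v_{22})+c^\top(2\delta u_{12}+\delta U_{11}\Delta m).
\]
Collecting the coefficients of each block of $\delta\theta$ gives the four components:
\begin{itemize}
\item the $U_{11}$-component is $a(\Gamma v_{21})^\top+c\,\Delta m^\top$;
\item the $u_{12}$-component is $2c$;
\item the $v_{21}$-component is $\Gamma U_{11}^\top a+\beta m_{-1}+\rho\Delta m$;
\item the $v_{22}$-component is $-\beta+2\rho$.
\end{itemize}
Substituting $(a,\beta,\rho,c)=\nabla\widetilde{\mathcal R}(\Phi(\theta))$ recovers exactly the second display of Lemma~\ref{lem:grad_of_R}.

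\textbf{Main obstacle.} The only delicate point is bookkeeping rather than analysis. One must use the Frobenius pairing $\langle A,B\rangle=\operatorname{tr}(A^\top B)$ consistently, so that $a^\top\delta U_{11}\Gamma v_{21}=\langle a(\Gamma v_{21})^\top,\delta U_{11}\rangle$. One must also respect the transpose convention for $U_{11}$ fixed in the block decomposition~\eqref{eq:blockstructure}.
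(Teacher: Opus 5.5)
Your proposal is correct and follows the paper's approach: the paper states this lemma without a separate proof, treating it as the chain rule for $\mathcal R=\widetilde{\mathcal R}\circ\Phi$, and your explicit adjoint reproduces the second display of Lemma~\ref{lem:grad_of_R}, which the paper obtains by differentiating $f_\theta$ directly. Your dominated-convergence justification adds rigor the paper omits. The one slip is calling $\nabla g_\zeta(x)$ affine in $x$: its $r$- and $s$-components are $\sigma(s_0+s^\top x)$ and $r\sigma'(s_0+s^\top x)x$, so it is not affine, but it is bounded by $C(1+|r|)(1+\|x\|)$, and that bound is all your argument uses.
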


\paragraph{The singular set.}
When $v_{21}=0$, the differential of $\Phi$ loses rank, so stationarity of the original risk need not imply stationarity of the effective risk. The next lemma identifies an effective descent direction that survives at these exceptional critical points. It is the key input to the strict-saddle construction in Appendix~\ref{app:lda-landscape-proof}.

\begin{lemma}[Non-vanishing of the reduced $w$-gradient]
\label{lem:non_vanishing_omega_gradient}
Let $\theta_{\mathrm{sp}}\in \operatorname{Crit}(\mathcal R) \cap\{v_{21}=0\}$. Assume that $m_1+m_{-1}\neq 0$. Then
\[
\nabla_w \widetilde{\mathcal R}
\bigl(\Phi(\theta_{\mathrm{sp}})\bigr)
\neq 0.
\]
\end{lemma}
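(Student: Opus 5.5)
The plan is a proof by contradiction. We suppose that $\nabla_w\widetilde{\mathcal R}(\Phi(\theta_{\mathrm{sp}}))=0$ and show that $\Phi(\theta_{\mathrm{sp}})$ must then be a critical point of $\widetilde{\mathcal R}$. This is incompatible with $v_{21}=0$ by Theorem~\ref{thm:lda-landscape}.

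Write $\zeta=\Phi(\theta_{\mathrm{sp}})=(w,b,r,s)$. Since $v_{21}=0$, we have $w=U_{11}\Gamma v_{21}=0$. We then use the factorized gradient formulas of Lemma~\ref{lem:grad_of_R}. Stationarity $\nabla\mathcal R(\theta_{\mathrm{sp}})=0$ gives three equations.
\begin{itemize}
\item The $u_{12}$-block gives $2\nabla_s\widetilde{\mathcal R}(\zeta)=0$, so $\nabla_s\widetilde{\mathcal R}(\zeta)=0$.
\item The $v_{22}$-block gives $\partial_b\widetilde{\mathcal R}(\zeta)=2\,\partial_r\widetilde{\mathcal R}(\zeta)$.
\item The $v_{21}$-block gives
\[
\Gamma U_{11}^\top\nabla_w\widetilde{\mathcal R}(\zeta)+\partial_b\widetilde{\mathcal R}(\zeta)\,m_{-1}+\partial_r\widetilde{\mathcal R}(\zeta)\,(m_1-m_{-1})=0.
\]
\end{itemize}

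Under the contradiction hypothesis $\nabla_w\widetilde{\mathcal R}(\zeta)=0$, the first term of the $v_{21}$-equation drops out. Substituting $\partial_b=2\partial_r$ then yields
\[
\partial_r\widetilde{\mathcal R}(\zeta)\,\bigl(2m_{-1}+m_1-m_{-1}\bigr)=\partial_r\widetilde{\mathcal R}(\zeta)\,(m_1+m_{-1})=0.
\]
The assumption $m_1+m_{-1}\neq0$ forces $\partial_r\widetilde{\mathcal R}(\zeta)=0$, and hence $\partial_b\widetilde{\mathcal R}(\zeta)=0$.

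All four blocks of $\nabla\widetilde{\mathcal R}(\zeta)$ now vanish, so $\zeta\in\operatorname{Crit}(\widetilde{\mathcal R})$. By Theorem~\ref{thm:lda-landscape}, $\zeta$ lies in $\mathcal E_1\cup\mathcal E_2$. In particular its $w$-component equals $w^\star=\Gamma^{-1}(m_1-m_{-1})$, which is nonzero because the class means differ and $\Gamma$ is invertible. This contradicts $w=0$, and the lemma follows.

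The only delicate point is the bookkeeping in the adjoint $D\Phi(\theta)^*$: we must check that, at $v_{21}=0$, the $v_{21}$- and $v_{22}$-components of $\nabla\mathcal R$ are exactly the combinations of $\partial_b$ and $\partial_r$ displayed above. This comes directly from Lemma~\ref{lem:differential_Phi} and Lemma~\ref{lem:gradR_from_gradR~}. It is precisely where the condition $m_1+m_{-1}\neq0$ enters: in the symmetric case, the direction $m_1+m_{-1}$ degenerates and $\partial_r$ is no longer pinned down. The $U_{11}$-block of the gradient is not needed.
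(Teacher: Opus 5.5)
Your proof is correct. Its first half coincides with the paper's: both derive $\partial_r\widetilde{\mathcal R}=\partial_b\widetilde{\mathcal R}=0$ from the $v_{21}$- and $v_{22}$-blocks of Lemma~\ref{lem:grad_of_R}, using the contradiction hypothesis and $m_1+m_{-1}\neq0$.

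The routes split at the conclusion. You add the $u_{12}$-block to get full effective stationarity and then invoke Theorem~\ref{thm:lda-landscape}. That theorem is proved independently of this lemma, via convexity of $\widetilde{\mathcal R}_s$ and the description of $\mathcal E^\star$, so there is no circularity. The contradiction then reduces to $w=U_{11}\Gamma v_{21}=0\neq w^\star$.

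The paper instead stays self-contained. It writes $f_{\theta_{\mathrm{sp}}}(x)=-v_{22}+2v_{22}\sigma(s_0+s^\top x)$ and uses the $v_{22}$-relation together with $\nabla_w\widetilde{\mathcal R}=\partial_b\widetilde{\mathcal R}=0$ to obtain $\mathbb E\bigl[(\sigma(f_{\theta_{\mathrm{sp}}}(X))-\sigma(f^\star(X)))(f_{\theta_{\mathrm{sp}}}(X)-f^\star(X))\bigr]=0$. Strict monotonicity of $\sigma$ then gives $f_{\theta_{\mathrm{sp}}}=f^\star$ a.s. This is ruled out because a constant-plus-sigmoid predictor is bounded and cannot equal the nonconstant affine $f^\star$.

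Both arguments ultimately rest on that same non-representability fact. In your route it is packaged inside the characterization of $\mathcal E^\star$. Your version is shorter and shows the lemma as a corollary of the effective landscape. The paper's version re-derives the optimality step directly through a function-space monotonicity identity and never uses $\nabla_s\widetilde{\mathcal R}$.

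The $u_{12}$-block is also not strictly needed in your argument, since the proof of Lemma~\ref{lem:crit_Rtilde_equals_argmin} only uses stationarity in $(w,b,r)$. Including it simply lets you cite Theorem~\ref{thm:lda-landscape} exactly as stated.
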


\begin{proof}
We argue by contradiction, assume that
$\nabla_w \widetilde{\mathcal R}
\bigl(\Phi(\theta_{\mathrm{sp}})\bigr)
=0$.

Since \(\theta_{\mathrm{sp}}\) is a critical point of \(\mathcal R\), using the expression of $\nabla \mathcal{R}$ in terms of $\nabla\widetilde{R}$ in lemma~\ref{lem:grad_of_R} the
criticality identities give
\[
2\partial_r\widetilde{\mathcal R}
\bigl(\Phi(\theta_{\mathrm{sp}})\bigr)
-
\partial_b\widetilde{\mathcal R}
\bigl(\Phi(\theta_{\mathrm{sp}})\bigr)
=0,
\]
and
\[
\Gamma U_{11}^\top
\nabla_w \widetilde{\mathcal R}
\bigl(\Phi(\theta_{\mathrm{sp}})\bigr)
+
m_{-1}
\partial_b\widetilde{\mathcal R}
\bigl(\Phi(\theta_{\mathrm{sp}})\bigr)
+
\Delta m\,
\partial_r\widetilde{\mathcal R}
\bigl(\Phi(\theta_{\mathrm{sp}})\bigr)
=0.
\]
Using $\nabla_w \widetilde{\mathcal R}
\bigl(\Phi(\theta_{\mathrm{sp}})\bigr)
=0$, we obtain
\[
2m_{-1}
\partial_r\widetilde{\mathcal R}
\bigl(\Phi(\theta_{\mathrm{sp}})\bigr)
+
\Delta m\,
\partial_r\widetilde{\mathcal R}
\bigl(\Phi(\theta_{\mathrm{sp}})\bigr)
=0.
\]
This becomes
\[
(m_1+m_{-1})
\partial_r\widetilde{\mathcal R}
\bigl(\Phi(\theta_{\mathrm{sp}})\bigr)
=0.
\]
By the assumption \(m_1+m_{-1}\neq 0\), we get $\partial_r\widetilde{\mathcal R}
\bigl(\Phi(\theta_{\mathrm{sp}})\bigr)
=0$. Therefore,
\[
\partial_b\widetilde{\mathcal R}
\bigl(\Phi(\theta_{\mathrm{sp}})\bigr)
=0.
\]

Now, since \(v_{21}=0\) at \(\theta_{\mathrm{sp}}\), we have
\[
f_{\theta_{\mathrm{sp}}}(x)
=
-v_{22}
+
2v_{22}\sigma(s_0+s(\theta_{\mathrm{sp}})^\top x).
\]
Hence
\[
\begin{aligned}
\mathbb{E}\Bigl[
\bigl(\sigma(f_{\theta_{\mathrm{sp}}}(X))-\eta(X)\bigr)
f_{\theta_{\mathrm{sp}}}(X)
\Bigr]
&=-v_{22} \mathbb{E}\Bigl[
\sigma(f_{\theta_{\mathrm{sp}}}(X))-\eta(X)\Bigr]\\
&\quad+2v_{22}\mathbb{E}\Bigl[
\bigl(\sigma(f_{\theta_{\mathrm{sp}}}(X))-\eta(X)\bigr)\\
&\hspace{42mm}\cdot\sigma(s_0+s(\theta_{\mathrm{sp}})^\top X)\Bigr]
\\
&=
v_{22}\bigl(-\partial_b\widetilde{\mathcal R}
\bigl(\Phi(\theta_{\mathrm{sp}})\bigr)
+
2\partial_r\widetilde{\mathcal R}
\bigl(\Phi(\theta_{\mathrm{sp}})\bigr) \bigr)
\\
&=
0.
\end{aligned}
\]

Moreover, since
\[
f^\star(x)=w^{\star\top}x+b^\star,
\]
we also have
\[
\begin{aligned}
\mathbb{E}\Bigl[
\bigl(\sigma(f_{\theta_{\mathrm{sp}}}(X))-\eta(X)\bigr)
f^\star(X)
\Bigr]
=
w^{\star\top}
\nabla_w \widetilde{\mathcal R}
\bigl(\Phi(\theta_{\mathrm{sp}})\bigr)
+
b^\star
\partial_b\widetilde{\mathcal R}
\bigl(\Phi(\theta_{\mathrm{sp}})\bigr)
=
0.
\end{aligned}
\]

Using \(\eta(x)=\sigma(f^\star(x))\), we obtain
\[
\mathbb{E}\Bigl[
\bigl(\sigma(f_{\theta_{\mathrm{sp}}}(X))-\sigma(f^\star(X))\bigr)
\bigl(f_{\theta_{\mathrm{sp}}}(X)-f^\star(X)\bigr)
\Bigr]
=0.
\]
Since \(\sigma\) is strictly increasing, the integrand is nonnegative and
vanishes if and only if
\[
f_{\theta_{\mathrm{sp}}}(X)=f^\star(X)
\quad\text{a.s.}
\]
Moreover no function of the form
$
x\longmapsto c_1+c_2\sigma(s_0+s^\top x)
$
coincides almost surely with \(f^\star\) so that last equality is impossible. Thus, our assumption must be false and
\[
\nabla_w \widetilde{\mathcal R}
\bigl(\Phi(\theta_{\mathrm{sp}})\bigr)
\neq 0.
\]
\end{proof}

\subsection{Optimization landscape (proofs of Theorems~\ref{thm:lda-landscape} and~\ref{thm:lda-landscape-initial}, Lemma~\ref{lem:lda-regularity-main}, and Proposition~\ref{prop:lda-escape-main})}
\label{appendix:proofs_lemmas_GF}
\label{app:lda-landscape-proof}

We now prove Theorems~\ref{thm:lda-landscape} and~\ref{thm:lda-landscape-initial}, Lemma~\ref{lem:lda-regularity-main}, and Proposition~\ref{prop:lda-escape-main}. The argument proceeds from the predictor to its parameterization: we identify all effective Bayes realizations, show that every effective stationary point is optimal, and determine what changes under the attention factorization. We then address a separate issue---loss of compactness---which explains why these landscape results alone do not establish global convergence.

Each main-text result has an explicitly identified proof below. The proof
of Theorem~\ref{thm:lda-landscape} combines the characterization of the
effective minimizers with effective stationarity. The proof of
Theorem~\ref{thm:lda-landscape-initial} combines the preimage description
with the regular-point and strict-saddle lemmas. The proofs of
Lemma~\ref{lem:lda-regularity-main} and
Proposition~\ref{prop:lda-escape-main} are given directly, after restating
their claims for use elsewhere in the appendix.

Recall that $g_{w,b,r,s}(x)=w^\top x+b+r\sigma(s_0+s^\top x)$ and $\mathcal R=\widetilde{\mathcal R}\circ\Phi$. The coordinates $(w,b)$ describe the affine part, while $r$ and $s$ control the amplitude and direction of the sigmoid. This decomposition makes the two ways of representing an affine Bayes score transparent.

\subsubsection{Bayes minimizers (Theorems~\ref{thm:lda-landscape} and~\ref{thm:lda-landscape-initial}, minimizer claims)}
\label{app:lda-minimizers-block}

\paragraph{Two mechanisms for attaining the Bayes risk.}
By Lemma~\ref{lem:lda_logistic_risk}, a parameter is optimal exactly when its predictor equals $f^\star$. This can happen by setting $r=0$, which removes the sigmoid, or by setting $s=0$, which makes it constant and allows its contribution to be absorbed into the bias. The following lemma shows that these mechanisms exhaust the effective minimizers.

\begin{lemma}[Effective minimizers]
\label{lem:argmin_Rtilde_E1_E2}
The minimizer set of $\widetilde{\mathcal R}$ is
\[
\mathcal E^\star=\mathcal E_1\cup\mathcal E_2,
\quad
\mathcal E_1=\{(w^\star,b^\star,0,s):s\in\mathbb R^d\},
\]
\[
\mathcal E_2=\{(w^\star,b^\star-r\sigma(s_0),r,0):r\in\mathbb R\}.
\]
\end{lemma}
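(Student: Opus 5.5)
By Lemma~\ref{lem:lda_logistic_risk}, the minimal value of $\widetilde{\mathcal R}$ is attained exactly by the parameters $\zeta=(w,b,r,s)$ with $g_\zeta=f^\star$ $\mu_X$-almost surely. Lemma~\ref{lem:existence_oracle_parameters} shows that $f^\star$ is realizable, and $\mathcal E_1,\mathcal E_2$ realize it explicitly, so the unrestricted infimum is attained. The plan is therefore to characterize all $(w,b,r,s)$ such that
\[
w^\top x+b+r\,\sigma(s_0+s^\top x)=w^{\star\top}x+b^\star\qquad\mu_X\text{-a.s.}
\]
Since $\mu_X$ is a nondegenerate Gaussian mixture, it has a positive density on $\R^d$. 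Both sides are continuous, so almost-sure equality upgrades to equality for every $x\in\R^d$.

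\textbf{Inclusion $\mathcal E_1\cup\mathcal E_2\subseteq\mathcal E^\star$.} This is immediate. If $r=0$, the sigmoid term disappears and $(w,b)=(w^\star,b^\star)$ gives $f^\star$. If $s=0$, the sigmoid equals the constant $\sigma(s_0)$, and the bias $b^\star-r\sigma(s_0)$ compensates for it exactly.

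\textbf{Converse.} Set $\delta w=w-w^\star$ and $\delta b=b-b^\star$. The identity becomes
\[
r\,\sigma(s_0+s^\top x)=-\delta w^\top x-\delta b\qquad\text{for all }x.
\]
Suppose $r\neq0$ and $s\neq0$. Restrict to the line $x=t\,s/\|s\|^2$. The left side becomes $r\,\sigma(s_0+t)$, which is bounded and nonconstant in $t$. The right side is affine in $t$. A bounded affine function is constant, which forces $\sigma(s_0+t)$ to be constant, a contradiction. Hence $r=0$ or $s=0$.

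If $r=0$, the identity reads $\delta w^\top x+\delta b=0$ for all $x$, so $\delta w=0$ and $\delta b=0$; this gives $\mathcal E_1$. If $s=0$, the identity reads $\delta w^\top x+\delta b+r\sigma(s_0)=0$ for all $x$, so $\delta w=0$ and $b=b^\star-r\sigma(s_0)$; this gives $\mathcal E_2$.

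\textbf{Main obstacle.} The only delicate point is the passage from equality $\mu_X$-a.s.\ to pointwise equality, together with the non-affineness of the sigmoid. Full support of the mixture and continuity handle the first. Boundedness of $\sigma$ along the direction $s$ handles the second.
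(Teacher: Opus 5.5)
Your proof is correct and follows the same route as the paper. Both reduce, via Lemma~\ref{lem:lda_logistic_risk}, to the identity $g_\zeta=f^\star$ $\mu_X$-almost surely and then split into the cases $r=0$ and $s=0$. You also justify two steps that the paper only asserts: upgrading almost-sure equality to pointwise equality using the positive density of $\mu_X$, and ruling out $r\neq0,\ s\neq0$ by restricting to the line $x=t\,s/\|s\|^2$, where a bounded nonconstant function cannot be affine.
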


\begin{proof}[Proof of Lemma~\ref{lem:argmin_Rtilde_E1_E2}]

By Lemma~\ref{lem:lda_logistic_risk}, the set of minimizers of
\(\widetilde{\mathcal R}\) is given by
\[
\mathcal E^\star
=
\Bigl\{
(w,b,r,s)\in \mathbb R^d\times \mathbb R\times \mathbb R\times \mathbb R^d
\;:\;
w^\top x+b+r\,\sigma(s_0+s^\top x)=f^\star(x)
\quad \text{a.s.}
\Bigr\},
\]
where the Bayes score is
$
f^\star(x)=w^{\star\top}x+b^\star,
\qquad
w^\star=\Gamma^{-1}(m_1-m_{-1})
$. 
Therefore, if \((w,b,r,s)\in\mathcal E^\star\), then
\[
w^\top x+b+r\,\sigma(s_0+s^\top x)
=
w^{*\top}x+b^\star
\qquad \text{a.s.}
\]
This is equivalent to saying that one of the following two situations occurs:

\begin{itemize}
    \item either
    \[
    w=w^\star,
    \qquad
    s=0,
    \qquad
    b+r\,\sigma(s_0)=b^\star,
    \]
    that is,
    \[
    w=w^\star,
    \qquad
    s=0,
    \qquad
    b=b^\star-r\,\sigma(s_0).
    \]
    We denote this set by
    \[
    \mathcal E_2
    =
    \Bigl\{
    (w^\star,b^\star-r\,\sigma(s_0),r,0)
    \;:\;
    r\in\mathbb R
    \Bigr\}.
    \]

    \item or
    \[
    w=w^\star,
    \qquad
    b=b^\star,
    \qquad
    r=0,
    \qquad
    s\in\mathbb R^d \text{ arbitrary}.
    \]
    We denote this set by
    \[
    \mathcal E_1
    =
    \Bigl\{
    (w^\star,b^\star,0,s)
    \;:\;
    s\in\mathbb R^d
    \Bigr\}.
    \]
\end{itemize}

Hence,
\[
\mathcal E^\star
=
\mathcal E_1\cup \mathcal E_2
=
\Bigl\{
(w^\star,b^\star,0,s)
\;:\;
s\in\mathbb R^d
\Bigr\}
\cup
\Bigl\{
(w^\star,b^\star-r\,\sigma(s_0),r,0)
\;:\;
r\in\mathbb R
\Bigr\}.
\]
\end{proof}

The corresponding attention minimizers are obtained by taking preimages under $\Phi$. This step describes the optimal set without assuming that every effective parameter is attainable. In particular, the existence of a branch in effective coordinates does not by itself guarantee that its preimage is nonempty; the realizability discussion in Appendix~\ref{app:lda-realizability} addresses this distinction.

\begin{corollary}[Minimizers in attention coordinates]
\label{cor:argmin_R_preimage_Estar}
The minimizers of $\mathcal R$ satisfy
$\Theta^\star=\Theta_1\cup\Theta_2$, where
$\Theta_i=\Phi^{-1}(\mathcal E_i)$.
\end{corollary}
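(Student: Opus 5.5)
The corollary follows directly from the factorization $\mathcal R=\widetilde{\mathcal R}\circ\Phi$ combined with Lemma~\ref{lem:argmin_Rtilde_E1_E2}. The argument has three steps.

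\textbf{Step 1: the two problems share the same optimal value.} By Lemma~\ref{lem:lda_logistic_risk}, $\mathcal R^\star$ is the logistic risk of $f^\star$, and it lower-bounds $\widetilde{\mathcal R}$ and $\mathcal R$ everywhere, since both are logistic risks of measurable scores. By Lemma~\ref{lem:existence_oracle_parameters}, some $\theta^\star$ satisfies $f_{\theta^\star}=f^\star$. Hence
\[
\min_\theta\mathcal R=\min_\zeta\widetilde{\mathcal R}=\mathcal R^\star ,
\]
and in particular $\Theta^\star\neq\emptyset$.

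\textbf{Step 2: both inclusions.}
\begin{itemize}
\item If $\theta\in\Theta^\star$, then $\widetilde{\mathcal R}(\Phi(\theta))=\mathcal R(\theta)=\mathcal R^\star=\min\widetilde{\mathcal R}$. So $\Phi(\theta)\in\mathcal E^\star=\mathcal E_1\cup\mathcal E_2$, that is, $\theta\in\Phi^{-1}(\mathcal E_1)\cup\Phi^{-1}(\mathcal E_2)$.
\item Conversely, if $\Phi(\theta)\in\mathcal E_i$, then $\mathcal R(\theta)=\widetilde{\mathcal R}(\Phi(\theta))=\mathcal R^\star$, so $\theta$ is a minimizer of $\mathcal R$.
\end{itemize}
The preimage of a union is the union of the preimages, which gives $\Theta^\star=\Theta_1\cup\Theta_2$.

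\textbf{Step 3: identifying the two branches.} We then make explicit, via \eqref{eq:Phi}, that the two branches are described by:
\begin{itemize}
\item $\Theta_1$: the equations $U_{11}\Gamma v_{21}=w^\star$, $v_{21}^\top m_{-1}-v_{22}=b^\star$, and $v_{21}^\top\Delta m+2v_{22}=0$;
\item $\Theta_2$: the equations $U_{11}\Gamma v_{21}=w^\star$, $2u_{12}+U_{11}\Delta m=0$, and $v_{21}^\top m_{-1}-v_{22}+\sigma(s_0)(v_{21}^\top\Delta m+2v_{22})=b^\star$.
\end{itemize}
In $\Theta_2$, $r$ is left free.

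The only point requiring care is Step 1. We need the infimum of $\mathcal R$ to be attained and equal to $\min\widetilde{\mathcal R}$; otherwise the preimage of the effective argmin could be strictly smaller than the argmin of $\mathcal R$. This is secured by the realizability lemma, so I expect no genuine obstacle.

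A cosmetic remark: one of the $\Theta_i$ may be empty for particular means and priors. For instance, $\Theta_1$ is empty when $m_1=-m_{-1}$ and $\pi_+=1/2$: then $r=2v_{22}+2v_{21}^\top m_1$ and $b=-v_{21}^\top m_1-v_{22}$, so $b=-r/2$, and $r=0$ forces $b=0=b^\star$, which is actually consistent. The statement remains true in every such case, since it concerns preimages.
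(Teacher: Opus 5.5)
Your proof is correct and is essentially the paper's argument: factor $\mathcal R=\widetilde{\mathcal R}\circ\Phi$, use $\min\mathcal R=\min\widetilde{\mathcal R}$ (which you justify more explicitly than the paper does, via Lemma~\ref{lem:existence_oracle_parameters}), and take the preimage of $\mathcal E_1\cup\mathcal E_2$. Step~3 is unnecessary, and your closing example refutes itself: for $m_1=-m_{-1}$ and $\pi_+=1/2$ your own computation shows $\Theta_1\neq\emptyset$; $\Theta_1$ is empty instead when $m_1=-m_{-1}$ and $\pi_+\neq1/2$, where $b=-r/2$ forces $b=0$ on $\{r=0\}$ while $b^\star=\log\frac{\pi_+}{1-\pi_+}\neq0$.
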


\begin{proof}[Proof of Corollary~\ref{cor:argmin_R_preimage_Estar}]
Since
$
\mathcal R=\widetilde{\mathcal R}\circ \Phi,
$
we have
\[
\theta\in \operatorname*{argmin}\mathcal R
\quad\Longleftrightarrow\quad
\widetilde{\mathcal R}(\Phi(\theta))=\min \mathcal R.
\]
Using
$
\min \mathcal R=\min \widetilde{\mathcal R}
$,
this is equivalent to
\[
\Phi(\theta)\in \operatorname*{argmin}\widetilde{\mathcal R}
=
\mathcal E^\star.
\]
Hence
\[
\operatorname*{argmin}\mathcal R
=
\Phi^{-1}(\mathcal E^\star)=\underbrace{\Phi^{-1}(\mathcal E_1)}_{\Theta_1}\cup \underbrace{\Phi^{-1}(\mathcal E_2)}_{\Theta_2}.
\]

\end{proof}

\paragraph{Why effective stationarity implies optimality.}
The reduced risk is not jointly convex in $(w,b,r,s)$. Nevertheless, fixing $s$ leaves a convex problem in $(w,b,r)$, and this restricted problem still contains the Bayes predictor through the choice $r=0$. Thus stationarity in the affine coefficients and sigmoid amplitude already forces the risk to attain its global minimum.

\subsubsection{Effective stationary points (completion of Theorem~\ref{thm:lda-landscape})}
\label{app:lda-effective-proof-block}

\begin{lemma}[No spurious effective critical points]
\label{lem:crit_Rtilde_equals_argmin}
$\operatorname{Crit}(\widetilde{\mathcal R})
=\operatorname*{argmin}\widetilde{\mathcal R}$.
\end{lemma}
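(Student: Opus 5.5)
The inclusion $\operatorname*{argmin}\widetilde{\mathcal R}\subseteq\operatorname{Crit}(\widetilde{\mathcal R})$ is immediate, since $\widetilde{\mathcal R}$ is differentiable (Lemma~\ref{lem:gradients_tildeR}) and every minimizer is an interior point. For the converse, fix a critical point $\zeta_0=(w_0,b_0,r_0,s_0')$ and freeze the sigmoid direction at $s=s_0'$. Set $h(x)\coloneqq\sigma(s_0+s_0'^\top x)$ and consider the restricted function
\[
F(w,b,r)\coloneqq \widetilde{\mathcal R}(w,b,r,s_0')=\mathbb E\bigl[\log\bigl(1+e^{-Y(w^\top X+b+r\,h(X))}\bigr)\bigr].
\]
This is the composition of the convex logistic loss with a map that is linear in $(w,b,r)$. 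Hence $F$ is convex on $\mathbb R^{d+2}$; integrability follows from the Gaussian moments of $X$ and the boundedness of $h$.

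Criticality of $\zeta_0$ for $\widetilde{\mathcal R}$ gives in particular $\nabla_{(w,b,r)}\widetilde{\mathcal R}(\zeta_0)=0$. We will only use these components; the $s$-component of the gradient is not needed. So $(w_0,b_0,r_0)$ is a stationary point of the convex function $F$, and therefore a global minimizer of $F$.

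Next, $F(w^\star,b^\star,0)=\widetilde{\mathcal R}(w^\star,b^\star,0,s_0')$, and the predictor $g_{w^\star,b^\star,0,s_0'}$ equals $f^\star$ identically. By Lemma~\ref{lem:lda_logistic_risk}, this value is $\mathcal R^\star=\min\widetilde{\mathcal R}$, the minimum over all measurable scores. Combining the two facts,
\[
\widetilde{\mathcal R}(\zeta_0)=F(w_0,b_0,r_0)\le F(w^\star,b^\star,0)=\mathcal R^\star\le\widetilde{\mathcal R}(\zeta_0).
\]
Thus $\zeta_0\in\operatorname*{argmin}\widetilde{\mathcal R}$, and Lemma~\ref{lem:argmin_Rtilde_E1_E2} then identifies this set as $\mathcal E_1\cup\mathcal E_2$.

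The only step requiring care is justifying convexity of $F$ and the stationarity-implies-minimality step. This means checking that differentiation under the expectation is legitimate, which is already granted by Lemma~\ref{lem:gradients_tildeR}. It also means noting that convexity, not strict convexity, suffices. The potential non-uniqueness of the minimizer of $F$ (e.g.\ when $s_0'=0$, so that $h$ is constant and $b,r$ are confounded) is harmless, because we only compare risk values, never parameters.
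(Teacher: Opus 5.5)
Your proof is correct and follows the paper's argument essentially verbatim. You freeze $s$, use convexity of the partial risk in $(w,b,r)$ so that the vanishing $(w,b,r)$-gradient forces global minimality on that slice, and compare with the slice value $\mathcal R^\star$ attained at $(w^\star,b^\star,0)$; the reverse inclusion via differentiability on an open set is also exactly as in the paper.
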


\begin{proof}[Proof of Lemma \ref{lem:crit_Rtilde_equals_argmin}]
For each fixed \(s\in\mathbb R^d\), we define the partial risk
\[
\widetilde{\mathcal R}_s(w,b,r)
\coloneqq 
\widetilde{\mathcal R}(w,b,r,s)
=
\mathbb E\Bigl[
\ell_{\log}\bigl(Y,w^\top X+b+r\,\sigma(s_0+s^\top X)\bigr)
\Bigr].
\]

Then, \(\widetilde{\mathcal R}_s\) is convex as a function of
\((w,b,r)\in \mathbb R^d\times\mathbb R\times\mathbb R\) (as the expectation of a convex function).

So its critical points coincide with its global minimizers. Moreover, $\min_{(w',b',r')}\widetilde{\mathcal R}_s(w',b',r')\geq\mathcal R^\star$ and the choice \((w,b,r)=(w^\star,b^\star,0)\) yields ${\mathcal R}_s(w,b,r)=\mathcal R^\star$. This leads to
\[
\min_{(w',b',r')}\widetilde{\mathcal R}_s(w',b',r')=\mathcal R^\star.
\]

Let now \((w,b,r,s)\in \operatorname{Crit}(\widetilde{\mathcal R})\).
Then, in particular,
\[
\nabla_w\widetilde{\mathcal R}(w,b,r,s)=0,
\qquad
\partial_b\widetilde{\mathcal R}(w,b,r,s)=0,
\qquad
\partial_r\widetilde{\mathcal R}(w,b,r,s)=0.
\]
Equivalently,
\[
\nabla_{(w,b,r)}\widetilde{\mathcal R}_s(w,b,r)=0.
\]
Hence \((w,b,r)\) is a critical point of the convex function
\(\widetilde{\mathcal R}_s\), and therefore a global minimizer of
\(\widetilde{\mathcal R}_s\). It follows that
\[
\widetilde{\mathcal R}(w,b,r,s)=\widetilde{\mathcal R}_s(w,b,r)
=
\min_{(w',b',r')}\widetilde{\mathcal R}_s(w',b',r')=\mathcal R^\star.
\]

which shows that
\[
\operatorname{Crit}(\widetilde{\mathcal R})
\subset
\operatorname*{argmin}\widetilde{\mathcal R}.
\]

The reverse inclusion is immediate: since \(\widetilde{\mathcal R}\) is
differentiable on an open set, every global minimizer is a critical point. We conclude that
\[
\operatorname{Crit}(\widetilde{\mathcal R})
=
\operatorname*{argmin}\widetilde{\mathcal R}.
\]
\end{proof}

\begin{proof}[Proof of Theorem~\ref{thm:lda-landscape}]
Lemma~\ref{lem:argmin_Rtilde_E1_E2} identifies the minimizer set as
$\mathcal E_1\cup\mathcal E_2$, and
Lemma~\ref{lem:crit_Rtilde_equals_argmin} shows that it coincides with
$\operatorname{Crit}(\widetilde{\mathcal R})$. Together, these give both
claims of the theorem.
\end{proof}

\paragraph{From effective to attention parameters.}
The favorable effective landscape does not automatically transfer to the attention parameters. The chain rule gives $\nabla\mathcal R(\theta)=D\Phi(\theta)^*\nabla\widetilde{\mathcal R}(\Phi(\theta))$: a singular differential may annihilate a nonzero effective gradient. The next result identifies precisely where this obstruction is absent. Here, asymmetry means $m_1\neq-m_{-1}$, rather than merely distinct class means.

\subsubsection{Regularity of the parameter map (proof of Lemma~\ref{lem:lda-regularity-main})}
\label{app:lda-regularity-proof-block}

\begin{lemma}[Regularity of the reparametrization]
\label{lem:Dphi_surjective}
The adjoint $D\Phi(\theta)^*$ is injective if and only if
$m_1\neq-m_{-1}$ and $v_{21}\neq0$.
\end{lemma}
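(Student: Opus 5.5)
Injectivity of $D\Phi(\theta)^*$ is equivalent to surjectivity of $D\Phi(\theta)$ onto $\mathbb R^d\times\mathbb R\times\mathbb R\times\mathbb R^d$, so I will work with the explicit differential of Lemma~\ref{lem:differential_Phi}. I will write the adjoint directly: for a target $(a,\beta,\rho,c)\in\mathbb R^d\times\mathbb R\times\mathbb R\times\mathbb R^d$, matching the formulas of Lemma~\ref{lem:grad_of_R} (with $\nabla_w\widetilde{\mathcal R}$ etc.\ replaced by $a,\beta,\rho,c$),
\[
D\Phi(\theta)^*(a,\beta,\rho,c)=\Bigl(a(\Gamma v_{21})^\top+c\,\Delta m^\top,\;2c,\;\Gamma U_{11}^\top a+\beta m_{-1}+\rho\,\Delta m,\;-\beta+2\rho\Bigr).
\]
I then determine exactly when this map has trivial kernel.

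For sufficiency, assume $v_{21}\neq0$ and $m_1+m_{-1}\neq0$, and suppose the image vanishes. The $u_{12}$-component gives $c=0$. The $U_{11}$-component then reads $a(\Gamma v_{21})^\top=0$; since $\Gamma$ is positive definite, $\Gamma v_{21}\neq0$, so $a=0$. The $v_{22}$-component gives $\beta=2\rho$. Substituting into the $v_{21}$-component gives $\rho(2m_{-1}+\Delta m)=\rho(m_1+m_{-1})=0$, so $\rho=0$ and then $\beta=0$. Hence the kernel is trivial.

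For necessity, I will exhibit nonzero kernel elements. If $m_1=-m_{-1}$, take $a=0$, $c=0$, $\rho=1$, $\beta=2$. Every component then vanishes, because $2m_{-1}+\Delta m=0$, and this holds regardless of $v_{21}$. If $v_{21}=0$, the $U_{11}$-component becomes $c\,\Delta m^\top$, which forces $c=0$ since $\Delta m\neq0$. The remaining conditions are $\Gamma U_{11}^\top a+\beta m_{-1}+\rho\Delta m=0$ and $\beta=2\rho$, which is $d+1$ linear equations in the $d+2$ unknowns $(a,\beta,\rho)$. A nonzero solution therefore always exists: for instance, take $\rho=0=\beta$ and $a\in\ker U_{11}^\top$ if $U_{11}$ is singular, and otherwise solve $a=-(\Gamma U_{11}^\top)^{-1}\rho(m_1+m_{-1})$ with $\rho=1$.

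The only delicate point is this last dimension count. I will state it explicitly through rank–nullity applied to the map $(a,\beta,\rho)\mapsto(\Gamma U_{11}^\top a+\beta m_{-1}+\rho\Delta m,\,-\beta+2\rho)$ from $\mathbb R^{d+2}$ to $\mathbb R^{d+1}$, which necessarily has a nontrivial kernel.
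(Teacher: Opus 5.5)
Your proof is correct, and it takes the dual route to the paper's.

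\textbf{The paper's route.} The paper never writes down the adjoint. It proves surjectivity of $D\Phi(\theta)$ constructively, solving for a preimage of an arbitrary $(\delta w,\delta b,\delta r,\delta s)$. It first picks $\delta v_{21}$ with prescribed inner product against $m_1+m_{-1}$, then $\delta v_{22}$, then $\delta U_{11}$ (using $\Gamma v_{21}\neq0$), then $\delta u_{12}$. For necessity it exhibits an obstruction on the image. When $m_1=-m_{-1}$, every image vector satisfies $2\delta b+\delta r=0$. When $v_{21}=0$, the block $(\delta w,\delta b,\delta r)$ factors through a linear map $h:\mathbb R^{d+1}\to\mathbb R^{d+2}$.

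\textbf{How your argument relates.} Your computation is exactly the transpose of this. Your kernel vector $(0,2,1,0)$ is the normal to the paper's constraint hyperplane. Your map $(a,\beta,\rho)\mapsto(\Gamma U_{11}^\top a+\beta m_{-1}+\rho\Delta m,\,2\rho-\beta)$ is the adjoint of the paper's $h$.

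\textbf{What each buys.} Your version is shorter, needs no preimage construction, and matches how the lemma is used downstream. The elimination $c=0$, $a=0$, $\beta=2\rho$, $\rho(m_1+m_{-1})=0$ is the same one the paper carries out on $\nabla\widetilde{\mathcal R}$ in Lemma~\ref{lem:non_vanishing_omega_gradient}. It is also what transfers criticality in Lemma~\ref{lem:crit_R_outside_v21_zero}. The paper's constructive surjectivity, in turn, is the form quoted directly in the regular-value argument of Lemma~\ref{lem:R_morse_bott_on_Theta1_reg}.

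\textbf{Minor points.} In the $v_{21}=0$ case you do not need $\Delta m\neq0$ to get $c=0$; you may simply choose $c=0$. In your explicit witnesses, state $\beta=2$ in the invertible-$U_{11}$ case, and take $a\neq0$ in the singular case.
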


\begin{proof}[Proof of Lemma~\ref{lem:lda-regularity-main}]
This is the statement restated in Lemma~\ref{lem:Dphi_surjective}.
Let
\[
\delta\theta=(\delta U_{11},\delta u_{12},\delta v_{21},\delta v_{22})\in\Theta,
\]
and let
\[
(\delta w,\delta b,\delta r,\delta s)
=
D\Phi(\theta)[\delta\theta].
\]
By Lemma~\ref{lem:differential_Phi}, this means that
\begin{align}
\delta w &= \delta U_{11}\Gamma v_{21}+U_{11}\Gamma \delta v_{21}, \label{eq:Dphi_sys_1}\\
\delta b &= \delta v_{21}^\top m_{-1}-\delta v_{22}, \label{eq:Dphi_sys_2}\\
\delta r &= \delta v_{21}^\top (m_1-m_{-1})+2\delta v_{22}, \label{eq:Dphi_sys_3}\\
\delta s &= 2\delta u_{12}+\delta U_{11}(m_1-m_{-1}). \label{eq:Dphi_sys_4}
\end{align}

\textbf{Proof of the necessary conditions.} Assume first that \(m_1=-m_{-1}\). Then
$
m_1-m_{-1}=-2m_{-1},
$
and combining \eqref{eq:Dphi_sys_2} and \eqref{eq:Dphi_sys_3} yields
\[
\delta r
=
-2\,\delta v_{21}^\top m_{-1}+2\delta v_{22}
=
-2\delta b.
\]
Hence every vector in the image of \(D\Phi(\theta)\) satisfies
\[
2\delta b+\delta r=0,
\]
so \(D\Phi(\theta)\) cannot be surjective.

Assume now that \(v_{21}=0\). Then \eqref{eq:Dphi_sys_1} becomes
\[
\delta w=U_{11}\Gamma \delta v_{21}.
\]
Therefore \((\delta w,\delta b,\delta r)\) depend only on
\((\delta v_{21},\delta v_{22})\) through a linear map
\[
h:\mathbb R^{d+1}\to \mathbb R^{d+2}.
\]
Since \(\dim(\mathbb R^{d+1})<\dim(\mathbb R^{d+2})\), the map \(h\) cannot be
surjective, and neither can \(D\Phi(\theta)\).

This proves the necessity of the two conditions.

\textbf{Proof of the sufficient conditions.} Conversely, assume that
\[
m_1\neq -m_{-1}
\qquad\text{and}\qquad
v_{21}\neq 0.
\]
Let us fix
\[
(\delta w,\delta b,\delta r,\delta s)\in \mathbb R^d\times \mathbb R\times \mathbb R\times \mathbb R^d
\]
be arbitrary. We show that the system
\eqref{eq:Dphi_sys_1} - \eqref{eq:Dphi_sys_4} can be solved.

First, using \eqref{eq:Dphi_sys_2} and \eqref{eq:Dphi_sys_3}, we want to choose
\(\delta v_{21}\) and \(\delta v_{22}\) so that
\[
\delta b=\delta v_{21}^\top m_{-1}-\delta v_{22},
\qquad
\delta r=\delta v_{21}^\top (m_1-m_{-1})+2\delta v_{22}.
\]
Adding the second equation to twice the first one gives
\[
\delta v_{21}^\top (m_1+m_{-1})=\delta r+2\delta b.
\]
Since \(m_1+m_{-1}\neq 0\), one can choose \(\delta v_{21}\in\mathbb R^d\) such that
\[
\delta v_{21}^\top (m_1+m_{-1})=\delta r+2\delta b.
\]
Then define
\[
\delta v_{22}\coloneqq \delta v_{21}^\top m_{-1}-\delta b.
\]
With this choice, both \eqref{eq:Dphi_sys_2} and \eqref{eq:Dphi_sys_3} are satisfied.

Next, we solve \eqref{eq:Dphi_sys_1}. Since \(v_{21}\neq 0\) and \(\Gamma\) is
invertible, we have $\Gamma v_{21}\neq0$. So there exists a matrix \(\delta U_{11}\in\mathbb R^{d\times d}\) such that
\[
\delta U_{11}\Gamma v_{21}
=
\delta w-U_{11}\Gamma \delta v_{21}.
\]
With this choice, \eqref{eq:Dphi_sys_1} holds.

Finally, once \(\delta U_{11}\) is fixed, we define
\[
\delta u_{12}
=
\frac12\Bigl(\delta s-\delta U_{11}(m_1-m_{-1})\Bigr),
\]
which ensures \eqref{eq:Dphi_sys_4}.

This proves that every
$(\delta w,\delta b,\delta r,\delta s)$
belongs to the image of \(D\Phi(\theta)\), so \(D\Phi(\theta)\) is surjective.
\end{proof}

Away from the singular set $\{v_{21}=0\}$, injectivity of the adjoint differential transfers stationarity to the effective risk. The preceding characterization then rules out any nonoptimal critical point in this regular region.

\subsubsection{Critical points and strict saddles (completion of Theorem~\ref{thm:lda-landscape-initial})}
\label{app:lda-factorized-proof-block}

\begin{lemma}[Critical points away from the singular set]
\label{lem:crit_R_outside_v21_zero}
If $m_1\neq-m_{-1}$, then every critical point of $\mathcal R$ with
$v_{21}\neq0$ is globally minimizing.
\end{lemma}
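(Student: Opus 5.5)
The argument combines three ingredients already in place: the chain rule of Lemma~\ref{lem:gradR_from_gradR~}, the regularity criterion of Lemma~\ref{lem:Dphi_surjective}, and the effective landscape result of Lemma~\ref{lem:crit_Rtilde_equals_argmin}.

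Let $\theta$ be a critical point of $\mathcal R$ with $v_{21}\neq0$, and assume $m_1\neq-m_{-1}$. The chain rule gives
\[
0=\nabla\mathcal R(\theta)=D\Phi(\theta)^*\bigl(\nabla\widetilde{\mathcal R}(\Phi(\theta))\bigr).
\]
Both hypotheses of Lemma~\ref{lem:Dphi_surjective} hold, so $D\Phi(\theta)$ is surjective. In finite dimensions this is equivalent to injectivity of the adjoint $D\Phi(\theta)^*$. Hence $\nabla\widetilde{\mathcal R}(\Phi(\theta))=0$, that is, $\Phi(\theta)\in\operatorname{Crit}(\widetilde{\mathcal R})$.

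By Lemma~\ref{lem:crit_Rtilde_equals_argmin}, every effective critical point is a global minimizer, so $\Phi(\theta)\in\operatorname*{argmin}\widetilde{\mathcal R}=\mathcal E^\star$. Therefore
\[
\mathcal R(\theta)=\widetilde{\mathcal R}(\Phi(\theta))=\min\widetilde{\mathcal R}=\mathcal R^\star.
\]
Finally, $\min\mathcal R=\min\widetilde{\mathcal R}$: the inequality $\min\mathcal R\geq\min\widetilde{\mathcal R}$ is trivial, and the Bayes score is attained in attention parameters by Lemma~\ref{lem:existence_oracle_parameters}. Equivalently, $\theta\in\Theta^\star=\Phi^{-1}(\mathcal E^\star)$ by Corollary~\ref{cor:argmin_R_preimage_Estar}. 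So $\theta$ is a global minimizer of $\mathcal R$.

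The only point needing care is the surjectivity-to-injectivity step, since the proof of Lemma~\ref{lem:Dphi_surjective} actually establishes surjectivity of $D\Phi(\theta)$. The required identity $\ker D\Phi(\theta)^*=(\operatorname{im}D\Phi(\theta))^\perp=\{0\}$ holds with respect to the Euclidean and Frobenius inner products used to define the gradients. The argument is otherwise a direct composition of the earlier results.
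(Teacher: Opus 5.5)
Your proof is correct and follows the paper's argument step for step. You use the chain rule, get surjectivity of $D\Phi(\theta)$ from Lemma~\ref{lem:Dphi_surjective} and hence injectivity of the adjoint, and then apply Lemma~\ref{lem:crit_Rtilde_equals_argmin} to conclude $\mathcal R(\theta)=\min\widetilde{\mathcal R}$. Your appeal to Lemma~\ref{lem:existence_oracle_parameters} is harmless but unnecessary: $\mathcal R(\theta')=\widetilde{\mathcal R}(\Phi(\theta'))\geq\min\widetilde{\mathcal R}=\mathcal R(\theta)$ for every $\theta'$ already shows that $\theta$ is a global minimizer.
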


\begin{proof}[Proof of Lemma \ref{lem:crit_R_outside_v21_zero}]
We assume that $m_1\neq-m_{-1}$. Let $\theta\in \operatorname{Crit}(\mathcal R)$ and assume that $v_{21}\neq 0$.
Since
\[
\nabla \mathcal R(\theta)
=
D\Phi(\theta)^\star\nabla \widetilde{\mathcal R}(\Phi(\theta)),
\]
we obtain
\[
D\Phi(\theta)^\star\nabla \widetilde{\mathcal R}(\Phi(\theta))=0.
\]
By Lemma~\ref{lem:Dphi_surjective}, $D\Phi(\theta)$ is surjective so $D\Phi(\theta)^\star$ is injective, and then
\[
\nabla \widetilde{\mathcal R}(\Phi(\theta))=0.
\]
By Lemma~\ref{lem:crit_Rtilde_equals_argmin}, this implies that
\[
\Phi(\theta)\in \operatorname*{argmin}\widetilde{\mathcal R},
\]
Since
$
\mathcal R(\theta)=\widetilde{\mathcal R}(\Phi(\theta))
$,
we get
\[
\mathcal R(\theta)=R^\star,
\]
so that $\theta\in \operatorname*{argmin}\mathcal R$.

This shows that every critical point $\theta$ with $v_{21}\neq 0$
is a global minimizer of $\mathcal R$. Equivalently,
\[
\operatorname{Crit}(\mathcal R)
\subset
\operatorname*{argmin}\mathcal R
\;\sqcup\;
\Bigl(
\operatorname{Crit}(\mathcal R)\cap\{\theta\in\Theta:\ v_{21}=0\}
\Bigr).
\]
The reverse inclusion is immediate, which concludes the proof.
\end{proof}

\paragraph{The exceptional critical points are strict saddles.}
It remains to examine $\{v_{21}=0\}$, where the transfer argument fails. Lemma~\ref{lem:non_vanishing_omega_gradient} supplies a nonzero effective $w$-gradient at these points. By coupling variations of the two factors while keeping the gate fixed to first order, the proof turns this gradient into directions of opposite curvature. The attention parameterization therefore introduces saddles, but no additional local minima.

\begin{lemma}[Strict-saddle property]
\label{lem:no_spurious_min_R}
Assume $m_1\neq-m_{-1}$. Every nonoptimal critical point of $\mathcal R$
admits both a positive- and a negative-curvature direction. In particular,
every critical point is either a global minimizer or a strict saddle.
\end{lemma}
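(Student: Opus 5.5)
By Lemma~\ref{lem:crit_R_outside_v21_zero}, under $m_1\neq-m_{-1}$ a nonoptimal critical point $\theta_{\mathrm{sp}}$ of $\mathcal R$ must lie in the singular set $\{v_{21}=0\}$. The plan is to exhibit explicit directions of both signs of curvature there. The main input is Lemma~\ref{lem:non_vanishing_omega_gradient}, which guarantees that
\[
G\coloneqq\nabla_w\widetilde{\mathcal R}\bigl(\Phi(\theta_{\mathrm{sp}})\bigr)\neq0 .
\]

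Rather than expanding $D^2f_\theta$ directly through Lemma~\ref{lem:second_order_risk}, I will use the composite structure $\mathcal R=\widetilde{\mathcal R}\circ\Phi$. For any direction $h$,
\[
D^2\mathcal R(\theta)[h,h]=D^2\widetilde{\mathcal R}(\Phi(\theta))\bigl[D\Phi(\theta)h,D\Phi(\theta)h\bigr]+\bigl\langle\nabla\widetilde{\mathcal R}(\Phi(\theta)),D^2\Phi(\theta)[h,h]\bigr\rangle .
\]
The map $\Phi$ is affine in $\theta$ except for the bilinear term $w=U_{11}\Gamma v_{21}$. Hence $D^2\Phi(\theta)[h,h]$ has only a $w$-component, equal to $2\,\delta U_{11}\Gamma\delta v_{21}$, and the second term reduces to $2\,G^\top\delta U_{11}\Gamma\delta v_{21}$.

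Next I choose the direction. Fix any $a\in\mathbb R^d\setminus\{0\}$, set $\delta v_{21}=a$, and take $\delta v_{22}=0$ (or any fixed value). For a scalar $\kappa$, set
\[
\delta U_{11}=\kappa\,G(\Gamma a)^\top,\qquad \delta u_{12}=-\tfrac12\,\delta U_{11}\Delta m,
\]
so that $\delta s=0$ and the gate is frozen to first order. Since $v_{21}=0$ at $\theta_{\mathrm{sp}}$, Lemma~\ref{lem:differential_Phi} gives
\[
D\Phi(\theta_{\mathrm{sp}})h=\bigl(U_{11}\Gamma a,\;a^\top m_{-1}-\delta v_{22},\;a^\top\Delta m+2\delta v_{22},\;0\bigr).
\]
This expression does not depend on $\kappa$. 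The first curvature term is therefore a constant $Q(a)$, finite by smoothness of $\widetilde{\mathcal R}$ and nonnegative because the Hessian of $\widetilde{\mathcal R}$ restricted to directions with $\delta s=0$ is a weighted Gram matrix. The second term equals $2\kappa\|G\|^2\|\Gamma a\|^2$, which is nonzero because $G\neq0$ and $\Gamma$ is invertible.

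Consequently
\[
D^2\mathcal R(\theta_{\mathrm{sp}})[h_\kappa,h_\kappa]=Q(a)+2\kappa\|G\|^2\|\Gamma a\|^2 .
\]
This is strictly negative for $\kappa<-Q(a)/(2\|G\|^2\|\Gamma a\|^2)$ and strictly positive for large positive $\kappa$. Thus $\theta_{\mathrm{sp}}$ admits both a negative- and a positive-curvature direction, so it is a strict saddle. Combined with Lemma~\ref{lem:crit_R_outside_v21_zero}, every critical point is either a global minimizer or a strict saddle.

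The main obstacle is producing the nonzero gradient $G$, which is exactly what Lemma~\ref{lem:non_vanishing_omega_gradient} supplies. What remains is the bookkeeping check that the Hessian of the composition has the stated form at a point where $\nabla\widetilde{\mathcal R}\neq0$. In particular, the compensating choice of $\delta u_{12}$ must keep the first-order image $D\Phi(\theta_{\mathrm{sp}})h$ bounded independently of $\kappa$.
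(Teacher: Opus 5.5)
Your proposal is correct and is essentially the paper's own proof. You freeze the gate with $\delta u_{12}=-\tfrac12\delta U_{11}\Delta m$, take $\delta U_{11}$ proportional to $\nabla_w\widetilde{\mathcal R}\,(\Gamma\delta v_{21})^\top$ so that the only $\delta U_{11}$-dependent curvature term is linear in the scale (with nonzero coefficient by Lemma~\ref{lem:non_vanishing_omega_gradient}), and flip the sign of the scale to obtain both curvatures. The one cosmetic difference is that you compute the Hessian through the chain rule for $\widetilde{\mathcal R}\circ\Phi$, using that $D^2\Phi$ only has the bilinear $w$-component, rather than through Lemma~\ref{lem:second_order_risk} and the second variation of $f_\theta$; both give the same expression.
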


\begin{proof}[Proof of Lemma~\ref{lem:no_spurious_min_R}]

Let $\theta_{\mathrm{sp}}\in \operatorname{Crit}\mathcal{R} \cap \{v_{21}=0\}$. Then
\[
f_{\theta_{\mathrm{sp}}}(x)
=
-v_{22}
+
2v_{22}\sigma\bigl(s_0+s({\theta_{\mathrm{sp}}})^\top x\bigr).
\]

We choose a direction $h$ such that
\[
\delta u_{12}
=
-\frac12 \delta U_{11}\Delta m,
\qquad
\delta v_{22}=0.
\]
So that, at $\theta_{\mathrm{sp}}$, we get (using Lemma~\ref{lem:first_second_variations_model})
\[
Df_{\theta_{\mathrm{sp}}}(x)[h]
=
\delta v_{21}^\top
\Bigl(
\Gamma U_{11}^\top x
+
m_{-1}
+
\Delta m\, \sigma(s_0+s(\theta_{\mathrm{sp}})^\top x)
\Bigr),
\]
and
\[
D^2f_{\theta_{\mathrm{sp}}}(x)[h,h]
=
2\delta v_{21}^\top \Gamma \delta U_{11}^\top x.
\]

By Lemma~\ref{lem:second_order_risk} (and Lemma~\ref{lem:gradients_tildeR}), we can deduce that 
\begin{align*} 
D^2\mathcal{R}(\theta_{\mathrm{sp}})[h,h]
&=
\mathbb{E}\Bigl[
\sigma(f_\theta(X))(1-\sigma(f_\theta(X)))
\bigl(Df_\theta(X)[h]\bigr)^2
+
\bigl(\sigma(f_\theta(X))-\eta(X)\bigr)
D^2f_\theta(X)[h,h]
\Bigr] 
\\
&=C
+
2\delta v_{21}^\top
\Gamma
\delta U_{11}^\top
\nabla_w \widetilde{\mathcal R}\bigl(\Phi(\theta_{\mathrm{sp}})\bigr)
\end{align*}
where $C$ does not depend on $\delta U_{11}$.

Using Lemma~\ref{lem:non_vanishing_omega_gradient}, we know that
$\nabla_w \widetilde{\mathcal R}\bigl(\Phi(\theta_{\mathrm{sp}})\bigr)
\neq 0.$
We can thus choose a fixed $\delta v_{21}$ such that $\Gamma\delta v_{21}\neq 0$ and set
\[
\delta U_{11}^{(t)}
=
-t\,
\nabla_w \widetilde{\mathcal R}\bigl(\Phi(\theta_{\mathrm{sp}})\bigr)
(\Gamma \delta v_{21})^\top,
\qquad t>0.
\]
Then
\[
\begin{aligned}
\delta v_{21}^\top
\Gamma
\bigl(\delta U_{11}^{(t)}\bigr)^\top
\nabla_w \widetilde{\mathcal R}\bigl(\Phi(\theta_{\mathrm{sp}})\bigr)
&=
-t
\left\|
\nabla_w \widetilde{\mathcal R}\bigl(\Phi(\theta_{\mathrm{sp}})\bigr)
\right\|^2
\left\|
\Gamma\delta v_{21}
\right\|^2.
\end{aligned}
\]
Thus
\[
D^2\mathcal{R}(\theta_{\mathrm{sp}})[h_t,h_t]
=
C
-
2t
\left\|
\nabla_w \widetilde{\mathcal R}\bigl(\Phi(\theta_{\mathrm{sp}})\bigr)
\right\|^2
\left\|
\Gamma \delta v_{21}
\right\|^2.
\]
For $t$ large enough,
\[
D^2\mathcal{R}(\theta_{\mathrm{sp}})[h_t,h_t]<0.
\]
Taking 
\[
\delta U_{11}^{(t)}
=
+t\,
\nabla_w \widetilde{\mathcal R}\bigl(\Phi(\theta_{\mathrm{sp}})\bigr)
(\Gamma\delta v_{21})^\top,
\qquad t>0.
\]
leads to 
\[
D^2\mathcal{R}(\theta_{\mathrm{sp}})[h_t,h_t]>0.
\]
for t large enough.
Hence $\theta_{\mathrm{sp}}$ is a strict saddle point.

\end{proof}

\begin{proof}[Proof of Theorem~\ref{thm:lda-landscape-initial}]
Corollary~\ref{cor:argmin_R_preimage_Estar} gives
$\Theta^\star=\Theta_1\cup\Theta_2$. Under $m_1\neq-m_{-1}$,
Lemma~\ref{lem:crit_R_outside_v21_zero} shows that every critical point
outside $\{v_{21}=0\}$ is globally optimal, while
Lemma~\ref{lem:no_spurious_min_R} supplies a negative-curvature direction
at every nonoptimal critical point. Such a point cannot be a local
minimum. Hence every critical point is either globally minimizing or a
strict saddle, and all local minima are global.
\end{proof}

\paragraph{Why a benign landscape is not a global convergence theorem.}
We have classified finite critical points, but this does not bound a gradient-flow trajectory. The next construction exhibits noncompact near-optimal directions: as the sigmoid amplitude grows and its direction shrinks, its constant and linear contributions are canceled by the affine part. The predictor approaches $f^\star$ even though the effective parameters diverge.

\subsubsection{Minimizing sequences at infinity (proof of Proposition~\ref{prop:lda-escape-main})}
\label{app:lda-escape-proof-block}

\begin{proposition}[Minimizers at infinity]
\label{prop:infinite_mininum}
For $c\in\mathbb R^d$ and $R>0$, let
\[
\zeta_R=(w^\star-\sigma'(s_0)c,
b^\star-\sigma(s_0)R,R,c/R).
\]
Then $g_{\zeta_R}\to f^\star$ in $L^1(\mu_X)$ and
$\widetilde{\mathcal R}(\zeta_R)\to\mathcal R^\star$ as $R\to\infty$.
\end{proposition}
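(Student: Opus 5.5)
The plan is to Taylor-expand the sigmoid term around $s_0$ and show that the first two orders are exactly cancelled by the affine part, so that only a remainder of order $1/R$ survives. With $\zeta_R$ as in Proposition~\ref{prop:infinite_mininum}, for every $x$
\[
g_{\zeta_R}(x)-f^\star(x)
= -\sigma'(s_0)c^\top x-\sigma(s_0)R+R\,\sigma\!\bigl(s_0+c^\top x/R\bigr)
= R\Bigl[\sigma\!\bigl(s_0+\tfrac{c^\top x}{R}\bigr)-\sigma(s_0)-\sigma'(s_0)\tfrac{c^\top x}{R}\Bigr].
\]
Taylor's theorem with Lagrange remainder and the bound $\|\sigma''\|_\infty\le 1$ (indeed $|\sigma''|=|\sigma'(1-2\sigma)|\le 1/4$) give the pointwise estimate
\[
\bigl|g_{\zeta_R}(x)-f^\star(x)\bigr|\le \frac{\|\sigma''\|_\infty}{2}\,\frac{(c^\top x)^2}{R}.
\]
Since $\mu_X$ is a two-component Gaussian mixture, it has finite second moments. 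Integrating the estimate yields $\|g_{\zeta_R}-f^\star\|_{L^1(\mu_X)}\le C\,\mathbb E[(c^\top X)^2]/R\to0$, which proves the first claim. In fact the same bound holds in every $L^p$ space, and we obtain pointwise convergence for free.

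For the risk, we use the fact that $t\mapsto\log(1+e^{-yt})$ is $1$-Lipschitz for $y\in\{-1,1\}$, since its derivative is bounded by $1$ in absolute value. This gives
\[
\bigl|\widetilde{\mathcal R}(\zeta_R)-\mathcal R^\star\bigr|
\le \mathbb E\bigl|g_{\zeta_R}(X)-f^\star(X)\bigr|\to0.
\]
Here $\mathcal R^\star=\mathbb E[\log(1+e^{-Yf^\star(X)})]$ is the Bayes logistic risk identified in Lemma~\ref{lem:lda_logistic_risk}. The expectation is taken over the joint law of $(X,Y)$, but the bound only involves $X$, so the $L^1(\mu_X)$ estimate suffices.

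No step is a genuine obstacle. The only point requiring care is the algebraic check that the choices $w^\star-\sigma'(s_0)c$ and $b^\star-\sigma(s_0)R$ cancel the zeroth- and first-order terms of the expansion exactly, and the display above makes this transparent. We also note that $\zeta_R$ diverges, since $|r|=R\to\infty$, while the predictor converges. This is the intended conclusion of the noncompactness discussion.
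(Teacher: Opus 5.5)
Your proof is correct and follows the paper's argument: the same exact cancellation identity for $g_{\zeta_R}-f^\star$, a second-order Taylor bound using the boundedness of $\sigma''$, and the $1$-Lipschitz property of the logistic loss. The only difference is that you integrate the pointwise bound $O\bigl((c^\top x)^2/R\bigr)$ directly, which gives an explicit $O(1/R)$ rate; the paper instead goes through pointwise convergence and dominated convergence, a step that is not needed.
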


\begin{proof}[Proof of Proposition~\ref{prop:lda-escape-main}]
This is the statement restated in Proposition~\ref{prop:infinite_mininum}.
Let \(c\in\mathbb R^d\), and for \(r>0\) define
\[
\zeta_R
=
\left(
w^\star-\sigma'(s_0)c,\,
b^\star-\sigma(s_0)R,\,
R,\,
\frac cR
\right).
\]
For \(x\in\mathbb R^d\), we have
\[
g_{\zeta_R}(x)
=
(w^\star-\sigma'(s_0)c)^\top x
+
b^\star-\sigma(s_0)R
+
R\,\sigma\left(s_0+\frac{c^\top x}{R}\right).
\]
Therefore
\[
g_{\zeta_r}(x)-f^\star(x)
=
R\left[
\sigma\left(s_0+\frac{c^\top x}{R}\right)
-\sigma(s_0)
-\sigma'(s_0)\frac{c^\top x}{R}
\right].
\]

Using Taylor expansion at order \(2\) around \(s_0\) for fixed
\(x\), as \(R\to+\infty\), we get
\[
\sigma\left(s_0+\frac{c^\top x}{R}\right)
=
\sigma(s_0)
+
\sigma'(s_0)\frac{c^\top x}{R}
+
\frac12\sigma''(s_0)\frac{(c^\top x)^2}{R^2}
+
o\left(\frac1{R^2}\right).
\]
Hence
\[
g_{\zeta_R}(x)-f^\star(x)
=
\frac12\sigma''(s_0)\frac{(c^\top x)^2}{R}
+
o\left(\frac1R\right).
\]
In particular,
\[
g_{\zeta_R}(x)\longrightarrow f^\star(x)
\qquad
\text{for every }x\in\mathbb R^d.
\]

We now justify convergence in \(L^1(\mu_X)\). Since the sigmoid is smooth
and \(\sigma''\) is bounded, Taylor's formula with integral remainder gives
a constant \(C>0\) such that, for every \(R\geq 1\) and every \(x\),
\[
\left|
g_{\zeta_R}(x)-f^\star(x)
\right|
\leq
\frac{C}{R}(c^\top x)^2\leq C(c^\top x)^2
\]

Since \(X\) is a mixture of non-degenerate Gaussian distributions, it has
finite moments of all orders. In particular, $\mathbb E[(c^\top X)^2]<+\infty$. By dominated convergence, we obtain
\[
\|g_{\zeta_R}-f^\star\|_{L^1(\mu_X)}
\longrightarrow 0.
\]

It remains to pass to the risk. The
log-exp function is \(1\)-Lipschitz. Hence
\[
\left|
\widetilde{\mathcal R}(\zeta_R)
-
\mathcal R^\star
\right|
=
\left|
\mathbb E\left[
\log\bigl( 1+\exp(-Yg_{\zeta_R}(X))\bigr)-\log\bigl(1+\exp(-Yf^\star(X))\bigr)
\right]
\right|
\leq
\mathbb E\left[
|g_{\zeta_R}(X)-f^\star(X)|
\right].
\]
Since the right-hand side converges to \(0\), we obtain
\[
\widetilde{\mathcal R}(\zeta_R)
\longrightarrow
\mathcal R^\star .
\]
This proves the claim.
\end{proof}

The preceding results establish Theorems~\ref{thm:lda-landscape} and~\ref{thm:lda-landscape-initial}, Lemma~\ref{lem:lda-regularity-main}, and Proposition~\ref{prop:lda-escape-main}. Their dynamical implication is deliberately limited: the absence of nonoptimal local minima does not supply the boundedness needed for a global convergence argument. Conversely, the escaping sequence above is not itself a gradient-flow trajectory and does not prove that such trajectories diverge. We therefore turn to regimes where the dynamics can be controlled: a neighborhood of the regular component $\Theta_1$, and an invariant subspace in the balanced symmetric setting.

\subsection{Normal curvature (preparation for Theorem~\ref{thm:lda-dynamics})}
\label{appendix:morse-bott-theory}

Throughout this subsection, assume $m_1\neq-m_{-1}$. We now move from the classification of critical points to the geometry needed for local convergence. Since optimal parameters are not isolated, positive definiteness of the full Hessian is neither available nor desirable: directions tangent to a manifold of equivalent Bayes realizations must be flat. The relevant question is whether these are the only flat directions.

We apply the Morse--Bott lemma at a point
\[
\theta^\star \in \Theta_1^{\mathrm{reg}}
\coloneqq 
\bigl\{
\theta\in\Theta_1:\ s(\theta)\neq 0
\bigr\},
\]
in order to study the local convergence of the gradient flow near this critical manifold.

The key point is that \(\Theta_1^{\mathrm{reg}}\) is a Morse-Bott critical manifold for the risk \(\mathcal R\). In other words, at every point \(\theta^\star\in\Theta_1^{\mathrm{reg}}\), the tangent space to the manifold coincides with the kernel of the Hessian:
\[
T_{\theta^\star}\Theta_1^{\mathrm{reg}}
=
\ker \nabla^2 \mathcal R(\theta^\star).
\]
Equivalently, the Hessian is positive definite in the directions normal to the critical manifold. This means that, although the risk is flat along the manifold of minimizers, it has a quadratic behavior transversally to it.

The Morse--Bott lemma, recalled in Lemma~\ref{lem:morse_bott_theor_local},
then formalizes that, up to a local diffeomorphism, the risk \(\mathcal R\) is
locally equivalent to a quadratic function in the normal directions to
\(\Theta_1^{\mathrm{reg}}\). This normal form is the main ingredient in the
proof of local convergence of the gradient flow toward the manifold of global
minimizers (Theorem~\ref{thm:lda-dynamics}).

\medskip

We first state the geometric criterion and its local normal form, adapted from \cite{rebjock2024fastconvergencenonisolatedminima} and \cite{BANYAGA2004365} to our local setting.

\begin{definition}[Morse--Bott property at a point from \citealt{rebjock2024fastconvergencenonisolatedminima}]\label{def:morse_bott_point}
Let $f:M\to\mathbb R$ be a $\mathcal C^2$ function on a smooth manifold $M$, and let
$\bar x\in M$ be a local minimum of $f$. Let
\[
S\coloneqq \{x\in M:\ \nabla f(x)=0 \:,\: f(x)=f(\bar x)\}
\]
denote the associated critical set. We say that $f$ satisfies the
\emph{Morse--Bott property at $\bar x$} if, in a neighborhood of $\bar x$,
the set $S$ is a $\mathcal C^1$ submanifold and
\[
\ker \nabla^2 f(\bar x)=T_{\bar x}S.
\]
\end{definition}

In other words, the Hessian is non-degenerate in the normal directions to the local critical submanifold.

\begin{lemma}[Morse--Bott lemma, local version adapted from \citealt{BANYAGA2004365}]\label{lem:morse_bott_theor_local}
Let $M$ be a smooth manifold of dimension $m$, let $f:M\to\mathbb R$ be a smooth function,
and let $p\in M$ be a critical point of $f$.
Assume that $f$ satisfies the Morse--Bott property at $p$.
Let $C$ be the local critical submanifold through $p$, and let $n\coloneqq \dim C$. Then there exist an open neighborhood $U$ of $p$ in $M$ and a smooth chart
\[
\psi:U\to \mathbb R^n\times \mathbb R^{m-n}
\]
such that:

\begin{itemize}
    \item[(a)] $\psi(p)=0$;

    \item[(b)] $    \psi(U\cap C)
    =
    \{(x,y)\in \mathbb R^n\times \mathbb R^{m-n}:\ y=0\}$;

    \item[(c)] there exists an integer $k\le m-n$ such that
    \[
    (f\circ \psi^{-1})(x,y)
    =
    f(p)
    -
    y_1^2-\cdots-y_k^2
    +
    y_{k+1}^2+\cdots+y_{m-n}^2.
    \]
\end{itemize}

Here $k$ is the index of the Hessian of $f$ at $p$ restricted to the normal directions to $C$.
\end{lemma}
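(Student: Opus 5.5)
This is the classical Morse--Bott lemma, and I would obtain it from the Morse lemma with parameters, in the spirit of \citet{BANYAGA2004365}. Since the statement is local, I first work in a coordinate chart around $p$. By the Morse--Bott property, $C$ is a $\mathcal C^1$ (here smooth) submanifold of dimension $n$ near $p$. I choose coordinates $(x,y)\in\mathbb R^n\times\mathbb R^{m-n}$ centred at $p$ in which $C=\{y=0\}$. This can be done by straightening $C$ with a tubular neighbourhood, or by writing $C$ as a graph over its tangent space. In these coordinates, (a) and (b) hold and $f(x,0)=f(p)$ for all small $x$, because $f$ is constant on the connected critical set $C$. Moreover $\partial_y f(x,0)=0$, since every point of $C$ is critical.

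The second step is a parametrized Hadamard lemma. Taylor's formula with integral remainder in the $y$ variable gives
\[
f(x,y)-f(p)=y^\top A(x,y)\,y,\qquad A(x,y)=\int_0^1(1-t)\,\partial^2_{yy}f(x,ty)\,\mathrm dt,
\]
with $A$ smooth and symmetric. The hypothesis $\ker\nabla^2 f(p)=T_pC$ says that the Hessian at $p$ vanishes on the $x$-directions and is nondegenerate on the $y$-block. Hence $A(0,0)=\tfrac12\partial^2_{yy}f(0,0)$ is nondegenerate, and by continuity it stays nondegenerate with the same signature $(k,\,m-n-k)$ on a neighbourhood. Its number of negative eigenvalues $k$ is the normal index of the Hessian.

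The third step diagonalizes $A(x,y)$ smoothly, which is the core of the argument. I would take a smooth matrix-valued map $Q(x,y)$ with $Q^\top A Q=A_0\coloneqq A(0,0)$ near the origin. One choice is $Q=\sqrt{A_0^{-1}A}^{\,-1}$, the principal square root defined by a power series near the identity; the square root is well defined because $A_0^{-1}A$ is close to $I$. Completing squares inductively, as in the proof of the Morse lemma, also works. Setting $\tilde y=Q(x,y)^{-1}y$ gives $f-f(p)=\tilde y^\top A_0\tilde y$, and a fixed linear change of $\tilde y$ turns $A_0$ into $\operatorname{diag}(-I_k,I_{m-n-k})$. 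The map $\psi(x,y)=(x,\tilde y)$ has Jacobian at $0$ that is block triangular with invertible diagonal blocks. By the inverse function theorem it is therefore a local diffeomorphism, and it preserves $\{y=0\}$, which gives (b) and (c).

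I expect the main obstacle to be the smoothness bookkeeping rather than the algebra. The issue is that the submanifold chart has to be smooth, while Definition~\ref{def:morse_bott_point} only asserts $\mathcal C^1$. In our application $f=\mathcal R$ is smooth (even real-analytic), and $C$ is locally the preimage $\Phi^{-1}(\mathcal E_1)$ near a regular point, where $D\Phi$ is surjective by Lemma~\ref{lem:lda-regularity-main}. It is therefore a smooth submanifold by the submersion theorem, which removes this concern. I would state this explicitly and otherwise follow the proof in \citet{BANYAGA2004365}.
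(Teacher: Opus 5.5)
Your proposal is correct and reproduces the standard argument of \citet{BANYAGA2004365}: straighten $C$, apply a parametric Hadamard lemma in the normal variables, then use Lang's smooth square-root trick to freeze the quadratic form; this is exactly what the paper defers to by citation. Your smoothness caveat can be removed in general, without invoking the application: since $f$ is smooth and its Hessian is nondegenerate on a complement of $T_pC$, the implicit function theorem applied to the normal component of $\nabla f$ shows that the local critical set coincides with a smooth $n$-dimensional graph.
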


\begin{proof}
    The proof can be found in \cite{BANYAGA2004365}. The version we enounced can be deduced by considering the function $f$ restricted to the neighborhood where it satisfies the Morse-Bott property.
\end{proof}

\medskip

We recall the following definitions :
\[
\Theta_1\coloneqq \Phi^{-1}(\mathcal E_1),
\qquad
\Theta_2\coloneqq \Phi^{-1}(\mathcal E_2).
\]
\[
\Theta_1^{\mathrm{reg}}
\coloneqq 
\{\theta\in \Theta_1:\ s(\theta)\neq 0\}.
\]
Here, the condition $s(\theta)\neq 0$ will play an important role in the sequel,
as it ensures the non-degeneracy of the Hessian at the point under consideration.

\begin{lemma}\label{lem:R_morse_bott_on_Theta1_reg}
Let $\theta^\star\in \Theta_1^{\mathrm{reg}}$, and let $\mathcal S$ denote the connected component of $\Theta_1^{\mathrm{reg}}$ containing $\theta^\star$. Then $\mathcal R$ satisfies the Morse--Bott property
at $\theta^\star$. More precisely:

\begin{itemize}
    \item[(i)] $\mathcal S$ is a $\mathcal C^1$ submanifold in a neighborhood of $\theta^\star$;

    \item[(ii)]
    \[
    \ker \nabla^2 \mathcal R(\theta^\star)
    =
    T_{\theta^\star}\mathcal S.
    \]
\end{itemize}
\end{lemma}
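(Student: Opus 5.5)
The plan is to work in a neighborhood of $\theta^\star$ where $s(\theta)\neq0$, which is open since $s$ is continuous, and where $v_{21}\neq 0$ holds if it holds at $\theta^\star$. Near such points $\Theta_1$ coincides with the zero set of the submersion-like map $\Psi(\theta)\coloneqq (w(\theta)-w^\star,\,b(\theta)-b^\star,\,r(\theta))\in\R^{d+2}$. Indeed, $\Theta_2$ requires $s=0$, so near $\theta^\star$ the minimizer set is exactly $\Theta_1\cap\{s\neq0\}$. By Lemma~\ref{lem:lda-regularity-main}, $D\Phi(\theta)$ is surjective whenever $m_1\neq-m_{-1}$ and $v_{21}\neq0$. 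Hence $D\Psi(\theta)$, the composition of $D\Phi(\theta)$ with the coordinate projection onto $(w,b,r)$, is surjective. The regular value theorem then gives (i): $\mathcal S$ is a smooth submanifold of codimension $d+2$ near $\theta^\star$, with $T_{\theta^\star}\mathcal S=\ker D\Psi(\theta^\star)=D\Phi(\theta^\star)^{-1}(\{0\}^{d+2}\times\R^d)$.

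The case $v_{21}(\theta^\star)=0$ must be handled separately. In that case $w=U_{11}\Gamma v_{21}=0$ at $\theta^\star$, which contradicts $w^\star=\Gamma^{-1}(m_1-m_{-1})\neq0$. So every point of $\Theta_1$ automatically has $v_{21}\neq0$, and no extra assumption is needed.

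For (ii), I would compute the Hessian at a minimizer. Since $\sigma(f_{\theta^\star})=\eta$, Lemma~\ref{lem:second_order_risk} reduces the Hessian to
\[
D^2\cR(\theta^\star)[h,h]=\mathbb E\bigl[\eta(X)(1-\eta(X))\,(Df_{\theta^\star}(X)[h])^2\bigr].
\]
Equivalently, this is the pullback $D\Phi(\theta^\star)^*\nabla^2\widetilde\cR(\Phi(\theta^\star))D\Phi(\theta^\star)$: the term $\nabla\widetilde\cR\cdot D^2\Phi$ vanishes because $\nabla\widetilde\cR(\Phi(\theta^\star))=0$. Because this quadratic form is positive semidefinite, $h$ lies in its kernel if and only if $D^2\cR(\theta^\star)[h,h]=0$. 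That in turn holds if and only if $D\Phi(\theta^\star)h\in\ker\nabla^2\widetilde\cR(\zeta^\star)$, where $\zeta^\star=\Phi(\theta^\star)\in\cE_1$ has $s^\star\neq0$. Lemma~\ref{lem:ker_hessian_tildeR_on_E1} identifies this kernel as $\{(0,0,0,\delta s)\}$. Therefore
\[
\ker\nabla^2\cR(\theta^\star)=D\Phi(\theta^\star)^{-1}(\{0\}^{d+2}\times\R^d)=T_{\theta^\star}\mathcal S.
\]

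The main obstacle is justifying the first identification cleanly. One needs $\Theta_1$ to be locally the full critical-and-minimal set $S$ of Definition~\ref{def:morse_bott_point}, rather than merely a subset of it. This follows from Theorem~\ref{thm:lda-landscape-initial}: critical points with $v_{21}\neq0$ are minimizers, so $S=\Theta^\star$ locally, and $\Theta^\star=\Theta_1$ near points with $s\neq0$.

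The second delicate point is the nonaffineness fact used in Lemma~\ref{lem:ker_hessian_tildeR_on_E1}, namely that $x\mapsto\sigma(s_0+s^{\star\top}x)$ is not affine on the support of $\mu_X$ when $s^\star\neq0$. I would take this as given from that lemma, since $\mu_X$ has full support.
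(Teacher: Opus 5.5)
Your proposal is correct and follows essentially the paper's own argument: apply the regular-value theorem to $(w-w^\star,\,b-b^\star,\,r)$ restricted to the open set $\{s\neq0\}$, with $v_{21}\neq0$ forced by $w^\star\neq0$, then use the pullback identity $\nabla^2\cR(\theta^\star)=D\Phi(\theta^\star)^*\nabla^2\widetilde\cR(\zeta^\star)D\Phi(\theta^\star)$ together with Lemma~\ref{lem:ker_hessian_tildeR_on_E1}. Your extra check that $\mathcal S$ is locally the set $S$ of Definition~\ref{def:morse_bott_point}, which the paper leaves implicit, is a welcome addition, but it needs no appeal to Theorem~\ref{thm:lda-landscape-initial}: $S$ consists of critical points at the global minimum value, so it equals $\Theta^\star$ by definition, and $\Theta_2\subset\{s=0\}$ is excluded near $\theta^\star$.
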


The proof has two parts. The preimage theorem first identifies $\Theta_1^{\mathrm{reg}}$ as a smooth constraint manifold and computes its tangent space. The Hessian identities from Appendix~\ref{app:lda-differential-tools} then show that precisely these tangent directions have zero curvature. We recall the differential-topological statement needed for the first step.

\begin{definition}\label{def:regular_value}
Let $X$ and $Y$ be smooth manifolds, and let $f:X\to Y$ be a smooth map. We say that a point $y\in Y$ is a \emph{regular value} of $f$ if for every $x\in f^{-1}(y)$, the differential $df_x:T_xX\to T_yY$ is surjective.
\end{definition}

\begin{lemma}[Preimage theorem]\label{thm:preimage}
Let $X$ and $Y$ be smooth manifolds, let $f:X\to Y$ be a smooth map, and let $y\in Y$ be a regular value of $f$. Then the preimage $f^{-1}(y)$ is a smooth submanifold of $X$.
If moreover $y\in \operatorname{Im}(f)$, then the codimension of $f^{-1}(y)$
in $X$ is equal to $\dim Y$.
Furthermore, for every $x\in f^{-1}(y)$, the tangent space satisfies
\[
T_x\bigl(f^{-1}(y)\bigr)=\ker(df_x).
\]
\end{lemma}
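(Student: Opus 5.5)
The plan is to reduce the statement to the inverse function theorem via the local submersion normal form, as in \citet{guillemin1974differential}. Write $n=\dim X$ and $k=\dim Y$. If $y\notin\operatorname{Im}(f)$, then $f^{-1}(y)=\emptyset$ and there is nothing to prove. So fix $x\in f^{-1}(y)$. By assumption $df_x:T_xX\to T_yY$ is surjective, which forces $n\ge k$.

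The first step is to choose charts $\varphi$ around $x$ and $\chi$ around $y$ with $\varphi(x)=0$ and $\chi(y)=0$. We then work with the local representative $F=\chi\circ f\circ\varphi^{-1}:\mathbb R^n\supset W\to\mathbb R^k$. Its differential $DF(0)$ is a surjective $k\times n$ matrix. After a linear change of coordinates on $\mathbb R^n$, we may assume $DF(0)=(A\mid B)$ with $A\in\mathbb R^{k\times k}$ invertible.

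The main step is to define the augmented map
\[
G(z)=\bigl(F(z),\,z_{k+1},\ldots,z_n\bigr)\in\mathbb R^n .
\]
Its Jacobian at $0$ is block triangular with diagonal blocks $A$ and $I_{n-k}$, hence invertible. By the inverse function theorem, $G$ is a local diffeomorphism near $0$. In the new coordinates $G\circ\varphi$, the map $f$ becomes the canonical projection $(u_1,\ldots,u_n)\mapsto(u_1,\ldots,u_k)$. Consequently, near $x$ the set $f^{-1}(y)$ is the coordinate slice $\{u_1=\cdots=u_k=0\}$. This shows that $f^{-1}(y)$ is a smooth submanifold of dimension $n-k$, that is, of codimension $\dim Y$. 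The charts obtained this way at different points of $f^{-1}(y)$ are restrictions of charts of $X$, so they are automatically smoothly compatible. I expect this step to be the main obstacle: the point is to check that completing $F$ by the coordinates $z_{k+1},\ldots,z_n$ really yields an invertible Jacobian, which is exactly where surjectivity of $df_x$ is used.

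For the tangent space, take a smooth curve $c$ in $f^{-1}(y)$ with $c(0)=x$. Then $f\circ c\equiv y$, and the chain rule gives $df_x(c'(0))=0$, so $T_x\bigl(f^{-1}(y)\bigr)\subset\ker(df_x)$. By rank--nullity, $\dim\ker(df_x)=n-k$, which equals $\dim T_x\bigl(f^{-1}(y)\bigr)$. The inclusion is therefore an equality. Alternatively, the equality can be read off directly in the normal-form coordinates, where both sides are $\{u_1=\cdots=u_k=0\}$.
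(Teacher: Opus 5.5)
Your proof is correct and complete. It is the standard argument: the local submersion normal form obtained by applying the inverse function theorem to the augmented map $G$, then the inclusion $T_x(f^{-1}(y))\subset\ker(df_x)$ closed by a rank--nullity dimension count. The paper does not prove this lemma; it only recalls it as a standard fact from differential topology (cf.\ Guillemin and Pollack), and that textbook proof is essentially the one you give.
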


\begin{proof}[Proof of Lemma~\ref{lem:R_morse_bott_on_Theta1_reg}]
We first show that the connected component \(\mathcal S\) of \(\Theta_1^{\mathrm{reg}}\)
containing \(\theta^\star\) is a smooth submanifold. Let
$
\mathcal O
\coloneqq 
\{\theta\in \Theta:\ s(\theta)\neq 0\}.
$
Since \(s:\Theta\to\mathbb R^d\) is continuous and $\mathbb R^d\setminus\{0\}$ is open, we have that $\mathcal O=s^{-1}(\mathbb R^d\setminus\{0\})$ is an open subset of \(\Theta\)  (and thus a smooth manifold).

Next, define
\[
C:\Theta\to \mathbb R^d\times \mathbb R\times \mathbb R,
\qquad
C(\theta)=\bigl(w(\theta)-w^\star,\ b(\theta)-b^\star,\ r(\theta)\bigr).
\]
By definition of \(\Theta_1\), we have
\[
\Theta_1
=
\{\theta\in\Theta:\ w(\theta)=w^\star,\ b(\theta)=b^\star,\ r(\theta)=0\}
=
C^{-1}(0).
\]
Therefore,
\[
\Theta_1^{\mathrm{reg}}
=
\Theta_1\cap \mathcal O
=
C^{-1}(0)\cap \mathcal O
=
\bigl(C_{|\mathcal O}\bigr)^{-1}(0),
\]
where $C_{|\mathcal{O}}$ denotes the restriction of \(C\) to the smooth open submanifold \(\mathcal O\).

Let now \(\theta\in \Theta_1^{\mathrm{reg}}= \bigl(C_{|\mathcal O}\bigr)^{-1}(0)\).
Since \(\mathcal O\) is open, we have
$
D(C_{|\mathcal O})(\theta)=DC(\theta).
$
Moreover, \(DC(\theta)\) is surjective. Indeed, \(C\) is obtained by keeping only the
\((w,b,r)\)-components of \(\Phi\), so \(DC(\theta)\) is the composition of
\(D\Phi(\theta)\) with the canonical projection onto $\mathbb R^d\times \mathbb R\times \mathbb R$.
By Lemma~\ref{lem:Dphi_surjective}, \(D\Phi(\theta)\) is surjective (because $m_1\neq -m_{-1}$ by assumption and $v_{21}\neq0$ since $U_{11}\Gamma v_{21}=w=w^\star\neq0$). Therefore \(DC(\theta)\) is surjective as well. Thus, \(0\) is a regular value of \(C_{|\mathcal O}\). 

By the preimage theorem~\ref{thm:preimage} we can then deduce that
$
\Theta_1^{\mathrm{reg}}
=
\bigl(C_{|\mathcal O}\bigr)^{-1}(0)
$
is a smooth submanifold of \(\mathcal O\), hence of \(\Theta\).
Its codimension is
$
\dim(\mathbb R^d\times \mathbb R\times \mathbb R)=d+2.
$
In particular, for every \(\theta\in \Theta_1^{\mathrm{reg}}\),
\[
T_\theta \Theta_1^{\mathrm{reg}}=\ker DC(\theta).
\]

Finally, since \(\mathcal S\) is a connected component of the smooth manifold
\(\Theta_1^{\mathrm{reg}}\), it is itself a smooth submanifold in a neighborhood of
\(\theta^\star\). In particular, \(\operatorname{codim}\mathcal{S}=d+2\) and for every \(\theta\in \mathcal{S}\),
\[
T_\theta \mathcal{S}=\ker DC(\theta).
\]

It remains to prove that
\[
\ker \nabla^2 \mathcal R(\theta^\star)=T_{\theta^\star}\mathcal S.
\]
We already know that $T_{\theta^\star}\mathcal S=\ker DC(\theta^\star)$, thus it is enough to show that
\[
\ker \nabla^2 \mathcal R(\theta^\star)=\ker DC(\theta^\star).
\]

Set $\zeta^\star\coloneqq \Phi(\theta^\star)\in \mathcal E_1.$ Since $\mathcal R=\widetilde{\mathcal R}\circ \Phi,$ the chain rule yields that for every \(h,h'\in \Theta\),
\[
D^2\mathcal R(\theta^\star)(h,h')
=
D^2\widetilde{\mathcal R}(\zeta^\star)\bigl(D\Phi(\theta^\star)h,D\Phi(\theta^\star)h'\bigr)
+
D\widetilde{\mathcal R}(\zeta^\star)\bigl(D^2\Phi(\theta^\star)(h,h')\bigr).
\]
Now \(\zeta^\star=\Phi(\theta^\star)\in \mathcal E^\star=\operatorname*{argmin}\widetilde{\mathcal R}\), we have $D\widetilde{\mathcal R}(\zeta^\star)=0$ and therefore,
\[
D^2\mathcal R(\theta^\star)(h,h')
=
D^2\widetilde{\mathcal R}(\zeta^\star)\bigl(D\Phi(\theta^\star)h,D\Phi(\theta^\star)h'\bigr).
\]

We now prove the first inclusion
\[
\ker DC(\theta^\star)\subset \ker \nabla^2\mathcal R(\theta^\star).
\]
Let \(h\in \ker DC(\theta^\star)\). The condition \(DC(\theta^\star)h=0\) means that the first three components of
\(D\Phi(\theta^\star)h\) vanish. Hence
\[
D\Phi(\theta^\star)h=(0,0,0,\delta s)
\]
for some \(\delta s\in\mathbb R^d\).
By Lemma~\ref{lem:ker_hessian_tildeR_on_E1}, since \(\theta^\star\in \Theta_1^{\mathrm{reg}}\) and thus
\(s(\theta^\star)\neq 0\), we have
$
(0,0,0,\delta s)\in \ker \nabla^2\widetilde{\mathcal R}(\zeta^\star).
$
Therefore,
\[
D^2\mathcal R(\theta^\star)(h,h)
=
D^2\widetilde{\mathcal R}(\zeta^\star)\bigl(D\Phi(\theta^\star)h,D\Phi(\theta^\star)h\bigr)
=0,
\]
so $h\in \ker \nabla^2\mathcal R(\theta^\star)$. This proves
\[
\ker DC(\theta^\star)\subset \ker \nabla^2\mathcal R(\theta^\star).
\]

Conversely, let
\[
h\in \ker \nabla^2\mathcal R(\theta^\star).
\]
Then
\[
0=D^2\mathcal R(\theta^\star)(h,h)
=
D^2\widetilde{\mathcal R}(\zeta^\star)\bigl(D\Phi(\theta^\star)h,D\Phi(\theta^\star)h\bigr).
\]
Since \(\zeta^\star\in \mathcal E_1\) with \(s(\theta^\star)\neq 0\), Lemma~\ref{lem:ker_hessian_tildeR_on_E1} implies that
\[
D\Phi(\theta^\star)h=(0,0,0,\delta s')
\]
for some \(\delta s'\in\mathbb R^d\).
Hence the first three components vanish, that is,
\[
DC(\theta^\star)h=0.
\]
Thus $h\in \ker DC(\theta^\star)$ and we obtain
\[
\ker \nabla^2\mathcal R(\theta^\star)\subset \ker DC(\theta^\star).
\]

We conclude that
\[
\ker \nabla^2\mathcal R(\theta^\star)=\ker DC(\theta^\star)=T_{\theta^\star}\mathcal S.
\]
\end{proof}

\subsection{Local gradient-flow convergence (proof of Theorem~\ref{thm:lda-dynamics})}
\label{appendix:proof_thm_local_cv_theta1}
\label{app:lda-local-proof}

The Morse--Bott structure provides local coordinates in which the excess risk is quadratic in the normal variables and independent of the tangential variables. To obtain convergence of the original Euclidean gradient flow, however, we must also account for the metric induced by this coordinate change. The argument below shows exponential decay in the normal directions, controls the total tangential displacement, and chooses an initial neighborhood from which the trajectory cannot leave the coordinate chart. This yields convergence to a single Bayes-optimal parameter, not merely approach to the set of minimizers.

The following statement makes the neighborhoods and the normal decay estimate in Theorem~\ref{thm:lda-dynamics} explicit.

\paragraph{Notation.}
We recall the parameterization and the regular component used in the local argument.
\[
\theta=(U_{11},u_{12},v_{21},v_{22})
\in \mathbb{R}^{d\times d}\times \mathbb{R}^d\times \mathbb{R}^d\times \mathbb{R},
\qquad
\Delta m\coloneqq m_1-m_{-1},
\]
and define
\[
w(\theta)\coloneqq U_{11}\Gamma v_{21},\qquad
b(\theta)\coloneqq v_{21}^{\top}m_{-1}-v_{22},
\]
\[
r(\theta)\coloneqq v_{21}^{\top}\Delta m+2v_{22},
\qquad
s(\theta)\coloneqq 2u_{12}+U_{11}\Delta m .
\]
We consider the population risk
\[
\mathcal R(\theta)=\widetilde{\mathcal R}\bigl(w(\theta),b(\theta),r(\theta),s(\theta)\bigr).
\]
We also consider
\[
\Theta_1
\coloneqq 
\Bigl\{
\theta:\ w(\theta)=w^\star,\ \ b(\theta)=b^\star,\ \ r(\theta)=0
\Bigr\},
\]
and its regular part
\[
\Theta_1^{\mathrm{reg}}
\coloneqq 
\Bigl\{
\theta\in \Theta_1:\ \ s(\theta)\neq 0
\Bigr\}.
\]
For a fixed point $\theta^\star\in \Theta_1^{\mathrm{reg}}$, we denote by $\mathcal{S}$ the connected component of \(\Theta_1^{\mathrm{reg}}\) containing \(\theta^*\).

\begin{theorem}[Local convergence of the gradient flow near a regular critical component]
\label{thm:local_convergence_theta1}
Assume that $m_1\neq -\,m_{-1}$, $\theta^\star\in \Theta_1^{\mathrm{reg}}$. Let \(\mathcal S\) be the connected component of \(\Theta_1^{\mathrm{reg}}\) containing \(\theta^\star\).
Then there exist:

\begin{itemize}
\item an open neighborhood \(U\) of \(\theta^\star\) in the parameter space,
\item an open neighborhood \(V\subset \mathbb{R}^{n}\times \mathbb{R}^{d+2}\) of \(0\), where $n=\dim \mathcal S=d^2+d-1$,
\item a smooth diffeomorphism
\[
\psi:U\longrightarrow V,
\qquad
\psi(\theta^\star)=0,
\]
\end{itemize}
such that
\[
\psi(U\cap \mathcal S)=V\cap \{(x,y)\in \mathbb{R}^n\times \mathbb{R}^{d+2}: y=0\},
\]
and
\[
(\mathcal R\circ \psi^{-1})(x,y)=\mathcal R(\mathcal S)+|y|^2
\qquad\text{for all }(x,y)\in V.
\]

Moreover, there exists an open neighborhood \(U_0\subset U\) of \(\theta^\star\) such that, for every initial condition $\theta_0\in U_0$, the Euclidean gradient flow
\[
\dot\theta_t=-\nabla \mathcal R(\theta_t),
\qquad
\theta_{t=0}=\theta_0,
\]
is defined for all \(t\ge 0\), remains in \(U\), and converges to a point $\theta_\infty\in \mathcal S$.

More precisely, if we write
\[
z_t\coloneqq (x_t,y_t)\coloneqq \psi(\theta_t),
\]
then \(z_t\in V\) for all \(t\ge 0\), \(x_t\) converges in \(\mathbb{R}^n\), and there exists a constant \(c>0\) such that
\[
|y_t|\le e^{-ct}|y_0|
\qquad\text{for all }t\ge 0.
\]
In particular,
\[
\operatorname{dist}(\theta_t,\mathcal S)\longrightarrow 0
\qquad\text{and}\qquad
\theta_t\longrightarrow \theta_\infty\in \mathcal S
\quad\text{as }t\to+\infty.
\]
where the convergence rates are exponentials. 
\end{theorem}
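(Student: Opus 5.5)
The plan is to first produce the chart $\psi$ from Lemma~\ref{lem:R_morse_bott_on_Theta1_reg} combined with the Morse--Bott lemma (Lemma~\ref{lem:morse_bott_theor_local}), and then run a Lyapunov-type argument in those coordinates.

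\textbf{Step 1: the chart.} Since $\theta^\star$ is a global minimizer, the normal Hessian is positive semidefinite. By Lemma~\ref{lem:R_morse_bott_on_Theta1_reg} it has trivial kernel on the normal space, so it is positive definite. Hence the index $k$ in Lemma~\ref{lem:morse_bott_theor_local} is $0$, which gives $(\mathcal R\circ\psi^{-1})(x,y)=\mathcal R(\mathcal S)+|y|^2$. The dimension count $n=(d^2+2d+1)-(d+2)=d^2+d-1$ follows from the codimension $d+2$ established in the proof of that lemma. I will shrink $U$ so that $\bar U$ is compact, $\psi$ extends smoothly to a neighborhood of $\bar U$, and $V=B_x(0,\rho)\times B_y(0,\rho)$ is a product of balls.

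\textbf{Step 2: the flow in coordinates.} Write $J(\theta)=D\psi(\theta)$ and $G(z)=J J^\top$ evaluated at $\psi^{-1}(z)$; this is smooth, symmetric and uniformly positive definite on $\bar V$, say $\lambda I\preceq G\preceq\Lambda I$. Because $\nabla\mathcal R(\theta)=J^\top\nabla_z(\mathcal R\circ\psi^{-1})$ and $\nabla_z(\mathcal R\circ\psi^{-1})=(0,2y)$, the flow becomes $\dot z=-G(z)(0,2y)$. Consequently
\[
\tfrac{d}{dt}|y|^2=-4\,y^\top G_{yy}y\le-4\lambda|y|^2,
\]
which yields $|y_t|\le e^{-2\lambda t}|y_0|$ for as long as $z_t$ stays in $V$. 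For the tangential part, $|\dot x_t|=|2G_{xy}y_t|\le2\Lambda|y_t|$, so
\[
|x_t-x_0|\le \int_0^t 2\Lambda e^{-2\lambda s}|y_0|\,\mathrm ds\le (\Lambda/\lambda)|y_0|.
\]

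\textbf{Step 3: confinement, which is the main obstacle.} Here I need a bootstrap showing the trajectory never leaves $V$. I would choose $U_0=\psi^{-1}\bigl(B_x(0,\rho/2)\times B_y(0,\epsilon)\bigr)$ with $\epsilon<\rho$ and $(\Lambda/\lambda)\epsilon<\rho/2$. Let $T$ be the first exit time from $V$. On $[0,T)$ the estimates of Step 2 give $|y_t|<\epsilon<\rho$ and $|x_t|<\rho/2+\rho/2$, strictly inside a compact subset of $V$, so $T=\infty$. Global existence follows because the solution stays in a compact set on which $\nabla\mathcal R$ is smooth.

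\textbf{Step 4: convergence.} Since $\dot x$ is integrable, $x_t\to x_\infty$ with $|x_\infty|\le\rho$. To keep the limit inside $V$, I would use a slightly smaller closed ball in the choice of constants. Then $y_t\to0$, so $z_t\to(x_\infty,0)$, and $\theta_\infty=\psi^{-1}(x_\infty,0)\in\mathcal S$ by property (b) of the chart. The rates are exponential: $|x_t-x_\infty|\le(\Lambda/\lambda)e^{-2\lambda t}|y_0|$, and since $\psi^{-1}$ is Lipschitz on $\bar V$, $\|\theta_t-\theta_\infty\|$ and $\operatorname{dist}(\theta_t,\mathcal S)$ also decay exponentially. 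Finally, $f_{\theta_\infty}=f^\star$ because $\theta_\infty\in\Theta_1$, and Lemma~\ref{lem:argmin_Rtilde_E1_E2} identifies such parameters as realizing the Bayes predictor.
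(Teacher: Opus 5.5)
Your proposal is correct and follows essentially the same route as the paper's proof: build the Morse--Bott chart, rewrite the Euclidean flow as $\dot z=-G(z)\nabla F(z)$ with $G=JJ^\top$, get Gr\"onwall decay of $|y_t|$ and an integrable tangential drift, then bootstrap on a compact product of balls and transfer the exponential rates through the Lipschitz inverse chart. The differences are cosmetic. You bound $G_{yy}$ and $G_{xy}$ through global bounds $\lambda I\preceq G\preceq\Lambda I$, whereas the paper uses the blockwise quantities $\lambda_{\min}(C)$ and $\sup\|B\|_{\mathrm{op}}$. You also justify explicitly that the Morse--Bott index is $k=0$ because $\theta^\star$ is a minimizer, which the paper leaves implicit.
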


\begin{proof}[Proof of Theorem~\ref{thm:lda-dynamics}]
We prove the detailed version stated in
Theorem~\ref{thm:local_convergence_theta1}, which implies the main-text
result. The argument is divided into the following steps.

\medskip

\noindent\textbf{Step 1: Rewriting the flow in Morse--Bott coordinates.}
By Lemma \ref{lem:R_morse_bott_on_Theta1_reg}, we can use the Morse--Bott theorem (Lemma \ref{lem:morse_bott_theor_local})
Let
\[
\psi:U\to V\subset \mathbb R^n\times \mathbb R^{d+2}
\]
be the local diffeomorphism given by the Morse--Bott lemma at $\theta^*$, and define
\[
F\coloneqq \mathcal R\circ \psi^{-1}.
\]
By construction, $F$ is the expression of the risk in local coordinates adapted to the critical manifold $\mathcal S$: the variable
\[
z=(x,y)\in \mathbb R^n\times \mathbb R^{d+2}
\]
splits into tangent coordinates $x$ along $\mathcal S$ and normal coordinates $y$, along which the risk has a quadratic structure.

Now let $(\theta_t)_{t\ge 0}$ be a solution of the Euclidean gradient flow
\[
\dot \theta_t=-\nabla \mathcal R(\theta_t),
\qquad
\theta_{t=0}=\theta_0,
\]
as long as $\theta_t\in U$, and define
\[
z_t\coloneqq \psi(\theta_t).
\]
Differentiating the identity $z_t=\psi(\theta_t)$, we obtain
\[
\dot z_t
=
D\psi(\theta_t)\dot\theta_t
=
-\,D\psi(\theta_t)\nabla \mathcal R(\theta_t).
\]
Set
\[
J(z)\coloneqq D\psi(\psi^{-1}(z)),
\qquad
K(z)\coloneqq D\psi^{-1}(z)=J(z)^{-1}.
\]

Next, by the chain rule, we have that
\[
\nabla \mathcal R(\psi^{-1}(z))
=
\nabla (F\circ\psi)(\psi^{-1}(z))
=
D\psi(\psi^{-1}(z))^\top \nabla F(z).
\]
Substituting this into the previous identity gives
\[
\dot z_t
=
-\,J(z_t)J(z_t)^\top \nabla F(z_t).
\]
Therefore, defining $G(z)\coloneqq J(z)J(z)^\top$, the flow in the $z$-coordinates takes the form
\[
\dot z_t=-\,G(z_t)\nabla F(z_t).
\]

Finally, the map $G$ is continuous on $V$, and for every $z\in V$, the matrix $G(z)$ is symmetric positive definite, with inverse
\[
G(z)^{-1}
=
K(z)^\top K(z)
=
D\psi^{-1}(z)^\top D\psi^{-1}(z).
\]
This shows that the original Euclidean gradient flow for $\mathcal R$ is equivalent, in the coordinates given by $\psi$, to a gradient-type flow for $F$ in the tangent/normal splitting.

\medskip

\noindent\textbf{Step 2: Writing the flow in the \((x,y)\)-coordinates and preliminary estimates.}

By the Morse--Bott normal form, for every
$
(x,y)\in V\subset \mathbb R^n\times \mathbb R^{d+2},
$
we have
\[
F(x,y)=\mathcal R(\mathcal S)+|y|^2,
\]
hence
\[
\nabla F(x,y)=\binom{0}{2y}.
\]
From Step~1, the flow in the \(z\)-coordinates satisfies
\[
\dot z_t=-\,G(z_t)\nabla F(z_t),
\qquad
z_t=(x_t,y_t).
\]
We write the symmetric positive definite matrix \(G(x,y)\) in block form as
\[
G(x,y)=
\begin{pmatrix}
A(x,y) & B(x,y)\\
B(x,y)^\top & C(x,y)
\end{pmatrix},
\]
where
\[
A(x,y)\in \mathbb R^{n\times n},
\qquad
B(x,y)\in \mathbb R^{n\times(d+2)},
\qquad
C(x,y)\in \mathbb R^{(d+2)\times(d+2)}.
\]
Therefore,
\[
\binom{\dot x_t}{\dot y_t}
=
-
\begin{pmatrix}
A(x_t,y_t) & B(x_t,y_t)\\
B(x_t,y_t)^\top & C(x_t,y_t)
\end{pmatrix}
\binom{0}{2y_t},
\]
that is,
\[
\dot x_t=-2\,B(x_t,y_t)\,y_t,
\qquad
\dot y_t=-2\,C(x_t,y_t)\,y_t.
\]

We now fix a compact set $K\coloneqq \overline B_{\mathbb R^n}(0,\rho_x)\times \overline B_{\mathbb R^{d+2}}(0,\rho_y) \subset V$. Our goal will be to show that, on the compact set $K$, the eigenvalues of B and C are bounded below, which will allow us to use Grönwall's lemma to obtain exponential convergence on this compact set, and then to adjust the size of the permitted initialization perimeter to ensure that convergence occurs before leaving the compact set.

Since $C$ is continuous on \(V\) and takes values in the set of symmetric positive definite matrices,
and since the map $M\longmapsto \lambda_{\min}(M)$ is continuous on the space of symmetric matrices, the function
$
z\longmapsto \lambda_{\min}(C(z))
$
is continuous on \(K\). Moreover, it is strictly positive on \(K\). Hence we may define
\[
\lambda\coloneqq \min_{z\in K}\lambda_{\min}(C(z))>0.
\]
We also set
\[
M\coloneqq \sup_{z\in K}\|B(z)\|_{\mathrm{op}}<+\infty.
\]

For the remainder of the proof, we choose \(\varepsilon_x,\varepsilon_y>0\) such that
\[
\varepsilon_x+\frac{M}{\lambda}\varepsilon_y<\rho_x/2,
\qquad
\varepsilon_y< \rho_y/2,
\]
and we consider an initial condition $\theta_{t=0}=\theta_0=\psi^{-1}(x_0,y_0)$ such that
\[
|x_0|<\varepsilon_x,
\qquad
|y_0|\le \varepsilon_y.
\]
Finally, we define
\[
T^*
\coloneqq 
\sup\Bigl\{
T>0:\ z_t\in \operatorname{int}(K)\ \text{for all }t\in[0,T]
\Bigr\}.
\]
We will show below that
$
T^*=+\infty.
$

\medskip

\noindent\textbf{Step 3: Control in the normal directions.}

For \(t\in[0,T^*)\), we have
\[
\dot y_t=-2\,C(x_t,y_t)\,y_t.
\]
Define
$
V(y)\coloneqq \|y\|^2
$
so that
\[
\frac{d}{dt}V(y_t)
=
2\langle y_t,\dot y_t\rangle
=
-4\langle y_t,C(x_t,y_t)y_t\rangle.
\]
Since \(z_t=(x_t,y_t)\in \operatorname{int}(K)\subset K\) for every \(t<T^*\), and by definition of \(\lambda\),
\[
\langle y,C(z)y\rangle\ge \lambda \|y\|^2
\qquad
\text{for all }z\in K,\ y\in\mathbb R^{d+2},
\]
we obtain
\[
\frac{d}{dt}V(y_t)\le -4\lambda V(y_t).
\]
By Grönwall's lemma,
\[
V(y_t)\le e^{-4\lambda t}V(y_0)
\qquad\text{for all }t<T^*.
\]
Equivalently,
\[
\|y_t\|\le e^{-2\lambda t}\|y_0\|
\qquad\text{for all }t<T^*.
\]

\medskip

\noindent\textbf{Step 4: Control in the tangent directions.}

For \(t\in[0,T^*)\), we also have
\[
\dot x_t=-2\,B(x_t,y_t)\,y_t.
\]
Hence, using the definition of \(M\),
\[
\|\dot x_t\|
\le 2\,\|B(x_t,y_t)\|_{\mathrm{op}}\,\|y_t\|
\le 2M\|y_t\|.
\]
Combining this with the estimate from Step~3 gives
\[
\|\dot x_t\|
\le 2M e^{-2\lambda t}\|y_0\|.
\]
Integrating on \([0,t]\subset [0,T^*)\), we obtain
\[
\|x_t\|
\le
\|x_0\|+\int_0^t \|\dot x_s\|\,ds
\le
\|x_0\|+2M\|y_0\|\int_0^t e^{-2\lambda s}\,ds.
\]
Therefore,
\[
\|x_t\|
\le
\|x_0\|+\frac{M}{\lambda}\bigl(1-e^{-2\lambda t}\bigr)\|y_0\|
\le
\|x_0\|+\frac{M}{\lambda}\|y_0\|
\qquad\text{for all }t<T^*.
\]
In particular,
\[
\|x_t\|
<
\varepsilon_x+\frac{M}{\lambda}\varepsilon_y
\le \rho_x/2,
\]
and from Step~3,
\[
\|y_t\|\le \|y_0\|\le \varepsilon_y\le \rho_y/2.
\]
Thus \(z_t=(x_t,y_t)\) cannot reach the boundary of \(K\) on \([0,T^*)\).
By continuity and the definition of \(T^*\), this implies
\[
T^*=+\infty.
\]

\medskip

\noindent\textbf{Conclusion.}
By Step~3, we have
\[
\|y_t\|\le e^{-2\lambda t}\|y_0\|
\qquad\text{for all }t\ge 0,
\]
hence
\[
y_t\longrightarrow 0
\qquad\text{as }t\to+\infty.
\]
Moreover, by Step~4,
\[
\|x_t\|\le \|x_0\|+\frac{M}{\lambda}\|y_0\|<\rho_x
\qquad\text{for all }t\ge 0,
\]
so \((x_t)\) remains in the compact ball \(\overline B(0,\rho_x)\). Since
\[
\int_0^{+\infty}\|\dot x_t\|\,dt
\le
2M\|y_0\|\int_0^{+\infty}e^{-2\lambda t}\,dt
<+\infty,
\]
the curve \((x_t)\) has finite length and therefore converges to some point
\[
x_\infty\in \overline B(0,\rho_x).
\]
Consequently,
\[
z_t=(x_t,y_t)\longrightarrow (x_\infty,0)
\qquad\text{as }t\to+\infty.
\]
Since \(\psi\) is a diffeomorphism, it follows that
\[
\theta_t=\psi^{-1}(z_t)\longrightarrow \theta_\infty\coloneqq \psi^{-1}(x_\infty,0).
\]
Finally, because
\[
(x_\infty,0)\in V\cap\{(x,y):y=0\}=\psi(U\cap \mathcal S),
\]
we obtain
\[
\theta_\infty\in \mathcal S\cap U.
\]
The exponential convergence in the original coordinate $\theta_t$ can be deduced from the fact that $\psi$ is a locally lipschitz, hence
$\|\theta_t-\theta_\infty\|\leq L\|z_t-z_\infty\|$.
This concludes the proof.
\end{proof}

\subsection{Constant-gate geometry (complement to Theorem~\ref{thm:lda-dynamics})}
\label{app:lda-theta2-geometry}

The two Bayes representation mechanisms need not yield the same local geometry. On the regular part of $\Theta_1$, a nonconstant gate has zero amplitude, and the preceding argument identifies all flat directions with motion along the minimizer manifold. On $\mathcal E_2$, the gate is constant: a perturbation of its direction can be compensated to first order by a change in the affine coefficient. The following calculation identifies the resulting additional Hessian kernel.

\begin{proposition}[Failure of the Morse--Bott property]
\label{prop:failure_morse_bott_E2}
Let
$\zeta^\star=(w^\star,b^\star-r^\star\sigma(s_0),r^\star,0)\in\mathcal E_2$ such that $r^\star\neq0$.
Then
\[
\ker\nabla^2\widetilde{\mathcal R}(\zeta^\star)
=T_{\zeta^\star}\mathcal E_2\oplus E_{\rm flat},
\]
where
\[
E_{\rm flat}=\{(-r^\star\sigma'(s_0)\delta s,0,0,\delta s):
\delta s\in\mathbb R^d\}.
\]
In particular, the Hessian kernel has dimension $d+1$, whereas
$T_{\zeta^\star}\mathcal E_2$ has dimension one.
\end{proposition}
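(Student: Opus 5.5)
The plan follows the route used for Proposition~\ref{prop:hessian_tildeR_on_E1} and Lemma~\ref{lem:ker_hessian_tildeR_on_E1}. We first compute the Hessian at $\zeta^\star$, then read off its kernel from the vanishing of a weighted second moment.

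\textbf{Step 1: the Hessian at $\zeta^\star$.} Since $\zeta^\star\in\mathcal E_2$, we have $g_{\zeta^\star}=f^\star$ almost surely, so $\sigma(g_{\zeta^\star}(X))=\eta(X)$. By Lemma~\ref{lem:identity_log_loss_derivative}, the term involving $\nabla^2 g_{\zeta^\star}$ then disappears after conditioning on $X$. Exactly as in Proposition~\ref{prop:hessian_tildeR_on_E1}, this gives
\[
\nabla^2\widetilde{\mathcal R}(\zeta^\star)=\mathbb E\bigl[\eta(X)(1-\eta(X))\,\nabla g_{\zeta^\star}(X)\nabla g_{\zeta^\star}(X)^\top\bigr].
\]
Now use $s=0$. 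The gradient is $\nabla g_{\zeta^\star}(X)=(X,1,\sigma(s_0),r^\star\sigma'(s_0)X)$, where we used $\sigma'=\sigma(1-\sigma)$. For a direction $\delta\zeta=(\delta w,\delta b,\delta r,\delta s)$, the quadratic form is therefore the expectation of $\eta(1-\eta)$ times the square of
\[
\bigl(\delta w+r^\star\sigma'(s_0)\delta s\bigr)^\top X+\delta b+\sigma(s_0)\delta r .
\]

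\textbf{Step 2: the kernel.} Because $\nabla^2\widetilde{\mathcal R}(\zeta^\star)$ is positive semidefinite, $\delta\zeta$ lies in its kernel if and only if the quadratic form vanishes. Since $\eta(1-\eta)>0$ almost surely, this happens if and only if the affine function above vanishes $\mu_X$-almost surely. The law $\mu_X$ is a nondegenerate Gaussian mixture, so it charges no hyperplane, and an affine function vanishing almost surely must be identically zero. The kernel is therefore
\[
\delta w=-r^\star\sigma'(s_0)\delta s,\qquad \delta b=-\sigma(s_0)\delta r,
\]
with $(\delta r,\delta s)\in\mathbb R\times\mathbb R^d$ free. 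It is a linear space of dimension $d+1$.

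\textbf{Step 3: the decomposition.} The set $\mathcal E_2$ is the affine line $r\mapsto(w^\star,b^\star-r\sigma(s_0),r,0)$. Its tangent space is spanned by $(0,-\sigma(s_0),1,0)$, which is the $\delta s=0$ slice of the kernel. Setting $\delta r=0$ gives exactly $E_{\rm flat}$, which has dimension $d$. The two pieces intersect only at $0$, because the $\delta s$-component of the tangent vector vanishes and the $\delta r$-component of $E_{\rm flat}$ vanishes. Their sum is the whole kernel, which yields the direct sum and the dimension count.

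The only point that needs care is the first-order compensation. The term $r^\star\sigma'(s_0)\delta s^\top X$ must be merged into the linear coefficient of $X$, and this is where the hypotheses enter. If $r^\star=0$, the $\delta s$ directions are flat for a different reason. If $s\neq0$, the gate is nonaffine and no such merging is possible; this is the case of Lemma~\ref{lem:ker_hessian_tildeR_on_E1}.
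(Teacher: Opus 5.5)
Your proposal is correct and takes essentially the same approach as the paper. Both compute the Hessian at the Bayes point as $\mathbb E[\eta(1-\eta)\,\nabla g_{\zeta^\star}\nabla g_{\zeta^\star}^\top]$, use $s=0$ so that the $\delta s$ term merges into the linear coefficient of $X$, and read off the $(d+1)$-dimensional kernel. You also state two steps the paper leaves implicit: that the Hessian is positive semidefinite, and that $\mu_X$ charges no hyperplane.
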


\begin{corollary}[Failure of the Morse--Bott property in attention coordinates]
\label{cor:failure_morse_bott_theta2}
Assume \(m_1\neq-m_{-1}\) and let \(\theta^\star\in\Theta_2\) such that $r(\theta^\star)\neq0$. Then
\[
\dim\ker\nabla^2\mathcal R(\theta^\star)
-\dim T_{\theta^\star}\Theta_2=d>0.
\]
Thus the constant-gate component \(\Theta_2\) is not Morse--Bott.
\end{corollary}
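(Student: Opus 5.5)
The plan is to transfer Proposition~\ref{prop:failure_morse_bott_E2} from effective to attention coordinates through the map $\Phi$, exactly as in the proof of Lemma~\ref{lem:R_morse_bott_on_Theta1_reg}. The transfer rests on one fact: $D\Phi(\theta^\star)$ is surjective. Write $\theta^\star=(U_{11},u_{12},v_{21},v_{22})$. Since $\Phi(\theta^\star)\in\mathcal E_2$, we have $U_{11}\Gamma v_{21}=w^\star=\Gamma^{-1}(m_1-m_{-1})\neq0$, so $v_{21}\neq0$. Together with $m_1\neq-m_{-1}$, Lemma~\ref{lem:lda-regularity-main} (in the form of Lemma~\ref{lem:Dphi_surjective}) gives surjectivity of $D\Phi(\theta^\star)$. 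The parameter space has dimension $N=d^2+2d+1$ and the effective space has dimension $2d+2$. Hence $\dim\ker D\Phi(\theta^\star)=N-2d-2=d^2-1$.

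\textbf{Hessian kernel.} Set $\zeta^\star=\Phi(\theta^\star)$. Since $\zeta^\star$ is a minimizer, $D\widetilde{\mathcal R}(\zeta^\star)=0$, and the chain rule reduces the Hessian to
\[
D^2\mathcal R(\theta^\star)(h,h')=D^2\widetilde{\mathcal R}(\zeta^\star)\bigl(D\Phi(\theta^\star)h,D\Phi(\theta^\star)h'\bigr).
\]
As $\zeta^\star$ is a global minimizer, $\nabla^2\widetilde{\mathcal R}(\zeta^\star)$ is positive semidefinite. Its explicit form $\mathbb E[\eta(1-\eta)\nabla g\nabla g^\top]$ makes this visible, since $\sigma(g_{\zeta^\star})=\eta$ there as well. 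For a positive semidefinite form, a vector lies in the kernel if and only if its quadratic form vanishes. Therefore
\[
\ker\nabla^2\mathcal R(\theta^\star)=D\Phi(\theta^\star)^{-1}\bigl(\ker\nabla^2\widetilde{\mathcal R}(\zeta^\star)\bigr).
\]
By surjectivity, the preimage of a subspace $W$ has dimension $\dim\ker D\Phi(\theta^\star)+\dim W$. With $\dim W=d+1$ from Proposition~\ref{prop:failure_morse_bott_E2}, this gives $\dim\ker\nabla^2\mathcal R(\theta^\star)=d^2-1+d+1$.

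\textbf{Tangent space of $\Theta_2$.} I describe $\Theta_2$ near $\theta^\star$ as a regular level set. Define
\[
C_2(\theta)=\bigl(w(\theta)-w^\star,\ b(\theta)+r(\theta)\sigma(s_0)-b^\star,\ s(\theta)\bigr)\in\mathbb R^{2d+1},
\]
so that $\Theta_2=C_2^{-1}(0)$. Its differential is $D\Phi(\theta)$ followed by a surjective linear map $\mathbb R^{2d+2}\to\mathbb R^{2d+1}$, whose kernel is the line $T\mathcal E_2$. It is therefore surjective wherever $D\Phi$ is. This holds on a neighborhood of $\theta^\star$, since $v_{21}\neq0$ is an open condition and in fact holds on all of $\Theta_2$.

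The preimage theorem (Lemma~\ref{thm:preimage}) then makes $\Theta_2$ a submanifold near $\theta^\star$, with tangent space
\[
T_{\theta^\star}\Theta_2=\ker DC_2(\theta^\star)=D\Phi(\theta^\star)^{-1}(T_{\zeta^\star}\mathcal E_2).
\]
Its dimension is $d^2-1+1$. Subtracting gives the claimed difference $d>0$, so $\Theta_2$ is not Morse--Bott at $\theta^\star$.

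\textbf{Main obstacle.} There is little real difficulty once Proposition~\ref{prop:failure_morse_bott_E2} is assumed. The points needing care are the positive-semidefiniteness step, which makes the Hessian kernel the preimage of the effective kernel rather than merely containing it, and checking that $v_{21}\neq0$ on $\Theta_2$, which justifies surjectivity of $D\Phi(\theta^\star)$. If the proposition itself had to be proved, the work would be the a.s. identity $\delta w^\top x+\delta b+\delta r\,\sigma(s_0)+r^\star\sigma'(s_0)\delta s^\top x=0$ characterizing the effective kernel.
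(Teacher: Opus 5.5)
Your proposal is correct and follows the paper's proof: surjectivity of $D\Phi(\theta^\star)$ from $v_{21}\neq 0$ and $m_1\neq -m_{-1}$, identification of both $\ker\nabla^2\mathcal R(\theta^\star)$ and $T_{\theta^\star}\Theta_2$ as preimages under $D\Phi(\theta^\star)$, and a rank--nullity count. The differences are cosmetic. You justify the kernel identity via positive semidefiniteness, whereas the paper's factorization $D\Phi^\top\nabla^2\widetilde{\mathcal R}\,D\Phi$ with injective $D\Phi^\top$ gives it directly. You also derive the tangent space from an explicit regular level-set map $C_2$, where the paper cites Guillemin--Pollack.
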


\begin{proof}[Proof of Proposition~\ref{prop:failure_morse_bott_E2} and Corollary~\ref{cor:failure_morse_bott_theta2}]
Let $r^\star\in\mathbb{R}\setminus \{0\}$, and
\[
\zeta^\star =(w^\star,b^\star-r^\star\,\sigma(s_0),r^\star,0)\in \mathcal{E}_2
\]
A direct computation gives:
\[
T_{\zeta^\star}\mathcal E_2
=
\left\{
(0,-\sigma(s_0)\delta r,\delta r,0)
\;:\;
\delta r\in\mathbb R
\right\}.
\].
Moreover, the same computation as Proposition~\ref{prop:hessian_tildeR_on_E1} gives :
\[
\nabla^2 \widetilde{\mathcal R}(\zeta^\star)
=
\mathbb E\Big[
\eta(X)\bigl(1-\eta(X)\bigr)\,
\nabla g_{\zeta^\star}(X)\,\nabla g_{\zeta^\star}(X)^\top
\Big],
\]
Since $g_{\zeta}(x)=
w^\top x+b+r\,\sigma(s_0+s^\top x),$ we obtain:
\[
\nabla g_{\zeta^\star}(X)
=
\begin{pmatrix}
X\\[0.2em]
1\\[0.2em]
\sigma(s_0)\\[0.2em]
r^\star\sigma'(s_0)X
\end{pmatrix}.
\]
One can deduce :
\[
D^2 \widetilde{\mathcal R}(\zeta^\star)[\delta\zeta,\delta\zeta]
=E\Big[
\eta(X)\bigl(1-\eta(X)\bigr)\,
\bigl( \left( \delta{w}+ r^\star\sigma'(s_0)\delta s\right)^\top X +\delta b +\sigma(s_0)\delta r\bigr)^2
\Big]
\]
Hence,
\[
\ker \nabla^2 \widetilde{\mathcal R}(\zeta^\star) =\bigl\{ (-r^\star\sigma'(s_0)\delta s,-\sigma(s_0)\delta r, \delta r, \delta s)\: : \: \delta r\in \mathbb{R},\: \delta s\in \mathbb{R} ^d\bigr\}
\]
This proves the proposition.

For the corollary, Lemma~\ref{lem:Dphi_surjective} combined with the fact that $v_{21}(\theta^\star)\neq0$ shows that $D\Phi(\theta^\star)$ is surjective. We have $\nabla^2\mathcal{R}(\theta^\star)=D\Phi(\theta^\star)^\top\nabla^2 \widetilde{\mathcal R}(\zeta^\star)D\Phi(\theta^\star)$, so $\ker\nabla^2\mathcal{R}(\theta^\star)=D\Phi(\theta^\star)^{-1}\ker\nabla^2 \widetilde{\mathcal R}(\zeta^\star)$. Moreover, because $D\Phi(\theta^\star)$ is surjective, we have $T_{\theta^\star}\Theta_2=D\Phi(\theta^\star)^{-1}T_{\zeta^\star}\mathcal{E}_2$ (see \cite{guillemin1974differential} Chap. 1 $\S5$ exercise 5) . We can conclude using Rank-nullity theorem. Indeed, for any subspace $V\subset\mathbb R^{2d+2}=\operatorname{Im}D\Phi(\theta^\star)$:
\[
\dim D\Phi(\theta^\star)^{-1}(V)= \dim \ker D\Phi(\theta^\star) +\dim V
\]
Thus
\[
\dim \ker \nabla^2 \mathcal R(\theta^\star)
    -
    \dim T_{\theta^\star}\Theta_2
=
\dim \ker \nabla^2 \widetilde{\mathcal R}(\zeta^\star)
-
\dim T_{\zeta^\star}\mathcal E_2 
= d
\]

\end{proof}

These extra directions distinguish the constant-gate branch from the regular zero-amplitude branch. At points of $\mathcal E_2$ with $r^\star\neq0$, the local minimizer set is the constant-gate branch, yet the Hessian kernel is strictly larger than its tangent space. At $r^\star=0$, the two effective branches meet, so the regular-manifold picture also requires separate care. Under the asymmetry assumption of the corollary, the same excess of flat directions transfers to attention coordinates.

This calculation explains why the preceding Morse--Bott argument cannot be reused near $\Theta_2$. It does not, by itself, rule out convergence there or determine a slower rate: those questions require control beyond the Hessian. The diagnostics in Figure~\ref{fig:lda-local} illustrate the contrast between the two branches without supplying such a convergence theorem.

\subsection{Reduced-risk estimates and conserved balance (preparation for Theorem~\ref{thm:lda-krylov})}
\label{appendix:aux_lemma_invariant_submani_sym}

We now assume $m_1=-m_{-1}\eqqcolon m$ and $\pi_+=1/2$. The regularity argument based on $D\Phi$ no longer applies, but symmetry offers another route: on $\mathcal I_K$, the bias and the sigmoid contribution vanish, and the risk reduces to the linear logistic objective $L(w)$ with $w=U_{11}\Gamma v_{21}$.

Convexity in $w$ alone is not enough, because training still evolves two factors. The lemmas below supply the additional controls: coercivity bounds the effective predictor, positive curvature on the resulting compact region gives a uniform convexity constant, and a conserved balance between the factors prevents the effective dynamics from stalling under the prescribed initialization. These ingredients are assembled in Appendix~\ref{app:lda-krylov-proof}.

\begin{lemma}[Coercivity of the reduced linear risk]
\label{lem:coercivity_linear_reduced_risk}
Let
\[
L(w)
\coloneqq 
\widetilde{\mathcal R}(w,0,0,0)
=
\mathbb E\bigl[\ell_{\log}(Y,w^\top X)\bigr].
\]
Then \(L\) is coercive, with a linear lower bound. More precisely, there exists a constant
\(C_1>0\) such that, for every
\(w\in\mathbb R^d\),
\[
L(w)
\geq
C_1\|w\|.
\]
In particular,
\[
L(w)\longrightarrow +\infty
\qquad
\text{as }\|w\|\to+\infty.
\]
\end{lemma}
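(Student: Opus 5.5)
The plan is to bound the logistic loss from below by a hinge-type function that is positively homogeneous in $w$, and then compute the resulting expectation exactly with Lemma~\ref{lem:positive_part_gaussian}. We work in the setting of this subsection, $m_1=-m_{-1}=m$ and $\pi_+=1/2$. The pointwise starting bound is
\[
\ell_{\log}(y,t)=\log\bigl(1+e^{-yt}\bigr)\ \geq\ \max(-yt,0)=(-yt)_+ .
\]
It holds because $\log(1+e^{a})\geq \max(a,0)$ for every real $a$. Hence
\[
L(w)\geq \mathbb E\bigl[(-Y\,w^\top X)_+\bigr]
\]
for every $w\in\mathbb R^d$.

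Next we identify the law of $-Y w^\top X$ by conditioning on the class.
\begin{itemize}
\item Given $Y=1$, we have $X\sim\cN(m,\Gamma)$, so $-w^\top X\sim\cN(-w^\top m,\ w^\top\Gamma w)$.
\item Given $Y=-1$, we have $X\sim\cN(-m,\Gamma)$, so $w^\top X\sim\cN(-w^\top m,\ w^\top\Gamma w)$.
\end{itemize}
By the central symmetry, $-Yw^\top X$ therefore follows the same Gaussian law $\cN(\mu_w,\sigma_w^2)$ in both classes, hence unconditionally. Here $\mu_w=-w^\top m$ and $\sigma_w^2=w^\top\Gamma w\geq\lambda_{\min}(\Gamma)\|w\|^2$. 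For $w\neq 0$ we have $\sigma_w>0$, and Lemma~\ref{lem:positive_part_gaussian} gives
\[
L(w)\geq \mu_w\,\Phi\!\left(\frac{\mu_w}{\sigma_w}\right)+\sigma_w\,\varphi\!\left(\frac{\mu_w}{\sigma_w}\right).
\]
(In the general asymmetric case one would simply get a $\pi_+$-weighted sum of two such terms, and the same argument goes through.)

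The main step is to make the constant uniform, and homogeneity handles this. Write $w=\|w\|u$ with $\|u\|=1$. Then $\mu_w=\|w\|\mu_u$ and $\sigma_w=\|w\|\sigma_u$, while the ratio $\mu_w/\sigma_w=\mu_u/\sigma_u$ does not depend on $\|w\|$. The right-hand side therefore equals $\|w\|\,h(u)$, where
\[
h(u)=\mu_u\Phi(\mu_u/\sigma_u)+\sigma_u\varphi(\mu_u/\sigma_u)=\mathbb E\bigl[(\cN(\mu_u,\sigma_u^2))_+\bigr].
\]
The function $h$ has the following properties on the unit sphere.
\begin{itemize}
\item It is continuous, since $\sigma_u\geq\lambda_{\min}(\Gamma)^{1/2}>0$ there.
\item It is strictly positive, because the positive part of a nondegenerate Gaussian is nonnegative and not almost surely zero, so its mean is positive.
\end{itemize}
By compactness of the sphere, $C_1\coloneqq\min_{\|u\|=1}h(u)>0$. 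This gives $L(w)\geq C_1\|w\|$ for all $w\neq0$.

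The case $w=0$ is trivial, since $L(0)=\log 2\geq0$. Coercivity, $L(w)\to+\infty$ as $\|w\|\to+\infty$, then follows at once from the linear lower bound.
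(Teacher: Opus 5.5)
Your proof is correct and follows the paper's route: the hinge lower bound $(-yt)_+$, the observation that $-Yw^\top X$ is Gaussian by central symmetry, and Lemma~\ref{lem:positive_part_gaussian}. The only difference is cosmetic: you get the uniform constant from homogeneity and compactness of the unit sphere, whereas the paper uses Cauchy--Schwarz to confine the ratio $\mu_w/\sigma_w$ to $[-M,M]$, with $M=(m^\top\Gamma^{-1}m)^{1/2}$, and minimizes $t\Phi(t)+\varphi(t)$ on that interval.
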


\begin{proof}
Since $m_{-1}=-m_1$, we first observe that $YX \sim \mathcal N(m_1,\Gamma).$ For every $(y,t)\in\{-1,1\}\times\mathbb R$, the logistic loss satisfies
\[
\ell_{\log}(y,t)
=
\log(1+\exp(-yt))
\geq
(-yt)_+.
\]
Therefore, for every $w\in\mathbb R^d$,
\[
L(w)
=
\mathbb E\bigl[\ell_{\log}(Y,w^\top X)\bigr]
\geq
\mathbb E\bigl[(-w^\top YX)_+\bigr].
\]

Set
\[
Z_w
\sim
\mathcal N(\mu_w,\sigma_w^2),
\qquad
\mu_w\coloneqq -w^\top m_1,
\qquad
\sigma_w^2\coloneqq w^\top\Gamma w.
\]
Thus, for $w\neq 0$, using Lemma~\ref{lem:positive_part_gaussian},
\[
\mathbb E\bigl[(-w^\top YX)_+\bigr]
=
\mathbb E[(Z_w)_+]
=
\mu_w
\Phi\!\left(\frac{\mu_w}{\sigma_w}\right)
+
\sigma_w
\varphi\!\left(\frac{\mu_w}{\sigma_w}\right).
\]
So
\[
\mathbb E[(Z_w)_+]
=
\sigma_w
\left[
q_w \Phi(q_w)+\varphi(q_w)
\right],
\qquad
q_w\coloneqq \frac{\mu_w}{\sigma_w}.
\]

We show that the term in brackets is bounded from below by a positive
constant. Since $\Gamma$ is symmetric positive definite, by Cauchy Schwarz
inequality we get
\[
|w^\top m_1|
=
\left<\Gamma^{1/2} w,\Gamma^{-1/2}m_1 \right>
\leq
(w^\top\Gamma w)^{1/2}
(m_1^\top\Gamma^{-1}m_1)^{1/2}.
\]
Hence
\[
|q_w|
\leq
M,
\qquad
M\coloneqq (m_1^\top\Gamma^{-1}m_1)^{1/2}.
\]
Let $g(t)\coloneqq t\Phi(t)+\varphi(t)$, the function $g$ is continuous and strictly positive on $\mathbb R$, since (still by Lemma~\ref{lem:positive_part_gaussian})
\[
g(t)=\mathbb E[G_+],
\qquad G\sim\mathcal N(t,1).
\]
Therefore, we can define
\[
c_0\coloneqq  \inf_{w\neq0}q_w \Phi(q_w)+\varphi(q_w)=\min_{t\in[-M,M]} g(t)>0.
\]
It follows that, for every $w\neq 0$,
\[
\mathbb E[(Z_w)_+]
\geq
c_0 \sigma_w.
\]
Moreover, since $\Gamma$ is symmetric positive definite,
\[
\sigma_w
=
(w^\top\Gamma w)^{1/2}
\geq
\sqrt{\lambda_{\min}(\Gamma)}\,\|w\|.
\]
Consequently,
\[
L(w)
\geq
\mathbb E[(Z_w)_+]
\geq
c_0\sqrt{\lambda_{\min}(\Gamma)}\,\|w\|.
\]
Thus the desired inequality holds with
\[
C_1\coloneqq c_0\sqrt{\lambda_{\min}(\Gamma)}>0
\]
The case $w=0$ is immediate since $L(0)=\log2\geq 0$.
\end{proof}

\begin{lemma}[Local strong convexity along the effective trajectory]
\label{lem:local_strong_convexity_along_trajectory}
Let \((\theta_t)_{t\geq 0}\) be a gradient-flow trajectory initialized at
\(\theta_0\in\mathcal I_K\), and let
\[
w_t=U_{11}(t)\Gamma v_{21}(t).
\]
Define
\[
\mathcal C_{\theta_0}
\coloneqq 
\overline{
\operatorname{Conv}
\left(
\{w_t:t\geq 0\}\cup\{w^\star\}
\right)
}.
\]
Then there exists a constant
$
\mu_0=\mu_0(\theta_0)>0
$
such that \(L\) is \(\mu_0\)-strongly convex on \(\mathcal C_{\theta_0}\). In
other words, for every \(w,w'\in\mathcal C_{\theta_0}\),
\[
L(w')
\geq
L(w)
+
\langle \nabla L(w),w'-w\rangle
+
\frac{\mu_0}{2}\|w'-w\|^2.
\]
\end{lemma}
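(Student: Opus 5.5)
The plan has two parts: show that the set $\mathcal C_{\theta_0}$ is bounded, then show that the Hessian of $L$ is uniformly positive definite on bounded sets. Strong convexity with a uniform constant on the convex set $\mathcal C_{\theta_0}$ then follows by integrating the Hessian along segments.

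\textbf{Boundedness.} On $\mathcal I_K$ the bias and the sigmoid amplitude vanish, since $u_{12}=0$, $v_{22}=0$, $v_{21}\in K^\perp\subset m^\perp$ and $U_{11}m=0$. Hence $\mathcal R(\theta_t)=L(w_t)$ as long as $\theta_t\in\mathcal I_K$, and invariance of $\mathcal I_K$ is assumed here as part of the Krylov theorem's setting. Gradient flow is nonincreasing in the risk, so
\[
L(w_t)=\mathcal R(\theta_t)\le \mathcal R(\theta_0)=L(w_0)
\]
for all $t$ in the interval of existence. Lemma~\ref{lem:coercivity_linear_reduced_risk} then gives $\|w_t\|\le L(w_0)/C_1$. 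Consequently the set $\{w_t\}\cup\{w^\star\}$ lies in the closed ball $\overline B(0,\rho)$ with $\rho=\max(L(w_0)/C_1,\|w^\star\|)$. Since this ball is convex and closed, $\mathcal C_{\theta_0}\subset\overline B(0,\rho)$, and $\mathcal C_{\theta_0}$ is compact and convex.

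\textbf{Curvature.} The Hessian is
\[
\nabla^2L(w)=\mathbb E\bigl[\sigma(w^\top X)(1-\sigma(w^\top X))XX^\top\bigr].
\]
For a unit vector $u$, the quantity $u^\top\nabla^2L(w)u=\mathbb E[\sigma'(w^\top X)(u^\top X)^2]$ is continuous in $(w,u)$ on the compact set $\overline B(0,\rho)\times\mathbb S^{d-1}$, so it suffices to prove pointwise positivity. Suppose $u^\top\nabla^2L(w)u=0$. Because $\sigma'>0$ everywhere, this forces $u^\top X=0$ almost surely. That is impossible, since $X$ is a mixture of nondegenerate Gaussians and $u^\top X$ has a density. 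Taking the minimum over the compact set gives a constant $\mu_0>0$ with $\nabla^2L\succeq\mu_0 I_d$ on the ball.

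\textbf{Conclusion.} For $w,w'\in\mathcal C_{\theta_0}$, Taylor's formula with integral remainder along the segment $[w,w']$, which stays in $\mathcal C_{\theta_0}$ by convexity, yields the stated inequality. The only delicate point is the ordering of the argument: the energy bound must be established on the maximal existence interval, and it also gives global existence of the flow via boundedness of $w_t$. Bounding $w_t$ alone does not bound the factors $U_{11}$ and $v_{21}$, but this is not needed for the present lemma.
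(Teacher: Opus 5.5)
Your proposal is correct and follows the paper's proof. On $\mathcal I_K$ the risk equals $L(w_t)$ and is nonincreasing, so coercivity of $L$ confines $\mathcal C_{\theta_0}$ to a compact set; on that set, continuity and pointwise positive definiteness of $\nabla^2 L(w)=\mathbb E[\sigma'(w^\top X)XX^\top]$ give a uniform lower bound $\mu_0$ on its smallest eigenvalue. You only make explicit steps the paper leaves implicit: the radius $L(w_0)/C_1$, why $u^\top X=0$ almost surely is impossible, and the Taylor integral-remainder step. None of these changes the argument.
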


\begin{proof}
Since the trajectory remains in \(\mathcal I_K\), one has
\[
\mathcal R(\theta_t)=L(w_t)
\qquad
\text{for every }t\geq 0.
\]
Moreover, \(\mathcal R\) is non-increasing along the gradient flow. Hence
\[
L(w_t)
=
\mathcal R(\theta_t)
\leq
\mathcal R(\theta_0)
=
L(w_0)
\qquad
\text{for every }t\geq 0.
\]
This implies that \((w_t)_{t\geq 0}\) is bounded. Indeed, if
\[
\limsup_{t\to+\infty}\|w_t\|=+\infty,
\]
then, by Lemma~\ref{lem:coercivity_linear_reduced_risk}, one would have
\[
\limsup_{t\to+\infty} L(w_t)=+\infty,
\]
which contradicts the uniform bound
\[
L(w_t)\leq L(w_0).
\]
Thus \((w_t)_{t\geq 0}\) is bounded. It follows that
$
\{w_t:t\geq 0\}\cup\{w^\star\}
$
is bounded, and therefore, because we work in finite dimension,
$
\mathcal C_{\theta_0}
=
\overline{
\operatorname{Conv}
\left(
\{w_t:t\geq 0\}\cup\{w^\star\}
\right)
}
$
is compact.

Now \(L\) is the population logistic risk restricted to linear scores,
\[
L(w)=\mathbb E\bigl[\ell_{\log}(Y,w^\top X)\bigr].
\]
Its Hessian is
\[
\nabla^2L(w)
=
\mathbb E\left[
\sigma'(w^\top X)XX^\top
\right].
\]
Under the non-degeneracy of the LDA distribution, this Hessian is positive
definite for every \(w\in\mathbb R^d\). Since \(\nabla^2L\) is continuous
and \(\mathcal C_{\theta_0}\) is compact, the map
\[
w\mapsto \lambda_{\min}\bigl(\nabla^2L(w)\bigr)
\]
attains a positive minimum on \(\mathcal C_{\theta_0}\). We define
\[
\mu_0
\coloneqq 
\min_{w\in\mathcal C_{\theta_0}}
\lambda_{\min}\bigl(\nabla^2L(w)\bigr)
>0.
\]
Therefore,
\[
\nabla^2L(w)\succeq \mu_0 I
\qquad
\text{for every }w\in\mathcal C_{\theta_0}.
\]
This is exactly the \(\mu_0\)-strong convexity of \(L\) on
\(\mathcal C_{\theta_0}\).
\end{proof}

\begin{lemma}[Conservation of the balance on \(\mathcal I_K\)]
\label{lem:conservation_balance_NK}
Let
\[
\theta_t=(U_{11}(t),u_{12}(t),v_{21}(t),v_{22}(t))
\]
be a gradient-flow trajectory such that
\[
\theta_t\in\mathcal I_K
\qquad
\text{for every }t\geq 0.
\]
Then
\[
\|U_{11}(t)\|_F^2-\|v_{21}(t)\|^2
=
\|U_{11}(0)\|_F^2-\|v_{21}(0)\|^2
\qquad
\text{for every }t\geq 0.
\]
\end{lemma}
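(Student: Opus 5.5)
The plan is to differentiate the two quantities along the flow and show that their time derivatives coincide. On $\mathcal I_K$ the risk reduces to the linear objective, so that $\mathcal R(\theta_t)=L(w_t)$ with $w_t=U_{11}(t)\Gamma v_{21}(t)$. More precisely, on $\mathcal I_K$ we have $u_{12}=\mathbf 0$ and $v_{22}=0$, and by symmetry $\Delta m=2m$. Moreover $m\in K$ and $U_{11}P_K=\mathbf 0$, hence $U_{11}m=0$; this gives $s(\theta)=0$. Further, $v_{21}\in K^\perp$ gives $v_{21}^\top m=0$, hence $r(\theta)=0$ and $b(\theta)=v_{21}^\top m_{-1}=-v_{21}^\top m=0$.

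Evaluating the gradients of Lemma~\ref{lem:grad_of_R} at such points gives
\[
\nabla_{U_{11}}\mathcal R=\nabla_w\widetilde{\mathcal R}(\zeta)\,(\Gamma v_{21})^\top+\nabla_s\widetilde{\mathcal R}(\zeta)\,\Delta m^\top,
\qquad
\nabla_{v_{21}}\mathcal R=\Gamma U_{11}^\top\nabla_w\widetilde{\mathcal R}(\zeta)+\partial_b\widetilde{\mathcal R}(\zeta)\,m_{-1}+\partial_r\widetilde{\mathcal R}(\zeta)\,\Delta m .
\]
Since $r=0$, Lemma~\ref{lem:gradients_tildeR} yields $\nabla_s\widetilde{\mathcal R}=0$. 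Writing $\gamma_t\coloneqq\nabla_w\widetilde{\mathcal R}(\Phi(\theta_t))$, the flow equations become
\[
\dot U_{11}=-\gamma_t(\Gamma v_{21})^\top,
\qquad
\dot v_{21}=-\Gamma U_{11}^\top\gamma_t-\bigl(\partial_b\widetilde{\mathcal R}\,m_{-1}+\partial_r\widetilde{\mathcal R}\,\Delta m\bigr).
\]
The correction term is collinear with $m$, because $m_{-1}=-m$ and $\Delta m=2m$.

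Next I compute the two derivatives. On one side,
\[
\tfrac{d}{dt}\|U_{11}\|_F^2=2\langle U_{11},\dot U_{11}\rangle_F=-2\,\gamma_t^\top U_{11}\Gamma v_{21}.
\]
On the other side,
\[
\tfrac{d}{dt}\|v_{21}\|^2=2\langle v_{21},\dot v_{21}\rangle=-2\,v_{21}^\top\Gamma U_{11}^\top\gamma_t-2c_t\,v_{21}^\top m,
\]
where $c_t$ denotes the scalar coefficient of the term collinear with $m$. Because $\theta_t\in\mathcal I_K$, we have $v_{21}(t)\in K^\perp$, so $v_{21}^\top m=0$ and the extra term vanishes. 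Using the symmetry of $\Gamma$, one has $v_{21}^\top\Gamma U_{11}^\top\gamma_t=\gamma_t^\top U_{11}\Gamma v_{21}$. The two derivatives are therefore equal, their difference has zero derivative, and integrating from $0$ to $t$ gives the stated conservation.

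The only delicate point is to make sure that the non-$w$ components of the gradient do not contribute, namely the terms in $\nabla_s$, $\partial_b$ and $\partial_r$. This rests on two facts: $r=0$ kills $\nabla_s\widetilde{\mathcal R}$, and the residual terms in $\dot v_{21}$ are collinear with $m$, which is orthogonal to $v_{21}$. No invariance argument is needed here, since membership $\theta_t\in\mathcal I_K$ for all $t$ is part of the hypothesis.
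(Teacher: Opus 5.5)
Your proof is correct and follows the paper's argument: differentiate $\|U_{11}\|_F^2-\|v_{21}\|^2$ along the reduced flow $\dot U_{11}=-\nabla_w\widetilde{\mathcal R}\,(\Gamma v_{21})^\top$, $\dot v_{21}=-\Gamma U_{11}^\top\nabla_w\widetilde{\mathcal R}+(\text{terms along } m)$, and cancel the two contributions using $\Gamma=\Gamma^\top$. The only difference is minor. The paper drops the $\partial_b,\partial_r$ terms in $\nabla_{v_{21}}\mathcal R$ outright, since $s=0$ and $s_0=0$ give $2\partial_r\widetilde{\mathcal R}=\partial_b\widetilde{\mathcal R}$, so those terms cancel. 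You instead remove them through $v_{21}^\top m=0$, which has the small advantage of not using $\pi_+=1/2$.
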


\begin{proof}
On \(\mathcal I_K\), the flow reduces to
\[
\dot U_{11}(t)=-G_{w_t}(\Gamma v_{21}(t))^\top,
\qquad
\dot v_{21}(t)=-\Gamma U_{11}(t)^\top G_{w_t},
\]
where
\[
G_{w_t}\coloneqq \mathbb E\left[
\bigl(\sigma(w^\top X)-\eta(X)\bigr)X
\right]
=
\nabla L(w_t),
\qquad
w_t=U_{11}(t)\Gamma v_{21}(t).
\]
(see proof of Lemma~\ref{lem:stability_krylov_submanifold} for more details)

Set
\[
F(t)\coloneqq \|U_{11}(t)\|_F^2-\|v_{21}(t)\|^2.
\]
Then, using \(\Gamma=\Gamma^\top\),
\[
\begin{aligned}
F'(t)
&=
2\langle U_{11}(t),\dot U_{11}(t)\rangle_F
-
2\langle v_{21}(t),\dot v_{21}(t)\rangle \\
&=
-2G_{w_t}^\top U_{11}(t)\Gamma v_{21}(t)
+
2v_{21}(t)^\top\Gamma U_{11}(t)^\top G_{w_t} \\
&=
-2G_{w_t}^\top w_t
+
2G_{w_t}^\top w_t \\
&=0.
\end{aligned}
\]
Thus \(F(t)\) is constant along the flow, which proves the claim.
\end{proof}

\subsection{Feasibility of the Krylov initialization (complement to Theorem~\ref{thm:lda-krylov})}

The convergence theorem requires a nonzero initialization in $K^\perp$, so it is useful to determine when this space is available. The answer is spectral: successive applications of $\Gamma$ to $m$ separate the distinct eigenvalues represented in $m$, but cannot generate every direction inside a repeated eigenspace. The following proposition states the resulting criterion.

\begin{proposition}
\label{prop:krylov_generated_space}
For symmetric positive-definite $\Gamma$, the Krylov space
$K=\operatorname{span}\{m,\Gamma m,\ldots,\Gamma^{d-1}m\}$ equals
$\mathbb R^d$ if and only if $\Gamma$ has $d$ distinct eigenvalues and the
projection of $m$ onto every eigenspace of $\Gamma$ is nonzero.
\end{proposition}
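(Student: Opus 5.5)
I will diagonalize $\Gamma$ and reduce the question to a Vandermonde rank computation. Since $\Gamma$ is symmetric positive definite, the spectral theorem gives an orthogonal decomposition $\mathbb R^d=\bigoplus_{j=1}^p E_j$ into eigenspaces with distinct eigenvalues $\lambda_1,\dots,\lambda_p>0$, where $p\le d$. Write $m=\sum_{j=1}^p m^{(j)}$ with $m^{(j)}=P_{E_j}m$. Then for every $k\ge 0$,
\[
\Gamma^k m=\sum_{j=1}^p \lambda_j^k m^{(j)},
\]
so every Krylov vector lies in $W\coloneqq\operatorname{span}\{m^{(j)}:\ m^{(j)}\neq 0\}$. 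Hence $K\subseteq W$ and $\dim K\le \dim W\le p$.

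The first step is the converse inclusion $W\subseteq K$. Let $J=\{j: m^{(j)}\neq0\}$ and $q=|J|$. The coefficient matrix of $(\Gamma^k m)_{0\le k\le d-1}$ in the family $(m^{(j)})_{j\in J}$ is the $d\times q$ Vandermonde matrix $(\lambda_j^k)$. Its nodes $\lambda_j$, $j\in J$, are distinct, and $q\le p\le d$ rows are available, so the matrix has full column rank $q$. Since the nonzero $m^{(j)}$ lie in mutually orthogonal subspaces, they are linearly independent. It follows that $K=W$ and $\dim K=q$.

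The conclusion then follows from counting dimensions. We have $K=\mathbb R^d$ if and only if $q=d$. Because $q\le p\le d$, this holds if and only if $p=d$, meaning $\Gamma$ has $d$ distinct eigenvalues, and $q=p$, meaning every projection $m^{(j)}$ is nonzero. Both directions of the equivalence come from this single identity $\dim K=|J|$.

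The only delicate point is the rank claim for the Vandermonde block. One must use the distinctness of the nodes indexed by $J$ and the fact that the number of powers, $d$, is at least $q$. A second subtlety is that a repeated eigenvalue contributes only the single direction $m^{(j)}$ inside its eigenspace, which is exactly why $\dim E_j>1$ forces $K\neq\mathbb R^d$.
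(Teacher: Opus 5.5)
Your proof is correct. It uses the same two ingredients as the paper's proof (the spectral decomposition of $\Gamma$ and invertibility of a Vandermonde matrix with distinct nodes), but organizes them differently.

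The paper proves the two directions separately. Necessity is by contraposition with two projection arguments: if $\lambda_1=\lambda_2$, the projection of $K$ onto $\operatorname{span}(e_1,e_2)$ has dimension at most one; if $m$ has zero component in an eigenspace, $K$ projects to $\{0\}$ there. Sufficiency follows because a vanishing combination $\sum_\ell a_\ell\Gamma^{\ell-1}m=0$ produces a polynomial of degree at most $d-1$ with $d$ distinct roots.

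You instead establish the exact identity $K=\operatorname{span}\{P_{E_j}m: P_{E_j}m\neq0\}$, hence $\dim K=|J|$. Both directions then follow from $|J|\le p\le d$.

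The paper's route is slightly shorter, since it only needs the square Vandermonde in the nondegenerate case. Yours needs the rectangular full-column-rank step, and your phrase ``$q\le p\le d$ rows are available'' should read that there are $d\ge q$ rows. The step itself is sound: the leading $q\times q$ block is a square Vandermonde with distinct nodes, and full rank of the coefficient matrix together with the linear independence of the $P_{E_j}m$ gives surjectivity onto $W$.

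In exchange, your route gives strictly more: it identifies $K$ and $\dim K$ in every degenerate case. The paper uses exactly this information informally afterwards, e.g.\ $K=\operatorname{span}(m)$ for $\Gamma=\gamma^2 I_d$, and the regime $\dim K\ll d$ in the initialization remark. Your formula characterizes that regime as $m$ having nonzero components in only a few eigenspaces of $\Gamma$.
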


\begin{proof}
Since $\Gamma$ is symmetric, there exists an orthonormal basis
$(e_1,\ldots,e_d)$ of $\mathbb R^d$ made of eigenvectors of $\Gamma$. We write
\[
\Gamma e_i=\lambda_i e_i,
\qquad i=1,\ldots,d,
\]
and
\[
m=\sum_{i=1}^d m_i e_i.
\]

We first prove the direct implication by contraposition. Assume first that
$\Gamma$ does not have $d$ distinct eigenvalues. Without loss of generality,
suppose that $\lambda_1=\lambda_2$. Let $P$ be the orthogonal projection onto $E\coloneqq \operatorname{span}(e_1,e_2)$. Then, for every $k\geq 0$,
\[
P\Gamma^k m
=
\lambda_1^k(m_1e_1+m_2e_2).
\]
Therefore,
\[
P K
=
\operatorname{span}\{P\Gamma^k m : k=0,\ldots,d-1\}
\subset
\operatorname{span}\{m_1e_1+m_2e_2\}.
\]
Hence $\dim(PK)\leq 1$, whereas $\dim(E)=2$. Thus $K$ cannot be equal to
$\mathbb R^d$.

Assume now that the component of $m$ in one eigenspace is zero. For
instance, suppose that the projection of $\Delta m$ onto the eigenspace
associated with $\lambda_1$ is zero, which means $m_1=0$. Let $P_1$ be the orthogonal projection
onto that eigenspace. We have
for every $k\geq 0$,
\[
P_1\Gamma^k m
=
\Gamma^k P_1m
=
0.
\]
Therefore,
\[
P_1 K=\{0\},
\]
so $K$ cannot be equal to $\mathbb R^d$. This proves that if
$K=\mathbb R^d$, then $\Gamma$ has $d$ distinct eigenvalues and $m$ has
a nonzero component in each eigenspace of $\Gamma$.

Conversely, assume that $\Gamma$ has $d$ distinct eigenvalues
$\lambda_1,\ldots,\lambda_d$ and that the components of $ m$ in the
corresponding eigendirections are all nonzero (which means $m_i\neq 0 $ for $ i=1,\ldots,d$).

We prove that the family
\[
 m,\Gamma m,\ldots,\Gamma^{d-1} m
\]
is linearly independent. Let $a_1,\ldots,a_d\in\mathbb R$ be such that
\[
\sum_{\ell=1}^d a_\ell \Gamma^{\ell-1} m=0.
\]
We have
\[
\Gamma^{\ell-1} m
=
\sum_{i=1}^d m_i \lambda_i^{\ell-1} e_i.
\]
Therefore,
\[
0
=
\sum_{\ell=1}^d a_\ell \Gamma^{\ell-1} m
=
\sum_{i=1}^d
m_i
\left(
\sum_{\ell=1}^d a_\ell \lambda_i^{\ell-1}
\right)
e_i.
\]
Since $(e_1,\ldots,e_d)$ is a basis, it follows that for every
$i=1,\ldots,d$,
\[
m_i
\left(
\sum_{\ell=1}^d a_\ell \lambda_i^{\ell-1}
\right)
=0.
\]
Since $m_i\neq 0$, we obtain
\[
\sum_{\ell=1}^d a_\ell \lambda_i^{\ell-1}=0,
\qquad i=1,\ldots,d.
\]
Now define the polynomial
\[
P(X)\coloneqq \sum_{\ell=1}^d a_\ell X^{\ell-1}.
\]
Then
\[
P(\lambda_i)=0,
\qquad i=1,\ldots,d.
\]
Hence $P$ is a polynomial of degree at most $d-1$ having $d$ distinct roots.
Therefore $P$ is identically zero. Consequently,
\[
a_1=\cdots=a_d=0.
\]
Thus the family
\[
m,\Gamma m,\ldots,\Gamma^{d-1}m
\]
is linearly independent. Since it contains $d$ vectors in $\mathbb R^d$, it is
a basis of $\mathbb R^d$. This proves $K=\mathbb R^d$.

\end{proof}

\begin{remark}[Initialization and the codimension of the Krylov space]
\label{remark:init_sym_setting}
The only nontrivial initialization constraint is
\(v_{21}(0)\in K^\perp\setminus\{0\}\). When \(\dim K\ll d\), an isotropic
random initialization has only a fraction of order \(\dim(K)/d\) of its
energy in \(K\). In particular, if \(\Gamma=\gamma^2 I_d\), then
\(K=\operatorname{span}(m)\), so a normalized Gaussian initialization lies at
distance \(O_{\mathbb P}(d^{-1/2})\) from \(K^\perp\).
\end{remark}

The proximity of a random initialization to $K^\perp$ should not be confused with exact membership: the convergence theorem below assumes the latter. When $K\neq\mathbb R^d$, this constraint can be enforced by choosing a nonzero vector in $K^\perp$.

\subsection{Invariance of the Krylov subspace (preparation for Theorem~\ref{thm:lda-krylov})}

The Krylov constraint is designed to be preserved by the flow. Orthogonality to $m$ alone would not suffice, since the factor dynamics also involve multiplication by $\Gamma$. By construction, $K$ and $K^\perp$ are $\Gamma$-invariant. Together with $U_{11}P_K=0$, this closes the dynamics on $\mathcal I_K$ and keeps the predictor linear throughout training.

\begin{lemma}
\label{lem:stability_krylov_submanifold}
In the symmetric balanced setting, if $\theta_0\in\mathcal I_K$, then the
gradient-flow trajectory remains in $\mathcal I_K$ for every $t\geq0$.
\end{lemma}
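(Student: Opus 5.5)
The plan is to show that the vector field $-\nabla\mathcal R$ is tangent to the linear subspace $\mathcal I_K$ at every point of $\mathcal I_K$. Since $\mathcal R$ is smooth, its gradient flow has unique solutions. The restriction of the flow to the subspace $\mathcal I_K$ is then itself a well-posed ODE, and its solution coincides with the full trajectory by uniqueness. So the whole argument reduces to evaluating the gradient formulas of Lemma~\ref{lem:grad_of_R} at a point $\theta\in\mathcal I_K$. We then check that $\dot u_{12}=0$, $\dot v_{22}=0$, $\dot v_{21}\in K^\perp$ and $\dot U_{11}P_K=0$.

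\textbf{Step 1: the effective parameters on $\mathcal I_K$.} With $\Delta m=2m$ and $m_{-1}=-m$, and using $v_{21}\perp m$ and $u_{12}=0$, $v_{22}=0$, $U_{11}m=0$ (because $m\in K$), we get:
\begin{itemize}
\item $s=2u_{12}+U_{11}\Delta m=0$;
\item $r=2v_{21}^\top m=0$;
\item $b=-v_{21}^\top m=0$;
\item $s_0=0$, since $\pi_+=1/2$.
\end{itemize}
Hence $f_\theta(x)=w^\top x$ with $w=U_{11}\Gamma v_{21}$, and the gate is constant, $\sigma(0)=1/2$, with $\sigma'(0)=1/4$. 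We also have $w\in K^\perp$. Indeed, $\Gamma v_{21}\in K^\perp$ because $K^\perp$ is $\Gamma$-invariant. Moreover $U_{11}^\top$ maps into $K^\perp$ (since $U_{11}P_K=0$ means $P_KU_{11}^\top=0$), so the range of $U_{11}$ lies in $(\ker U_{11}^\top)^\perp$. I will need to be careful here and may only use the weaker fact that follows.

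\textbf{Step 2: symmetry of the residual.} Write $\delta(X)=\sigma(w^\top X)-\eta(X)$, where $\eta(x)=\sigma(2m^\top\Gamma^{-1}x)$ in this setting. The law of $X$ is symmetric under $x\mapsto -x$, and $\delta$ is odd. Therefore:
\begin{itemize}
\item $\partial_b\widetilde{\mathcal R}=\mathbb E[\delta]=0$;
\item $\partial_r\widetilde{\mathcal R}=\tfrac12\mathbb E[\delta]=0$;
\item $\nabla_s\widetilde{\mathcal R}=r\cdot(\dots)=0$, since $r=0$.
\end{itemize}
Lemma~\ref{lem:grad_of_R} then gives $\nabla_{u_{12}}\mathcal R=0$ and $\partial_{v_{22}}\mathcal R=0$. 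It also gives $\nabla_{U_{11}}\mathcal R=G(\Gamma v_{21})^\top$ and $\nabla_{v_{21}}\mathcal R=\Gamma U_{11}^\top G$, where $G=\nabla_w\widetilde{\mathcal R}=\mathbb E[\delta(X)X]$.

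\textbf{Step 3: the remaining two constraints.} First, $\dot U_{11}P_K=-G(P_K\Gamma v_{21})^\top=0$, because $\Gamma v_{21}\in K^\perp$. Second, $\dot v_{21}=-\Gamma U_{11}^\top G$. Here $U_{11}^\top G\in K^\perp$, since $P_KU_{11}^\top=(U_{11}P_K)^\top=0$, and $\Gamma$ preserves $K^\perp$. So $\dot v_{21}\in K^\perp$.

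Notably, no information on $G$ itself is needed. The constraint $U_{11}P_K=0$ absorbs whatever direction $G$ has. The genuinely delicate point is Step 2, namely the vanishing of $\partial_b$ and $\partial_r$. This requires both the oddness of $\delta$, which uses $b^\star=0$, and the central symmetry of the balanced mixture $\tfrac12\mathcal N(m,\Gamma)+\tfrac12\mathcal N(-m,\Gamma)$. It is the only place where $\pi_+=1/2$ enters.

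The argument closes by uniqueness. Consider the ODE restricted to $\mathcal I_K$, with the projected (here, identical) vector field. Its solution is a solution of the full flow, so it equals $\theta_t$.
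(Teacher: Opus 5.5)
Your proof is correct and follows the paper's argument. On $\mathcal I_K$ one has $b=r=s=0$ and $s_0=0$, so Lemma~\ref{lem:grad_of_R} gives $\nabla_{U_{11}}\mathcal R=G(\Gamma v_{21})^\top$ and $\nabla_{v_{21}}\mathcal R=\Gamma U_{11}^\top G$, with vanishing $u_{12}$- and $v_{22}$-components. The $\Gamma$-invariance of $K^\perp$ then preserves the constraints, and your uniqueness argument makes rigorous the paper's final ``it follows''.

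Two small remarks. First, the parity step you call delicate is not needed. Because the gate is identically $\sigma(0)=\tfrac12$, one has $\partial_r\widetilde{\mathcal R}=\tfrac12\partial_b\widetilde{\mathcal R}$. The $(b,r)$-contributions are then $\partial_b\widetilde{\mathcal R}\,(-1+2\sigma(0))=0$ and $\partial_b\widetilde{\mathcal R}\,(m_{-1}+\tfrac12\Delta m)=0$, whatever the value of $\partial_b\widetilde{\mathcal R}$. Second, your aside that $w\in K^\perp$ is false in general: only the kernel of $U_{11}$ is constrained, not its range, and indeed $w_t\to w^\star=2\Gamma^{-1}m\in K$. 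You never use this claim, so the proof is unaffected.
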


\begin{proof}
Let
$
\theta^K=(U_{11},u_{12},v_{21},v_{22})\in\mathcal I_K.
$
By definition of \(\mathcal I_K\), one has
\[
u_{12}=0,
\qquad
v_{22}=0,
\qquad
v_{21}\in K^\perp,
\qquad
U_{11}P_K=0.
\]
Since \(\Delta m\in K\), this implies $v_{21}^\top\Delta m=0$ and $U_{11}\Delta m=0.$ Therefore, on \(\mathcal I_K\),
\[
b(\theta^K)=0,
\qquad
r(\theta^K)=0,
\qquad
s(\theta^K)=0.
\]

We now set $G_w
\coloneqq 
\mathbb E\left[
\bigl(\sigma(w^\top X)-\eta(X)\bigr)X
\right].
$
Using Lemma~\ref{lem:grad_of_R} (and the fact that $s_0=0$ in the symmetric setting), we immediately obtain
\[
\nabla_{u_{12}}\mathcal R(\theta^K)=0, \qquad \nabla_{v_{22}}\mathcal R(\theta^K)=0, \qquad
\nabla_{U_{11}}\mathcal R(\theta^K)=G_w(\Gamma v_{21})^\top, \qquad
\nabla_{v_{21}}\mathcal R(\theta^K)=\Gamma U_{11}^\top G_w.
\]
We now show that the vector field of the gradient flow stays in
\(\mathcal I_K\). We already know that at \(\theta^K\in\mathcal I_K\), $\dot u_{12}=0$ and $\dot v_{22}=0$. Let \(x_K\in K\). Since \(v_{21}\in K^\perp\) and \(K^\perp\) is stable under
\(\Gamma\), one has $\Gamma v_{21}\in K^\perp.$
Therefore
\[
\dot U_{11}x_K
=
-\nabla_{U_{11}}\mathcal R(\theta^K)x_K
=
-G_w(\Gamma v_{21})^\top x_K
=
0.
\]
Since this holds for every \(x_K\in K\), we obtain $\dot U_{11}P_K=0.$

It remains to check the constraint on \(v_{21}\). Since \(U_{11}P_K=0\), one
has $\operatorname{Im}(U_{11}^\top)\subset K^\perp$ and thus $U_{11}^\top G_w\in K^\perp$. Because \(K^\perp\) is stable under \(\Gamma\), it follows that
\[
\dot v_{21}
=
-\nabla_{v_{21}}\mathcal R(\theta^K)
=
-\Gamma U_{11}^\top G_w
\in K^\perp.
\]

We have therefore shown that, whenever \(\theta_t\in\mathcal I_K\),
\[
\dot u_{12}(t)=0,
\qquad
\dot v_{22}(t)=0,
\qquad
\dot U_{11}(t)P_K=0,
\qquad
\dot v_{21}(t)\in K^\perp.
\]
Equivalently,
\[
\dot\theta_t\in \mathcal I_K 
\]

It follows that if $\theta_0\in\mathcal I_K$, then the whole trajectory remains in \(\mathcal I_K\).
\end{proof}

\subsection{Global convergence on the invariant subspace (proof of Theorem~\ref{thm:lda-krylov})}
\label{app:lda-krylov-proof}

We now combine invariance with the controls established above. The proof first derives a preconditioned gradient flow for $w_t=U_{11}(t)\Gamma v_{21}(t)$. The conserved balance and nonzero initialization provide a uniform lower bound on its preconditioner, while coercivity keeps $w_t$ in a region of uniform positive curvature. These facts yield exponential convergence of the risk and effective predictor. A final estimate then bounds the factors and makes their velocities integrable, establishing convergence in the original attention parameters as well.

\begin{proof}[Proof of Theorem~\ref{thm:lda-krylov}]
Let
$
\theta_0=(U_{11}(0),u_{12}(0),v_{21}(0),v_{22}(0))
$
satisfy the initialization assumptions of the theorem, namely
\[
U_{11}(0)=0,\qquad u_{12}(0)=0,\qquad v_{22}(0)=0,
\qquad v_{21}(0)\in K^\perp\setminus\{0\}.
\]
Then \(\theta_0\in\mathcal I_K\). By
Lemma~\ref{lem:stability_krylov_submanifold}, the whole trajectory remains in
\(\mathcal I_K\). In particular,
\[
u_{12}(t)=0,\qquad v_{22}(t)=0,\qquad b_t=r_t=s_t=0.
\]
We first prove the convergence of the effective variable
$
w_t\coloneqq U_{11}(t)\Gamma v_{21}(t).
$
Set
\[
L(w)\coloneqq \widetilde{\mathcal R}(w,0,0,0).
\]
On \(\mathcal I_K\), one has
$
\mathcal R(\theta_t)=L(w_t).
$
Moreover, the gradient flow reduces to
\[
\dot U_{11}(t)= -\nabla L(w_t)(\Gamma v_{21}(t))^\top,
\qquad
\dot v_{21}(t)= -\Gamma U_{11}(t)^\top \nabla L(w_t).
\]
Therefore,
\[
\begin{aligned}
\dot w_t
&=
\dot U_{11}(t)\Gamma v_{21}(t)
+
U_{11}(t)\Gamma \dot v_{21}(t)\\
&=
-\|\Gamma v_{21}(t)\|^2\nabla L(w_t)
-
U_{11}(t)\Gamma^2U_{11}(t)^\top\nabla L(w_t).
\end{aligned}
\]
Thus the effective dynamics can be written as
\[
\dot w_t=-G_t\nabla L(w_t),
\]
where
\[
G_t
\coloneqq 
\|\Gamma v_{21}(t)\|^2 I_d
+
U_{11}(t)\Gamma^2U_{11}(t)^\top.
\]
By Lemma~\ref{lem:conservation_balance_NK},
\[
\|U_{11}(t)\|_F^2-\|v_{21}(t)\|^2
=
\|U_{11}(0)\|_F^2-\|v_{21}(0)\|^2
=-\|v_{21}(0)\|^2
\eqqcolon C_0<0
\]
Hence, for every \(t\geq0\),
\[
\|v_{21}(t)\|^2
=
\|U_{11}(t)\|_F^2-C_0
\geq -C_0.
\]
We set $\nu\coloneqq \sqrt{-C_0}>0$ so that
\[
\|v_{21}(t)\|\geq\nu
\qquad\text{for every }t\geq0.
\]
Since \(\Gamma=\Gamma^\top\succ0\),
\[
\|\Gamma v_{21}(t)\|
\geq
\lambda_{\min}(\Gamma)\|v_{21}(t)\|
\geq
\lambda_{\min}(\Gamma)\nu.
\]
Moreover, \(U_{11}(t)\Gamma^2U_{11}(t)^\top\) is positive semidefinite. Hence
\[
G_t
\succeq
\|\Gamma v_{21}(t)\|^2 I_d
\succeq
\alpha I_d,
\]
where
$
\alpha\coloneqq 
\lambda_{\min}(\Gamma)^2\nu^2>0.
$

By Lemma~\ref{lem:local_strong_convexity_along_trajectory}, \(L\) is
\(\mu_0\)-strongly convex on
\[
\mathcal C_{\theta_0}
=
\overline{
\operatorname{Conv}
\left(
\{w_t:t\geq0\}\cup\{w^\star\}
\right)
}.
\]
Thus, for every \(t\geq0\),
\[
\|\nabla L(w_t)\|^2
\geq
2\mu_0\bigl(L(w_t)-L(w^\star)\bigr).
\]
Using the effective dynamics, we get
\[
\begin{aligned}
\frac{d}{dt}\bigl(L(w_t)-L(w^\star)\bigr)
&=
\langle \nabla L(w_t),\dot w_t\rangle\\
&=
-\langle \nabla L(w_t),G_t\nabla L(w_t)\rangle\\
&\leq
-\alpha \|\nabla L(w_t)\|^2\\
&\leq
-2\alpha\mu_0\bigl(L(w_t)-L(w^\star)\bigr).
\end{aligned}
\]
By Grönwall's inequality,
\[
L(w_t)-L(w^\star)
\leq
e^{-2\alpha\mu_0 t}
\bigl(L(w_0)-L(w^\star)\bigr).
\]
In particular, the risk converges exponentially:
\[
\mathcal R(\theta_t)-\mathcal R^\star
=
L(w_t)-L(w^\star)
\lesssim
e^{-2\alpha\mu_0 t}.
\]
Since \(L\) is \(\mu_0\)-strongly convex on \(\mathcal C_{\theta_0}\), we also
have
\[
L(w_t)-L(w^\star)
\geq
\frac{\mu_0}{2}\|w_t-w^\star\|^2.
\]
Therefore
\[
\|w_t-w^\star\|
\lesssim
e^{-\alpha\mu_0 t}.
\]

We finally prove the convergence of the factors \(U_{11}(t)\) and
\(v_{21}(t)\). We first show that they are bounded. The function
\(L\) is smooth. Indeed,
\[
\nabla^2L(w)
=
\mathbb E\left[\sigma'(w^\top X)XX^\top\right],
\]
and since \(0\leq \sigma'\leq 1/4\), one has
\[
\nabla^2L(w)
\preceq
\frac14\mathbb E[XX^\top].
\]
Thus \(L\) is \(M\)-smooth with
$
M\coloneqq \frac14\left\|\mathbb E[XX^\top]\right\|_{\mathrm{op}}.
$
In particular,
\[
\|\nabla L(w_t)\|
=
\|\nabla L(w_t)-\nabla L(w^\star)\|
\leq
M\|w_t-w^\star\|
\lesssim
e^{-\alpha\mu_0 t}.
\]

Set
\[
S(t)\coloneqq \|U_{11}(t)\|_F^2+\|v_{21}(t)\|^2.
\]
Using the reduced dynamics on \(\mathcal I_K\), we get
\[
\begin{aligned}
S'(t)
&=
2\langle U_{11}(t),\dot U_{11}(t)\rangle_F
+
2\langle v_{21}(t),\dot v_{21}(t)\rangle\\
&= -2\langle U_{11}(t),\nabla L(w_t)(\Gamma v_{21}(t))^\top\rangle_F
-
2\langle v_{21}(t),\Gamma U_{11}(t)^\top \nabla L(w_t)\rangle\\
&=
-4\langle w_t,\nabla L(w_t)\rangle.
\end{aligned}
\]
Hence
\[
|S'(t)|
\leq
4\|w_t\|\,\|\nabla L(w_t)\|
\lesssim
e^{-\alpha\mu_0 t},
\]
because \((w_t)_{t\geq0}\) is bounded. Therefore \(S'\in L^1(\mathbb R_+)\),
and
\[
S(t)
=
S(0)+\int_0^t S'(\tau)\,d\tau
\]
converges to a finite limit as \(t\to+\infty\). In particular,
\[
\sup_{t\geq0}\|U_{11}(t)\|_F<+\infty,
\qquad
\sup_{t\geq0}\|v_{21}(t)\|<+\infty.
\]

We now prove convergence of the factors. Since
\[
\dot v_{21}(t)
=
-\Gamma U_{11}(t)^\top\nabla L(w_t),
\]
we have
\[
\|\dot v_{21}(t)\|
\leq
\|\Gamma\|_{\mathrm{op}}\|U_{11}(t)\|_{\mathrm{op}}
\|\nabla L(w_t)\|
\lesssim
e^{-\alpha\mu_0 t}.
\]
Thus \(\dot v_{21}\in L^1(\mathbb R_+)\), and \(v_{21}(t)\) converges. More
precisely, there exists \(v_\infty\in\mathbb R^d\) such that
\[
v_{21}(t)\to v_\infty,
\]
and
\[
\|v_{21}(t)-v_\infty\|
\leq
\int_t^{+\infty}\|\dot v_{21}(s)\|\,ds
\lesssim
e^{-\alpha\mu_0 t}.
\]
Similarly,
\[
\dot U_{11}(t)
=
-\nabla L(w_t)(\Gamma v_{21}(t))^\top,
\]
so
\[
\|\dot U_{11}(t)\|_F
\leq
\|\nabla L(w_t)\|\,
\|\Gamma\|_{\mathrm{op}}\|v_{21}(t)\|
\lesssim
e^{-\alpha\mu_0 t}.
\]
Hence \(\dot U_{11}\in L^1(\mathbb R_+)\), and there exists
\(U_\infty\in\mathbb R^{d\times d}\) such that
\[
U_{11}(t)\to U_\infty,
\]
with
\[
\|U_{11}(t)-U_\infty\|_F
\leq
\int_t^{+\infty}\|\dot U_{11}(s)\|_F\,ds
\lesssim
e^{-\alpha\mu_0 t}.
\]

Finally, we obtain
\[
w^\star
=
\lim_{t\to+\infty}w_t
=
\lim_{t\to+\infty}U_{11}(t)\Gamma v_{21}(t)
=
U_\infty\Gamma v_\infty.
\]
\end{proof}

\subsection{LDA experiments (illustrations of Theorems~\ref{thm:lda-dynamics} and~\ref{thm:lda-krylov})}
\label{appendix:LDA_experimental_details}

\begin{table}[h]
    \centering
    \begin{tabular}{c c c c c c}
        \hline
        Configuration
        & dimension
        & $m_1$
        & $m_{-1}$
        & $\Gamma$
        & $\pi_+$ \\
        \hline
        C1
        & $d=2$
        & $(1,0)^\top$
        & $(0,1)^\top$
        & $I_2$
        & $0.6$ \\[0.4cm]
        C2
        & $d=20$
        & $e_1$
        & $-e_1$
        & $I_{20}$
        & $0.5$ \\
        
        \hline
    \end{tabular}
    \caption{LDA configurations used in the numerical experiments.}
    \label{tab:configurations}
\end{table}

\begin{table}[h]
    \centering
    \renewcommand{\arraystretch}{1.2}
    \begin{tabular}{c c c c}
        \toprule
        \textbf{Figure} &
        \textbf{Problem} &
        
        \textbf{Dimension} &
        \textbf{Numerical method}
        \\
        \midrule
        Fig.~\ref{fig:lda-local} 
        & Config. C1
        & $d=2$
        & GH, $\mathrm{step\_size=10^{-4}}$ \\
        
        Fig.~\ref{fig:risk-sym-invariant-random-init-d=20}
        & Config. C2
        & $d=20$
        & SGD, $\mathrm{step\_size=10^{-2}}$ \\

        Fig.~\ref{fig:risk-sym-v21-random-d=20}
        & Config. C2
        & $d=20$
        & SGD, $\mathrm{step\_size=10^{-2}}$ \\

    \end{tabular}
    
    \caption{Technical details of the numerical experiments associated with each figure.}
    \label{tab:numerical_details}
\end{table}

\begin{table}[h]
    \centering
    \small
    
    \renewcommand{\arraystretch}{2.5}

    \begin{tabularx}{\textwidth}{
        >{\centering\arraybackslash}p{1.5cm}
        >{\centering\arraybackslash}p{2.5cm}
        >{\centering\arraybackslash}p{1.5cm}
        >{\centering\arraybackslash}p{1.5cm}
        >{\centering\arraybackslash}p{5.0cm}
        >{\centering\arraybackslash}p{1.5cm}
    }
        \toprule
        \textbf{Figure}
        & $\mathbf{U}_{11}$
        & $\mathbf{u}_{12}$
        & $\mathbf{v}_{21}$
        & $v_{22}$
        & \textbf{Noise} \\
        \midrule

        Fig.~\ref{fig:lda-local} ($\theta^{(1)}_t$)
        &
        $\begin{pmatrix}
        \dfrac{1}{2b_\star} & 0 \\
        -\dfrac{1}{2b_\star} & 1
        \end{pmatrix}$
        &
        $\begin{pmatrix}10\\10\end{pmatrix}$
        &
        $\begin{pmatrix}
        2b_\star\\
        0
        \end{pmatrix}$
        &
        $-b_\star$
        &
        $\varepsilon=10^{-6}$
        \\[2ex]

        Fig.~\ref{fig:lda-local} ($\theta^{(2)}_t$)
        &
        $\begin{pmatrix}
        1 & 0\\
        0 & 1
        \end{pmatrix}$
        &
        $\begin{pmatrix}
        -\frac12\\
        \frac12
        \end{pmatrix}$
        &
        $\begin{pmatrix}
        1\\
        -1
        \end{pmatrix}$
        &
        $\displaystyle
        \frac{
        {v}_{21}^{\top}{m}_{-1}
        - b_\star
        + \pi_{+}{v}_{21}^{\top}
        ({m}_{1}-{m}_{-1})
        }{
        1-2\pi_{+}
        }$
        &
        $\varepsilon=10^{-6}$
        \\[2ex]

        \bottomrule
    \end{tabularx}
    \caption{Specific initializations used in numerical experiments.}
    \label{tab:initializations}
\end{table}

\paragraph{Initialization in the balanced symmetric setting.}
We compare two initialization strategies for stochastic gradient
descent in dimension $d=20$, with balanced classes and opposite means.
With Gaussian initialization of all parameters,
Figure~\ref{fig:risk-sym-invariant-random-init-d=20} illustrates
the difficulty of approaching the Bayes risk in the displayed runs.
By contrast, setting
$U_{11}(0)=0$, $u_{12}(0)=0$, and $v_{22}(0)=0$,
while sampling $v_{21}(0)\sim\mathcal N(0,I_d)$,
leads to a rapid, approximately exponential decrease toward a
small excess risk
(Figure~\ref{fig:risk-sym-v21-random-d=20}).
This initialization requires no knowledge of $m$ or $\Gamma$.
Although it does not enforce the exact constraint
$v_{21}(0)\in K^\perp$ required by
Theorem~\ref{thm:lda-krylov}, these experiments suggest that
the favorable behavior can persist beyond the invariant subspace.

\begin{figure}[htbp]
    \centering
    \includegraphics[width=0.6\linewidth]
    {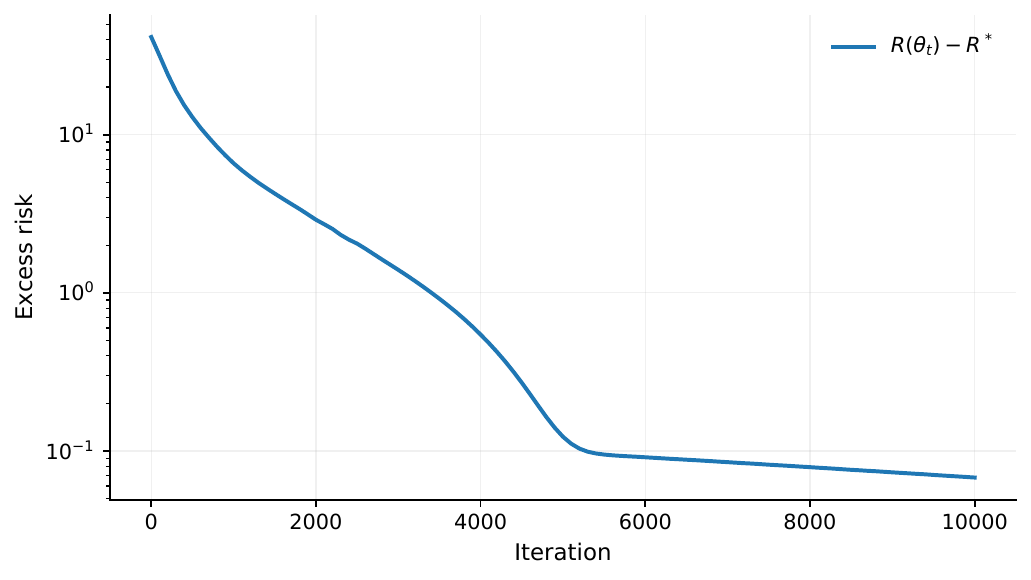}
    \caption{Risk evolution under stochastic gradient descent
    in the balanced symmetric setting with $d=20$.
    All parameter entries are initialized from a standard
    Gaussian distribution.}
    \label{fig:risk-sym-invariant-random-init-d=20}
\end{figure}

\begin{figure}[htbp]
    \centering
    \includegraphics[width=0.6\linewidth]
    {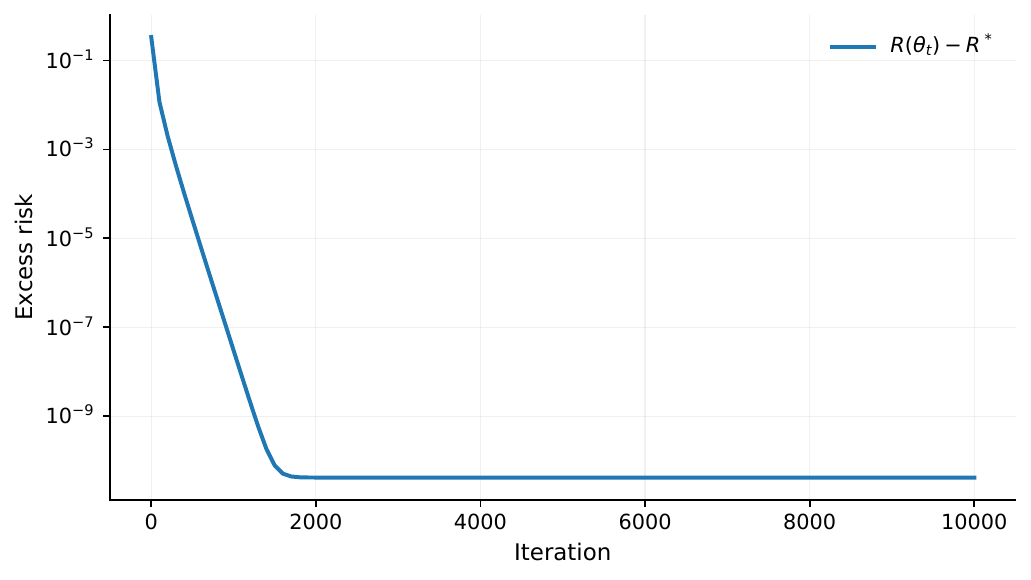}
    \caption{Risk evolution under stochastic gradient descent
    in the balanced symmetric setting with $d=20$.
    Initialization:
    $U_{11}(0)=0$, $u_{12}(0)=0$, $v_{22}(0)=0$,
    and $v_{21}(0)\sim\mathcal N(0,I_d)$.}
    \label{fig:risk-sym-v21-random-d=20}
\end{figure}

\section{Proofs and Complements for Section~\ref{sec:denoising}}
\label{appendix:denoising}

\subsection{Bayes denoiser (derivation of Equation~\eqref{eq:bayes-denoiser})}
\label{app:bayes-denoiser-proof}

\begin{proposition}[Bayes denoiser for a Gaussian mixture]
\label{prop:optimal-denoiser-gmm}
Let \(X\sim\sum_{k=1}^M\pi_k\mathcal N(m_k,\Gamma)\),
\(Z=X+\varepsilon\) with
\(\varepsilon\sim\mathcal N(0,\xi^2I_d)\) independent of \(X\), and
\(S=\Gamma+\xi^2I_d\). Then
\[
\mathbb E[X\mid Z=z]
=\Gamma S^{-1}z+\xi^2S^{-1}\sum_{k=1}^M\beta_k(z)m_k,
\]
where
\[
\beta_k(z)=
\frac{\pi_k\mathcal N(z;m_k,S)}
{\sum_{\ell=1}^M\pi_\ell\mathcal N(z;m_\ell,S)}.
\]
\end{proposition}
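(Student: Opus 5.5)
The plan is to condition on the latent component label $C$ and use the tower property,
\[
\mathbb E[X\mid Z=z]=\sum_{k=1}^M \mathbb P(C=k\mid Z=z)\,\mathbb E[X\mid Z=z,C=k].
\]
This reduces the statement to two classical Gaussian computations: the posterior over labels, and the within-component Gaussian conditional mean. The mixture structure enters only through the weights.

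First, we identify the label posterior. Conditionally on $C=k$, the pair $(X,Z)$ is jointly Gaussian with $Z\sim\mathcal N(m_k,S)$, since $X\sim\mathcal N(m_k,\Gamma)$ and the noise $\varepsilon\sim\mathcal N(0,\xi^2I_d)$ is independent. Hence the marginal density of $Z$ is $\sum_\ell\pi_\ell\mathcal N(z;m_\ell,S)$. Bayes' rule then gives $\mathbb P(C=k\mid Z=z)=\beta_k(z)$ exactly as stated. Expanding the Gaussian density, the common factor $e^{-\frac12 z^\top S^{-1}z}$ and the normalizing constants cancel, because the covariance $S$ is shared. This recovers the softmax form $\softmax_k(\log\pi_k+m_k^\top S^{-1}z-\tfrac12 m_k^\top S^{-1}m_k)$ used in~\eqref{eq:bayes-denoiser}.

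Second, we compute the conditional mean within component $k$. We have $\operatorname{Cov}(X,Z\mid C=k)=\Gamma$ and $\operatorname{Cov}(Z\mid C=k)=S$, which is invertible because $\xi^2I_d$ is added to the PSD matrix $\Gamma$. The Gaussian conditioning formula gives
\[
\mathbb E[X\mid Z=z,C=k]=m_k+\Gamma S^{-1}(z-m_k).
\]
Since $I_d-\Gamma S^{-1}=(S-\Gamma)S^{-1}=\xi^2S^{-1}$, this equals $\Gamma S^{-1}z+\xi^2S^{-1}m_k$.

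Finally, we sum over $k$ with the weights $\beta_k(z)$, which add to one. The term $\Gamma S^{-1}z$ factors out, leaving the claimed formula.

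There is no real obstacle here. The only point needing care is to justify the tower decomposition at the level of densities, i.e. to work with the joint density of $(C,Z)$ rather than conditioning on the null event $\{Z=z\}$. This is routine because all the conditional laws have smooth positive densities.
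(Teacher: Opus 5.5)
Your proposal is correct and follows essentially the same route as the paper's proof: condition on the latent label $C$, obtain $\beta_k$ by Bayes' rule, apply the Gaussian conditioning formula $\mathbb E[X\mid Z=z,C=k]=m_k+\Gamma S^{-1}(z-m_k)$ with the identity $I_d-\Gamma S^{-1}=\xi^2S^{-1}$, and sum using $\sum_k\beta_k(z)=1$. Your additional step, expanding the shared-covariance densities to recover the softmax form of $\beta_k$, is correct; the paper states that form in the main text rather than in this proof.
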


\begin{proof}
Recalling that the conditional expectation is the minimizer of the mean squared error, the function $f^\star : z \mapsto \mathbb{E}[X \mid Z=z]$ is an optimal denoiser. It remains to compute this conditional expectation explicitly.

Let $C$ denote the latent cluster variable, so that $\mathbb{P}(C=k)=\pi_k$ and $X\mid C=k \sim \mathcal{N}(m_k,\Gamma)$. By Bayes' formula,
\[
\mathbb{P}(C=k\mid Z=z)
=
\frac{\pi_k \, \mathcal{N}(z;m_k,S)}
{\sum_{\ell=1}^M \pi_\ell \, \mathcal{N}(z;m_\ell,S)}
\eqqcolon 
\beta_k(z).
\]

Using the law of total expectation with respect to the latent variable $C$, we obtain
\[
\mathbb{E}[X\mid Z=z]
=
\sum_{k=1}^M
\mathbb{E}[X\mid Z=z,C=k]\,
\mathbb{P}(C=k\mid Z=z).
\]

It remains to compute $\mathbb{E}[X\mid Z=z,C=k]$. Conditionally on $C=k$, the pair $(X,Z)$ is jointly Gaussian. Moreover, $\operatorname{Cov}(X,Z\mid C=k)
=
\operatorname{Cov}(X,X+\varepsilon\mid C=k)
=
\Gamma$. The Gaussian conditioning formula (e.g. see \cite{Rasmussen2006Gaussian}) therefore gives
\[
\mathbb{E}[X\mid Z=z,C=k]
=
m_k+\Gamma S^{-1}(z-m_k).
\]
Equivalently,
\[
\mathbb{E}[X\mid Z=z,C=k]
=
\Gamma S^{-1}z+\xi^2 S^{-1}m_k.
\]
Indeed,
\[
I_d-\Gamma S^{-1}=(S-\Gamma)S^{-1}=\xi^2S^{-1},
\]
which explains the noise-variance coefficient $\xi^2$.
Plugging this expression into the previous decomposition and using that $\sum_{k=1}^M \beta_k(z)=1$, we finally obtain
\[
\mathbb{E}[X\mid Z=z]
=
\Gamma S^{-1}z
+
\sum_{k=1}^M
\beta_k(z)\,\xi^2 S^{-1}m_k.
\]
This proves the result.

It is worth noting that this result could also be derived from a multivariate version of Tweedie's formula, which states that
\[
\mathbb{E}[X\mid Z=z]
=
z+\xi^2 \nabla_z \log p_Z(z).
\]

\end{proof}

\begin{remark}[High- and low-noise limits]
For fixed $z$, means, and covariance, as $\xi\to\infty$ the posterior
weights $\beta_k(z)$ converge to the constant priors $\pi_k$ , $\Gamma S^{-1}\to0$, and
$\xi^2S^{-1}\to I_d$. Hence $f^\star(z)\to\sum_k\pi_km_k$.
If $\Gamma$ is positive definite, then as $\xi\downarrow0$,
$\Gamma S^{-1}\to I_d$ and $\xi^2S^{-1}\to0$, so $f^\star(z)\to z$.
The latter assertion does not apply to the degenerate clean mixture
$\Gamma=0$.
\end{remark}

\subsection{Expressivity and risk separation (complete proof of Theorem~\ref{thm:denoise-expressivity})}
\label{app:denoise-expressivity-proof}

\begin{proof}[Proof of Theorem~\ref{thm:denoise-expressivity}]
Proposition~\ref{prop:two-head-nondegenerate-denoiser} below constructs
the two heads, including the case of opposite means.
Proposition~\ref{prop:one-head_attention_cant_rpz_non-degenerate_denoising}
proves that a single head requires $\gamma^2=\xi^2$. Conversely, at this
balance $S=2\xi^2I_d$, and choosing $U=S^{-1}$, $V=I_d/2$ gives
\[
T_{U,V}[\mu_Z](z)=\tfrac12z+\tfrac12\sum_k\beta_k(z)m_k=f^\star(z).
\]
Finally, Proposition~\ref{prop:softmax_beat_linear_for_denoising}
proves the strict excess-risk gap, not merely failure of exact linear
representation. Together these results establish all claims of the theorem.
The degenerate complement stated after the theorem follows from
Proposition~\ref{prop:two-head-degenerate-denoiser} and the same gap argument.
\end{proof}

\subsection{Degenerate mixtures (complement to Theorem~\ref{thm:denoise-expressivity})}
\label{app:two-head-degenerate}

\begin{proposition}[Degenerate mixture: two heads are sufficient]
\label{prop:two-head-degenerate-denoiser}
Assume $d\geq2$, $\xi>0$, $\Gamma=0$, $M=2$, positive mixture weights,
and distinct means with $\|m_1\|=\|m_2\|$. Then a
two-head softmax-attention predictor realizes the Bayes denoiser exactly. If,
in addition, \(d=2\) and \(m_1,m_2\) are linearly independent, no single
head realizes it.
\end{proposition}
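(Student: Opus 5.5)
The plan is to work with the shared-covariance formula~\eqref{eq:shared}. Here $\Gamma=0$ and $\mu_Z=\pi_1\cN(m_1,\xi^2\Id)+\pi_2\cN(m_2,\xi^2\Id)$, so each head reads
\[
T_{U,V}[\mu_Z](z)=\xi^2VUz+V\bigl(m_2+\alpha_1(z)\Delta\bigr),\qquad \alpha_1(z)=\sigma\bigl(c+\Delta^\top Uz\bigr),
\]
with $\Delta\coloneqq m_1-m_2$ and $c\coloneqq\log(\pi_1/\pi_2)$. Proposition~\ref{prop:optimal-denoiser-gmm} with $\Gamma=0$, together with $\|m_1\|=\|m_2\|$, gives
\[
f^\star(z)=m_2+\beta_1(z)\Delta,\qquad \beta_1(z)=\sigma\bigl(c+\Delta^\top z/\xi^2\bigr).
\]
The target therefore has no linear part. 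The construction will use one head to produce the gate and a second head, whose gate is constant, to cancel the unwanted linear term.

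\textbf{Two-head construction.} For head~1, take the rank-one interaction matrix $U_1=\xi^{-2}\Delta\Delta^\top/\|\Delta\|^2$. Then $\Delta^\top U_1 z=\Delta^\top z/\xi^2$, so $\alpha_1=\beta_1$. Its linear part is $V_1\Delta\Delta^\top z/\|\Delta\|^2$.

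For head~2, take $U_2=\xi^{-2}q\Delta^\top/\|\Delta\|^2$ with $q\perp\Delta$, $q\neq0$ (available since $d\ge2$). Then $\Delta^\top U_2=0$, so its gate is the constant $\pi_1$. The head outputs $V_2q\,\Delta^\top z/\|\Delta\|^2+V_2\bar m_\pi$, where $\bar m_\pi=\pi_1m_1+\pi_2m_2$.

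Imposing $V_1\Delta=\Delta$ and $V_2q=-V_1\Delta=-\Delta$ cancels the linear parts exactly. It remains to match the constant term, which requires
\[
V_1m_2+V_2\bar m_\pi=m_2 .
\]
When $m_1,m_2$ are not collinear, $\Delta$ and $m_2$ are independent. Fix any $V_2$ with $V_2q=-\Delta$, then choose $V_1$ on $\operatorname{span}(\Delta,m_2)$ by $V_1\Delta=\Delta$ and $V_1m_2=m_2-V_2\bar m_\pi$; both are free linear prescriptions on independent vectors.

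\textbf{The collinear case.} Equal norms and distinct means force $m_2=-m_1$, so $\Delta=2m_1$ and $m_2=-\Delta/2$. This makes $V_1m_2=-\Delta/2$ forced, and the constant condition becomes $V_2\bar m_\pi=m_2+\Delta/2=0$. Here $\bar m_\pi=(\pi_1-\pi_2)m_1$ is collinear with $\Delta$, hence independent of $q$. So one can choose $V_2$ with $V_2q=-\Delta$ and $V_2\Delta=0$, which gives $V_2\bar m_\pi=0$. This matches the remark that opposite means admit a separate (even single-head) construction.

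\textbf{Single-head obstruction for $d=2$ with independent means.} Suppose $\xi^2VUz+V(m_2+\alpha_1(z)\Delta)=m_2+\beta_1(z)\Delta$ for all $z$. The non-linear terms are bounded in $z$, so letting $z\to\infty$ along rays forces $VU=0$. Then $V m_2+\alpha_1(z)V\Delta=m_2+\beta_1(z)\Delta$ for all $z$.
\begin{itemize}
\item If $\alpha_1$ were constant, the left side would be constant while $\beta_1$ is nonconstant (since $\Delta\neq0$), a contradiction. So $\alpha_1$ is nonconstant.
\item Taking two values $z,z'$ with $\alpha_1(z)\neq\alpha_1(z')$ and differencing gives $V\Delta=\tfrac{\beta_1(z)-\beta_1(z')}{\alpha_1(z)-\alpha_1(z')}\Delta$, so $V\Delta$ is parallel to $\Delta$.
\item Comparing the $\Delta$-components then gives $V m_2=m_2$ and $\alpha_1\,(V\Delta)=\beta_1\Delta$, so $V\Delta=\lambda\Delta$ with $\lambda\neq0$.
\end{itemize}
Since $\{m_2,\Delta\}$ is a basis of $\R^2$, $V$ is invertible, and $VU=0$ forces $U=0$. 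Then $\alpha_1$ is constant, contradicting the first point.

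I expect the main obstacle to be the single-head step, specifically justifying the asymptotic extraction of $VU=0$ and the component comparison cleanly. The two-head construction is just a careful bookkeeping of the constant term, with the collinear subcase handled separately.
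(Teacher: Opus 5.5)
Your proof is correct, and your two-head construction takes a genuinely different route from the paper's. The paper feeds the exact Bayes gate to \emph{both} heads and suppresses every linear part. It takes $U_1=\xi^{-2}m_2\Delta^\top/\langle m_2,\Delta\rangle$ and $U_2=\xi^{-2}m_1\Delta^\top/\langle m_1,\Delta\rangle$, with images in $\operatorname{span}(m_2)$ and $\operatorname{span}(m_1)$, and chooses $V_im_i=m_i$, $V_im_j=0$. Then $V_iU_i=0$, and head $i$ returns exactly $\beta_i(z)m_i$. Because $V_1m_1=m_1$ and $V_1m_2=0$ are incompatible when $m_2=-m_1$, the paper treats opposite means separately, using a single head whose value matrix annihilates the image $\operatorname{span}(m+n)$ of $U$.

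You instead let the gating head keep its linear term $V_1\Delta\Delta^\top z/\|\Delta\|^2$. A second head with a frozen gate ($\Delta^\top U_2=0$) is then just an affine map, and you use it to cancel that term, matching the offset through the value matrices. Your recipe is more uniform: the collinear case only changes which value matrix absorbs the offset. It also makes explicit how an extra head decouples the linear and gated parts. The paper's version instead gives heads with no linear part at all, each specializing to one cluster as in its nondegenerate construction. It also delivers the single-head realization for opposite means mentioned in the remark. Your single-head obstruction is the paper's argument: $VU=0$, then $V$ is invertible in $d=2$, hence $U=0$ and the gate is constant.

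One step in that obstruction needs repair. The claim that comparing $\Delta$-components gives $Vm_2=m_2$ and $\alpha_1V\Delta=\beta_1\Delta$ does not follow, since $\Delta$ enters both the constant and the gated part. From $V\Delta=\lambda\Delta$ you only get $Vm_2-m_2=(\beta_1(z)-\lambda\alpha_1(z))\Delta$, with the scalar independent of $z$. When $\pi_1=\pi_2$, the exchanged orientation $\alpha_1=1-\beta_1$, $\lambda=-1$, $Vm_2=m_1$ satisfies this identity; this is what the paper's phrase \emph{up to exchanging the components} accounts for. The conclusion survives. First, $\lambda\neq0$, since otherwise $m_2+\beta_1\Delta$ would be constant. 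Second, $Vm_2\in m_2+\operatorname{span}(\Delta)$. Together these mean $V$ maps the basis $\{m_2,\Delta\}$ to a basis, so $V$ is invertible. State the step this way.
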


\begin{proof}
We first prove the single-head obstruction. Since \(\Gamma=0\), the Bayes
denoiser is the bounded posterior barycenter
\(f^\star(z)=\beta_1(z)m_1+\beta_2(z)m_2\). If one head represented it,
comparison with
\[
T_{U,V}[\mu_Z](z)=\xi^2VU z
+V\bigl(\alpha_1(z)m_1+\alpha_2(z)m_2\bigr)
\]
would force \(VU=0\), since the first term is linear and the target is
bounded. Matching the two barycenters gives, up to exchanging the
components, \(Vm_i=m_i\) for \(i=1,2\), while matching the posterior weights
gives
\(U^\top(m_1-m_2)=\xi^{-2}(m_1-m_2)\). In dimension two, linear
independence of $m_1,m_2$ implies $V=I_2$ in the first orientation.
In the exchanged orientation, $V$ swaps the two basis vectors and is
also invertible. Thus, in either case $VU=0$ forces $U=0$, making the
attention weights constant, a contradiction.

We now construct two heads. We begin with the case where $m_1$ and $m_2$ are linearly independent. The idea is to let each head be responsible for one cluster. More precisely, we want the first head to recover the contribution of $m_1$ and the second head to recover the contribution of $m_2$.

Let $\Delta m \coloneqq  m_1-m_2$. We choose two value matrices $V_1,V_2$ such that
\[
V_1m_1=m_1,
\qquad
V_1m_2=0,
\]
and
\[
V_2m_1=0,
\qquad
V_2m_2=m_2.
\]
In other words, $V_1$ extracts the contribution of the first cluster, while $V_2$ extracts the contribution of the second cluster.

We then choose $U_1$ and $U_2$ so that their images are killed by the corresponding value matrices, while their attention scores still reproduce the optimal posterior weights. For instance, one may take
\[
U_1
=
\frac{1}{\xi^2}
\frac{m_2 \Delta m^\top}{\langle m_2,\Delta m\rangle},
\qquad
U_2
=
\frac{1}{\xi^2}
\frac{m_1 \Delta m^\top}{\langle m_1,\Delta m\rangle}.
\]
Since $\|m_1\|=\|m_2\|$ and $m_1\neq m_2$, the denominators are nonzero.

With this choice, we have
\[
\operatorname{Im}(U_1)\subset \operatorname{span}(m_2),
\qquad
\operatorname{Im}(U_2)\subset \operatorname{span}(m_1),
\]
and therefore
\[
V_1U_1=0,
\qquad
V_2U_2=0.
\]
Thus, the linear parts of both attention heads vanish. Moreover,
\[
U_1^\top \Delta m
=
\frac{1}{\xi^2}\Delta m,
\qquad
U_2^\top \Delta m
=
\frac{1}{\xi^2}\Delta m.
\]
Hence both heads produce the same posterior weight $\beta_1(z)$ as the optimal denoiser.

We now address the case $m_1=-m_2\eqqcolon m$. Here, a single head is sufficient. Let $n\neq0$ satisfy $\langle n,m\rangle=0$, which is possible because $d\geq2$. One can choose $U=\frac{(m+n)m^\top}{\xi^2\|m\|^2}$ and $V=m\left(\frac{m^\top}{\|m\|^2}-\frac{n^\top}{\|n\|^2}\right)$. Indeed, $Vm=m$, $Vn=-m$, and hence $VU=0$, while
$U^\top m=\xi^{-2}m$. This gives the optimal denoiser. Setting a second head to zero
also gives the asserted two-head representation.
\end{proof}

\subsection{Denoising-to-clustering transition (proof of Proposition~\ref{prop:phase-transition})}
\label{app:phase-transition-proof}
\label{appendix:proof_iterated_denoising}

\begin{proof}[Proof of Proposition~\ref{prop:phase-transition}]
Let $c=(m_1+m_2)/2$, $\delta=(m_1-m_2)/2$, and
$\lambda=\|\delta\|/\xi$. Equal norms between $m_1$ and $m_2$ imply $\langle c,\delta\rangle=0$.
Since the mixture is balanced and $\gamma=0$, $f^\star(z)=m_2+\Delta m\cdot\sigma(\langle z, \Delta m\rangle/\xi^2)$. Using
$2\sigma(2t)-1=\tanh(t)$ gives
\[
f^\star(z)=c+\delta\tanh\!\left(
\frac{\langle z-c,\delta\rangle}{\xi^2}\right).
\]
In particular, one step maps every query into the open segment joining the means. More precisely, for every $n\geq1$, there exists $u_n\in(-1,1)$ such that $z_n=c+u_n\delta$. Set $u_0=\langle z_0-c,\delta\rangle/\|\delta\|^2$ and
$F(u)=\tanh(\lambda^2u)$. We can rewrite the dynamic $z_{n+1}=f^\star(z_n)$ in term of $u_n$ as follows :
\[
u_{n+1}=F(u_n),\qquad z_n=c+u_n\delta\quad(n\geq1),
\]
and $|u_n|<1$ for $n\geq1$. The map $F$ is odd, strictly increasing,
and strictly concave on $(0,\infty)$.

\paragraph{Weak separation, including the threshold.}
If $\lambda\leq1$, then for every $u>0$,
$0<F(u)=\tanh(\lambda^2u)<\lambda^2u\leq u$.
Thus a positive trajectory decreases and remains bounded below by zero.
Its limit is a fixed point by continuity, and the same strict inequality
shows that zero is the only nonnegative fixed point. Oddness gives the
same conclusion for negative initial values. Therefore $u_n\to0$ and
$z_n\to c$, including when $\lambda=1$. Boundedness alone would not
suffice here; monotonicity supplies the convergence argument.

\paragraph{Strong separation and the basins of attraction.}
If $\lambda>1$, let $H(u)=F(u)-u$. We have $H(0)=0$,
$H'(0)=\lambda^2-1>0$, and $H(1)<0$.
Strict concavity on $(0,\infty)$ implies that $H$ has exactly one
positive zero $u_\lambda\in(0,1)$, with $H>0$ on $(0,u_\lambda)$
and $H<0$ on $(u_\lambda,\infty)$.
For $0<u<u_\lambda$, monotonicity of $F$ gives
$u<F(u)<u_\lambda$; for $u>u_\lambda$, it gives
$u_\lambda<F(u)<u$. Every positive trajectory is therefore monotone,
bounded, and converges to $u_\lambda$. By oddness, every negative
trajectory converges to $-u_\lambda$. The initial value zero remains zero.
Consequently the two limiting points are
\[
m_{\lambda,1}=c+u_\lambda\delta,\qquad
m_{\lambda,2}=c-u_\lambda\delta.
\]
Their basins are respectively
$\{z:\langle z-c,\delta\rangle>0\}$ and
$\{z:\langle z-c,\delta\rangle<0\}$.
The separating hyperplane maps directly to $c$ and has Lebesgue measure zero.
Moreover, strict concavity yields $F'(u_\lambda)<F(u_\lambda)/u_\lambda=1$,
while $F'>0$. The two outer fixed points are stable; the midpoint is
unstable along the line of the means because $F'(0)=\lambda^2>1$.

\paragraph{Limit of large separation.}
For any $r\in(0,1)$, $F(r)=\tanh(\lambda^2r)\to1$ as
$\lambda\to\infty$. Hence eventually $H(r)>0$, so $u_\lambda>r$.
Since $u_\lambda<1$, it follows that $u_\lambda\to1$.
For fixed means and $\xi\downarrow0$, this proves
$m_{\lambda,1}\to m_1$ and $m_{\lambda,2}\to m_2$.
\end{proof}

\subsection{Single-head necessity (Theorem~\ref{thm:denoise-expressivity}, single-head claim)}
\label{app:one-head-obstruction}

\begin{proposition}[A single head is generically insufficient]
\label{prop:one-head_attention_cant_rpz_non-degenerate_denoising}
Assume \(\Gamma=\gamma^2I_d\) with \(\gamma>0\), and retain the
two-component equal-norm setting of Section~\ref{sec:denoising}. If
\(\gamma^2\neq\xi^2\), there is no pair \((U,V)\) such that
\(T_{U,V}[\mu_Z]=f^\star\).
\end{proposition}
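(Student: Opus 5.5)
The plan is to write both maps in a common "affine plus one sigmoid" form, identify the pieces, and then reach a contradiction with a scalar identity. Set $s\coloneqq\gamma^2+\xi^2$, so that $S=sI_d$, and write $\Delta m\coloneqq m_1-m_2$ and $a\coloneqq\log(\pi_1/\pi_2)$.

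\textbf{Rewriting both maps.} By \eqref{eq:shared}, using $\alpha_1+\alpha_2=1$,
\[
T_{U,V}[\mu_Z](z)=sVUz+Vm_2+\sigma\bigl(a+q^\top z\bigr)V\Delta m,\qquad q\coloneqq U^\top\Delta m .
\]
By \eqref{eq:spherical-denoiser},
\[
f^\star(z)=\tfrac{\gamma^2}{s}z+\tfrac{\xi^2}{s}m_2+\sigma\bigl(a+p^\top z\bigr)\tfrac{\xi^2}{s}\Delta m,\qquad p\coloneqq\Delta m/s\neq 0 .
\]
Now suppose $T_{U,V}[\mu_Z]=f^\star$ identically.

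\textbf{Step 1: the linear parts coincide.} The difference of the two affine parts equals the difference of the two sigmoid terms, which is bounded on $\R^d$. An affine map that is bounded on $\R^d$ is constant. Hence
\[
sVU=\tfrac{\gamma^2}{s}I_d .
\]
Because $\gamma>0$, both $U$ and $V$ are invertible. Consequently $q\neq 0$ and $V\Delta m\neq0$.

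\textbf{Step 2: the gates and the value directions coincide up to sign.} Differentiate the remaining identity in $z$:
\[
\sigma'(a+q^\top z)\,V\Delta m\,q^\top=\sigma'(a+p^\top z)\,\tfrac{\xi^2}{s}\Delta m\,p^\top .
\]
Both sides are rank-one matrices, so $V\Delta m\parallel\Delta m$ and $q\parallel p$. Write $q=\kappa p$. Then the ratio $\sigma'(a+\kappa t)/\sigma'(a+t)$, with $t=p^\top z$, must be constant in $t\in\R$.

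Comparing the exponential tails of $\sigma'$ as $t\to\pm\infty$ forces $|\kappa|=1$. If $\kappa=1$, the ratio is $1$. If $\kappa=-1$, the symmetry $\sigma'(u)=\sigma'(-u)$ shows the ratio is $\sigma'(a-t)/\sigma'(a+t)$, which is constant only when $a=0$, in which case it equals $1$. In both cases the ratio is $1$, so the rank-one identity gives
\[
U^\top\Delta m=\kappa\,\Delta m/s,\qquad V\Delta m=\kappa\,\tfrac{\xi^2}{s}\Delta m,\qquad \kappa\in\{\pm1\}.
\]
This step, ruling out mismatched gate slopes, is the only delicate point. Everything else is linear algebra.

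\textbf{Step 3: contradiction.} Step 1 gives $V=\frac{\gamma^2}{s^2}U^{-1}$. Combining this with $V\Delta m=\kappa\frac{\xi^2}{s}\Delta m$ yields
\[
U\Delta m=\kappa\,\frac{\gamma^2}{s\,\xi^2}\,\Delta m .
\]
Now compute $\Delta m^\top U\Delta m$ in two ways. From the expression for $U\Delta m$ it equals $\kappa\frac{\gamma^2}{s\xi^2}\|\Delta m\|^2$. From $U^\top\Delta m=\kappa\Delta m/s$ it equals $\kappa\frac{1}{s}\|\Delta m\|^2$. Since $\Delta m\neq0$, these agree only if $\gamma^2=\xi^2$. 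This contradicts the assumption $\gamma^2\neq\xi^2$, so no pair $(U,V)$ realizes $f^\star$. The constant terms are never needed.
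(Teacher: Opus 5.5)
Your proof is correct and follows the paper's route. You match the linear parts by boundedness, identify the gate slope $U^\top\Delta m$ and the action of $V$ on $\Delta m$ up to the orientation $\kappa=\pm1$, and then evaluate $\Delta m^\top U\Delta m$ in two ways using $V=\tfrac{\gamma^2}{s^2}U^{-1}$. The differences are minor: you identify the sigmoid terms by differentiating and comparing the exponential tails of $\sigma'$ rather than via level sets, endpoint limits and monotonicity, and you need only $V\Delta m$ instead of $Vm_1$ and $Vm_2$ separately, which lets you skip the constant terms and treat the exchanged orientation at the same time.
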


\begin{proof}
A single attention head takes the form
\[
T_{U,V}[\mu_Z](z)
=
(\gamma^2+\xi^2)VUz
+
V\left(
\alpha_1(z)m_1+\bigl(1-\alpha_1(z)\bigr)m_2
\right)
\quad \text{with} \quad 
\alpha_1(z)
=
\sigma\left(
\log\frac{\pi_1}{\pi_2}
+
\langle Uz,m_1-m_2\rangle
\right)
\]
If we try to impose
\[
T_{U,V}[\mu_Z](z)=f^\star(z)
\qquad
\text{for all }z\in\mathbb{R}^d,
\]
then boundedness of the gated terms first forces equality of the linear
parts (take $z=tx$ and divide by $t$). Put $\Delta m=m_1-m_2$ and
$a=\xi^2/(\gamma^2+\xi^2)$. The remaining identity is between two
nonconstant sigmoid maps. Their level hyperplanes coincide, so their score
vectors are parallel. Taking the two limits along $\Delta m$ identifies
the endpoint values as $am_1$ and $am_2$, possibly exchanged.
In the same orientation, strict monotonicity of the sigmoid then identifies
the linear argument within the sigmoid exactly. Thus the parameters must satisfy the following three conditions\footnote{When $\pi_1=\pi_2$, conditions~(2) and~(3) could alternatively be replaced by the sign-reversed score matching condition together with a permutation of the cluster means, but this does not affect the argument.}.
\begin{itemize}
\label{itemize:constraint_denoising_non-degenerate}
    \item[(1)] {Matching of the linear part:}
    \[
    (\gamma^2+\xi^2)VU
    =
    \frac{\gamma^2}{\gamma^2+\xi^2}I_d,
    \]

    \item[(2)] {Matching of the attention scores with the posterior weights:}
    \[
    U^\top(m_1-m_2)
    =
    \frac{1}{\gamma^2+\xi^2}(m_1-m_2).
    \]

    \item[(3)] {Preservation of the barycentric part with the correct scaling:}
    \[
    Vm_1
    =
    \frac{\xi^2}{\gamma^2+\xi^2}m_1,
    \qquad
    Vm_2
    =
    \frac{\xi^2}{\gamma^2+\xi^2}m_2.
    \]
\end{itemize}

   Since $\gamma^2>0$, the first condition now implies that both $U$ and $V$ are invertible.

From condition \((1)\), we have
\[
V
=
\frac{\gamma^2}{(\gamma^2+\xi^2)^2}U^{-1}.
\]
Using condition \((3)\), we get, for $i=1,2$,
\[
U m_i
=
\frac{\gamma^2}{\xi^2(\gamma^2+\xi^2)}m_i,
\qquad i=1,2.
\]
Thus $m_1$ and $m_2$ are eigenvectors of $U$ associated with the same eigenvalue. Hence, we also have $U\Delta m
=
\frac{\gamma^2}{\xi^2(\gamma^2+\xi^2)}\Delta m.
$
Therefore,
\[
\Delta m^\top U^\top \Delta m
=
(U\Delta m)^\top \Delta m
=
\frac{\gamma^2}{\xi^2(\gamma^2+\xi^2)}
\|\Delta m\|^2.
\]
On the other hand, condition \((2)\) gives $U^\top \Delta m
=
\frac{1}{\gamma^2+\xi^2}\Delta m.$ Taking the scalar product with $\Delta m$, we obtain
\[
\Delta m^\top U^\top \Delta m
=
\frac{1}{\gamma^2+\xi^2}\|\Delta m\|^2.
\]
Since $m_1\neq m_2$, comparing the two expressions yields
\[
\frac{\gamma^2}{\xi^2(\gamma^2+\xi^2)}
=
\frac{1}{\gamma^2+\xi^2},
\]
which is equivalent to $\gamma^2=\xi^2$.
For completeness, exchanging the endpoints instead gives
$Vm_1=am_2$, $Vm_2=am_1$ and
$U^\top\Delta m=-(\gamma^2+\xi^2)^{-1}\Delta m$.
The intercepts then require $\pi_1=\pi_2$.
Writing $q=\gamma^2/(\gamma^2+\xi^2)^2$, the relation $VU=qI_d$
implies $U\Delta m=-(q/a)\Delta m$. Taking the scalar product with
$\Delta m$ again gives $q/a=(\gamma^2+\xi^2)^{-1}$, hence the same
necessary equality $\gamma^2=\xi^2$. Consequently, if $\gamma^2\neq\xi^2$, no single attention head can satisfy the three constraints simultaneously.
\end{proof}

\subsection{Two-head construction (Theorem~\ref{thm:denoise-expressivity}, exact representation)}
\label{app:two-head-nondegenerate}

\begin{proposition}[Exact two-head representation in the nondegenerate case]
\label{prop:two-head-nondegenerate-denoiser}
Assume $d\geq2$, $\xi>0$, $\Gamma=\gamma^2I_d$ with $\gamma>0$,
$M=2$, positive mixture weights, and distinct means with
$\|m_1\|=\|m_2\|$. Then there exist two heads \((U_h,V_h)_{h=1}^2\)
such that
\[
T_{U_1,V_1}[\mu_Z](z)+T_{U_2,V_2}[\mu_Z](z)=f^\star(z)
\qquad\text{for every }z\in\mathbb R^d.
\]
\end{proposition}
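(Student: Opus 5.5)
We use Proposition~\ref{prop:gmm-attention} with $S=(\gamma^2+\xi^2)\Id$. Write $s\coloneqq\gamma^2+\xi^2$, $a\coloneqq\xi^2/s$, $a'\coloneqq\gamma^2/s$ and $\Delta m\coloneqq m_1-m_2$. Each head then reads
\[
T_{U_h,V_h}[\mu_Z](z)=sV_hU_hz+V_h\bigl(\alpha^{(h)}_1(z)m_1+\alpha^{(h)}_2(z)m_2\bigr),
\]
with $\alpha^{(h)}_1(z)=\sigma\bigl(\log(\pi_1/\pi_2)+\langle z,U_h^\top\Delta m\rangle\bigr)$. The target is \eqref{eq:spherical-denoiser}. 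It therefore suffices to impose three conditions:
\[
\text{(G)}\ U_h^\top\Delta m=\tfrac1s\Delta m\ (h=1,2),\qquad
\text{(B)}\ (V_1+V_2)m_i=a\,m_i\ (i=1,2),\qquad
\text{(L)}\ s(V_1U_1+V_2U_2)=a'\Id .
\]
Condition (G) makes both gates equal $\beta_k$. Condition (B) makes the two barycentric terms sum to $a\sum_k\beta_km_k$. Condition (L) fixes the shrinkage. One should first check that a common $U_1=U_2$ cannot work when $a\neq a'$: testing $\Delta m^\top W\Delta m$ for $W=V_1+V_2$ gives the same contradiction as in Proposition~\ref{prop:one-head_attention_cant_rpz_non-degenerate_denoising}. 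So the two heads must use different interaction matrices, and the plan is to exploit the freedom in $U_h$ orthogonal to the constraint (G).

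\textbf{Non-collinear means.} Write $U_h=\frac{\Delta m\Delta m^\top}{s\|\Delta m\|^2}+R_h$ with $\operatorname{Im}R_h\perp\Delta m$; this is equivalent to (G). Let $P_\perp$ be the orthogonal projector onto $\operatorname{span}(m_1,m_2)^\perp$. Choose linear forms $\ell_1,\ell_2$ on $\operatorname{span}(m_1,m_2)$ (extended by zero on the complement) with $\ell_i^\top m_j=\delta_{ij}$. Set
\[
V_1=a\,m_1\ell_1^\top+P_\perp,\qquad V_2=a\,m_2\ell_2^\top .
\]
Then (B) holds, since $P_\perp$ kills both means. Condition (L) now reduces to
\[
s(V_1R_1+V_2R_2)=a'\Id-a\,\frac{\Delta m\Delta m^\top}{\|\Delta m\|^2}.
\]
The equal-norm assumption gives $c\coloneqq(m_1+m_2)/2\perp\Delta m$, and $c\neq0$ by non-collinearity. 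Hence $n\coloneqq c$ is an admissible direction for the ranges of the $R_h$. Take
\[
R_1=n\,r_1^\top+\tfrac{a'}{s}P_\perp,\qquad R_2=n\,r_2^\top .
\]
Then $V_hn=\tfrac a2m_h$, because $\ell_h^\top c=1/2$, and $V_1P_\perp=P_\perp$. The complement block of (L) is therefore matched automatically. The remaining block has range in $\operatorname{span}(m_1,m_2)$, and it must equal $\tfrac{a}{2}s(m_1r_1^\top+m_2r_2^\top)$. Any matrix with range in $\operatorname{span}(m_1,m_2)$ can be written uniquely in this form, since $m_1,m_2$ form a basis of that span. This determines $r_1,r_2$, and all three conditions hold.

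\textbf{Opposite means.} Here $m_1=-m_2=m$, so $c=0$ and the construction above degenerates. I expect this case to be the main obstacle. I would mimic the degenerate single-head trick of Proposition~\ref{prop:two-head-degenerate-denoiser}. Choose a unit vector $n\perp m$ (possible since $d\ge2$), and let head~1 handle the gate and barycenter:
\[
U_1=\frac{(m+\kappa n)m^\top}{s\|m\|^2},\qquad V_1=a\,m\Bigl(\frac{m^\top}{\|m\|^2}-\frac{n^\top}{\kappa}\Bigr)\ \ \text{on }\operatorname{span}(m,n).
\]
This gives $V_1U_1=0$, $V_1m=am$ and $U_1^\top(2m)=\tfrac1s(2m)$. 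Head~2 is then used to produce the linear term. Its gate must still satisfy (G), and its barycentric contribution must vanish. The natural choice is
\[
U_2=\frac{a'}{s}\Id,\qquad V_2=\frac{1}{s}\bigl(\Id-P_m\bigr)\cdot\frac{s}{1}\quad\text{(i.e. }V_2=\Id-P_m\text{)},
\]
with $P_m$ the projector onto $\operatorname{span}(m)$; this kills the means. The leftover linear term along $m$ must then be absorbed by modifying $U_1$ by a correction whose range is orthogonal to $m$ and which $V_1$ does not annihilate, namely along $n$. Checking that this correction can be chosen consistently with (G) for both heads is the delicate linear-algebra step, and I would verify it explicitly in the basis $(m,n)\oplus\operatorname{span}(m,n)^\perp$.
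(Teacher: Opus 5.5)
Your non-collinear construction is correct, and it takes a different route from the paper's. Replacing the paper's per-head selection condition (C2) ($V_hm_h=am_h$, $V_hm_{3-h}=0$) by your weaker condition (B) on $V_1+V_2$ is legitimate, because (G) makes the two gates coincide. Your ansatz $U_h=\frac{\Delta m\Delta m^\top}{s\|\Delta m\|^2}+R_h$ with $\operatorname{Im}R_h\perp\Delta m$ is exactly the solution set of (G). The reduction of (L), the identities $V_hc=\tfrac a2m_h$ and $V_1P_\perp=P_\perp$, and the solvability for $r_1,r_2$ all check out; the last uses that $m_1,m_2$ is a basis of their span and that $a>0$.

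The paper instead keeps (C2) and puts the rank-one correction of $U_h$ along $m_{3-h}$: $U_h=\frac{\gamma^2}{\xi^2s}\Id+\frac{\xi^2-\gamma^2}{\xi^2s}\frac{m_{3-h}\Delta m^\top}{\langle m_{3-h},\Delta m\rangle}$. Then $V_h$ annihilates the correction and (L) collapses to $V_1+V_2=a\Id$. The paper's version gives closed-form weights. Yours leaves $r_1,r_2$ implicit, but shows where the freedom lies: in ranges orthogonal to $\Delta m$, with the direction $c\perp\Delta m$ supplied by the equal-norm assumption.

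In the opposite-means case your plan also works, but you stop before the final verification, and the step you call delicate is immediate.

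First, head~2 does not need (G). Since $V_2m=0$, its barycentric term $(\alpha^{(2)}_1-\alpha^{(2)}_2)V_2m$ vanishes whatever the gate. Your choice $U_2=\frac{a'}{s}\Id$ in fact violates (G), but harmlessly.

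Second, replace $U_1$ by $U_1'\coloneqq U_1+nq^\top$ with $q=-\frac{\kappa a'}{as\|m\|^2}m$, that is, $U_1'=\frac{(m+\kappa(1-a'/a)n)m^\top}{s\|m\|^2}$.
\begin{itemize}
\item Since $n\perp m$, $(U_1')^\top m=U_1^\top m$, so (G) for head~1 is preserved.
\item Since $V_1(m+\kappa(1-a'/a)n)=a'm$, you get $sV_1U_1'=a'P_m$.
\end{itemize}
Then $T_{U_1',V_1}[\mu_Z](z)+T_{U_2,V_2}[\mu_Z](z)=a'P_mz+a(\beta_1-\beta_2)m+a'(\Id-P_m)z=f^\star(z)$.

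With this filled in, your asymmetric split is a valid alternative to the paper's symmetric construction. In yours, head~1 is the $\gamma=0$ single-head trick of Proposition~\ref{prop:two-head-degenerate-denoiser}, retuned to also shrink along $m$, and head~2 shrinks orthogonally to $m$ while killing the means. In the paper's, both heads carry half the barycenter, with $U_{1,2}=s^{-1}(\Id\pm ne^\top)$ and $V_{1,2}=A\pm B$, and the cross terms combine into $\gamma^2s^{-2}\Id$.
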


\begin{proof}
Let $\Delta m=m_1-m_2$. We begin with the case $m_1\neq -m_2$. We look for parameters $(U_1,V_1)$ and $(U_2,V_2)$ satisfying the following three conditions.
First, both heads must reproduce the posterior weights:
\[
\tag{C1}
U_i^\top \Delta m
=
\frac{1}{\gamma^2+\xi^2}\Delta m,
\qquad i=1,2.
\]
Second, each value matrix must select one cluster contribution, with the correct scaling:
\[
\tag{C2}
V_im_i
=
\frac{\xi^2}{\gamma^2+\xi^2}m_i,
\qquad
V_im_j=0, \qquad (i,j) =(1,2),(2,1)
\]
Third, the sum of the linear parts must reproduce the linear term of the optimal denoiser:
\[
\tag{C3}
(\gamma^2+\xi^2)(V_1U_1+V_2U_2)
=
\frac{\gamma^2}{\gamma^2+\xi^2}I_d.
\]
These conditions are sufficient to exactly represent the optimal denoiser.

Since $m_1$ and $m_2$ are non-collinear, they span a two-dimensional subspace $E\coloneqq \operatorname{span}(m_1,m_2)$. We fix a complementary subspace $E^\perp$ and define $V_1,V_2$ as follows. On $E$, we impose
\[
V_1m_1
=
\frac{\xi^2}{\gamma^2+\xi^2}m_1,
\qquad
V_1m_2=0,
\]
and
\[
V_2m_1=0,
\qquad
V_2m_2
=
\frac{\xi^2}{\gamma^2+\xi^2}m_2.
\]
On the complementary subspace $E^\perp$, we set
\[
V_1x=V_2x
=
\frac{\xi^2}{2(\gamma^2+\xi^2)}x,
\qquad
x\in E^\perp.
\]
With this definition, condition \emph{(C2)} is satisfied. Moreover,
\[
V_1+V_2
=
\frac{\xi^2}{\gamma^2+\xi^2}I_d.
\]

We now set
\[
U_1
=
\frac{\gamma^2}{\xi^2(\gamma^2+\xi^2)}I_d
+
\frac{\xi^2-\gamma^2}{\xi^2(\gamma^2+\xi^2)}
\frac{m_2\Delta m^\top}{\langle m_2,\Delta m\rangle},
\]
and
\[
U_2
=
\frac{\gamma^2}{\xi^2(\gamma^2+\xi^2)}I_d
+
\frac{\xi^2-\gamma^2}{\xi^2(\gamma^2+\xi^2)}
\frac{m_1\Delta m^\top}{\langle m_1,\Delta m\rangle}.
\]

We first check condition \emph{(C1)}. For $U_1$, we have
\[
U_1^\top \Delta m
=
\frac{\gamma^2}{\xi^2(\gamma^2+\xi^2)}\Delta m
+
\frac{\xi^2-\gamma^2}{\xi^2(\gamma^2+\xi^2)}
\frac{\Delta m\,\langle m_2,\Delta m\rangle}{\langle m_2,\Delta m\rangle}.
\]
Hence
$U_1^\top \Delta m
=
\frac{1}{\gamma^2+\xi^2}\Delta m.$
The same computation gives
$U_2^\top \Delta m
=
\frac{1}{\gamma^2+\xi^2}\Delta m$. Thus condition \emph{(C1)} is verified.

It remains to verify condition \emph{(C3)}. By construction,
\[
V_1m_2=0,
\qquad
V_2m_1=0.
\]
Therefore, using the definitions of $U_1$ and $U_2$,
\[
V_1U_1
=
\frac{\gamma^2}{\xi^2(\gamma^2+\xi^2)}V_1,
\qquad
V_2U_2
=
\frac{\gamma^2}{\xi^2(\gamma^2+\xi^2)}V_2.
\]
Consequently,
\[
V_1U_1+V_2U_2
=
\frac{\gamma^2}{\xi^2(\gamma^2+\xi^2)}(V_1+V_2)
=
\frac{\gamma^2}{(\gamma^2+\xi^2)^2}I_d.
\]
We obtain
\[
(\gamma^2+\xi^2)(V_1U_1+V_2U_2)
=
\frac{\gamma^2}{\gamma^2+\xi^2}I_d,
\]
which is precisely condition \emph{(C3)}.

Thus conditions \emph{(C1)}, \emph{(C2)} and \emph{(C3)} all hold, so the associated two-head predictor satisfies
\[
\widehat f(z)=f^\star(z),
\qquad
\text{for all }z\in\mathbb{R}^d.
\]
This concludes the proof.

\medskip

For the case $m_1=-m_2\eqqcolon m$, the construction is similar. We replace condition~$(C2)$ by the following condition, which splits the contribution between the two heads:
\[
\tag{C2'}
V_im
=
\frac{\xi^2}{2(\gamma^2+\xi^2)}m,
\qquad i \in \{1,2\}
\]
Let $n\in\mathbb{R}^d$ be a unit vector such that $\langle m,n\rangle=0$. We define
\[
U_1
=
\frac{1}{\gamma^2+\xi^2}
\left(
I_d+\frac{nm^\top}{\|m\|}
\right),
\qquad
U_2
=
\frac{1}{\gamma^2+\xi^2}
\left(
I_d-\frac{nm^\top}{\|m\|}
\right),
\]
and
\[
V_1
=
\frac{\xi^2}{2(\gamma^2+\xi^2)}
\frac{mm^\top}{\|m\|^2}
+
\frac{\gamma^2}{2(\gamma^2+\xi^2)}
\left(
I_d-\frac{mm^\top}{\|m\|^2}
\right)
+
\frac{\gamma^2-\xi^2}{2(\gamma^2+\xi^2)}
\frac{mn^\top}{\|m\|},
\]
\[
V_2
=
\frac{\xi^2}{2(\gamma^2+\xi^2)}
\frac{mm^\top}{\|m\|^2}
+
\frac{\gamma^2}{2(\gamma^2+\xi^2)}
\left(
I_d-\frac{mm^\top}{\|m\|^2}
\right)
-
\frac{\gamma^2-\xi^2}{2(\gamma^2+\xi^2)}
\frac{mn^\top}{\|m\|}.
\]

Here is the verification. Write $s=\gamma^2+\xi^2$, $e=m/\|m\|$,
$P=ee^\top$, $Q=I_d-P$, and
\[
A=\frac{\xi^2}{2s}P+\frac{\gamma^2}{2s}Q,
\qquad B=\frac{\gamma^2-\xi^2}{2s}en^\top.
\]
Then $U_{1,2}=s^{-1}(I_d\pm ne^\top)$ and $V_{1,2}=A\pm B$.
Since $n\perp e$, $U_i^\top m=s^{-1}m$ and
$V_im=\xi^2m/(2s)$, proving (C1) and (C2'). Finally,
\begin{align*}
V_1U_1+V_2U_2
&=\frac{2}{s}(A+Bne^\top)\\
&=\frac{1}{s^2}\bigl(\xi^2P+\gamma^2Q
 +(\gamma^2-\xi^2)P\bigr)
=\frac{\gamma^2}{s^2}I_d.
\end{align*}
This is (C3), completing the opposite-means construction.

\end{proof}

\subsection{Strict risk gap (Theorem~\ref{thm:denoise-expressivity}, separation claim)}
\label{app:linear-attention-gap}

\begin{proposition}[Softmax attention strictly outperforms linear attention]
\label{prop:softmax_beat_linear_for_denoising}
Let \(\mathcal F_{\rm sm}\) and \(\mathcal F_{\rm lin}\) denote the classes
represented by finite multi-head softmax- and linear-attention networks,
respectively. In the nontrivial two-component setting considered above,
\[
\inf_{f\in\mathcal F_{\rm sm}}\bigl(R(f)-R(f^\star)\bigr)=0,
\qquad
\inf_{f\in\mathcal F_{\rm lin}}\bigl(R(f)-R(f^\star)\bigr)>0.
\]
\end{proposition}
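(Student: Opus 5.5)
The first identity follows from the exact two-head construction. The second follows from showing that every linear-attention network, in the infinite-prompt limit, outputs an affine function of the query, together with an $L^2$ projection argument.

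\textbf{Softmax class.} I will reduce the excess risk to an $L^2$ distance. Since $f^\star(z)=\mathbb E[X\mid Z=z]$ (Proposition~\ref{prop:optimal-denoiser-gmm}), Pythagoras gives
$R(f)-R(f^\star)=\mathbb E\|f(Z)-f^\star(Z)\|^2$ for every measurable $f$ with $\mathbb E\|f(Z)\|^2<\infty$. Proposition~\ref{prop:two-head-nondegenerate-denoiser} supplies two heads with $T_{U_1,V_1}[\mu_Z]+T_{U_2,V_2}[\mu_Z]=f^\star$ exactly when $\gamma>0$, and Proposition~\ref{prop:two-head-degenerate-denoiser} does so when $\gamma=0$. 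Hence the infimum over $\mathcal F_{\rm sm}$ is attained and equals $0$.

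\textbf{Linear class is affine in the query.} This is the step I expect to be the main obstacle, because it requires fixing precisely what a multi-layer, multi-head linear-attention network is on a measure.
\begin{itemize}
\item For a single head on a measure $\nu$, $T_{\rm lin}[\nu](z)=V\bigl(\int z'z'^\top\,\mathrm d\nu(z')\bigr)Uz=VM_\nu Uz$, which is linear in $z$.
\item In a multi-layer network the context tokens are updated by the same layers. Their updates depend only on the context measure, never on the query.
\item Concretely, layer $\ell$ maps each token by $z\mapsto A_\ell z$ (possibly with a residual term and a bias), where $A_\ell=\sum_h V_h M_{\nu_\ell}U_h$ (plus $I$ if residual). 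The next context is the pushforward $\nu_{\ell+1}=(A_\ell)_\#\nu_\ell$. Its second-moment matrix $A_\ell M_{\nu_\ell}A_\ell^\top$ (adjusted for biases) is again a fixed matrix.
\item By induction, the query's output is $Bz+c$ with $B,c$ depending only on the parameters and on $\mu_Z$. The same holds after a final linear readout.
\end{itemize}
I will state this induction explicitly, covering residual connections and output maps. The point to emphasise is that the query never feeds back into the context measure, so no nonlinearity in $z$ can arise. Thus $\mathcal F_{\rm lin}\subset\mathcal A:=\{z\mapsto Bz+c\}$.

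\textbf{Positive gap over affine maps.} It then suffices to show $\inf_{f\in\mathcal A}\mathbb E\|f(Z)-f^\star(Z)\|^2>0$.
\begin{itemize}
\item $\mathcal A$, restricted to $L^2(\mu_Z)$, is a finite-dimensional, hence closed, subspace, since $Z$ has finite second moments. The infimum is therefore the squared distance to a closed subspace and is attained at some affine $f_0$.
\item If it were $0$, then $f^\star=f_0$ $\mu_Z$-a.e. Since $\mu_Z$ has a positive density (or, when $\gamma=0$, still full support because $\xi>0$) and both maps are continuous, this would give $f^\star=f_0$ everywhere.
\item Restrict to the line $z=t\,\Delta m$. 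Subtracting the affine part $\gamma^2(\gamma^2+\xi^2)^{-1}z$, the remainder is
\[
\frac{\xi^2}{\gamma^2+\xi^2}\Bigl(m_2+\Delta m\,\sigma\bigl(\log(\pi_1/\pi_2)+t\|\Delta m\|^2/(\gamma^2+\xi^2)\bigr)\Bigr).
\]
This is bounded and nonconstant in $t$, so it is not affine.
\end{itemize}
This contradiction gives the strict gap and completes the plan.
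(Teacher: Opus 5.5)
Your proposal is correct and follows essentially the same route as the paper: the exact two-head realization for the softmax class, the reduction of any linear-attention network to a function of the query that is linear (affine in your version), the identity $R(f)-R(f^\star)=\mathbb E\|f(Z)-f^\star(Z)\|^2$, and a contradiction because a bounded, nonconstant gated term cannot coincide everywhere with an affine map. Your version is slightly more careful than the paper's. You give an explicit induction showing that the propagated context measures never depend on the query, you allow biases and residuals, and you obtain attainment from a closed finite-dimensional subspace of $L^2(\mu_Z)$ rather than from coercivity in $A$. These are refinements rather than a different argument.
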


\begin{lemma}
\label{lem:linear_attention_is_linear}
Let $f_{\mathrm{lin}}$ be a network composed of linear attention layers, with an arbitrary number of heads. Then $f_{\mathrm{lin}}$ is a linear function of $z$.
\end{lemma}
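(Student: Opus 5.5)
The plan is to argue by induction on the number of layers, tracking at each stage both the map applied to the query and the measure describing the context. The base computation is a single linear-attention head acting on a context measure $\nu$ with finite second moment. By definition,
\[
T_{\mathrm{lin}}[\nu](z)=\int_{\mathbb R^d}\langle Uz,z'\rangle Vz'\,\mathrm d\nu(z')
=V\Bigl(\int z'(z')^\top\mathrm d\nu(z')\Bigr)Uz
=V\Sigma_\nu Uz,
\]
where $\Sigma_\nu\coloneqq\int z'(z')^\top\mathrm d\nu(z')$ is the (uncentered) second-moment matrix. This is a linear function of $z$. The only analytic point is that $\Sigma_\nu$ is finite. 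For $\nu=\mu_Z$ this holds because a Gaussian mixture has moments of all orders; this is also the justification for exchanging the integral with the linear maps. A multi-head layer is a finite sum $\sum_h V_h\Sigma_\nu U_h z$ (possibly followed by an output matrix or a residual term $z$), so it is again a linear map $A_\nu z$.

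For the inductive step, I fix the convention of a deep network at the population level. Each layer acts on the query and on every context token by the same map. The context measure after layer $\ell$ is therefore the pushforward $\nu_{\ell}=(A_{\ell})_\#\nu_{\ell-1}$, where $A_\ell$ is the linear map implemented by layer $\ell$ (this map depends on $\nu_{\ell-1}$ but not on the query). I then show by induction that two properties hold for every $\ell$:
\begin{itemize}
\item the query is mapped by a matrix $B_\ell=A_\ell\cdots A_1$;
\item $\nu_\ell$ has finite second moment $\Sigma_{\nu_\ell}=A_\ell\Sigma_{\nu_{\ell-1}}A_\ell^\top$.
\end{itemize}
Both follow from the base computation together with the identity $\int (Az')(Az')^\top\mathrm d\nu=A\Sigma_\nu A^\top$. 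A final linear readout preserves linearity, so $f_{\mathrm{lin}}(z)=W B_L z$ is linear in $z$.

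The main obstacle is not computational but definitional. I have to make sure the lemma covers the architectural variants implicit in $\mathcal F_{\rm lin}$:
\begin{itemize}
\item residual connections;
\item output projections;
\item heads whose context is updated versus held fixed across layers.
\end{itemize}
In each variant, the query map stays a composition and sum of matrices whose coefficients depend only on the context's second moments. I will state the induction hypothesis in that form ("query-to-output is a matrix depending only on $\Sigma_{\nu_0}$ and the weights") so that all variants are covered at once. Note that only homogeneous linear maps are produced: there is no bias term, since $T_{\mathrm{lin}}$ has none. This is exactly what the subsequent risk-gap argument of Proposition~\ref{prop:softmax_beat_linear_for_denoising} will exploit.
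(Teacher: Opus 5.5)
Your proposal is correct and follows the same route as the paper. The key computation is $T_{\mathrm{lin}}[\nu](z)=V\Sigma_\nu Uz$, which the paper writes explicitly for $\nu=\mu_Z$ as $V(\pi_1m_1m_1^\top+\pi_2m_2m_2^\top+S)Uz$, and it is followed by closure of linear maps under sums and compositions. Your induction on the pushed-forward contexts $\nu_\ell=(A_\ell)_\#\nu_{\ell-1}$, with finite second moments $A_\ell\Sigma_{\nu_{\ell-1}}A_\ell^\top$, spells out carefully the multi-layer step that the paper handles in a single sentence.
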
 

\begin{proof}[Proof of Lemma~\ref{lem:linear_attention_is_linear}]
As with softmax attention, linear attention can be represented as an operator acting on probability measures. Let $\widehat{\mu}_L$ be the empirical measure associated with the i.i.d. samples $Z_1,\ldots,Z_L$ drawn from $\mu$. Then
$$
T_{\mathrm{lin}}[\widehat{\mu}_L](z)
\coloneqq 
\frac{1}{L}\sum_{k=1}^L
\left(Z_k^\top U z\right)VZ_k
\xrightarrow{L\to\infty} V\left(\int z'z'^\top\,\mathrm{d}\mu(z')\right)Uz
=T_{\mathrm{lin}}[\mu](z)
$$
Suppose now that $\mu=\mu_Z$ is the distribution of a Gaussian mixture,
$\mu_Z=
\pi_1\mathcal{N}(m_1,S)
+
\pi_2\mathcal{N}(m_2,S)$. Then,
\begin{align*}
T_{\mathrm{lin}}[\mu_Z](z)
&=
V\left(
\pi_1
\mathbb{E}_{Z_1\sim\mathcal{N}(m_1,S)}
\left[Z_1Z_1^\top\right]
+
\pi_2
\mathbb{E}_{Z_2\sim\mathcal{N}(m_2,S)}
\left[Z_2Z_2^\top\right]
\right)Uz\\
&=
V\left(
\pi_1m_1m_1^\top
+
\pi_2m_2m_2^\top
+
S
\right)Uz,
\end{align*}
Therefore, $T_{\mathrm{lin}}[\mu_Z](z)$ is linear in $z$. Since linear combinations and compositions of linear maps remain linear, any network composed of multi-head linear attention layers is also linear in $z$. This concludes the proof.
\end{proof}

\begin{proof}[Proof of Proposition~\ref{prop:softmax_beat_linear_for_denoising}]
We have already shown that a single-layer softmax attention network with two heads can attain the Bayes risk. It remains to prove that linear attention cannot achieve the same risk.

Let $f_{\mathrm{lin}}\in\mathcal{F}_{\mathrm{lin}}$. By Lemma~\ref{lem:linear_attention_is_linear}, there exists a matrix $A$ such that
$
f_{\mathrm{lin}}(z)=Az.
$
Moreover, there exists a matrix $A^\star$ minimizing the risk over all linear functions of $z$, namely
$$
R(A^\star)=
\inf_{A\in\mathbb{R}^{d\times d}}R(A).
$$
Indeed, we have
\begin{align*}
R(A)
&=
\mathbb{E}\left[\|X-AZ\|^2\right]\\
&=
\mathbb{E}\left[\|(I_d-A)X-A\varepsilon\|^2\right]\\
&=
\mathbb{E}\left[\|(I_d-A)X\|^2\right]
+
\mathbb{E}\left[\|A\varepsilon\|^2\right]\\
&=
\mathbb{E}\left[\|(I_d-A)X\|^2\right]
+
\xi^2|A|_{\mathrm{F}}^2.
\end{align*}
Thus, the risk is coercive and continuous as a function of $A$, and therefore admits a minimizer $A^\star$. Since $f^\star(z)=\mathbb{E}[X\mid Z=z]$ is the Bayes estimator for the squared loss, the excess-risk can be written for every measurable predictor $g$ as
$$
R(g)-R(f^\star)
=
\mathbb{E}\left[\|g(Z)-f^\star(Z)\|^2\right]
$$
Consequently,
\begin{align*}
R(f_{\mathrm{lin}})-R(f^\star)
&\geq
R(A^\star)-R(f^\star)\\
&=
\mathbb{E}\left[
\|
A^\star Z-f^\star(Z)
\|^2
\right].
\end{align*}
Using the expression of the optimal denoiser we obtain
$$
R(A^\star)-R(f^\star)
=
\mathbb{E}\left[
\|
\left(A^\star-\Gamma S^{-1}\right)Z-h(Z)
\|^2
\right].
$$
where
$
h(z)
=
\xi^2S^{-1}
\sum_{k}\beta_k(z)m_k
$. Assume, for contradiction, that this excess risk is equal to zero. It follows that
$$
\left(A^\star-\Gamma S^{-1}\right)z=h(z)
$$
first $\mu_Z$-almost everywhere, and then for every $z\in\mathbb R^d$:
both sides are continuous and $\mu_Z$ has an everywhere positive density
because $\xi>0$. The function $h$ is bounded so it forces
$$
A^\star-\Gamma S^{-1}=0 \qquad \text{and} \qquad h=0.
$$
This is impossible :
along the two directions $z=\pm t(m_1-m_2)$, the $h(z)$
converge to the two different endpoints, so $h$ is nonconstant. Therefore,
$$
R(A^\star)-R(f^\star)>0,
$$
and thus
$$
\inf_{f\in\mathcal{F}_{\mathrm{lin}}}
\left(R(f)-R(f^\star)\right)>0.
$$
This concludes the proof.
\end{proof}

\subsection{Residual in-context denoising (proof of Proposition~\ref{prop:icl})}
\label{app:icl-proof}

\begin{proof}[Proof of Proposition~\ref{prop:icl}]
Let $S=(\gamma^2+\xi^2)I_d$ and set
\[
a=\frac{\xi^2}{\gamma^2+\xi^2},
\qquad U_1=S^{-1},\quad V_1=aI_d,\quad U_2=V_2=0.
\]
For any admissible noisy mixture $\mu_Z$, Proposition~\ref{prop:gmm-attention}
and the equal-norm assumption give
\[
T_{U_1,V_1}[\mu_Z](z)
=az+a\sum_{k=1}^2\beta_k(z)m_k,
\qquad
T_{U_2,V_2}[\mu_Z](z)=0,
\]
where the $\beta_k(z)$ are the posterior component probabilities. Hence, with
$W_{\rm out}=(I_d,-2aI_d)$,
\begin{align*}
W_{\rm out}
\begin{pmatrix}
T^{\rm res}_{U_1,V_1}[\mu_Z](z)\\
T^{\rm res}_{U_2,V_2}[\mu_Z](z)
\end{pmatrix}
&=(1+a)z+a\sum_k\beta_k(z)m_k-2az\\
&=(1-a)z+a\sum_k\beta_k(z)m_k\\
&=\mathbb E[X\mid Z=z].
\end{align*}
The matrices depend only on the shared variances $\gamma^2$ and $\xi^2$;
the prompt-specific means and weights enter solely through the attention
operator. In addition, conditional expectation gives, for every task,
\[
R_{\mu_Z}(f)-R_{\mu_Z}(f^\star)
=\mathbb E\bigl[\|f[\mu_Z](Z)-f^\star[\mu_Z](Z)\|^2\mid\mu_Z\bigr].
\]
Integrating this nonnegative identity over any task distribution with
finite risk proves optimality for the in-context objective, since the
constructed predictor makes the right-hand side zero for every task.
This proves Proposition~\ref{prop:icl}.
\end{proof}

\end{document}